%% file: arxiv.tex
\documentclass{article}

\usepackage[preprint]{neurips_2026}

\makeatletter
\renewcommand{\@noticestring}{}
\makeatother

\makeatletter
\renewcommand{\@toptitlebar}{%
  \vskip 0.25in
  \vskip -\parskip
}
\renewcommand{\@bottomtitlebar}{%
  \vskip 0.29in
  \vskip -\parskip
  \vskip 0.09in
}
\makeatother

\input{math_commands.tex}

\usepackage{hyperref}
\usepackage{url}

\usepackage{amsmath, amsthm, amssymb}
\usepackage{dsfont}
\usepackage{graphicx}
\usepackage{enumitem}
\usepackage{hyperref}
\usepackage{booktabs}
\usepackage{wrapfig}
\usepackage{thmtools}
\usepackage{thm-restate}
\usepackage{xcolor}
\usepackage[most]{tcolorbox}
\usepackage{tabularx,array}
\usepackage{subcaption}
\usepackage{multirow}
\usepackage{caption} 
\usepackage{float}
\usepackage{algorithm}
\usepackage{algpseudocode}

\usepackage{booktabs}
\usepackage{pgfplots}
\pgfplotsset{compat=1.18}
\usepgfplotslibrary{fillbetween}

\usepackage{wrapfig}

\algrenewcommand\alglinenumber[1]{\scriptsize #1}

\algrenewcommand\algorithmicindent{0.8em}
\algrenewcommand\algorithmicindent{0.8em}

\newtheorem{definition}{Definition}
\newtheorem{proposition}{Proposition}
\newtheorem{theorem}{Theorem}
\newtheorem{lemma}{Lemma}
\newtheorem{corollary}{Corollary}
\newtheorem{remark}{Remark}
\newtheorem{assumption}{Assumption}
\newtheorem{example}{Example}

\newcommand{\GW}{\mathrm{GW}}

\newcommand{\dR}{\mathds{R}}
\newcommand{\dN}{\mathds{N}}

\newcommand{\dP}{\mathds{P}}

\newenvironment{B}{\begingroup\color{blue}}{\endgroup}

\usepackage[utf8]{inputenc} 
\usepackage[T1]{fontenc}    
\usepackage{hyperref}       
\usepackage{url}            
\usepackage{booktabs}       
\usepackage{amsfonts}       
\usepackage{nicefrac}       
\usepackage{microtype}      
\usepackage{xcolor}         

\title{Provable Adversarial Detection: Prime Quantization meets Gromov–Wasserstein}

\author{
  \parbox{0.95\textwidth}{\centering
    Manohar Kaul, Sadbhavana Babar, Aditya Ramesh, Aditya Saibewar, Fumiyoshi Kasahara,  Maki Inui, Hiroaki Kurihara \\[0.5em]
    Fujitsu Research, Japan \\
    Fujitsu Research of India Pvt. Ltd. \\
    \texttt{kaul.manohar@fujitsu.com}
  }
}

\begin{document}

\maketitle

\begin{abstract}
  Adversarial vulnerability persists across modern vision architectures from CNNs to vision language models (VLMs), yet existing detection methods rely on heuristics without theoretical guarantees. We address the fundamental question of when adversarial perturbations can be provably detected from a geometric perspective.
Our key insight is that adversarial perturbations cannot simultaneously preserve geometric structure across spaces with fundamentally different properties. Accordingly, we construct two such complementary metric spaces. 
First, we use a standard CNN embedding space $Z$, where adversarial samples exhibit significant displacement patterns. Second, we build a novel prime-quantized space $P$, that absorbs small perturbations through number-theoretic discretization, resulting in minimal displacement, while preserving discriminability. We then leverage the geometric discrepancies across spaces $Z$ and $P$ to detect adversarial samples.
To the best of our knowledge, we establish the first rigorous separation theory for adversarial detection, proving that adversarial samples create unavoidable geometric inconsistencies across both spaces. Our framework provides theoretical guarantees including pixel-level absorption bounds, neighborhood diameter concentration, Gromov-Wasserstein (GW) separation theorems, and practical risk control.
Extensive experiments validate our theoretical predictions and achieve consistently strong detection performance across a wide range of attack types and model families.
\end{abstract}

\section{Introduction}

Vision systems have rapidly progressed from CNNs~\cite{he2016deep} to Vision--Language Models (VLMs)~\cite{clip} and multimodal architectures~\cite{gpt4v}, yet adversarial vulnerability persists across all these paradigms. As these increasingly capable models are deployed at scale, the consequences of undetected adversarial attacks also scale, making \emph{detection} a core safety requirement.

Defenses fall into three main families. \textit{Adversarial training}~\citep{madry2018towards,zhang2019theoretically,gowal2021improving} augments models with adversarial examples. While effective in restricted scenarios, it requires expensive retraining and often fails to generalize across diverse attacks, including gradient-based~\citep{goodfellow2015explaining,madry2018towards,carlini2017towards}, physical~\citep{brown2017adversarial}, and natural corruptions~\citep{hendrycks2019benchmarking,engstrom2019exploring}. \textit{Detection methods}~\citep{metzen2017detecting,feinman2017detecting,ma2018characterizing,lee2018simple,Ma2019meta_adv_detect,meng2017magnet,lee2024datafreeshield} rely on auxiliary classifiers or statistical tests, but remain heuristic and easily broken by adaptive adversaries~\citep{athalye2018obfuscated}. \textit{Certifiable robustness}~\citep{raghunathan2018certified,cohen2019certified,salman2019provably} provides provable invariance regions, but targets robust classification rather than detection, and is computationally intensive. Trade-off frameworks such as TRADES~\citep{zhang2019theoretically} deepened our theoretical understanding, but left unanswered the key question: ``\textit{Can detection itself be endowed with guarantees, and what properties make adversarial examples inherently detectable?}"

\textbf{Our insight.} Clean and adversarial samples leave distinct geometric traces across two complementary spaces, namely, the CNN embedding space $Z$ and a prime-quantized space $P$. In $Z$, clean samples form tight neighborhoods, while adversarial ones exhibit characteristic displacements that disrupt local structure. In $P$, each pixel is discretized by rounding to nearby primes under a secret bit mask, so small perturbations are either absorbed within prime gaps or forced into discrete jumps. This mechanism preserves overall discriminability yet creates systematic cross-space inconsistencies, making adversarial inputs detectable through geometric analysis.

To the best of our knowledge, we establish the first rigorous \textit{geometric separation theory for adversarial detection}, spanning four levels of guarantees: (i) pixel-level absorption bounds proving when perturbations vanish in $P$ or cross prime-gap boundaries, (ii) $K$-NN diameter envelopes showing clean and adversarial neighborhoods diverge differently in $Z$ and $P$, (iii) cross-space separation theorems based on Gromov--Wasserstein (GW) distances that yield a \emph{non-vanishing gap} scaling with dimension and perturbation strength, and (iv) risk control guarantees establishing that simple thresholding achieves bounded misclassification rates. Together, these results provide a principled foundation for adversarial detection, addressing the open question of when and why detection must succeed.

\noindent\textbf{Contributions.}  
(i) We introduce \textit{prime quantization}, a cryptographically inspired discretization that generalizes across CNN, VLM, and multimodal architectures, and could extend to other one-way transforms.  
(ii) We present a unified theoretical framework proving that adversarial perturbations necessarily induce cross-space inconsistencies, with guarantees from pixel absorption up through GW-based separation.  
(iii) We empirically validate our method on a broad suite of attacks, VLM zero-shot settings, and adaptive adversaries, demonstrating consistent and strong performance compared to state-of-the-art defenses.

\section{Related Work}

\textbf{Adversarial attacks} are categorized by attacker knowledge into: (i) \textit{white-box} (FGSM~\citep{goodfellow2015explaining}, PGD~\citep{madry2018towards}, C\&W~\citep{carlini2017towards}), (ii) \textit{black-box} (ZOO~\citep{Chen2017zoo}, Square~\citep{andriushchenko2020square}), and (iii) \textit{adaptive attacks} that exploit defense mechanisms~\citep{athalye2018obfuscated}, often defeating methods that appear robust under non-adaptive evaluation. 

\textbf{Detection methods} include: (i) \textit{autoencoder-based reconstruction} (MagNet~\citep{meng2017magnet}, PixelDefend~\citep{song2018pixeldefend}), (ii) \textit{distributional analysis} (Mahalanobis~\citep{lee2018simple}), (iii) \textit{prediction differences} (Feature Squeezing~\citep{Xu2018FS}), and (iv) \textit{learned classifiers} (MetaAdvDet~\citep{Ma2019meta_adv_detect}). These approaches remain heuristic and are routinely bypassed by adaptive adversaries, with no guarantees on when detection must succeed. 

\textbf{Robust training and certification} methods such as adversarial training~\citep{madry2018towards}, TRADES~\citep{zhang2019theoretically}, and certified defenses based on randomized smoothing~\citep{cohen2019certified} or patch-based strategies~\citep{chong2022patchcleanser} aim at robust classification rather than detection, often requiring retraining and incurring accuracy trade-offs.

\emph{In contrast, our work provides the first theoretical guarantees that adversarial perturbations create unavoidable cross-space inconsistencies, yielding a principled basis for detection with quantifiable confidence. Unlike most prior defenses, we further evaluate on vision–language models in zero-shot settings, highlighting robustness beyond CNN benchmarks.}

\section{Preliminaries}

We introduce notation and the prime-quantized space used by our detector.  

Let $X\in[0,1]^{N\times d}$ be $N$ images, each $x_i\in[0,1]^d$ a flattened vector of $d$ normalized pixels with label $y_i\in Y$ ($|Y|=C$).  
A classifier $f_\theta=g_\theta\circ h_\theta$ (where $h_\theta$ is the \textit{feature extractor} and $g_\theta$ the \textit{classification head}) has embedding $Z_i=h_\theta(x_i)\in\R^m$, with $Z=[Z_1,\dots,Z_N]^T$. With a slight abuse of notation, we also use $Z$ to denote the ambient embedding space $\dR^m$ equipped with the Euclidean metric $d_Z(z_i,z_j) = \| z_i - z_j \|_2$. Thus, each $Z_i$ is both a row of the embedding matrix and a point in metric space $(Z,d_Z)$.

\begin{definition}[Adversarial perturbation]\label{def:adv_pert}
Given $x_i$, a perturbation $\eta\in\R^d$ yields $\tilde x_i=x_i+\eta$. It is $\epsilon$-bounded if $\|\eta\|_\infty\le\epsilon$.
\end{definition}
To defend against such perturbations, we transform images into a discrete prime space via three steps: (i) scale to integers, (ii) round to primes, and (iii) rescale.
\begin{definition}[Scaling]\label{def:scaling_ops}
For $k\in \dN$, let $\dP_k=\{p\le10^k:\,p\text{ prime}\}$. Define $S_k(x_{i,j})=\lfloor x_{i,j}10^k\rfloor$ and $S_k^{-1}(n)=n\cdot10^{-k}$.
\end{definition}

\begin{definition}[Prime rounding]\label{def:prime_rounding}
Given secret bit $b_j\in\{0,1\}$, map integer $n$ to the nearest prime in $\dP_k$. If $p_\ell<n<p_{\ell+1}$, where $p_\ell, p_{\ell+1} \in \dP_k$, then $R_k^{(0)}(n)=p_\ell$, $R_k^{(1)}(n)=p_{\ell+1}$.
\end{definition}

\begin{definition}[Prime quantization]\label{def:prime_quant}
The pixel transform is $T_k^{(b_j)}(x_{i,j})=S_k^{-1}(R_k^{(b_j)}(S_k(x_{i,j})))$. Extending component-wise yields $T_k^{(b)}:[0,1]^d\to[0,1]^d$.
\end{definition}

\begin{example}
For $k=2$, $x_{i,j}=0.38$ gives $S_2=38$, between $37$ and $41$. Then $T_2^{(0)}=0.37$, $T_2^{(1)}=0.41$. A perturbation $0.385$ still maps to $38$, hence quantization is unchanged.
\end{example}

\textbf{Space transformation.}  
$T_k^{(b)}$ maps images into discrete $P$, where prime gaps and secret $(b,k)$ yield irregular, attacker-unpredictable rounding. Unlike uniform quantization, prime rounding introduces structured but unpredictable discretization.

\paragraph{Problem statement.}  
Given $x\in[0,1]^d$, construct a detector $D(x)\in\{\text{clean},\text{adv}\}$ by comparing $h_\theta(x)\in Z$ and $T_k^{(b)}(x)\in P$, ensuring w.h.p.\ that clean inputs agree across spaces while adversarial ones create detectable discrepancies.

\section{Multiscale Gromov-Wasserstein (GW) adversarial detector}
\label{sec:multi}
Our method compares neighborhood behaviors of samples across the embedding space $Z$ and the prime-quantized space $P$, exploiting their complementary geometries.

\paragraph{Clean neighborhoods.} 
In $Z$, clean samples cluster by class, so a sample’s local neighborhood is dominated by its true label and its global neighborhood aligns with class centroids. In contrast, prime quantization scatters samples uniformly in $P$, destroying spatial coherence, resulting in neighborhoods that show nearly uniform label distributions without clustering.

\paragraph{Adversarial neighborhoods.} 
\begin{wrapfigure}{r}{0.45\linewidth}
  \vspace{-8pt}
  \centering
  \includegraphics[width=\linewidth]{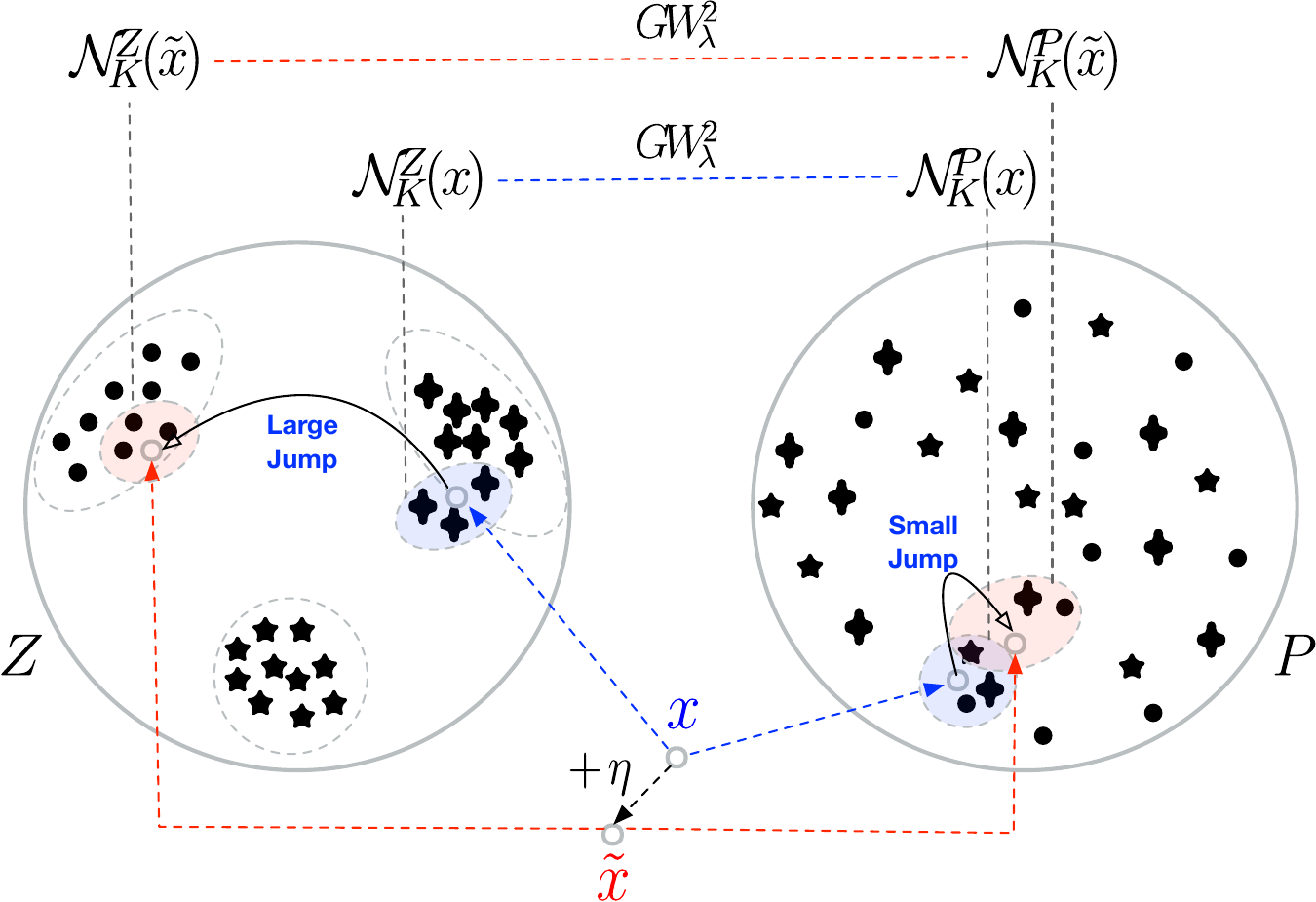}
  \vspace{-14pt}
  \caption{Geometry in $Z$ (clusters) vs.\ $P$ (dispersed).}
  \label{fig:zvsp-wrap}
  \vspace{-6pt}
\end{wrapfigure}
In $Z$, adversarial samples \textit{jump} from their true cluster toward a wrong class, shifting both local and global label distributions. In $P$, prime quantization often reduces the impact of small perturbations, depending on the quantization gaps and perturbation strength. Therefore, adversarial neighborhoods tend to resemble their clean counterparts more closely. This creates systematic cross-space discrepancies between $Z$ and $P$.

We next formalize these ideas via \emph{neighborhood maps} and quantify their mismatches using \emph{Gromov-Wasserstein (GW) distances}.
\begin{definition}[Neighborhood map and induced distributions]
\label{def:nhood}
Let $(\mathsf{M}, d_{\mathsf{M}})$ be a metric space. 
For a query $q \in \mathsf{M}$ and integer $K \geq 1$, 
a \emph{neighborhood map} $\mathcal{N}^{\mathsf{M}}_K : \mathsf{M} \to 2^{\mathsf{M}}$ 
returns a set of $K$ reference points, defining local neighborhoods when $K$ corresponds to nearest neighbors or global neighborhoods when $K$ corresponds to class centroids.
\smallskip
The corresponding \emph{spatial distribution} is $\mu^{\mathsf{M}}_K(q) \;=\; \frac{1}{K}\sum_{z \in \mathcal{N}^{\mathsf{M}}_K(q)} \delta_z \;\in \mathcal{P}(\mathsf{M})$ (i.e., the space of probability measures over $\mathsf{M}$).
If $c:\mathsf{M}\to\{1,\dots,C\}$ is a class-label map, the corresponding 
\emph{semantic distribution} is defined by the pushforward
$\psi^{\mathsf{M}}_K(q) \;=\; c_\# \mu^{\mathsf{M}}_K(q)
\;=\; \frac{1}{K}\sum_{z \in \mathcal{N}^{\mathsf{M}}_K(q)} \delta_{c(z)} \;\in \mathcal{P}(\{1,\dots,C\})$.
\end{definition}
The GW distance compares probability distributions that are supported on possibly distinct metric spaces via alignment.
\begin{definition}[Gromov--Wasserstein Distance]
Let $(X,d_X,\mu)$ and $(Y,d_Y,\nu)$ be \textit{metric measure spaces}, and define
\(
L^2(x,x',y,y') := |d_X(x,x') - d_Y(y,y')|^2.
\)
The squared Gromov--Wasserstein distance is
\begin{equation*}
\mathrm{GW}^2(\mu,\nu)
:= \inf_{\gamma \in \Pi(\mu,\nu)}
\int L^2(x,x',y,y') \, d(\gamma \otimes \gamma).
\end{equation*}
Here, $\Pi(\mu,\nu)$ denotes the set of couplings between $\mu$ and $\nu$. The integral is taken over $(X \times Y)^2$ with respect to $\gamma \otimes \gamma$.
\end{definition}
As exact computation of $\mathrm{GW}^2$ involves a quadratic assignment problem (QAP),
which is NP-hard~\cite{Nasser2014IJCA}, several tractable relaxations have been proposed.
We focus on the entropic Gromov--Wasserstein distance~\cite{Peyre16ICML}, defined as
\begin{equation*}
\begin{aligned}
\mathrm{GW}^2_\lambda(\mu,\nu)
:= \inf_{\gamma \in \Pi(\mu,\nu)}
\Big[
\int L^2(x,x',y,y') \, d(\gamma \otimes \gamma)
+ \lambda\, \mathrm{KL}(\gamma \,\|\, \mu \otimes \nu)
\Big] .
\end{aligned}
\end{equation*}

\textbf{Choice of scales (lo, gl).}
We use \texttt{lo} and \texttt{gl} as labels for two neighborhood scales.
The \texttt{lo} scale corresponds to a local $k$-NN neighborhood of the query
in $Z$ and $P$.
The \texttt{gl} scale corresponds to a global neighborhood induced by a fixed
set of $k$-means centroids, computed offline on clean training data, which
define the support for cross-space GW coupling.
This local/global decomposition follows standard practice in GW-based analysis.

\textbf{Algorithm.}
Our detector takes an image $x$, extracts its CNN embedding 
$z=h_\theta(x)\in Z$ and prime-quantized version $p=T_k^{(b)}(x)\in P$, 
and compares neighborhoods at two scales ($s\in\{\text{lo},\text{gl}\}$). 
For each scale, we compute (i) spatial distributions $\mu^Z_s,\mu^P_s$, 
(ii) semantic distributions $\psi^Z_s,\psi^P_s$, and derive 
$g_1=\GW^2_\lambda(\mu^Z_s,\mu^P_s)$, 
$g_2=\GW^2_\lambda(\psi^Z_s,\psi^P_s)$, 
and a semantic mismatch score $h=\textsc{Entropy}(\psi^Z_s,\psi^P_s)$, defined as the Jensen--Shannon divergence $\textsc{Entropy}(\alpha,\beta):=\mathrm{JS}(\alpha,\beta)
=\tfrac12\mathrm{KL}(\alpha\|m)+\tfrac12\mathrm{KL}(\beta\|m)$ with $m=\tfrac12(\alpha+\beta)$.
The resulting six-dimensional feature vector 
$\mathbf f(x)=[g_{1,\text{lo}},g_{2,\text{lo}},h_{\text{lo}},g_{1,\text{gl}},g_{2,\text{gl}},h_{\text{gl}}]$ 
encodes cross-space discrepancies, which are classified by an SVM.
Full pseudocode is provided in Algorithm~\ref{alg:gw-detector} in Appendix~\ref{app:algo}.

\section{Geometric Foundations and Theoretical Guarantees}
\label{sec:theory}

We begin by analyzing the stability of the prime quantization map 
$T^{(b)}_k$, which is central to our cross-space detector.  
The key question is, \textit{"when does a perturbation vanish into quantization noise, 
and when does it inevitably alter the output?"}  
Our results formalize two complementary phenomena: 
\emph{pixel-level local stability} and \emph{image-level injectivity}.  
Prime quantization related proofs are deferred to Appendix~\ref{app:prime-quantization}.
All our formal results are stated for \emph{local neighborhoods}. While the framework naturally extends to global neighborhoods, we leave the full theoretical treatment of that case to future work.

\subsection{Pixel-Level Absorption}
\label{subsec:absorption}
The \emph{absorption radius} (Definition~\ref{def:absorption_radius}) captures the largest perturbation at a pixel that leaves its quantized value unchanged.

\begin{restatable}[Absorption radius]{definition}{AbsorptionRadius}
\phantomsection
\label{def:absorption_radius}
For $x_j \in [0,1]$ with $S_k(x_j)=n \in (p_\ell,p_{\ell+1})$, the absorption radius is 
$r_{\mathrm{abs}}(x_j,k) = \frac{\min\{n-p_\ell,\; p_{\ell+1}-n\}}{10^k}$.
\end{restatable}

\begin{restatable}[Absorption guarantee]{proposition}{AbsorptionGuarantee}
\phantomsection
\label{prop:absorption_guarantee}
If $|\eta_j|\le r_{\mathrm{abs}}(x_j,k)$, then $T^{(b_j)}_k(x_j+\eta_j)=T^{(b_j)}_k(x_j)$ for all bits $b_j$.
\end{restatable}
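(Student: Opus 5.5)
The plan is to reduce the claim to a statement about integers: show that the perturbed pixel scales to an integer $n'=S_k(x_j+\eta_j)$ lying in the same open prime gap $(p_\ell,p_{\ell+1})$ as $n=S_k(x_j)$. If that holds, Definition~\ref{def:prime_rounding} gives $R_k^{(0)}(n')=p_\ell=R_k^{(0)}(n)$ and $R_k^{(1)}(n')=p_{\ell+1}=R_k^{(1)}(n)$. Applying $S_k^{-1}$ as in Definition~\ref{def:prime_quant} then yields $T_k^{(b_j)}(x_j+\eta_j)=T_k^{(b_j)}(x_j)$ for both values of $b_j$. So the whole proof is the localization of $n'$.

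For the localization, write $u=x_j10^k$, so that $n=\lfloor u\rfloor$ and $u\in[n,n+1)$, and write $\delta=\eta_j10^k$. The hypothesis $|\eta_j|\le r_{\mathrm{abs}}(x_j,k)$ from Definition~\ref{def:absorption_radius} becomes $|\delta|\le\min\{n-p_\ell,\;p_{\ell+1}-n\}$.

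\emph{Lower side.} We have $u+\delta\ge n-(n-p_\ell)=p_\ell$, hence $n'\ge p_\ell$.

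\emph{Upper side.} We have $u+\delta<n+1+(p_{\ell+1}-n)=p_{\ell+1}+1$, hence $n'\le p_{\ell+1}$.

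Monotonicity of the floor function gives these two bounds directly, so $n'\in[p_\ell,p_{\ell+1}]$.

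The main obstacle is excluding the two endpoints, which are primes and are therefore fixed by prime rounding. If $n'=p_\ell$, then bit $b_j=1$ outputs $p_\ell$ rather than $p_{\ell+1}$. If $n'=p_{\ell+1}$, then bit $b_j=0$ outputs $p_{\ell+1}$ rather than $p_\ell$. Either case breaks the claim.

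The lower endpoint is reached only in the measure-zero equality case $u=n$, $\delta=-(n-p_\ell)$, so it is excluded once the inequality is made strict. The upper endpoint is more delicate, because the fractional part of $u$ adds up to one unit of slack. For example, $u=n+0.9$ and $\delta=p_{\ell+1}-n-0.05$ give $n'=p_{\ell+1}$.

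I would therefore first check how the paper's convention treats integers that are already prime (or the ceiling in $S_k$). If no convention rescues the statement as written, I would prove it under the natural tightened hypothesis
\[
|\eta_j|<\frac{\min\{\,n-p_\ell,\;p_{\ell+1}-n-1\,\}+\{u\}\,\mathbf 1_{\delta>0}}{10^k},
\]
where $\{u\}$ denotes the fractional part of $u$, or more simply $|\eta_j|<\min\{n-p_\ell,\,p_{\ell+1}-n-1\}/10^k$. This yields $p_\ell<u+\delta<p_{\ell+1}$ and hence $n'\in(p_\ell,p_{\ell+1})$. I would remark that the stated radius is correct up to this one-grid-unit boundary effect.
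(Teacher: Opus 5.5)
Your reduction is the paper's own route. Its proof simply asserts that $S_k(x_j)$ and $S_k(x_j+\eta_j)$ lie in the same open gap ``by definition of the absorption radius.'' You are right that this does not follow, and that the statement is false as written. Your upper-endpoint example (e.g.\ gap $(37,41)$, $u=39.9$, $\delta=1.95$) gives $n'=41$ and breaks $b_j=0$. No convention rescues it, since $S_k$ is a floor and the nearest-prime rule fixes primes.

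The gap is in your lower-endpoint analysis. You conflated $u+\delta>p_\ell$ with $\lfloor u+\delta\rfloor>p_\ell$; the latter requires $u+\delta\ge p_\ell+1$. Whenever $n-p_\ell$ is the binding term of the minimum, every $\delta\in[-(n-p_\ell),\,p_\ell+1-u)$ is within budget and gives $n'=p_\ell$. That is an interval of length $1-\{u\}>0$, not a measure-zero event. The paper's own example already fails this way: for $k=2$ and $x_j=0.38$ we have $r_{\mathrm{abs}}=0.01$. Any $\eta_j\in[-0.01,0)$ then gives $S_2(x_j+\eta_j)=37$, so $T_2^{(1)}(x_j+\eta_j)=0.37\neq0.41=T_2^{(1)}(x_j)$.

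Consequently both of your tightened hypotheses are still false. At $x_j=0.38$, $\eta_j=-0.005$ satisfies $|\eta_j|<\min\{1,2\}/100$ yet lands on $37$. The final step of your plan (``$p_\ell<u+\delta$, hence $n'>p_\ell$'') is therefore invalid. Your first version also puts the fractional correction on the wrong side. The exact condition is $-(n-p_\ell-1)-\{u\}\le\delta<(p_{\ell+1}-n-1)+(1-\{u\})$, so the positive direction gains $1-\{u\}$, not $\{u\}$. For $u=39.9$ in $(37,41)$ your bound admits $\delta=1.5$, and $\lfloor 41.4\rfloor=41$.

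The correct repair removes one grid unit on \emph{both} sides. If $|\eta_j|\le r_{\mathrm{abs}}(x_j,k)-10^{-k}$, then $u+\delta\ge n-(n-p_\ell-1)=p_\ell+1$ and $u+\delta<(n+1)+(p_{\ell+1}-n-1)=p_{\ell+1}$. Hence $n'\in(p_\ell,p_{\ell+1})$, and your first paragraph finishes the proof. This radius is sharp uniformly in $\{u\}$, and it is zero whenever $n$ is adjacent to a prime.
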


\begin{restatable}[Absorption bounds]{lemma}{AbsorptionBounds}
\phantomsection
\label{lemma:absorption_bounds}
For any $x_j \in [0,1]$, $\tfrac{1}{10^k} \le r_{\mathrm{abs}}(x_j,k) \le \tfrac{1}{2}$.
\end{restatable}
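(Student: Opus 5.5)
The plan is to read both bounds directly off Definition~\ref{def:absorption_radius}. The only facts we need are that $n=S_k(x_j)$ is an integer strictly between two consecutive primes $p_\ell<p_{\ell+1}$ of $\dP_k$, and that $\dP_k\subseteq[2,10^k]$.

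\textbf{Lower bound.} Since $n$, $p_\ell$ and $p_{\ell+1}$ are integers with $p_\ell<n<p_{\ell+1}$, we have $n-p_\ell\ge 1$ and $p_{\ell+1}-n\ge 1$. Hence $\min\{n-p_\ell,\,p_{\ell+1}-n\}\ge 1$. Dividing by $10^k$ gives $r_{\mathrm{abs}}(x_j,k)\ge 10^{-k}$.

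\textbf{Upper bound.} We use the elementary inequality $\min\{a,b\}\le (a+b)/2$. This gives
\begin{equation*}
r_{\mathrm{abs}}(x_j,k)\le \frac{p_{\ell+1}-p_\ell}{2\cdot 10^k}.
\end{equation*}
It therefore suffices to show $p_{\ell+1}-p_\ell\le 10^k$. This holds because both primes lie in $\dP_k\subseteq[2,10^k]$, so their difference is less than $10^k$. Combining, $r_{\mathrm{abs}}(x_j,k)<\tfrac12$, which gives the claimed bound $\le\tfrac12$.

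\textbf{Main obstacle: the definitional gap.} The definition only applies when $S_k(x_j)$ lies strictly inside a prime gap. Some values of $x_j\in[0,1]$ fall outside this case:
\begin{itemize}
\item[(i)] $n$ is itself prime;
\item[(ii)] $n\in\{0,1\}$, which lies below the smallest prime $2$;
\item[(iii)] $n$ exceeds the largest prime in $\dP_k$, for instance $n=10^k$ at $x_j=1$.
\end{itemize}
I would handle this by stating that the lemma is asserted for all $x_j$ for which $r_{\mathrm{abs}}$ is defined, namely $S_k(x_j)\in(p_\ell,p_{\ell+1})$ with both primes in $\dP_k$. Case (i) does not arise under that restriction. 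For cases (ii) and (iii), one could alternatively adopt the convention of using the boundary values $0$ or $10^k$ as surrogate endpoints. The same integer argument then still gives the lower bound. The upper bound also survives, because any interval contained in $[0,10^k]$ has length at most $10^k$, and halving gives at most $\tfrac12$ after scaling.
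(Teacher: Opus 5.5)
Your proof is correct and follows the paper's argument essentially verbatim. Integrality of $n,p_\ell,p_{\ell+1}$ makes both distances at least $1$, and the midpoint bound $\min\{a,b\}\le (a+b)/2$ together with $p_{\ell+1}\le 10^k$ gives the upper bound.

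Your discussion of the cases where $r_{\mathrm{abs}}$ is undefined goes beyond the paper, which silently assumes $S_k(x_j)$ lies strictly inside a prime gap. Note, though, that under your surrogate-endpoint convention the strict inequality fails when $n$ coincides with the surrogate endpoint, namely $n=0$, or $n=10^k$ at $x_j=1$. In those cases $r_{\mathrm{abs}}=0$, so the lower bound does not survive there.
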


\paragraph{Remarks.}  
These results formalize \emph{pixel-level stability}: perturbations smaller than $r_{\mathrm{abs}}$ vanish under prime quantization, while larger ones necessarily cause a quantization change.  
Although Lemma~\ref{lemma:absorption_bounds} permits $r_{\mathrm{abs}}\!\leq\!1/2$, practical values are tiny (e.g., $\leq 1.8\times 10^{-3}$ for $k=4$).  
Since adversarial budgets in vision ($\epsilon \ge 1/255 \approx 3.9\times 10^{-3}$) typically exceed these radii, most attacks cross prime boundaries and induce detectable discrepancies between $Z$ and $P$.

Even when perturbations lie near or below $r_{\mathrm{abs}}$, clean and adversarial samples seldom quantize identically. Because prime gaps are irregular, even a $1/255$ change can cross a prime-interval midpoint under the same bit-vector $b$, yielding different prime assignments. And when some coordinates do round identically, the $Z$-space embedding remains sensitive while $P$ stays piecewise constant, producing a measurable $Z$--$P$ mismatch. Larger prime resolution $k$ further shrinks $r_{\mathrm{abs}}$ and increases the likelihood of such discrepancies.

\subsection{Image-Level Injectivity}
\label{subsec:collision}

While prime quantization is many-to-one \emph{per pixel}, we must 
ensure it does not collapse distinct images globally.  
Lemma~\ref{lemma:collision-prob} shows that such collisions are 
exponentially unlikely.

\begin{restatable}[Collision probability]{lemma}{CollisionProb}
\phantomsection
\label{lemma:collision-prob}
Fix $k\ge 2$ and let $N:=10^k$.  
Let $x,x'\in[0,1]^d$ be two independent random images with i.i.d.\ pixel
marginals whose densities are bounded by $\Lambda$ on $[0,1]$ (in particular,
$\Lambda=1$ for the uniform distribution).  
For a fixed secret bit vector $b\in\{0,1\}^d$, let $T_k^{(b)}$ be the
prime–quantization transform (Definition~\ref{def:prime_quant}).
Assume the prime–gap envelope $G_k$ from Assumption~\ref{ass:Gk-allk}. Then, $\Pr\!\big[T_k^{(b)}(x)=T_k^{(b)}(x')\big]
\ \le\ \Big(\,\Lambda^2\,\frac{G_k}{N}\,\Big)^{\!d}$.
In particular, for $\Lambda=1$, the collision probability decays as
$\big(G_k/10^k\big)^{d}$ in the number of pixels.
\end{restatable}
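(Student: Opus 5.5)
Since $b$ is fixed and $x,x'$ are independent with i.i.d.\ coordinates, the event $\{T_k^{(b)}(x)=T_k^{(b)}(x')\}$ is the intersection over $j=1,\dots,d$ of the per-pixel events $E_j:=\{T_k^{(b_j)}(x_j)=T_k^{(b_j)}(x'_j)\}$. The pairs $(x_j,x'_j)$ are independent across $j$, so
\[
\Pr\big[T_k^{(b)}(x)=T_k^{(b)}(x')\big]=\prod_{j=1}^d \Pr[E_j].
\]
It therefore suffices to prove the single-pixel bound $\Pr[E_j]\le \Lambda^2 G_k/N$ for each $j$. I read Assumption~\ref{ass:Gk-allk} as saying that every gap between consecutive primes of $\dP_k$ (together with the boundary intervals near $0$ and $10^k$) has length at most $G_k$.

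\textbf{Single-pixel step.} Fix $b_j$. The map $x_j\mapsto T_k^{(b_j)}(x_j)$ factors through $S_k$, and it takes a constant value on each preimage of a prime. For a prime $p$, let $A_p:=\{u\in[0,1]: T_k^{(b_j)}(u)=S_k^{-1}(p)\}$. For $b_j=0$, $A_p$ consists of the integers $n$ with $p\le n<p_{\ell+1}$, scaled by $10^{-k}$, so it is an interval of length at most $(\text{gap})/N\le G_k/N$. For $b_j=1$ the interval is $p_{\ell}<n\le p$, scaled in the same way, with the same bound. The sets $A_p$ partition $[0,1]$ up to boundary conventions. I will handle the edge cases (integers below the smallest prime, above the largest prime, or the point $x_j=1$ where $S_k=N$) by assigning them to the nearest available prime, and I will use the envelope assumption to cover those boundary intervals as well. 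Writing $q_p:=\Pr[x_j\in A_p]$, we get
\[
\Pr[E_j]=\sum_p q_p^2\le \Big(\max_p q_p\Big)\sum_p q_p=\max_p q_p.
\]
The density bound gives $q_p\le \Lambda\,|A_p|\le \Lambda G_k/N$. Hence $\Pr[E_j]\le \Lambda G_k/N\le \Lambda^2G_k/N$, where the last inequality holds whenever $\Lambda\ge1$, and a density on $[0,1]$ always has $\Lambda\ge 1$.

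\textbf{Alternative matching the stated constant.} One can also bound $\sum_p q_p^2\le \Lambda^2\sum_p |A_p|^2\le \Lambda^2 (G_k/N)\sum_p|A_p|=\Lambda^2G_k/N$, using $\sum_p|A_p|\le 1$. This produces exactly the factor $\Lambda^2$, so I would present this version. Multiplying over the $d$ coordinates then gives the claim, and setting $\Lambda=1$ gives the uniform case.

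\textbf{Main obstacle.} The only delicate point is the bookkeeping of the cells $A_p$: they must partition $[0,1]$ and each must have length at most $G_k/N$. This includes integers that are themselves prime, which the strict inequalities in Definition~\ref{def:prime_rounding} leave implicit, and I will assume they map to themselves. It also includes the boundary intervals below the first prime and above the last prime up to $10^k$. I will check that Assumption~\ref{ass:Gk-allk} is stated strongly enough to cover these boundary intervals. If it is not, I will absorb them by noting that they also have length at most $G_k$, or by enlarging $G_k$ slightly.
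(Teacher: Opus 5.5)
Your proposal is correct and follows essentially the paper's argument: a per-pixel collision means both scaled values land in the same rounding cell, the collision probability is a sum of squared cell probabilities bounded via the density by $\Lambda^2\sum_p |A_p|^2 \le \Lambda^2 G_k/N$ (maximum times sum), and independence across pixels gives the $d$-th power. Your bookkeeping is more careful than the paper's, which partitions into open gaps $(p_j,p_{j+1})$ and so silently drops the primes themselves and the boundary integers; your first variant also gives the sharper per-pixel factor $\Lambda G_k/N$.
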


\paragraph{Remarks.}
Lemma~\ref{lemma:collision-prob} establishes that global collisions are vanishingly rare.  
Even with pixel-level absorption, distinct images remain separable: e.g., for CIFAR-10 ($d=3072$) and $k=4$, $(G_k/10^k)^d$ is effectively zero. Thus, quantization is locally many-to-one but globally almost injective, ensuring discriminability while dampening small perturbations.

\subsection{Bridge to GW Separation}
Pixel-level absorption (Def.~\ref{def:absorption_radius}, Prop.~\ref{prop:absorption_guarantee}) and image-level injectivity (Lemma~\ref{lemma:collision-prob}) set the boundary conditions: if $\epsilon \le r_{\mathrm{abs}}$, perturbations vanish in $P$ while $Z$ still moves; if $\epsilon>r_{\mathrm{abs}}$, quantization shifts and $P$ changes, so in both regimes $Z$ and $P$ neighborhoods diverge. To formalize these divergences, we embed samples into spatial–semantic product spaces $\mathcal{C}=(Z\times Y,d_{\mathcal{C}})$ and $\mathcal{K}=(P\times Y,d_{\mathcal{K}})$, representing each image as $(h_\theta(x),y)$ and $(T_k^{(b)}(x),y)$. By Theorem~\ref{thm:component_upper_bound} and Corollary~\ref{cor:tightest_bound}, the $\ell_\infty$ product is the tightest among admissible component metrics, so any discrepancy in geometry or label mass yields separation in $\mathcal{C}$ and $\mathcal{K}$. This construction underlies the GW envelope and gap theorems that follow.

\begin{figure*}[t]
  \centering
  \includegraphics[width=0.4\textwidth]{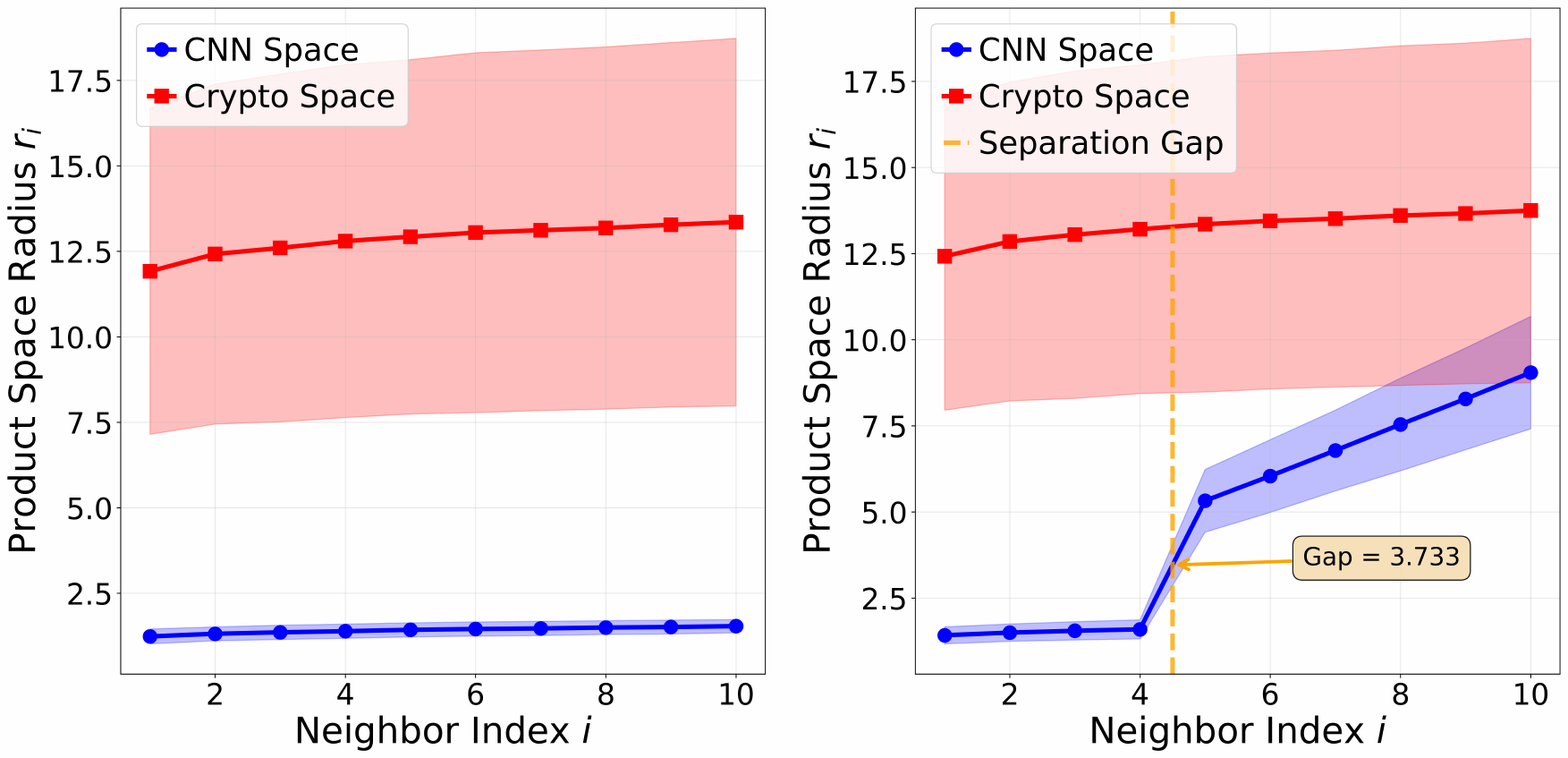}\hfill
  \raisebox{5mm}{\includegraphics[width=0.3\textwidth]{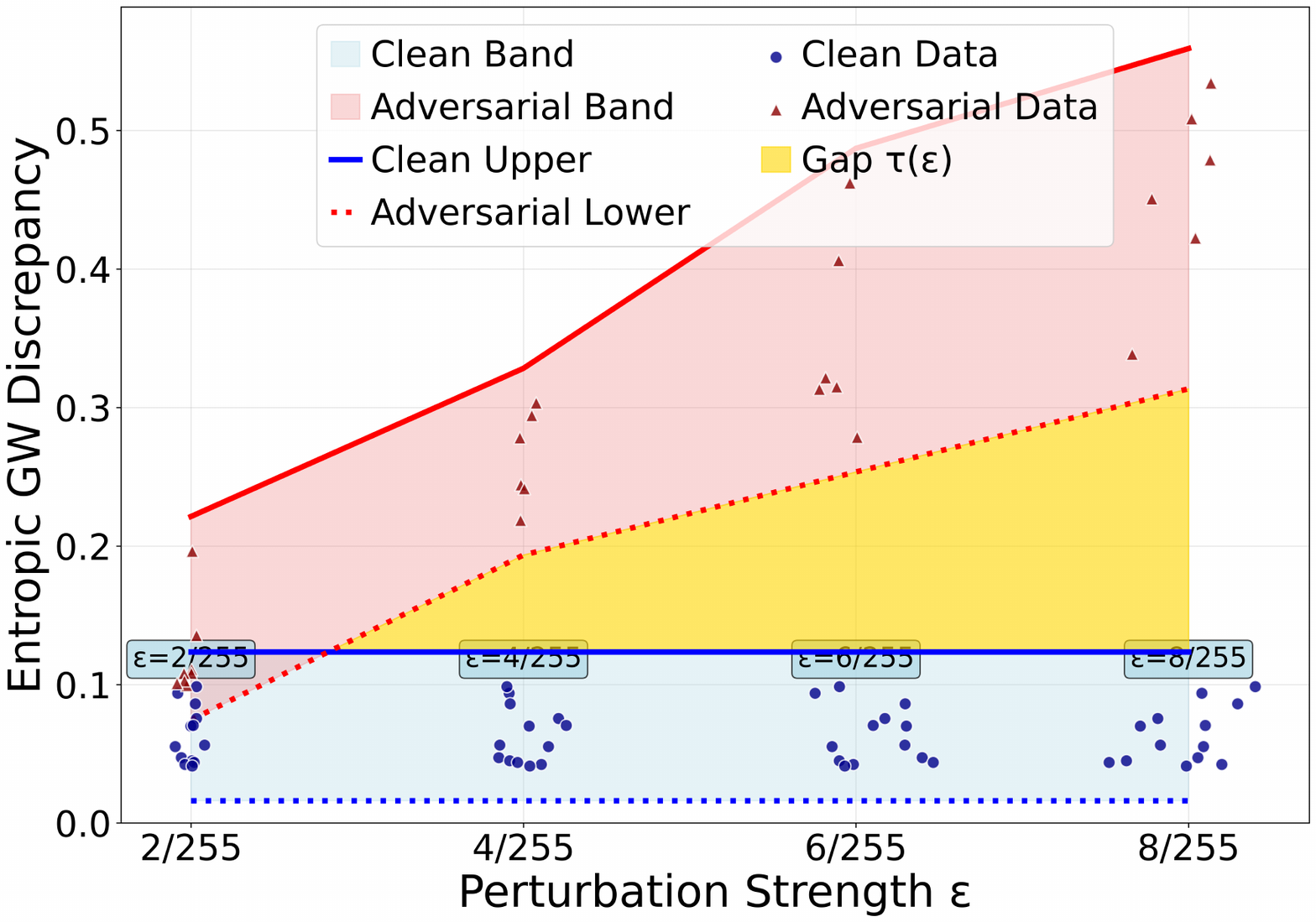}}\hfill
  \raisebox{5mm}{\includegraphics[width=0.3\textwidth]{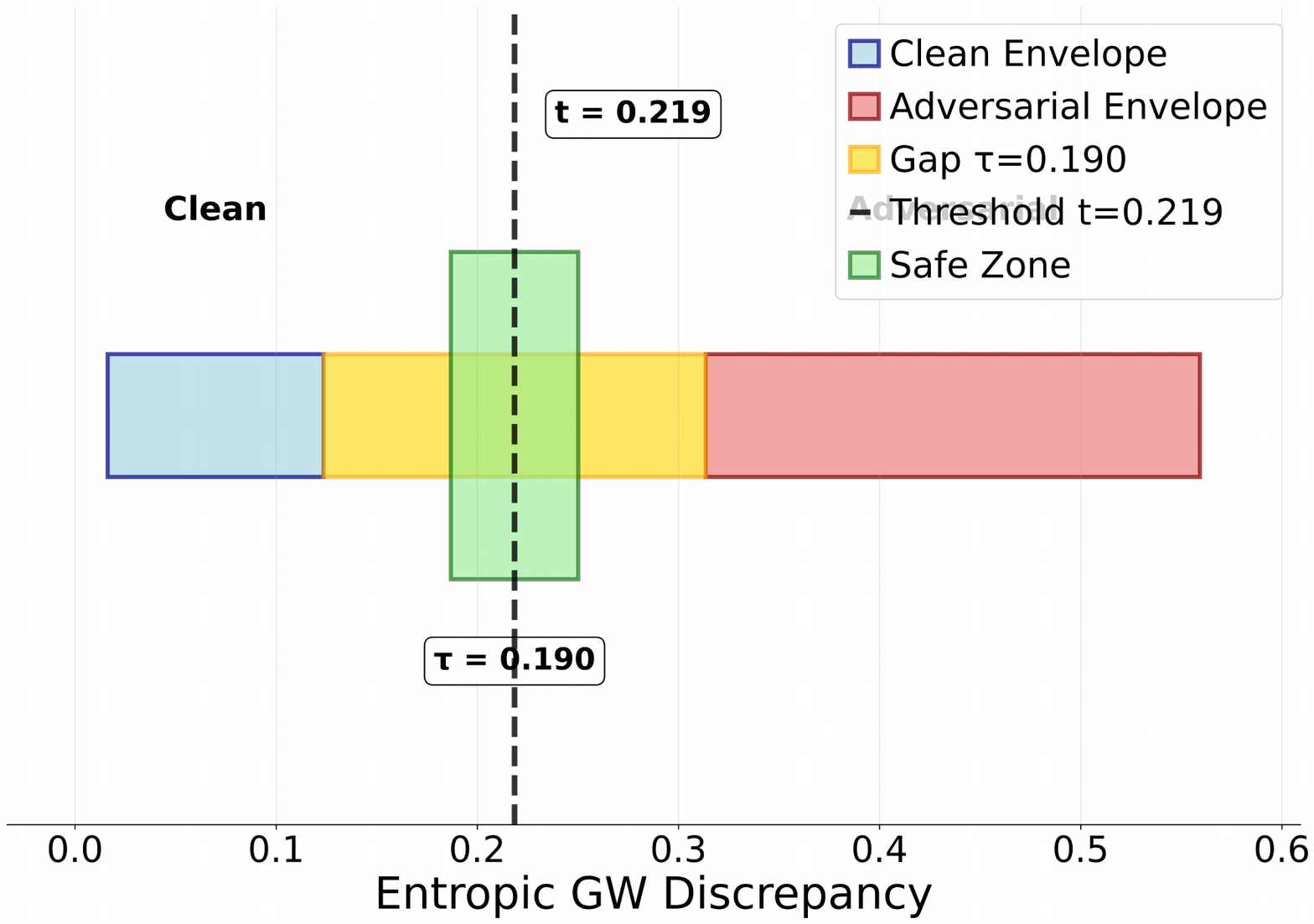}}
  \caption{Geometric inconsistency detection and gap theorem validation. 
  1st panel: clean neighborhoods; 2nd panel: adversarial separation ($\gamma_{\mathcal{C}} = 3.567$); 
  3rd panel: gap theorem across $\varepsilon \in \{2/255, 4/255, 6/255, 8/255\}$; 
  4th panel: risk control thresholding (left:clean; right:adversarial).}
  \label{fig:geometric_inconsistency}
\end{figure*}
\subsection{Diameter Envelopes in $\mathcal C$ and $\mathcal K$}
We now summarize the behavior of $k$-nearest neighbor diameters in the two product spaces 
$\mathcal C$ (CNN-based) and $\mathcal K$ (prime-quantized). 
For clean samples, diameters concentrate tightly around a median distance; 
for adversarially perturbed samples, explicit additive expansion terms appear. 
The full technical statements and proofs are deferred to 
Appendix~\ref{app:diameters}.

\begin{lemma}[Unified local diameter envelopes in $\mathcal C$ and $\mathcal K$]
\label{lem:diameter-envelope}
For any confidences 
$\delta_{\mathrm{clean}},\delta_{\mathrm{env}},\delta_{\mathrm{grad}}\in(0,1)$
and local neighborhood size $K_{\mathrm{lo}}\ge2$,
and under the variance proxy (Assumption~\ref{ass:A1}) and prime-gap sensitivity
(Assumption~\ref{ass:key-sens}) conditions (see Appendix~\ref{app:diameters}),
define the clean envelopes as 
$
U^{\mathcal C}_{\mathrm{clean}}
:= 
2\,\mu_{\mathcal{C}}
+
2\left(
\sqrt{\frac{2\log K_{lo}}{d}}
+
\sqrt{\frac{2\log(2/\delta)}{d}}
\right)
$
and 
$
U^{\mathcal K}_{\mathrm{clean}}
:=
2A\,\mu_{\mathcal K}^{\max}(q)
+ C_k\,\sqrt d\Big(\sqrt{2\log K_{lo}}
+\sqrt{2\log(2/\delta_{\mathrm{env}})}\Big)
$. Additionally, we define the Jacobian drift as $J := \tfrac{2\,\sqrt d\,\sigma}{\sqrt{\delta}}\;\|\eta\|_\infty$.
Then, for clean queries $q$ and adversarial queries $\tilde q=q+\eta$, the following bounds hold:
\begin{align}
\operatorname{diam}\big(\mathcal N^{\mathcal C}_{K_{\mathrm{lo}}}(q)\big)
&\le U^{\mathcal C}_{\mathrm{clean}},
\label{eq:diamC-clean}\\
\operatorname{diam}\big(\mathcal N^{\mathcal C}_{K_{\mathrm{lo}}}(\tilde q)\big)
&\le U^{\mathcal C}_{\mathrm{clean}} + J + 2\,\mathbf 1_{\{y\neq \hat y\}},
\label{eq:diamC-adv}\\
\operatorname{diam}\big(\mathcal N^{\mathcal K}_{K_{\mathrm{lo}}}(q)\big)
&\le U^{\mathcal K}_{\mathrm{clean}},
\label{eq:diamK-clean}\\
\operatorname{diam}\big(\mathcal N^{\mathcal K}_{K_{\mathrm{lo}}}(\tilde q)\big)
&\le U^{\mathcal K}_{\mathrm{clean}}
     + J + 2\,\mathbf 1_{\{y\neq \hat y\}}.
\label{eq:diamK-adv}
\end{align}
Here $\mu_{\mathcal C}$ is the clean median pairwise distance in $\mathcal C$,
$\mu_{\mathcal K}^{\max}(q)$ is the maximum key-annealed median distance in $\mathcal K$,
$\sigma$ is the variance proxy from Assumption~\ref{ass:A1},
and $C_k=2G_k$ is the prime-gap sensitivity constant from Assumption~\ref{ass:key-sens}.
\end{lemma}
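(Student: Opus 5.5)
The plan is to prove the four bounds by one template. First I reduce neighborhood diameters to distances from a reference point. Then I bound the clean terms by a maximal concentration inequality over the $K_{\mathrm{lo}}$ neighbors. The adversarial bounds follow from the clean ones by the triangle inequality plus a drift term.

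\textbf{Reduction to the $\ell_\infty$ product metric.} By Theorem~\ref{thm:component_upper_bound} and Corollary~\ref{cor:tightest_bound}, we work in $\mathcal C$ and $\mathcal K$ with $d=\max\{d_{\text{spatial}},d_{\text{label}}\}$, taking the label metric as the discrete metric scaled so that label disagreement costs at most $1$.

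For any finite set $\mathcal N$ and any anchor $a$, the triangle inequality gives $\operatorname{diam}(\mathcal N)\le 2\max_{z\in\mathcal N} d(a,z)$. It therefore suffices to bound the maximal anchor-to-neighbor distance by half of each envelope; this accounts for the leading factors $2\mu_{\mathcal C}$ and $2A\mu^{\max}_{\mathcal K}(q)$.

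\textbf{Clean bounds \eqref{eq:diamC-clean} and \eqref{eq:diamK-clean}.}
\begin{itemize}
\item In $\mathcal C$, Assumption~\ref{ass:A1} says each normalized distance $d(q,z_i)$ is sub-Gaussian about its median $\mu_{\mathcal C}$ with proxy of order $1/d$.
\item A union bound over the $K_{\mathrm{lo}}$ neighbors, or equivalently the standard maximal inequality $\mathbb E\max_i(\cdot)\le\sqrt{2\sigma^2\log K}$, combined with a tail at level $\delta/2$, gives $\max_i d(q,z_i)\le \mu_{\mathcal C}+\sqrt{2\log K_{\mathrm{lo}}/d}+\sqrt{2\log(2/\delta)/d}$ with probability $1-\delta$.
\item Doubling this yields $U^{\mathcal C}_{\mathrm{clean}}$.
\item In $\mathcal K$ the same argument applies, except the concentration comes from Assumption~\ref{ass:key-sens}. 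Changing one pixel's prime assignment moves the quantized coordinate by at most one prime gap, at most $G_k$, so the map is coordinatewise $C_k$-Lipschitz in a bounded-differences sense.
\item McDiarmid's inequality over the $d$ coordinates then gives deviations of order $C_k\sqrt d\,\sqrt{2\log(\cdot)}$ above the key-annealed median, where the key is taken at its worst case via $\mu^{\max}_{\mathcal K}(q)$ and $A$ absorbs the annealing constant. Union-bounding over $K_{\mathrm{lo}}$ gives $U^{\mathcal K}_{\mathrm{clean}}$ at confidence $\delta_{\mathrm{env}}$.
\end{itemize}

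\textbf{Adversarial bounds \eqref{eq:diamC-adv} and \eqref{eq:diamK-adv}.}
\begin{itemize}
\item Use the clean query $q$ as anchor but the perturbed neighborhood $\mathcal N(\tilde q)$. For $z\in\mathcal N(\tilde q)$ we have $d(q,z)\le d(q,\tilde q)+d(\tilde q,z)$, and the $K$-NN property bounds $d(\tilde q,z)$ by the distance from $\tilde q$ to the $K$-th clean neighbor of $q$, which is at most $d(\tilde q,q)+\max_{z'\in\mathcal N(q)}d(q,z')$.
\item The spatial displacement $d(q,\tilde q)$ is controlled by a first-order bound $\|h_\theta(x+\eta)-h_\theta(x)\|\le\|\nabla h\|\,\|\eta\|_2\le\|\nabla h\|\sqrt d\,\|\eta\|_\infty$.
\item Assumption~\ref{ass:A1} bounds the Jacobian norm in second moment by $\sigma$, so Chebyshev/Markov at confidence $\delta_{\mathrm{grad}}$ gives $\|\nabla h\|\le\sigma/\sqrt{\delta}$. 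Together with the factor $2$ from the two triangle steps, this produces $J$.
\item The label coordinate contributes at most $1$ per step when the predicted label $\hat y$ differs from $y$, giving $2\,\mathbf 1_{\{y\ne\hat y\}}$. In the $\ell_\infty$ product these terms enter additively as an upper bound.
\item For $\mathcal K$ the same decomposition is used. The spatial drift of $T_k^{(b)}$ is bounded either by $0$ (absorbed coordinates, Proposition~\ref{prop:absorption_guarantee}) or by the same $J$, with the worst case taken.
\item A final union bound over $\delta_{\mathrm{clean}},\delta_{\mathrm{env}},\delta_{\mathrm{grad}}$ combines everything.
\end{itemize}

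\textbf{Main obstacle.} The hardest step is making the $\mathcal K$ drift bounded by $J$ honest. $T_k^{(b)}$ is piecewise constant and not differentiable, so no Jacobian exists. I would first try to bound it directly: each coordinate moves by at most $|\eta_j|$ plus one prime gap. That yields a term like $\sqrt d(\|\eta\|_\infty+G_k/10^k)$, and I would try to absorb the $G_k$ part into the $C_k\sqrt d$ slack already present in $U^{\mathcal K}_{\mathrm{clean}}$. If this fails, the statement likely needs $J$ to be reinterpreted as a generic drift bound via Assumption~\ref{ass:key-sens}.
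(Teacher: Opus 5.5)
Your route differs from the paper's and, once repaired, is more general. The paper gets the adversarial bounds only after \emph{assuming} the $K$-NN set does not move: Theorem~\ref{thm:adv-upper} imposes the gap condition $D_{(K+1)}(z)-D_{(K)}(z)>2d_{\mathcal C}(z,q)$, and Theorem~\ref{thm:K-adv-concise} assumes $\mathcal N^{\mathcal K}_K(q_\eta)=\mathcal N^{\mathcal K}_K(q)$. Comparing $K$-th neighbor distances, as you do, needs no such stability.

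\textbf{Adversarial constants.} As written, your adversarial step loses a factor of two. Anchoring at $q$ gives $\max_{z\in\mathcal N(\tilde q)}d(q,z)\le 2d(q,\tilde q)+D_{(K)}(q)$. Your reduction $\operatorname{diam}\le 2\max_z d(q,z)$ then yields $U_{\mathrm{clean}}+4d(q,\tilde q)\le U_{\mathrm{clean}}+2J+4\,\mathbf 1_{\{y\neq\hat y\}}$: the factor $2$ from your two triangle steps is doubled again. Anchor at $\tilde q$ instead. Then $\operatorname{diam}\mathcal N(\tilde q)\le 2D_{(K)}(\tilde q)$ and $D_{(K)}(\tilde q)\le\max_{z'\in\mathcal N(q)}d(\tilde q,z')\le D_{(K)}(q)+d(q,\tilde q)$. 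Combined with $d(q,\tilde q)\le\Delta+\mathbf 1_{\{y\neq\hat y\}}$, this gives exactly $U_{\mathrm{clean}}+J+2\,\mathbf 1_{\{y\neq\hat y\}}$.

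\textbf{The drift $\Delta$.} You need no Jacobian. Assumption~\ref{ass:A1} is a second-moment bound on $\Delta(\eta,x)=\|f(x+\eta)-f(x)\|_2$ for a generic map $f$. Markov applied to $\Delta^2$ gives $\Delta\le\sqrt d\,\sigma\|\eta\|_\infty/\sqrt{\delta_{\mathrm{grad}}}$ directly. Your operator-norm step uses a bound A1 does not state and ignores the Taylor remainder. This is also how the paper disposes of your ``main obstacle'': Theorem~\ref{thm:K-adv-concise} simply imposes A1 with $f=T_k^{(b)}$, and piecewise constancy is irrelevant to a moment condition. Your deterministic bound $\sqrt d(\|\eta\|_\infty+G_k/10^k)$ cannot produce $J$. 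The absorption fallback does go through, but only because the $\mathcal K$ envelopes are vacuous as written: $T_k^{(b)}$ maps into $[0,1]^d$ while $C_k=2G_k$ is in integer units. Hence every $\mathcal K$-diameter is at most $\sqrt d<2G_k\sqrt{2\log 2}\,\sqrt d\le U^{\mathcal K}_{\mathrm{clean}}$.

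\textbf{Clean bounds.} Assumption~\ref{ass:A1} says nothing about clean pairwise distances. Theorem~\ref{thm:cnn_clean_diameter} rests on a separately postulated tail proxy $\Pr(|d_{\mathcal C}(z_i,z_j)-\mu_{\mathcal C}|\ge t)\le Ce^{-cdt^2}$, and you need that instead.

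You also cannot union-bound over ``the $K_{\mathrm{lo}}$ neighbors'', because that set is random (through the sample in $\mathcal C$, through the key in $\mathcal K$). Fix a deterministic index set $S$ with $|S|=K$ and use $D_{(K)}(q)\le\max_{i\in S}d(q,z_i)$. The union bound over $S$ then gives the $\sqrt{2\log K}$ term. This is simpler than the paper's order-statistics bound $\binom{n}{n-K+1}\Pr(D_i\ge\tau)^{n-K+1}$, which needs the regime $n\ge K^2$.

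In $\mathcal K$, state that McDiarmid runs over the independent key bits with the data quenched and concentrates around the key mean. Your constant $A$ must then bridge $\max_i\mathbb E_b\,d_{\mathcal K}(q,z_i)$ (both points embedded under the random key) to $\mu^{\max}_{\mathcal K}(q)$. Note also that $\mu^{\max}_{\mathcal K}(q)$ is a maximum over dataset points, not a worst case over keys.

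So repaired, your query-anchored argument fits the definition $\mu^{\max}_{\mathcal K}(x)=\max_i\mu^{\mathrm{key}}_{\mathcal K}(x,x_i)$ better than the paper's. The paper bounds the diameter by the maximum over all $\binom n2$ dataset pairs, under the scale condition $\max_{i<j}\mathbb E_b D_{ij}\le 2A\mu^{\max}_{\mathcal K}(x)$ and the regime $\log\binom n2=O(\log K)$. Your version also supplies exactly the radius bound $2D^{\mathcal K}_{(K)}(q)\le U^{\mathcal K}_{\mathrm{clean}}$ that the corrected adversarial step uses.
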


\textbf{Geometric Insight.} In $\mathcal C$, clean neighborhood diameters concentrate around a median distance (App.~Lemma~\ref{thm:cnn_clean_diameter}), while adversarial perturbations add a Jacobian-driven drift and a possible label-flip penalty.  
In $\mathcal K$, clean neighborhoods are stabilized by prime-gap sensitivity (App.~Thm.~\ref{thm:K-annealed}), whereas adversarial perturbations add terms from key sensitivity, perturbation norms, and label flips.  
Together these define the \emph{diameter envelopes} that underlie the cross-space GW theorems.  
Fig.~\ref{fig:geometric_inconsistency} (1st–2nd panels) empirically confirms this: clean samples form compact clusters consistent with App. Thms.~\ref{thm:cnn_clean_diameter}, \ref{thm:K-annealed}, while adversarial queries induce the predicted separation gap $\gamma_{\mathcal{C}}=3.567$ (~Thm.~\ref{thm:gw-adv-lower}).

\subsection{GW Bounds: Clean vs.\ Adversarial}
We now move from \emph{local geometry} ($K$-NN diameter bounds in 
$\mathcal C$ and $\mathcal K$, Lemma~\ref{lem:diameter-envelope}) 
to a \emph{distributional geometry} comparison across spaces. 
The Gromov--Wasserstein (GW) distance aligns pairwise distance structures, 
allowing us to bound: (i) in the \emph{clean case}, similarity of 
$\mathcal C$ and $\mathcal K$, and (ii) in the \emph{adversarial case}, a provable increase when perturbations inflate diameters differently across spaces. This separation underlies our detection framework.

\begin{restatable}[Clean cross--space GW upper bound via $K$--NN radii]{theorem}{GWCleanStar}
\phantomsection
\label{thm:gw-clean-xspace-star}
Fix a clean query $x$ and consider its local neighborhoods
$\mathcal N_{K}^{\mathcal C}(x)\subset\mathcal C$ and
$\mathcal N_{K}^{\mathcal K}(x)\subset\mathcal K$,
each endowed with the uniform probability measure on $K$ points.

Let
$R_{\mathcal C} := \max_{1\le i\le K} d_{\mathcal C}(x_i,q_{\mathcal C})$ amd
$R_{\mathcal K} := \max_{1\le i\le K} d_{\mathcal K}(y_i,q_{\mathcal K})$
denote the clean $K$--NN radii in $\mathcal C$ and $\mathcal K$, respectively.
Assume the clean radius envelopes
\(
R_{\mathcal C} \le U^{\mathcal C}_{\star}
\)
and
\(
R_{\mathcal K} \le U^{\mathcal K}_{\star}
\)
hold with probabilities at least
\(1-\delta_{\mathcal C}\) and \(1-\delta_{\mathcal K}\),
as established in Appendix~\ref{app:diameters}.

Then, with probability at least \(1-(\delta_{\mathcal C}+\delta_{\mathcal K})\),
the quadratic Gromov--Wasserstein discrepancy between the clean neighborhoods
satisfies
\[
\GW^2\!\Big(\mathcal N_{K}^{\mathcal C}(x),\ \mathcal N_{K}^{\mathcal K}(x)\Big)
\;\le\;
4\Big(1-\tfrac{1}{K}\Big)\,
\big(U^{\mathcal C}_{\star}+U^{\mathcal K}_{\star}\big)^{2}.
\]
\end{restatable}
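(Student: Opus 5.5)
The plan is to bound $\GW^2$ from above by evaluating the GW objective at one explicit coupling. Both neighborhoods carry the uniform measure on $K$ points, so the index-matching coupling $\gamma^\star=\frac1K\sum_{i=1}^K\delta_{(x_i,y_i)}$ is admissible. Here $x_i$ denotes the $i$-th nearest neighbor in $\mathcal C$ and $y_i$ the $i$-th in $\mathcal K$; any bijection would do. Since $\GW^2$ is an infimum over $\Pi(\mu,\nu)$, we get
\[
\GW^2\!\Big(\mathcal N_{K}^{\mathcal C}(x),\mathcal N_{K}^{\mathcal K}(x)\Big)\;\le\;\frac{1}{K^2}\sum_{i,j=1}^{K}\big|d_{\mathcal C}(x_i,x_j)-d_{\mathcal K}(y_i,y_j)\big|^2 .
\]

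\textbf{Diagonal terms.} When $i=j$, both distances vanish, so those $K$ terms contribute nothing. Only the $K(K-1)$ off-diagonal pairs remain, each weighted by $1/K^2$. Their total weight is $K(K-1)/K^2=1-\tfrac1K$, which is exactly the prefactor in the statement.

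\textbf{Off-diagonal terms.} For $i\neq j$, I use $|a-b|\le a+b$ for nonnegative reals, followed by the triangle inequality through the query in each space:
\[
d_{\mathcal C}(x_i,x_j)\le d_{\mathcal C}(x_i,q_{\mathcal C})+d_{\mathcal C}(q_{\mathcal C},x_j)\le 2R_{\mathcal C},
\qquad
d_{\mathcal K}(y_i,y_j)\le 2R_{\mathcal K}.
\]
Each squared summand is therefore at most $(2R_{\mathcal C}+2R_{\mathcal K})^2=4(R_{\mathcal C}+R_{\mathcal K})^2$. Summing over the off-diagonal pairs gives the deterministic bound
\[
\GW^2\le 4\Big(1-\tfrac1K\Big)(R_{\mathcal C}+R_{\mathcal K})^2 .
\]

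\textbf{Probabilistic step.} Let $E_{\mathcal C}=\{R_{\mathcal C}\le U^{\mathcal C}_\star\}$ and $E_{\mathcal K}=\{R_{\mathcal K}\le U^{\mathcal K}_\star\}$. By hypothesis, and as supplied by the appendix envelopes, these fail with probability at most $\delta_{\mathcal C}$ and $\delta_{\mathcal K}$ respectively. A union bound gives $\Pr[E_{\mathcal C}\cap E_{\mathcal K}]\ge 1-(\delta_{\mathcal C}+\delta_{\mathcal K})$, so no independence is needed. On this event, $t\mapsto t^2$ is monotone on $[0,\infty)$, so we may replace $R_{\mathcal C}+R_{\mathcal K}$ by $U^{\mathcal C}_\star+U^{\mathcal K}_\star$, which yields the claim.

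\textbf{Main obstacle.} The argument itself is elementary, so the only real risk is bookkeeping. First, the radii must be measured from the query points $q_{\mathcal C}$ and $q_{\mathcal K}$ in their own spaces. Second, the triangle inequality must be applied to $d_{\mathcal C}$ and $d_{\mathcal K}$, which are genuine metrics: the $\ell_\infty$ product metrics built from Theorem~\ref{thm:component_upper_bound}. Third, the factor $1-\tfrac1K$ depends on the measures being uniform with exactly $K$ atoms each. Using the cruder bound $|a-b|\le\max(a,b)$ would even give the sharper constant $4\max(R_{\mathcal C},R_{\mathcal K})^2$. The stated form with the sum is weaker, so it follows a fortiori.
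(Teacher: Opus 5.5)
Your proposal is correct and follows the paper's proof. Both arguments evaluate the objective at the identity coupling, bound each off-diagonal distance by $2R_{\mathcal C}$ (resp.\ $2R_{\mathcal K}$) via the triangle inequality through the query, count the $K(K-1)$ off-diagonal entries, and finish with a union bound. The only cosmetic difference is that you bound each entry $|d_{\mathcal C}(x_i,x_j)-d_{\mathcal K}(y_i,y_j)|$ directly, whereas the paper bounds $\|D_{\mathcal C}\|_F$ and $\|D_{\mathcal K}\|_F$ separately and applies the Frobenius triangle inequality; both routes give the same constant $4(1-\tfrac1K)(R_{\mathcal C}+R_{\mathcal K})^2$.
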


\paragraph{Notation (adversarial queries, radii, and gap).}
For a clean input $x$ and perturbation $\eta$, the adversarial query in 
$\mathsf M\in\{\mathcal C,\mathcal K\}$ is 
$\tilde q_{\mathsf M}=z^{\mathsf M}(x+\eta)$.  
Its $K$ nearest neighbors form 
$\widetilde{\mathcal N}_K^{\mathsf M}(x+\eta)=\{z_1^{\mathsf M},\dots,z_K^{\mathsf M}\}$ 
with radii $r_i^{\mathsf M}=d_{\mathsf M}(z_i^{\mathsf M},\tilde q_{\mathsf M})$ 
and maximum $R_{\mathsf M}^{\mathrm{adv}}=\max_i r_i^{\mathsf M}$.  
We partition the $K$ neighbors into an inner set $L$ of size $(1-\theta)K$ and outer set $H$ of size $\theta K$, and define the separation gap as
$\gamma_{\mathsf M}=\min_{i\in H}r_i^{\mathsf M}-\max_{j\in L}r_j^{\mathsf M}$.

\begin{restatable}[Adversarial cross--space GW lower bound]{theorem}{GWAdvLower}
\phantomsection
\label{thm:gw-adv-lower}
Fix a query $x$ and perturbation $\eta$, and consider the adversarial neighborhoods
$\widetilde{\mathcal N}_{K}^{\mathcal C}(x+\eta)$ and
$\widetilde{\mathcal N}_{K}^{\mathcal K}(x+\eta)$,
each with uniform measure on $K$ points.
Let $\gamma_{\mathcal C}$ be the separation gap and let 
$\operatorname{diam}\!\big(\widetilde{\mathcal N}_{K}^{\mathcal K}(x+\eta)\big)$ denote
the adversarial $K$--NN diameter in $\mathcal K$, bounded as in Appendix
Theorem~\ref{thm:K-adv-concise}.
Define the adversarial $\mathcal K$--diameter
$
D_{\mathcal K}^{\mathrm{adv}}:=
\operatorname{diam}\!\big(\widetilde{\mathcal N}_{K}^{\mathcal K}(x+\eta)\big).
$
Then, with probability at least $1-\delta_{\mathcal K}^{\mathrm{env}}$,
\[
\GW^{2}\!\Big(
\widetilde{\mathcal N}_{K}^{\mathcal C}(x+\eta),\,
\widetilde{\mathcal N}_{K}^{\mathcal K}(x+\eta)
\Big)
\ \ge\ 
2\,\theta(1-\theta)\,
\big(\gamma_{\mathcal C}-D_{\mathcal K}^{\mathrm{adv}}\big)_{+}^{2}.
\]
\end{restatable}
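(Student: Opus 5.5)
The plan is a direct coupling-independent argument: I will show that for \emph{every} coupling $\gamma\in\Pi(\mu,\nu)$ between the uniform measures $\mu$ on $\widetilde{\mathcal N}_{K}^{\mathcal C}(x+\eta)=\{z_1^{\mathcal C},\dots,z_K^{\mathcal C}\}$ and $\nu$ on $\widetilde{\mathcal N}_{K}^{\mathcal K}(x+\eta)$, the GW integrand is bounded below by $(\gamma_{\mathcal C}-D_{\mathcal K}^{\mathrm{adv}})_+^2$ on a set of $\gamma\otimes\gamma$-mass at least $2\theta(1-\theta)$. Taking the infimum over $\gamma$ then gives the claim. Throughout I assume $\theta K$ is an integer, so that $|H|=\theta K$ and $|L|=(1-\theta)K$ as in the notation paragraph.

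\emph{Step 1: lower bound on $\mathcal C$-distances across the split.} For $i\in H$ and $j\in L$, the reverse triangle inequality in $(\mathcal C,d_{\mathcal C})$ with respect to the adversarial query $\tilde q_{\mathcal C}$ gives
\[
d_{\mathcal C}(z_i^{\mathcal C},z_j^{\mathcal C})\ \ge\ r_i^{\mathcal C}-r_j^{\mathcal C}\ \ge\ \min_{i'\in H}r_{i'}^{\mathcal C}-\max_{j'\in L}r_{j'}^{\mathcal C}\ =\ \gamma_{\mathcal C}.
\]
By symmetry of the metric, the same bound holds for $i\in L$, $j\in H$.

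\emph{Step 2: upper bound on $\mathcal K$-distances.} Every pair $y,y'$ in the adversarial $\mathcal K$-neighborhood satisfies $d_{\mathcal K}(y,y')\le D_{\mathcal K}^{\mathrm{adv}}$ by definition of the diameter. Hence for $x\in H$, $x'\in L$ (or vice versa) and any $y,y'$,
\[
|d_{\mathcal C}(x,x')-d_{\mathcal K}(y,y')|\ \ge\ \big(\gamma_{\mathcal C}-D_{\mathcal K}^{\mathrm{adv}}\big)_+ .
\]
Squaring is monotone on nonnegative reals, so the same lower bound holds for $L^2(x,x',y,y')$.

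\emph{Step 3: mass counting under an arbitrary coupling.} The first marginal of $\gamma$ is $\mu$, so $\gamma(H\times\mathcal K)=\mu(H)=\theta$ and $\gamma(L\times\mathcal K)=1-\theta$. Therefore
\[
(\gamma\otimes\gamma)\big((H\times\mathcal K)\times(L\times\mathcal K)\big)=\theta(1-\theta),
\]
and the same holds with $H$ and $L$ swapped. These two events are disjoint, so together they carry mass $2\theta(1-\theta)$. Dropping the nonnegative integrand elsewhere yields
\[
\int L^2\,d(\gamma\otimes\gamma)\ \ge\ 2\theta(1-\theta)\,\big(\gamma_{\mathcal C}-D_{\mathcal K}^{\mathrm{adv}}\big)_+^2,
\]
uniformly in $\gamma$, and the infimum over $\gamma$ gives the stated bound on $\GW^2$.

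\emph{Probability qualifier and main obstacle.} Since $D_{\mathcal K}^{\mathrm{adv}}$ is defined as the realized diameter, Steps 1–3 are deterministic. The probability $1-\delta_{\mathcal K}^{\mathrm{env}}$ enters only when $D_{\mathcal K}^{\mathrm{adv}}$ is replaced by its high-probability envelope from Appendix Theorem~\ref{thm:K-adv-concise} (equivalently, Lemma~\ref{lem:diameter-envelope}, \eqref{eq:diamK-adv}); on that event the bound persists with the envelope substituted, since $(\gamma_{\mathcal C}-\cdot)_+^2$ is nonincreasing. The only delicate point I expect is Step 1. There one must ensure that radii are measured from the same query $\tilde q_{\mathcal C}$ and that $d_{\mathcal C}$ (the $\ell_\infty$ product metric from Theorem~\ref{thm:component_upper_bound}) is a genuine metric, so that the reverse triangle inequality applies. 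Once this is checked, the rest is bookkeeping of marginal masses.
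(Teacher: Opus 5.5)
Your proposal is correct and follows essentially the same route as the paper: the reverse triangle inequality for the $H\times L$ cross pairs in $\mathcal C$, the uniform diameter bound in $\mathcal K$, and a coupling-independent mass count giving the factor $2\theta(1-\theta)$. The paper sums the block weight $1/K^2$ over the $2\theta(1-\theta)K^2$ cross index pairs, which is the same computation as your $(\gamma\otimes\gamma)$-mass argument; like you, it then takes the infimum over couplings and only afterwards substitutes the envelope from Theorem~\ref{thm:K-adv-concise}, so your observation that the probability enters solely through that substitution also matches the paper.
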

\noindent Full proof is provided in Appendix~\ref{app:gw_proofs}.

\textbf{Mirror results.} 
For brevity, we omit the symmetric (i) \textit{clean lower bounds} and (ii) \textit{adversarial upper bounds} on GW, but detailed proofs are provided in Theorem~\ref{thm:gw-clean-lower} and Theorem~\ref{thm:gw-adv-upper} in Appendix~\ref{app:gw_proofs}.


\subsection{GW Gap and Risk Control}

\begin{restatable}[Cross--space GW gap]{theorem}{GWGap}
\phantomsection
\label{thm:gw-gap}
With probability at least $1-(\delta_{\mathcal C}+\delta_{\mathcal K}+\delta_{\mathcal K}^{\mathrm{env}}+\delta_{\mathrm{aux}})$, the clean and adversarial GW discrepancies satisfy 
\(
|\GW^{2}_{\mathrm{adv}} - \GW^{2}_{\mathrm{clean}}|
\ge \tau := \max\{\tau_{\mathrm{adv}},\tau_{\mathrm{clean}},0\},
\)
where $\tau_{\mathrm{adv}}=L_{\mathrm{adv}}-U_{\mathrm{clean}}$ and $\tau_{\mathrm{clean}}=L_{\mathrm{clean}}-U_{\mathrm{adv}}$. 
Under Assumption~\ref{ass:A1}, for fixed $K$ and perturbation $\|\eta\|_\infty=\varepsilon$, we obtain
\(
\tau = \Omega(d\,\sigma^2\varepsilon^2)\;-\;O\!\big((R_{\mathcal K}^{\mathrm{clean}})^2\big)
\).
\end{restatable}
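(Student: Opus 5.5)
The plan has two parts: a union-bound argument that puts all four GW envelopes on a single event, and a scaling computation for $\tau$. The four envelopes are the clean upper bound $U_{\mathrm{clean}}$ of Theorem~\ref{thm:gw-clean-xspace-star}, the adversarial lower bound $L_{\mathrm{adv}}$ of Theorem~\ref{thm:gw-adv-lower}, and the mirror bounds $L_{\mathrm{clean}}$ and $U_{\mathrm{adv}}$ (Theorems~\ref{thm:gw-clean-lower} and~\ref{thm:gw-adv-upper}). The first two fail with probability at most $\delta_{\mathcal C}+\delta_{\mathcal K}$ and $\delta_{\mathcal K}^{\mathrm{env}}$. I will charge the mirror bounds, together with the extra concentration needed for the diameter/gap control of Lemma~\ref{lem:diameter-envelope}, to $\delta_{\mathrm{aux}}$. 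A union bound then gives the stated probability, and on the intersection event
\[
L_{\mathrm{clean}}\le \GW^2_{\mathrm{clean}}\le U_{\mathrm{clean}},\qquad L_{\mathrm{adv}}\le \GW^2_{\mathrm{adv}}\le U_{\mathrm{adv}}
\]
hold simultaneously.

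\textbf{Deterministic step.} On this event, $\GW^2_{\mathrm{adv}}-\GW^2_{\mathrm{clean}}\ge L_{\mathrm{adv}}-U_{\mathrm{clean}}=\tau_{\mathrm{adv}}$, and $\GW^2_{\mathrm{clean}}-\GW^2_{\mathrm{adv}}\ge L_{\mathrm{clean}}-U_{\mathrm{adv}}=\tau_{\mathrm{clean}}$. The absolute difference dominates both signed differences and is nonnegative, so it dominates $\max\{\tau_{\mathrm{adv}},\tau_{\mathrm{clean}},0\}$. This gives the first claim.

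\textbf{Scaling step.} Since $\tau\ge\tau_{\mathrm{adv}}$, it suffices to lower bound $\tau_{\mathrm{adv}}=2\theta(1-\theta)(\gamma_{\mathcal C}-D^{\mathrm{adv}}_{\mathcal K})_+^2-4(1-\tfrac1K)(U^{\mathcal C}_\star+U^{\mathcal K}_\star)^2$, taking $U^{\mathcal K}_\star$ to be the clean radius $R^{\mathrm{clean}}_{\mathcal K}$.

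1. \emph{Lower bound the gap.} I need $\gamma_{\mathcal C}\gtrsim \sqrt d\,\sigma\varepsilon$. Under Assumption~\ref{ass:A1}, the adversarial embedding drift is of order $J=2\sqrt d\,\sigma\varepsilon/\sqrt\delta$, the same Jacobian term as in \eqref{eq:diamC-adv}. My argument is that the outer set $H$ consists of neighbors displaced by the drift, while the inner set $L$ sits at clean scale. This gives $\gamma_{\mathcal C}\ge c\sqrt d\,\sigma\varepsilon-O(U^{\mathcal C}_\star)$.

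2. \emph{Bound the $\mathcal K$-diameter.} By the absorption results (Proposition~\ref{prop:absorption_guarantee}, Lemma~\ref{lemma:absorption_bounds}) and Theorem~\ref{thm:K-adv-concise}, $D^{\mathrm{adv}}_{\mathcal K}$ is at most a constant multiple of the clean $\mathcal K$ radius plus terms that do not grow with $\sqrt d\,\sigma\varepsilon$. I will try to absorb those terms into $O(R^{\mathcal K}_{\mathrm{clean}})$.

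3. \emph{Expand.} I will use $(a-b)_+^2\ge a^2/2-b^2$ and $(u+v)^2\le 2u^2+2v^2$. With $K$ and $\theta$ fixed, this yields
\[
\tau\ge c'\,d\sigma^2\varepsilon^2-O\big((R^{\mathrm{clean}}_{\mathcal K})^2\big)-O\big((U^{\mathcal C}_\star)^2\big).
\]
The $\mathcal C$-envelope $U^{\mathcal C}_{\mathrm{clean}}=2\mu_{\mathcal C}+O(\sqrt{\log K/d})$ is $O(1)$, so its square is dominated by the $\Omega(d\sigma^2\varepsilon^2)$ term and can be folded into the constant.

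\textbf{Main obstacle.} I expect step 1 to be the hardest. Theorem~\ref{thm:gw-adv-lower} treats $\gamma_{\mathcal C}$ as a given quantity. Lemma~\ref{lem:diameter-envelope} bounds diameters only from above, and it also carries a label-flip term $2\,\mathbf 1_{\{y\neq\hat y\}}$. Turning the $J$ term into a \emph{lower} bound on the inner/outer separation therefore needs a reverse (anti-concentration) form of Assumption~\ref{ass:A1}. My first approach is the following. A sensitivity of order $\sigma\sqrt d$ per unit $\ell_\infty$ perturbation moves the query by at least a constant fraction of $J$ with the prescribed probability, charged to $\delta_{\mathrm{aux}}$. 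The triangle inequality then transfers this query displacement to a lower bound on the radii of the $\theta K$ farthest neighbors.
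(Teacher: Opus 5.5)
Your union bound and deterministic step are exactly the paper's proof. In fact the paper proves nothing beyond that two-line argument; it supports the $\Omega(d\,\sigma^2\varepsilon^2)$ scaling only with an informal discussion. For the scaling claim your proposal has a genuine gap, and the repair you sketch for step~1 cannot work. For $i\in H$ and $j\in L$, the triangle inequality gives $r_i^{\mathcal C}-r_j^{\mathcal C}\le d_{\mathcal C}(z_i^{\mathcal C},z_j^{\mathcal C})$, so
\[
\gamma_{\mathcal C}\ \le\ \operatorname{diam}\big(\widetilde{\mathcal N}_K^{\mathcal C}(x+\eta)\big).
\]
The right-hand side depends only on which dataset points are selected, not on where $\tilde q_{\mathcal C}$ sits. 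A lower bound on the query's displacement lifts inner and outer radii alike, and says nothing about their separation. For example, if the displacement $\Delta$ is orthogonal to all offsets $z_i^{\mathcal C}-q_{\mathcal C}$, the radii become $\sqrt{\Delta^2+r_i^2}$ and the gap shrinks like $O(1/\Delta)$. In the stable regime of Theorem~\ref{thm:adv-upper}, where $\widetilde{\mathcal N}^{\mathcal C}_K=\mathcal N^{\mathcal C}_K$, you only get $\gamma_{\mathcal C}\le U^{\mathcal C}_{\mathrm{clean}}=O(1)$. If both neighborhoods are stable, the clean and adversarial GW discrepancies are literally equal, so on the envelope event $\tau_{\mathrm{adv}}\le 0$. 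The gap is likewise bounded by $O(1)$ if $\tilde q_{\mathcal C}$ lands inside another tight (wrong-class) cluster. So no anti-concentration form of Assumption~\ref{ass:A1} suffices. An $\Omega(\sqrt d\,\sigma\varepsilon)$ gap requires the adversarial $\mathcal C$-neighborhood to change membership and be spread at that scale. That is a hypothesis on the data geometry around $\tilde q_{\mathcal C}$, and it has to be assumed, essentially as $\gamma_{\mathcal C}\ge c\sqrt d\,\sigma\varepsilon$.

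Step~2 also needs a different justification. The absorption results apply only when $\varepsilon\le r_{\mathrm{abs}}$ at every pixel, which the paper itself says fails for typical budgets. Moreover, the envelope of Theorem~\ref{thm:K-adv-concise} contains the full drift term $J$. Lemma~\ref{lem:diameter-envelope} together with the display above gives $\gamma_{\mathcal C}\le U^{\mathcal C}_{\mathrm{clean}}+J+2$. Subtracting an envelope that contains $+J$ therefore cancels every multiple of $J$ you could hope to extract, so $J$ cannot be absorbed into $O(R^{\mathrm{clean}}_{\mathcal K})$. The correct way to remove it is the stability hypothesis already built into Theorem~\ref{thm:K-adv-concise}. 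If $\widetilde{\mathcal N}^{\mathcal K}_K=\mathcal N^{\mathcal K}_K$, then $D^{\mathrm{adv}}_{\mathcal K}$ is exactly the clean diameter, hence at most $2R^{\mathrm{clean}}_{\mathcal K}$. With that, plus an explicit assumption $\gamma_{\mathcal C}\ge c\sqrt d\,\sigma\varepsilon$, your step~3 algebra goes through and gives
\[
\tau\ \ge\ \theta(1-\theta)c^2 d\sigma^2\varepsilon^2-O\big((R^{\mathrm{clean}}_{\mathcal K})^2\big)-O\big((U^{\mathcal C}_{\star})^2\big).
\]
The paper does not close this gap either. Its discussion substitutes Remark~\ref{rem:gammaC-instantiation}, whose bound $\gamma_{\mathcal C}\ge(\Delta_{\mathrm{clean}}-2(\cdots))_+$ decreases in $\varepsilon$ and so cannot supply an $\Omega(\sqrt d\,\sigma\varepsilon)$ gap.
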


\begin{restatable}[Risk control via GW margin]{lemma}{RiskControl}
\phantomsection
\label{lem:risk-prob}
If the gap event holds with margin $\tau>0$ and an estimator $\widehat{\GW^2_\lambda}$ satisfies 
\(
\Pr\!\big(|\widehat{\GW^2_\lambda}-\GW^2|\le\tau/3\big)\ge 1-\delta_{\mathrm{est}},
\)
then thresholding $\widehat{\GW^2_\lambda}$ at the midpoint between clean and adversarial envelopes makes no error on this event. Thus 
\(
\Pr(\text{misclassification}) \le \Pr(E_{\mathrm{gap}}^c)+\delta_{\mathrm{est}}.
\)
\end{restatable}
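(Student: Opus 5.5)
The argument is a short event calculus: combine the gap event with the estimator-accuracy event and check the midpoint threshold on their intersection. Throughout, write $E_{\mathrm{gap}}$ for the event of Theorem~\ref{thm:gw-gap}. On $E_{\mathrm{gap}}$ the clean and adversarial envelopes are separated by at least $\tau$. Write $E_{\mathrm{est}}$ for the event $|\widehat{\GW^2_\lambda}-\GW^2|\le\tau/3$, so $\Pr(E_{\mathrm{est}}^c)\le\delta_{\mathrm{est}}$.

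\textbf{Fixing the orientation.} I will first use the case $\tau=\tau_{\mathrm{adv}}=L_{\mathrm{adv}}-U_{\mathrm{clean}}>0$. On $E_{\mathrm{gap}}$, Theorems~\ref{thm:gw-clean-xspace-star} and~\ref{thm:gw-adv-lower} give:
\begin{itemize}
\item every clean input has $\GW^2\le U_{\mathrm{clean}}$;
\item every adversarial input has $\GW^2\ge L_{\mathrm{adv}}=U_{\mathrm{clean}}+\tau$.
\end{itemize}
The case $\tau=\tau_{\mathrm{clean}}$ is symmetric, using Theorems~\ref{thm:gw-clean-lower} and~\ref{thm:gw-adv-upper}. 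There the clean scores lie above $L_{\mathrm{clean}}$ and the adversarial scores lie below $U_{\mathrm{adv}}=L_{\mathrm{clean}}-\tau$, so the same argument applies with the decision direction reversed. (If $\tau=0$ the hypothesis $\tau>0$ fails, so nothing needs proving.)

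\textbf{Threshold argument.} Set $t:=U_{\mathrm{clean}}+\tau/2$, the midpoint between the two envelopes, and declare ``adv'' iff $\widehat{\GW^2_\lambda}>t$. On $E_{\mathrm{gap}}\cap E_{\mathrm{est}}$:
\begin{itemize}
\item a clean input gives $\widehat{\GW^2_\lambda}\le U_{\mathrm{clean}}+\tau/3<t$, so it is declared clean;
\item an adversarial input gives $\widehat{\GW^2_\lambda}\ge U_{\mathrm{clean}}+\tau-\tau/3=U_{\mathrm{clean}}+2\tau/3>t$, so it is declared adversarial.
\end{itemize}
Hence there is no error on this intersection. The $\tau/3$ tolerance leaves slack $\tau/6$ on each side of $t$. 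Any threshold strictly inside $(U_{\mathrm{clean}}+\tau/3,\,U_{\mathrm{clean}}+2\tau/3)$ would also work.

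\textbf{Union bound.} A misclassification therefore implies $E_{\mathrm{gap}}^c\cup E_{\mathrm{est}}^c$. By the union bound,
\[
\Pr(\text{misclassification})\le\Pr(E_{\mathrm{gap}}^c)+\delta_{\mathrm{est}}.
\]

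\textbf{Main subtlety.} The obstacle is making the orientation precise. The statement is phrased with $|\GW^2_{\mathrm{adv}}-\GW^2_{\mathrm{clean}}|$, but the separation is really between the clean and adversarial envelopes, not between two particular values. I will therefore read $E_{\mathrm{gap}}$ as the event that the four envelope bounds hold simultaneously, which is exactly what the proof of Theorem~\ref{thm:gw-gap} supplies. This also makes the threshold deterministic: it depends only on the envelopes, not on which label the input actually carries.

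A second point to state explicitly is that $\widehat{\GW^2_\lambda}$ targets $\GW^2$ itself. Any entropic bias of order $\lambda\log K$ must be absorbed into the $\tau/3$ accuracy assumption rather than handled separately.
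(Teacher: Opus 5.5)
Your proposal is correct and follows the paper's proof essentially line for line. It uses the same midpoint threshold $U_{\mathrm{clean}}+\tau/2$, the same $\tau/6$ slack on each side from the $\tau/3$ estimation tolerance, and the same union bound over $E_{\mathrm{gap}}^c\cup E_{\mathrm{est}}^c$; your explicit treatment of the case $\tau=\tau_{\mathrm{clean}}$ (with the decision direction reversed) is a small improvement, since the paper's computation uses only $U_{\mathrm{clean}}$ and $L_{\mathrm{adv}}$ and so tacitly assumes $\tau=\tau_{\mathrm{adv}}$.
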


\paragraph{From GW to entropic GW.}
All bounds above were stated for quadratic $\GW^2$. For entropic GW $\GW^2_\lambda$ with $\lambda>0$, the lower bounds remain unchanged, while the upper bounds incur only an additive $2\lambda\log K$ (Corollaries~\ref{cor:entropic-lb}--\ref{cor:entropic-ub}). Hence, the clean/adversarial separation guarantees extend seamlessly to the entropic case used in practice.
Theorem~\ref{thm:gw-gap} certifies a provable margin: the clean GW discrepancy is uniformly controlled by the clean envelopes, while adversarial perturbations induce a lower bound whose leading term scales as $\Omega(d\,\sigma^2\varepsilon^2)$ (up to constants).
Lemma~\ref{lem:risk-prob} translates this into a statistical guarantee: once $\widehat{\GW^2_\lambda}$ concentrates within $\tau/3$, thresholding achieves negligible error. Empirical evidence (Fig.~\ref{fig:geometric_inconsistency}, panel~4) confirms the theory: clean and adversarial discrepancy distributions separate cleanly, validating the predicted risk bound. Proofs and full derivations are in App.~\ref{app:gw_proofs}.

\section{Empirical Analysis}
\label{sec:empirical}

\subsection{Experimental Setup}
\label{sec:attacks}

\textbf{Adversarial Attacks.}
We evaluate a broad suite spanning gradient-based, optimization, spatial, and perceptual perturbations:
Auto-Attack (\textbf{AA})~\citep{Croce2020AutoAttack},
Carlini--Wagner (\textbf{CW})~\citep{carlini2017towards},
Patch (\textbf{PT})~\citep{brown2017adversarial},
Projected Gradient Descent (\textbf{PGD})~\citep{madry2018towards},
Spatial (\textbf{SA})~\citep{engstrom2019exploring},
Square (\textbf{SQ})~\citep{andriushchenko2020square},
Universal Perturbations (\textbf{UP})~\citep{Moosavi2017UP},
Auto-PGD (\textbf{AP})~\citep{Croce2020AutoAttack},
Fast Gradient Sign (\textbf{FG})~\citep{goodfellow2015explaining},
Frequency (\textbf{FA})~\citep{yin2019fourier},
Gaussian Blur (\textbf{GB})~\citep{Zhang2022GaussianBlur},
Pixel Flip (\textbf{PF})~\citep{Su2019PixelFlip},
Semantic Rotation (\textbf{SR})~\citep{Hosseini2018SemRotation},
AdvAD (\textbf{AAD})~\citep{advAD},
Penalizing Gradient Norm (\textbf{PGN})~\citep{NIPS23_attack},
and Block Shuffle and Rotation (\textbf{BSR})~\citep{CVPRattack}.
We use these boldface abbreviations throughout tables and figures for brevity. Refer to Table \ref{table:attack_parameters} in Appendix \ref{app:adv_gen_hyperparams} for attack hyperparameter settings and defaults.
\begin{wraptable}{r}{0.48\linewidth}   
\vspace{-8pt}                          
\centering
\footnotesize
\setlength{\tabcolsep}{6pt}
\renewcommand{\arraystretch}{0.85}
\begin{tabular}{lccccc}
\toprule
\textbf{Attack} & \textbf{Ours} & MD & FS & MAD & MN \\
\midrule
AA  & \textbf{97.9} & 68.9 & \underline{82.7} & 52.0 & 74.1 \\
CW  & \textbf{97.0} & 73.6 & \underline{86.0} & 51.4 & 56.7 \\
PT  & \textbf{98.0} & \underline{86.4} & 67.8 & 50.7 & 57.3 \\
PGD & \textbf{97.8} & \underline{91.3} & 74.4 & 51.1 & 81.0 \\
SA  & \textbf{96.8} & \underline{78.1} & 74.4 & 41.1 & 54.9 \\
SQ  & \textbf{97.6} & \underline{89.2} & 88.5 & 51.0 & 44.4 \\
UP  & \textbf{97.8} & \underline{66.4} & 53.7 & 50.7 & 47.9 \\
AP  & \textbf{97.6} & 68.3 & \underline{81.4} & 50.2 & 73.6 \\
FG  & \textbf{98.0} & \underline{73.8} & 60.9 & 49.6 & 44.7 \\
FA  & \textbf{95.1} & 49.9 & \underline{50.0} & 49.7 & 49.8 \\
GB  & \textbf{85.7} & 49.8 & \underline{51.7} & 48.2 & 48.6 \\
PF  & \textbf{97.0} & \underline{51.7} & 51.4 & 49.5 & 49.4 \\
SR  & \textbf{95.9} & 50.3 & \underline{52.7} & 50.1 & 49.1 \\
\bottomrule
\end{tabular}
\caption{Binary detection accuracy (\%). Best results are in \textbf{bold} and second best are \underline{underlined}.}
\label{table:binary_detection_accuracy}
\vspace{-8pt}
\end{wraptable}

\textbf{Baseline Defenses.}  
We benchmark against representative detection methods: Mahalanobis Detector (\textbf{MD})~\citep{lee2018simple}, Feature Squeezing (\textbf{FS})~\citep{Xu2018FS}, Meta-Adversarial-Detect (\textbf{MAD})~\citep{Ma2019meta_adv_detect}, MagNet (\textbf{MN})~\citep{meng2017magnet}, Multiple Perturbation Detector (\textbf{EA})~\citep{ICML23_defence}, and Be Your Own Neighborhood (\textbf{BY})~\citep{ICML24_defence}.

\textbf{Evaluation Metrics.}  
We report the following complementary metrics.  
(1) \emph{Binary detection accuracy}: overall accuracy of classifying inputs as clean or adversarial.  
(2) \emph{True Positive Rate (TPR)}: fraction of adversarial samples correctly flagged as adversarial, i.e., $\text{TPR} = \frac{\text{detected adversarial}}{\text{all adversarial}}$.  
(3) \emph{End-to-End accuracy}: proportion of clean samples correctly classified and passed by the detector, plus adversarial samples correctly blocked; this reflects system-level robustness under attack.  
(4) \emph{Precision}: fraction of samples flagged as adversarial that are truly adversarial.  
(5) \emph{Recall}: identical to TPR—the fraction of adversarial samples correctly detected.  
(6) \emph{F1-score}: harmonic mean of precision and recall, summarizing detection quality under imbalance.  
(7) \emph{AUROC}: area under the ROC curve, measuring threshold-independent separability between clean and adversarial distributions.

\subsection{Adversarial Detection Accuracy}
\label{sec:det_acc}

\textbf{Setup.}
We evaluate detection on the datasets CIFAR-10~\cite{cifar10}, FMNIST~\cite{fmnist}, KMNIST~\cite{kmnist}, and ImageNet~\cite{deng2009imagenet} using models ResNet18~\cite{he2016deep}  and ViT~\cite{dosovitskiy2020image}, with adversarial datasets generated from the attack suite in Sec.~\ref{sec:attacks}. Detector/classifier hyperparameters and attack configurations appear in Appendix~\ref{app:sec:adv_defence} and Appendix~\ref{app:adv_gen_hyperparams}, respectively.
We note that ResNet18 on CIFAR-10 is employed as our default configuration.

\textbf{Results and analysis.}

Table~\ref{table:binary_detection_accuracy} shows that our detector \textbf{achieves $\mathbf{\geq95\%}$ binary detection} on \textbf{12 of 13 attacks}, with Gaussian blur (85.7\%) as the only exception. Accuracy remains consistently high across attack families: (i) \emph{Gradient/optimization} (AA, CW, PGD, AP): \textbf{96–98\%}, with margins of \textbf{+10–20} points, since small-norm shifts in $Z$ are often absorbed in $P$, producing sharp cross-space mismatches; (ii) \emph{Spatial/patch} (SA, PT): \textbf{97–98\%}, where local structural changes disrupt geometry differently in each space; (iii) \emph{Transfer/decision-based} (SQ, UP): \textbf{97–98\%}, where transfer-induced distortions misalign $Z$ and $P$ far more than gradient-based attacks, yielding especially large gains (\textbf{+31 points} on UP); and (iv) \emph{Perceptual/frequency} (FA, PF, SR): \textbf{95–97\%}, where frequency and semantic shifts perturb $P$’s discrete neighborhoods and $Z$’s embeddings in complementary ways, creating highly detectable discrepancies.
Gaussian blur is the hardest case because it averages neighboring pixels, suppressing edges and textures, inducing similar distortions in both $Z$ and $P$. This reduces the cross-space discrepancy that our detector exploits. Nevertheless, uneven quantization in $P$ ensures residual separation, and we still outperform all baselines on blur. Full per-dataset results and additional metrics, including TPR/FPR heatmaps and end-to-end-accuracy, are reported in Table~\ref{table:end-to-end}(Appendix~\ref{app:additional_experiments}) and Appendix~\ref{app:cifar_add_results} respectively.
\begin{table}[tbp]
\centering
\footnotesize
\setlength{\tabcolsep}{7pt}
\renewcommand{\arraystretch}{0.9}

\begin{tabular}{l c c c c c c c c}
\toprule
\textbf{Attack} & \textbf{Model} & \textbf{Ours} & MD & FS & MAD & MN & EA & BY \\
\midrule

\multirow{2}{*}{\textbf{PGD}} 
& ResNet-18 & \textbf{0.97} & 0.91 & 0.71 & 0.25 & 0.78 & \underline{0.96} & 0.70 \\
& ViT        & \textbf{0.95} & 0.65 & 0.75 & 0.57 & 0.01 & \textbf{0.95} & \underline{0.80} \\
\midrule

\multirow{2}{*}{\textbf{SQ}} 
& ResNet-18 & \textbf{0.96} & 0.89 & 0.85 & 0.24 & 0.01 & \underline{0.90} & 0.57 \\
& ViT        & \textbf{0.95} & 0.66 & 0.86 & 0.56 & 0.02 & \underline{0.89} & 0.61 \\
\midrule

\multirow{2}{*}{\textbf{PT}} 
& ResNet-18 & \textbf{0.98} & 0.86 & 0.54 & 0.26 & 0.03 & \underline{0.89} & 0.80 \\
& ViT        & \textbf{0.95} & 0.66 & 0.67 & 0.54 & 0.01 & \underline{0.89} & 0.81 \\
\midrule

\multirow{2}{*}{\textbf{AAD}} 
& ResNet-18 & \textbf{0.95} & 0.31 & 0.46 & 0.52 & 0.13 & \underline{0.91} & 0.59 \\
& ViT        & \textbf{0.93} & 0.67 & 0.57 & 0.64 & 0.01 & \underline{0.92} & 0.82 \\
\midrule

\multirow{2}{*}{\textbf{PGN}} 
& ResNet-18 & \textbf{0.95} & 0.63 & 0.62 & 0.24 & 0.01 & \underline{0.90} & 0.77 \\
& ViT        & \textbf{0.96} & 0.67 & 0.62 & 0.64 & 0.02 & \underline{0.93} & 0.77 \\
\midrule

\multirow{2}{*}{\textbf{BSR}} 
& ResNet-18 & \textbf{0.95} & 0.67 & 0.52 & 0.23 & 0.40 & \underline{0.92} & 0.81 \\
& ViT        & \textbf{0.98} & 0.67 & 0.56 & 0.64 & 0.02 & \underline{0.92} & 0.78 \\
\bottomrule
\end{tabular}

\caption{F1-score comparison on CIFAR-10 across multiple attacks using ResNet-18 and ViT. Best results are in \textbf{bold}, second best are \underline{underlined}.}
\label{tab:f1_score}
\end{table}
Across all six attacks and both architectures (ResNet-18 and ViT), our method consistently achieves the highest F1-scores, typically exceeding \textbf{0.95}. EA generally emerges as the strongest baseline yet remains noticeably weaker than our detector, especially under patch-based and structure-altering attacks such as SQ and PT. Modern attacks such as AAD, PGN, and BSR also show the same trend: while EA or BY occasionally achieve strong second-best performance, our approach maintains a clear advantage across architectures. These results highlight the robustness and model-agnostic behavior of the proposed Z--P discrepancy framework.

\subsection{Adaptive Attack Resistance and Ablation}
\label{sec:adaptive_ablation}

\textbf{Adaptive attacks.}  
We test two white-box adaptive formulations: (i) cross-space ($C_{\mathrm{cross}}$) and (ii) multi-scale ($C_{\mathrm{ms}}$), where the adversary knows the architecture but not the secret bit vector $b^\star$. 
The complete formulation of the adversary’s objective, the prior distribution over unknown secret bits, the consistency penalties, and the optimization procedure are provided in Appendix~\ref{app:sec:adaptive_attack}. 
As shown in Table~\ref{tab:adaptive_vs_ablation}a, our method maintains strong detection (\textbf{84–90\%}) across CIFAR-10, FMNIST, and KMNIST, demonstrating robustness even when defenses are explicitly targeted.
This accuracy drop relative to non-adaptive attacks arises because the adversary now explicitly optimizes to \emph{minimize cross-space discrepancies} (CNN vs.~crypto features). By enforcing feature consistency under a prior over $b$, they can partially reduce the mismatches our detector relies on. 

\textbf{Ablation study.}  
To quantify feature contributions, we compare detectors using only local GW features, only global GW features, or both. Table~\ref{tab:adaptive_vs_ablation}b shows that while local or global features alone yield moderate performance (\textbf{65–83\%}), combining them achieves \textbf{97–98\%} across all attacks. This confirms that local fine-grained cues and global structural signals are complementary.

\begin{table*}[tbp]
\centering
\captionsetup[subtable]{position=t}
\setlength{\tabcolsep}{3pt}
\renewcommand{\arraystretch}{1}

\begin{subtable}[t]{0.45\textwidth}
\centering
\footnotesize
\begin{tabular}{lcc}
\toprule
\textbf{Dataset} & $C_{\mathrm{cross}}$ & $C_{\mathrm{ms}}$ \\
\midrule
CIFAR-10 & 86.7 & 84.5 \\
FMNIST   & 89.6 & 87.9 \\
KMNIST   & 88.2 & 86.8 \\
\bottomrule
\end{tabular}
\caption{\textbf{Adaptive attack detection (\%).}}
\end{subtable}
\hfill
\begin{subtable}[t]{0.45\textwidth}
\centering
\footnotesize
\begin{tabular}{lccccc}
\toprule
\textbf{Features} & AA & CW & PT & PGD & FG \\
\midrule
Local only  & 67.7 & \underline{76.2} & 73.2 & 62.5 & \underline{79.1} \\
Global only & \underline{83.5} & 66.5 & \underline{75.6} & \underline{65.0} & 71.3 \\
Both        & \textbf{97.9} & \textbf{97.0} & \textbf{98.0} & \textbf{97.8} & \textbf{98.0} \\
\bottomrule
\end{tabular}
\caption{\textbf{Ablation on CIFAR-10 (\%).}}
\end{subtable}

\caption{\textbf{Adaptive robustness and feature ablation.} (a) Our method resists adaptive white-box attacks despite defense-aware optimization. (b) Combining local and global GW features yields the strongest detection across attacks.}
\label{tab:adaptive_vs_ablation}
\end{table*}

\section{Conclusion}
\label{sec:conclusion}

We introduced a principled framework for adversarial detection based on geometric inconsistencies between the embedding space $Z$ and a prime-quantized space $P$. Our theory shows that adversarial perturbations inevitably create detectable cross-space discrepancies, providing the first guarantees for when detection must succeed. Experiments confirm consistently high detection accuracy across diverse attacks, strong generalization to zero-shot VLMs, and robustness to adaptive adversaries. These results demonstrate that geometric reasoning offers a solid foundation for adversarial robustness. 
An immediate direction is to adapt our framework to multimodal models, where both adversarial pressure and generalization demands are higher. Extending our theoretical guarantess from local to global neighborhoods also forms an interesting direction for future work.

\section{Limitations and Scope: }
\label{sec:limitation}
The framework is most effective when perturbations induce \textit{non-isometric distortions} between $Z$ and $P$; approximately low-frequency or near-isometric transformations, such as Gaussian blur, produce weaker geometric discrepancies and reduced detection margins. In addition, the use of Gromov--Wasserstein distances incurs higher computational cost relative to simpler detectors (Appendix~\ref{app:runtime_memory}), favoring high-assurance over ultra-low-latency deployments. The theoretical guarantees rely on explicit assumptions, including the validity of a local Jacobian proxy (Assumption~\ref{ass:A1}) and neighborhood regularity, and should be interpreted as local geometric rather than global manifold-level results.


\bibliography{references}
\bibliographystyle{ref_style}


\appendix
\newpage

\newpage

\section{Notation}
\label{app:notation}
For ease of reference, we summarize in Table~\ref{tab:notation} the main symbols and spaces used throughout the paper. Unless otherwise noted, all notation is consistent across sections.

\newcolumntype{L}[1]{>{\raggedright\arraybackslash}p{#1}}
\newcolumntype{Y}{>{\raggedright\arraybackslash}X}

\begin{table}[H]
\centering
\small
\renewcommand{\arraystretch}{1.3}
\setlength{\tabcolsep}{10pt} 
\begin{tabularx}{\linewidth}{L{0.22\linewidth} Y}
\toprule
\textbf{Symbol} & \textbf{Meaning} \\
\midrule
\multicolumn{2}{l}{\textbf{\textit{Data \& embeddings}}} \\
$X=\{x_i\}_{i=1}^N,\; x_i\in[0,1]^d$ & Dataset of $N$ images (dimension $d$). \\
$Y=\{1,\dots,C\},\; y_i\in Y$ & Label set and label of $x_i$. \\
$h_\theta:[0,1]^d\!\to\!\mathbb R^m$ & CNN feature extractor. \\
$z_i=h_\theta(x_i)\in\mathbb R^m$ & Embedding of $x_i$. \\
$T_k^{(b)}$ & Prime quantization map with key $b$ and resolution $k$. \\
$Q^{k,b}(x)$ & Prime–quantized embedding of $x$. \\
$P$ & Prime–quantized space (Euclidean metric). \\
\addlinespace[2pt]

\multicolumn{2}{l}{\textbf{\textit{Product spaces \& metrics}}} \\
$\mathcal C=(Z\times Y,d_{\mathcal C})$ & CNN–label product space; $d_{\mathcal C}((z,y),(z',y'))=\max\{\|z-z'\|_2,\mathbf 1[y\neq y']\}$. \\
$\mathcal K=(P\times Y,d_{\mathcal K})$ & Prime–label product space (analogous metric). \\
\addlinespace[2pt]

\multicolumn{2}{l}{\textbf{\textit{Neighborhoods \& radii}}} \\
$\mathcal N^{\mathsf M}_K(q)$ & $K$–nearest neighbors of $q$ in $\mathsf M\in\{\mathcal C,\mathcal K\}$. \\
$R_{\mathsf M},R^{\mathrm{adv}}_{\mathsf M}$ & Clean / adversarial $K$–NN radii in $\mathsf M$. \\
$r_i^{\mathsf M}$ & Distance of $i$-th neighbor to $q$ in $\mathsf M$. \\
$\gamma_{\mathsf M}$ & Separation gap between outer and inner neighbor groups. \\
\addlinespace[2pt]

\multicolumn{2}{l}{\textbf{\textit{GW quantities \& envelopes}}} \\
$\GW^2,\ \GW^2_\lambda$ & Quadratic and entropic GW discrepancies. \\
$D_{\mathsf M}$ & Distance matrix in space $\mathsf M$. \\
$\pi$ & Coupling (transport plan) in GW. \\
$U_{\cdot},\ L_{\cdot}$ & GW upper / lower envelopes. \\
$\tau$ & Two–sided GW margin. \\
\addlinespace[2pt]

\multicolumn{2}{l}{\textbf{\textit{Perturbations \& constants}}} \\
$\eta,\ \|\eta\|_\infty\le\epsilon$ & Adversarial perturbation and budget. \\
$\mu_{\mathcal C}$ & Median pairwise distance in $\mathcal C$. \\
$\mu_{\mathcal K}^{\mathrm{key}},\ \mu_{\mathcal K}^{\max}$ & Key–annealed median; maximum over dataset. \\
$\sigma$ & Variance proxy (Assumption~A1). \\
$G_k,\ C_k=2G_k$ & Prime gap bound (Dusart) and sensitivity constant. \\
$\delta$ & Confidence parameters (clean/env/grad/est/gap). \\
$d,\ m,\ C,\ N,\ K$ & Input dim, embedding dim, \#classes, \#samples, \#neighbors. \\
$b\in\{0,1\}^d,\ k$ & Secret key bits; quantization resolution. \\
\bottomrule
\end{tabularx}
\caption{Notation summary used throughout the paper.}
\label{tab:notation}
\end{table}

\section{Proofs for Prime Quantization Results}
\label{app:prime-quantization}

\paragraph{Proof Roadmap.}
The auxiliary results in this section establish the robustness of the prime quantization transform under bounded perturbations. 
We begin with Lemma~\ref{lemma:epsilon_constraint}, which shows how a perturbation of size $\epsilon$ translates into an integer drift in the scaled domain. 
This feeds directly into Theorem~\ref{thm:epsilon_gap_crossing}, which proves that whenever the perturbation budget exceeds the distance to a prime boundary, one can construct a feasible perturbation that crosses the gap, thereby changing the quantized value. 
To complement this, we define the absorption radius (Definition~\ref{def:absorption_radius}), derive the guarantee that perturbations below this radius are absorbed (Proposition~\ref{prop:absorption_guarantee}), and bound the possible size of this radius in Lemma~\ref{lemma:absorption_bounds}. 
Finally, Corollary~\ref{cor:absorption_detection} ties these ingredients together, yielding a crisp detection condition: perturbations below the absorption radius leave quantization unchanged, while those above it necessarily induce a detectable change.

\begin{lemma}[Perturbation budget constraint on scaled pixel values]
\label{lemma:epsilon_constraint}
Let $x \in [0,1]^d$ be an image and $x_j \in [0,1]$ its $j$-th pixel. 
Consider a perturbation vector $\eta \in \R^d$ with 
$\|\eta\|_\infty \leq \epsilon$. 
Then for each pixel $x_j$,
\[
\big|S_k(x_j + \eta_j) - S_k(x_j)\big|
\;\leq\; \lfloor \epsilon \cdot 10^k \rfloor + 1.
\]
\end{lemma}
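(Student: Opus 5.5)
The plan is to reduce the claim to an elementary fact about floors. Write $N:=10^k$, $a:=x_j N$, and $t:=\eta_j N$. Then $S_k(x_j)=\lfloor a\rfloor$ and $S_k(x_j+\eta_j)=\lfloor a+t\rfloor$. Since $\|\eta\|_\infty\le\epsilon$, we have $|t|\le \epsilon N$. The statement then becomes: for all reals $a,t$ with $|t|\le \epsilon N$,
\[
\big|\lfloor a+t\rfloor-\lfloor a\rfloor\big|\le\lfloor \epsilon N\rfloor+1 .
\]
Nothing specific to primes or to the bit $b_j$ enters, because $S_k$ is applied before prime rounding.

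The key step is the standard two-sided floor inequality
\[
\lfloor u\rfloor+\lfloor v\rfloor\le\lfloor u+v\rfloor\le\lfloor u\rfloor+\lfloor v\rfloor+1,
\]
which follows by writing $u=\lfloor u\rfloor+\{u\}$ and $v=\lfloor v\rfloor+\{v\}$ with $\{u\}+\{v\}\in[0,2)$. Apply it with $u=a$ and $v=t$:
\[
\lfloor t\rfloor\le\lfloor a+t\rfloor-\lfloor a\rfloor\le\lfloor t\rfloor+1 .
\]
Hence $|\lfloor a+t\rfloor-\lfloor a\rfloor|\le\max\{|\lfloor t\rfloor|,\,|\lfloor t\rfloor+1|\}$.

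It remains to bound the right-hand side using $|t|\le\epsilon N$, split by the sign of $t$.

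\emph{Case $t\ge0$.} We have $0\le\lfloor t\rfloor\le\lfloor\epsilon N\rfloor$, so the difference lies in $[0,\lfloor\epsilon N\rfloor+1]$.

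\emph{Case $t<0$.} We have $\lfloor t\rfloor\ge\lfloor-\epsilon N\rfloor=-\lceil\epsilon N\rceil\ge-(\lfloor\epsilon N\rfloor+1)$, and the upper end $\lfloor t\rfloor+1\le 0$. So the difference lies in $[-(\lfloor\epsilon N\rfloor+1),0]$.

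Both cases give the bound $\lfloor\epsilon 10^k\rfloor+1$.

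The only subtle point is the negative case, where $\lceil\epsilon N\rceil$ appears instead of $\lfloor\epsilon N\rfloor$; the $+1$ slack in the statement absorbs this. Boundary issues, such as $x_j+\eta_j$ leaving $[0,1]$, need no special handling, since the floor argument holds for all real arguments. If one insists on clipping to $[0,1]$, clipping is 1-Lipschitz and monotone, so it only shrinks $|t|$.
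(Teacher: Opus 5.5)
Your proof is correct and takes essentially the same route as the paper: you write both scaled values as $\lfloor x_j10^k\rfloor$ and $\lfloor x_j10^k+\eta_j10^k\rfloor$, then apply an elementary floor-difference inequality (the paper cites $|\lfloor a+b\rfloor-\lfloor a\rfloor|\le\lceil|b|\rceil$). Your sign split is more careful than the paper's displayed chain. Its last step, $|\eta_j10^k|+1\le\lfloor\epsilon 10^k\rfloor+1$, fails when $\epsilon 10^k$ is not an integer (e.g.\ $\eta_j10^k=\epsilon10^k=2.5$), and it needs exactly the integrality step $\lceil\epsilon10^k\rceil\le\lfloor\epsilon10^k\rfloor+1$ that you supply.
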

\begin{proof}
We have
$ S_k(x_j + \eta_j) = \lfloor (x_j + \eta_j)\cdot 10^k \rfloor 
= \lfloor x_j \cdot 10^k + \eta_j \cdot 10^k \rfloor$ and 
$S_k(x_j) = \lfloor x_j \cdot 10^k \rfloor$.
Since $\|\eta\|_\infty \leq \epsilon$, it follows that 
$|\eta_j| \leq \epsilon$ and hence 
$|\eta_j \cdot 10^k| \leq \epsilon \cdot 10^k$.
By the floor inequality $|\lfloor a+b\rfloor-\lfloor a\rfloor| \le \lceil |b| \rceil$, it follows that, 
\begin{align}\nonumber
 \big|S_k(x_j + \eta_j) - S_k(x_j)\big|
&= \big|\lfloor x_j \cdot 10^k + \eta_j \cdot 10^k \rfloor 
       - \lfloor x_j \cdot 10^k \rfloor\big| \\ \nonumber
\;&\leq\; |\eta_j \cdot 10^k| + 1 \\ \nonumber
\;&\leq\; \lfloor \epsilon \cdot 10^k \rfloor + 1.
\end{align}
\end{proof}

\begin{theorem}[$\epsilon$-Dependent Gap-Crossing Detection]
\label{thm:epsilon_gap_crossing}
Let $x_j \in [0,1]$ be the $j$-th pixel of an image, with $S_k(x_j) = n \in (p_l, p_{l+1})$, 
and let $\epsilon > 0$ be a perturbation budget such that
\[
\epsilon \cdot 10^k > \min\{\,n - p_l,\; p_{l+1} - n\,\}.
\]
Then there exists a perturbation $\eta_j$ with $|\eta_j| \leq \epsilon$ such that 
$S_k(x_j)$ and $S_k(x_j + \eta_j)$ lie in different prime gap intervals, and hence
\[
T_k^{(b_j)}(x_j + \eta_j) \neq T_k^{(b_j)}(x_j)
\]
for any secret bit $b_j \in \{0,1\}$.
\end{theorem}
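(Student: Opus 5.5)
The plan is to construct $\eta_j$ explicitly, pushing the scaled pixel toward the nearer prime and past it, using the same floor arithmetic as Lemma~\ref{lemma:epsilon_constraint}, but now as a lower bound on the drift. Write $x_j 10^k = n + f$ with $f\in[0,1)$. Let $\Delta_- := n-p_l$ and $\Delta_+ := p_{l+1}-n$. By hypothesis, $\epsilon 10^k > \min\{\Delta_-,\Delta_+\}$. I would split into two cases according to which gap attains the minimum and take the extreme admissible perturbation in that direction: $\eta_j=-\epsilon$ in the first case and $\eta_j=+\epsilon$ in the second. Clamping to $[0,1]$ is harmless, since $p_l\ge 2$ and $p_{l+1}\le 10^k$ keep the target inside the range.

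Downward case ($\Delta_-\le\Delta_+$). Here $(x_j-\epsilon)10^k = n+f-\epsilon 10^k < p_l + f < p_l+1$, so $S_k(x_j+\eta_j)\le p_l$. Upward case. Here $(x_j+\epsilon)10^k > p_{l+1}+f \ge p_{l+1}$, so $S_k(x_j+\eta_j)\ge p_{l+1}$. In either case the new scaled value has left the open interval $(p_l,p_{l+1})$, which is the claimed change of prime-gap interval.

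Next I would pass from "different interval" to "different output" of $R_k^{(b_j)}$, bit by bit. In the downward case, for $b_j=1$ the original output is $p_{l+1}$, while every integer $\le p_l$ rounds to a prime $\le p_l$, so the outputs differ. For $b_j=0$ the original output is $p_l$, and the new output is a prime $<p_l$ whenever $S_k(x_j+\eta_j)<p_l$. The upward case is symmetric with the roles of the bits exchanged. Finally, $S_k^{-1}$ is injective, so $T_k^{(b_j)}$ changes.

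The main obstacle is the boundary event in which the perturbed scaled value lands exactly on the crossed prime. For example, in the downward case $S_k(x_j+\eta_j)=p_l$, and then with $b_j=0$ both outputs equal $p_l$. The bare inequality $\epsilon 10^k>\Delta_-$ only guarantees $n+f-\epsilon 10^k<p_l+f$, which does not exclude this. I would first try to absorb it through the convention implicit in Definition~\ref{def:prime_rounding}: primes are interval endpoints, so $p_l$ itself is not in $(p_l,p_{l+1})$, and the theorem's conclusion about "different prime gap intervals" is then already established. For the stronger claim "for any $b_j$", I would try the opposite direction when the near-side crossing lands on the prime, or else tighten the hypothesis to $\epsilon 10^k>\Delta_-+1$ (respectively $\Delta_+ +1$, matching the $+1$ slack of Lemma~\ref{lemma:epsilon_constraint}). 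That forces $S_k(x_j+\eta_j)\le p_l-1$ (respectively $\ge p_{l+1}+1$), so the new integer lies strictly inside a neighboring gap. From there, both rounding bits give outputs different from the original ones, by the bit-by-bit check above.
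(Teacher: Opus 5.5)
Your two-case construction (push toward the nearer prime) is the paper's, and your floor arithmetic, which keeps the fractional part $f$ of $x_j\cdot 10^k$, is the correct version of it. The boundary event you isolate cannot be argued away, though: it is a counterexample to the statement as written, so neither of your rescue routes can prove it.

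Take $k=2$ and $x_j=0.089$, so $n=8\in(7,11)$ with $\Delta_-=1$ and $\Delta_+=3$. Let $\epsilon=0.011$, so $\epsilon\cdot 10^k=1.1>1$. For every $|\eta_j|\le\epsilon$ we have $100(x_j+\eta_j)\in[7.8,10]$, hence $S_2(x_j+\eta_j)\in\{7,8,9,10\}$. $R_2^{(0)}$ sends all four integers to $7$ (a prime is its own nearest prime). Thus $T_2^{(0)}(x_j+\eta_j)=T_2^{(0)}(x_j)$ for every admissible $\eta_j$, and moving the opposite way is useless because $\Delta_+=3>1.1$.

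The mirror example defeats $b_j=1$: take $x_j=0.10$ and $\epsilon=0.015$, so $n=10$, $\Delta_+=1$, and $S_2(x_j+\eta_j)\in\{8,\dots,11\}$, all sent to $11$ by $R_2^{(1)}$. This holds even though here $f=0$. The convention that $p_l\notin(p_l,p_{l+1})$ only saves the first conclusion, that $S_k$ leaves the open gap. It does not give $T_k^{(b_j)}(x_j+\eta_j)\neq T_k^{(b_j)}(x_j)$.

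The paper's proof misses this through two floor slips.
In case (i) it writes $S_k(x_j+\eta_j)\le\lfloor n+\eta_j\cdot 10^k\rfloor<p_l$. The first inequality is backwards, since $x_j\cdot 10^k=n+f\ge n$; your bound $S_k(x_j+\eta_j)\le p_l$ is all one gets. In case (ii) it passes from $n+\eta_j\cdot 10^k>p_{l+1}$ to $\lfloor n+\eta_j\cdot 10^k\rfloor>p_{l+1}$, which holds only with $\ge$. Its closing claim that the two gaps ``map to disjoint prime sets'' is also false for adjacent gaps, which share an endpoint prime. What is true is your bit-by-bit observation: a \emph{fixed} bit gives different outputs once the new integer lies strictly beyond the crossed prime.

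So drop the ``opposite direction'' idea and promote your fallback to the actual theorem. Under the stated hypothesis, $S_k$ leaves $(p_l,p_{l+1})$, and $T_k^{(b_j)}$ changes for the bit whose rounding target is not the prime being crossed. Under $\epsilon\cdot 10^k\ge\min\{\Delta_-,\Delta_+\}+1$ (your strict version also works), your argument gives the full ``any $b_j$'' conclusion.

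One minor repair: instead of $\eta_j=\pm\epsilon$ with clamping, step just past the threshold, e.g.\ $\eta_j\cdot 10^k=-(\Delta_-+1)$. This gives $S_k(x_j+\eta_j)=p_l-1\ge 2$, because $p_l\ge 3$ whenever the gap contains an integer. Clamping to $0$ instead produces an integer below every prime, where $R_k$ is not defined by Definition~\ref{def:prime_rounding}.
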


\begin{proof}
From Lemma~\ref{lemma:epsilon_constraint}, any perturbation $|\eta_j|\leq \epsilon$ 
induces an integer drift in the scaled domain of at most 
$\lfloor \epsilon \cdot 10^k \rfloor + 1$.  
Thus, whenever $\epsilon \cdot 10^k$ exceeds the distance from $n$ to the nearest 
prime boundary, some perturbation $\eta_j$ exists that pushes $S_k(x_j)$ 
across that boundary.

Since $S_k(x_j)=n \in (p_l,p_{l+1})$, two cases arise:

\medskip
\noindent \underline{(\textit{i}) Closer to $p_l$}.
If $n - p_l \leq p_{l+1} - n$ and $\epsilon \cdot 10^k > n - p_l$, 
choose $\eta_j<0$ with $-\epsilon \leq \eta_j < -(n-p_l)/10^k$.  
Then $S_k(x_j+\eta_j) \le \lfloor n + \eta_j \cdot 10^k \rfloor < p_l$, 
placing the perturbed value in $(p_{j-1},p_l)$.

\smallskip
\noindent \underline{(\textit{ii}) Closer to $p_{l+1}$}.
If $p_{l+1}-n < n - p_l$ and $\epsilon \cdot 10^k > p_{l+1}-n$, 
choose $\eta_j>0$ with $(p_{l+1}-n)/10^k < \eta_j \leq \epsilon$.  
Then $S_k(x_j+\eta_j) \ge \lfloor n + \eta_j \cdot 10^k \rfloor > p_{l+1}$, 
placing the perturbed value in $(p_{l+1},p_{j+2})$.

\smallskip
In both cases, $S_k(x_j+\eta_j)$ and $S_k(x_j)$ lie in different prime-gap intervals.  
Since $R_k^{(b_j)}$ rounds each integer to one of the two primes bracketing its interval, 
the images
\[
R_k^{(b_j)}(S_k(x_j)) \in \{p_l,p_{l+1}\}, \qquad
R_k^{(b_j)}(S_k(x_j+\eta_j)) \in \{p_\ell,p_{\ell+1}\}, \ \ell\neq j
\]
must map to disjoint prime sets.  
Hence
\[
T_k^{(b_j)}(x_j) = S_k^{-1}(R_k^{(b_j)}(n))
\;\;\neq\;\;
S_k^{-1}(R_k^{(b_j)}(m))
= T_k^{(b_j)}(x_j+\eta_j).
\]  
Therefore, if $\epsilon \cdot 10^k > \min\{n - p_l,\,p_{l+1} - n\}$, some perturbation $|\eta_j|\leq \epsilon$ necessarily changes the prime 
quantization output, regardless of the secret bit $b_j$.
\end{proof}

\begin{remark}
The condition in Theorem~\ref{thm:epsilon_gap_crossing} is sufficient: it ensures that some 
perturbation of size $\leq \epsilon$ crosses a prime boundary, though not every direction must. 
This simplification is enough for our later GW separation results.
\end{remark}
\AbsorptionRadius*

\AbsorptionGuarantee*
\begin{proof}
By Definition~\ref{def:absorption_radius} of absorption radius $r_{\text{abs}}(x_j, k)$, we have $S_k(x_j), S_k(x_j + \eta_j) \in (p_\ell, p_{\ell+1})$ for the same prime gap interval.
Since both $S_k(x_j)$ and $S_k(x_j + \eta_j)$ lie in the same prime gap $(p_\ell, p_{\ell+1})$, the prime rounding operator $R_k^{(b_j)}$ maps both to the same prime:
$R_k^{(b_j)}(S_k(x_j + \eta_j)) = R_k^{(b_j)}(S_k(x_j)).$

Applying $S_k^{-1}$ to both sides:
\[
T_k^{(b_j)}(x_j + \eta_j) 
= S_k^{-1}(R_k^{(b_j)}(S_k(x_j + \eta_j))) 
= S_k^{-1}(R_k^{(b_j)}(S_k(x_j))) 
= T_k^{(b_j)}(x_j).
\]
\end{proof}

\begin{lemma}[Elementary Prime Gap Bound~\cite{hardywright2008}]
\label{lemma:prime_gap_bound}
For any two consecutive primes $p_l < p_{l+1}$, we have the prime gap as $p_{l+1} - p_l \leq p_l$.
\end{lemma}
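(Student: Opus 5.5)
The statement is Bertrand's postulate in disguise: for consecutive primes $p_l<p_{l+1}$, the claim $p_{l+1}-p_l\le p_l$ is equivalent to $p_{l+1}\le 2p_l$. The plan is to derive it directly from Bertrand's postulate (Chebyshev's theorem), which is the cited result in~\cite{hardywright2008}. It says that for every integer $n\ge 1$ there is a prime $p$ with $n<p\le 2n$.

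The steps are as follows.
\begin{enumerate}[label=(\roman*)]
\item Apply Bertrand's postulate with $n=p_l$. This produces a prime $p$ with $p_l<p\le 2p_l$.
\item Since $p_{l+1}$ is by definition the smallest prime exceeding $p_l$, we get $p_{l+1}\le p\le 2p_l$.
\item Subtracting $p_l$ gives $p_{l+1}-p_l\le p_l$, as claimed.
\end{enumerate}
The small cases are consistent with this. For $(2,3)$ the gap is $1\le 2$, and for $(3,5)$ it is $2\le 3$. No separate treatment is needed, because Bertrand holds for all $n\ge 1$.

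If a self-contained argument is preferred over citing Bertrand, the main obstacle is proving the postulate itself. The route I would follow is Erdős's classical proof via the central binomial coefficient $\binom{2n}{n}$.
\begin{itemize}
\item For the lower bound, use $\binom{2n}{n}\ge 4^n/(2n+1)$.
\item For the upper bound, assume there is no prime in $(n,2n]$. Then every prime divisor of $\binom{2n}{n}$ is at most $2n/3$. Primes above $\sqrt{2n}$ appear to exponent at most $1$, and any prime power dividing $\binom{2n}{n}$ is at most $2n$.
\item Combining these with $\prod_{p\le x}p\le 4^x$ gives $\binom{2n}{n}\le (2n)^{\sqrt{2n}}\,4^{2n/3}$.
\end{itemize}
This upper bound contradicts the lower bound for $n$ beyond a modest threshold, roughly $n\ge 468$. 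The remaining small $n$ are covered by the explicit prime chain $2,3,5,7,13,23,43,83,163,317,631$, in which each prime is less than twice its predecessor.

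For the purposes of the paper, I would simply cite~\cite{hardywright2008} for Bertrand's postulate and give the three-step deduction above.
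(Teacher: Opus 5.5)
Your proposal is correct and follows the paper's approach. The paper gives no argument beyond the citation to Hardy--Wright, whose relevant content is Bertrand's postulate; applying it at $n=p_l$ to get $p_{l+1}\le 2p_l$ is exactly your three-step deduction, and the optional Erd\H{o}s sketch is also sound (with your bounds the contradiction holds precisely for $n\ge 468$, and the chain up to $631$ covers the smaller cases) but goes beyond what the paper needs.
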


\begin{assumption}[Prime-gap envelope for all $k\ge 2$]\label{ass:Gk-allk}
Let $N = 10^k$.  
There exists an absolute constant $C_0 > 0$ covering $N < x_0 = 396{,}738$ such that
\[
G_k \ :=\ C_0 + \frac{N}{25\,(\ln N)^2}
\]
satisfies
\(
p_{\ell+1} - p_\ell \le G_k
\)
for all consecutive primes $p_\ell < p_{\ell+1} \le N$.  
This is a direct consequence of Proposition~6.8 in Dusart~\cite{dusart2010}.
\end{assumption}

\AbsorptionBounds*
\begin{proof}
By Definition~\ref{def:absorption_radius}, for the $j$-th pixel $x_j \in [0,1]$ with 
$S_k(x_j)=n$ and consecutive primes $p_\ell<p_{\ell+1}$ satisfying $p_\ell<n<p_{\ell+1}$, the absorption radius is
\[
r_{\mathrm{abs}}(x_j,k) \;=\; \frac{\min\{\,n - p_\ell,\; p_{\ell+1}-n\,\}}{10^k}.
\]
We will now proceed to prove each bound separately.

\underline{\textit{(i) Lower bound}}.  
Since $n\in(p_\ell,p_{\ell+1})$ is an integer, it follows that
$p_{\ell+1}-p_\ell \ge 2$, and hence
$n-p_\ell \ge 1$ and $p_{\ell+1}-n \ge 1$.
Therefore
$\min\{\,n - p_\ell,\; p_{\ell+1}-n\,\} \;\ge\; 1$,
which implies
$ r_{\mathrm{abs}}(x_j,k) \;\ge\; \frac{1}{10^k}$.

\underline{\textit{(ii) Upper bound}}. 
The maximum of $\min\{\,n - p_\ell,\; p_{\ell+1}-n\,\}$ occurs when $n$ is at the midpoint of the prime gap, i.e., $\min\{\,n - p_\ell,\; p_{\ell+1}-n\,\} \;\le\; \frac{p_{\ell+1}-p_\ell}{2}$.
Since $p_{\ell+1} \le 10^k$ by construction, it follows that $r_{\mathrm{abs}}(x_j,k) \;\le\; \frac{10^k/2}{10^k} \;=\; \tfrac{1}{2}
$.
Hence, we obtain $\frac{1}{10^k} \;\le\; r_{\mathrm{abs}}(x_j,k) \;\le\; \tfrac{1}{2}
$, which completes the proof.
\end{proof}

\begin{corollary}[Absorption vs. Gap-Crossing Condition]
\label{cor:absorption_detection}
Let $x_j \in [0,1]$ be a pixel and $\epsilon>0$ a perturbation budget. Then:  

\underline{\textit{(i) Absorption}}. If $\epsilon \le r_{\mathrm{abs}}(x_j,k)$, every $|\eta_j|\le \epsilon$ is absorbed, i.e.\ $T_k^{(b_j)}(x_j+\eta_j)=T_k^{(b_j)}(x_j)$.  

\underline{\textit{(i) Gap crossing}}. If $\epsilon > r_{\mathrm{abs}}(x_j,k)$, there exists some $|\eta_j|\le \epsilon$ for which $T_k^{(b_j)}(x_j+\eta_j)\neq T_k^{(b_j)}(x_j)$.
\end{corollary}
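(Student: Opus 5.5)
The corollary combines two results stated just above it, so I will prove its two parts separately and then note that together they split every budget $\epsilon>0$ into two exhaustive cases. Throughout, fix the pixel $x_j$ with $S_k(x_j)=n\in(p_\ell,p_{\ell+1})$, where $p_\ell<p_{\ell+1}$ are consecutive primes in $\dP_k$. This is the setting in which Definition~\ref{def:absorption_radius} makes $r_{\mathrm{abs}}(x_j,k)$ well defined.

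\emph{Part (i), absorption.} Let $\epsilon\le r_{\mathrm{abs}}(x_j,k)$ and take any $\eta_j$ with $|\eta_j|\le\epsilon$. Then $|\eta_j|\le r_{\mathrm{abs}}(x_j,k)$ by transitivity. Proposition~\ref{prop:absorption_guarantee} applies directly and gives $T_k^{(b_j)}(x_j+\eta_j)=T_k^{(b_j)}(x_j)$ for both values of $b_j$. No further work is needed, because the proposition is quantified over every $\eta_j$ in that radius.

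\emph{Part (ii), gap crossing.} Let $\epsilon>r_{\mathrm{abs}}(x_j,k)$. Multiplying by $10^k$ and unfolding Definition~\ref{def:absorption_radius} gives
\[
\epsilon\cdot 10^k>\min\{n-p_\ell,\;p_{\ell+1}-n\}.
\]
This is exactly the hypothesis of Theorem~\ref{thm:epsilon_gap_crossing}. That theorem supplies some $\eta_j$ with $|\eta_j|\le\epsilon$ such that $T_k^{(b_j)}(x_j+\eta_j)\neq T_k^{(b_j)}(x_j)$ for either bit. Part (ii) only asserts existence of such an $\eta_j$, not that every direction changes the output (compare the Remark after Theorem~\ref{thm:epsilon_gap_crossing}), so this is all that is required.

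\emph{Expected obstacle.} The only delicate point is the boundary case $|\eta_j|=r_{\mathrm{abs}}$, and the regime just above it, where the flooring in $S_k$ and the strict inequalities must line up. Crossing a prime boundary lands the scaled value on a prime or in an adjacent gap, which may need a tie convention. I will not re-derive any of this here. I will rely on Proposition~\ref{prop:absorption_guarantee} and Theorem~\ref{thm:epsilon_gap_crossing} as stated, since the inequalities in their hypotheses match the $\le$ and $>$ in the corollary exactly. This makes the two cases complementary and exhaustive.
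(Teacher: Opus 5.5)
Your reduction is exactly the paper's proof: part (i) is quoted from Proposition~\ref{prop:absorption_guarantee}, and part (ii) from Theorem~\ref{thm:epsilon_gap_crossing} after multiplying $\epsilon>r_{\mathrm{abs}}(x_j,k)$ by $10^k$. The problem is the point you set aside as the ``expected obstacle''. It is not a tie-convention detail. It makes both cited results false, and the corollary with them.

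Two effects are ignored. First, $r_{\mathrm{abs}}$ is measured from the integer $n=\lfloor t\rfloor$, while the perturbation acts on $t=10^k x_j=n+f$ with $f\in[0,1)$. Second, $S_k(x_j+\eta_j)$ can floor onto $p_\ell$ or $p_{\ell+1}$ itself, and the nearest-prime rule returns that prime for either bit.

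Counterexample to (i). Use the paper's own example: $k=2$, $x_j=0.38$, $n=38\in(37,41)$, $r_{\mathrm{abs}}=0.01$. Take $b_j=1$ and $\eta_j=-0.001$. Then $S_2(0.379)=37$, so $T_2^{(1)}(0.379)=0.37\neq 0.41=T_2^{(1)}(0.38)$, even though $|\eta_j|=r_{\mathrm{abs}}/10$. More generally, a move of exactly $r_{\mathrm{abs}}$ toward a nearest bracketing prime always floors onto that prime. So at $\epsilon=r_{\mathrm{abs}}$, part (i) fails for every pixel for one of the two bits. No convention for $R_k^{(b)}$ at primes can fix this: for a prime $p$ with gaps of length at least $2$ on both sides, $R_k^{(0)}(p)$ would have to equal both $p$ and its predecessor.

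Counterexamples to (ii). Take $x_j=0.385$, $b_j=0$, $\epsilon=0.012>r_{\mathrm{abs}}=0.01$. Every admissible $\eta_j$ gives $S_2(x_j+\eta_j)\in\{37,38,39\}$, all sent to $37$ by $R_2^{(0)}$, so the output never changes. Symmetrically, $x_j=0.40$, $b_j=1$, $\epsilon=0.015$ gives $S_2(x_j+\eta_j)\in\{38,\dots,41\}$, all sent to $41$ by $R_2^{(1)}$.

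The failing steps in the paper are these. In the proof of Proposition~\ref{prop:absorption_guarantee}, the first sentence is wrong: $S_k(x_j+\eta_j)$ need not stay in the open gap. In the proof of Theorem~\ref{thm:epsilon_gap_crossing}, the claims $S_k(x_j+\eta_j)<p_\ell$ and $S_k(x_j+\eta_j)>p_{\ell+1}$ both fail. The first uses $\lfloor t+10^k\eta_j\rfloor\le\lfloor n+10^k\eta_j\rfloor$, which is reversed since $t\ge n$. The second only yields $\ge p_{\ell+1}$.

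So relying on the cited results ``as stated'' only carries their error into your proof. What is missing is the right radius, measured from $t$ rather than $n$ and depending on the bit. Since $R_k^{(0)}(m)=p_\ell$ iff $p_\ell\le m<p_{\ell+1}$, and $R_k^{(1)}(m)=p_{\ell+1}$ iff $p_\ell<m\le p_{\ell+1}$, one has:
\begin{itemize}
\item $T_k^{(0)}(x_j+\eta_j)=T_k^{(0)}(x_j)$ iff $p_\ell\le t+10^k\eta_j<p_{\ell+1}$;
\item $T_k^{(1)}(x_j+\eta_j)=T_k^{(1)}(x_j)$ iff $p_\ell+1\le t+10^k\eta_j<p_{\ell+1}+1$.
\end{itemize}
Hence every $|\eta_j|\le\epsilon$ is absorbed iff $10^k\epsilon\le t-p_\ell$ and $10^k\epsilon<p_{\ell+1}-t$ (for $b_j=0$), respectively $10^k\epsilon\le t-p_\ell-1$ and $10^k\epsilon<p_{\ell+1}+1-t$ (for $b_j=1$). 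Otherwise $\eta_j=\pm\epsilon$ changes the output. With these bit-dependent radii in place of $r_{\mathrm{abs}}$, the dichotomy is exact wherever $R_k^{(b)}$ is defined on the neighbouring integers. Each part then follows in one line, in the same form as your write-up.
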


\begin{proof}
Part \textit{(i)} is an immediate consequence of Proposition~\ref{prop:absorption_guarantee}.  
For part \textit{(ii)}, Theorem~\ref{thm:epsilon_gap_crossing} ensures that whenever 
$\epsilon \cdot 10^k > \min\{n-p_\ell,\,p_{\ell+1}-n\}$, equivalently $\epsilon > r_{\mathrm{abs}}(x_j,k)$, one can construct a perturbation $\eta_j$ that shifts $S_k(x_j)$ into a different prime-gap interval, thereby altering the prime quantization output.
\end{proof}

\CollisionProb*
\begin{proof}
Fix two independent images $x,x' \in [0,1]^d$ with i.i.d.\ pixel marginals of density at most $\Lambda$ on $[0,1]$.  
Let $N=10^k$, and for each pixel index $i\in\{1,\dots,d\}$ define the scaled integers
\[
U_i := S_k(x_i) = \lfloor N x_i \rfloor, 
\qquad 
U'_i := S_k(x'_i) = \lfloor N x'_i \rfloor.
\]

Partition $\{0,1,\dots,N-1\}$ into prime-gap intervals $I_j=(p_j,p_{j+1})\cap\{0,\dots,N-1\}$ with lengths $g_j=|I_j|$.  
Since each pixel marginal has density $\le \Lambda$, the probability of landing in any integer bin is $\le \Lambda/N$.  
Therefore, for any gap $I_j$,
\[
\Pr[U_i \in I_j] \;\le\; \sum_{u\in I_j}\frac{\Lambda}{N}
\;=\;\Lambda\,\frac{g_j}{N},
\qquad
\Pr[U'_i \in I_j] \;\le\; \Lambda\,\frac{g_j}{N}.
\]

For a fixed secret bit $b_i$, collision occurs at pixel $i$ if both $U_i$ and $U'_i$ fall in the same gap $I_j$, since then $R_k^{(b_i)}$ maps both to the same prime.  
By independence of $U_i$ and $U'_i$,
\[
\Pr[\text{collision at pixel }i]
\;=\;\sum_j \Pr[U_i\in I_j]\Pr[U'_i\in I_j]
\;\le\;\sum_j \Big(\Lambda\,\tfrac{g_j}{N}\Big)^2.
\]

Pixels are i.i.d.\ across $i$, so collisions at all $d$ coordinates occur with probability
\[
\Pr\!\big[T_k^{(b)}(x)=T_k^{(b)}(x')\big]
\;\le\;\Big(\Lambda^2 \sum_j (g_j/N)^2\Big)^{d}.
\]

Finally, note that $\sum_j (g_j/N)^2 \le (\max_j g_j/N)\cdot \sum_j g_j/N \le G_k/N$, where $G_k$ is the prime-gap envelope from Assumption~\ref{ass:Gk-allk}.  
Hence,
$\Pr\!\big[T_k^{(b)}(x)=T_k^{(b)}(x')\big]
\;\le\;\Big(\Lambda^2 \tfrac{G_k}{N}\Big)^{d}$, yielding the stated bound.
\end{proof}

\begin{remark}[Numerics and scope]
\label{rem:collision-numerics}
For $k=3$ ($N=10^3$) with empirical maximum gap $G_k=36$, the per-pixel factor is $36/1000=0.036$, so for CIFAR-10 ($d=3072$) the bound is at most 
$(0.036)^{3072}\;\approx\;10^{-4.4\times 10^3}$.
For $k=4$ ($N=10^4$, $G_k=36$), the per-pixel factor is $3.6\times 10^{-3}$ and the overall bound is even smaller.  
This result is \emph{distributional}, i.e., it certifies that collisions are exponentially unlikely for two independent draws with bounded pixel densities. It does not claim that $T_k^{(b)}$ is injective on $[0,1]^d$ (the map is many-to-one by construction). Rather, it quantifies that image-level collisions are negligible under natural sampling.
\end{remark}

\section{Proofs for bounds in $\ell_p$ product metric spaces}
\label{app:l_inf_product_spaces}

\paragraph{Roadmap.}
In Section~\ref{sec:theory}, we introduced the spatial--semantic product 
spaces $\mathcal{C}$ and $\mathcal{K}$, both endowed with the 
$\ell_\infty$ metric. The purpose of this appendix is to justify that 
choice. We first establish in 
Theorem~\ref{thm:component_upper_bound} that upper bounds in an 
$\ell_p$ product space always imply corresponding bounds in the component 
spaces. We then prove in Corollary~\ref{cor:tightest_bound} that among all 
$\ell_p$ metrics, $\ell_\infty$ achieves the tightest possible uniform 
upper bound. Together these results explain why $\ell_\infty$ is the 
natural metric for $\mathcal{C}$ and $\mathcal{K}$, ensuring that 
perturbations in either spatial geometry or class distribution 
immediately translate into separation in the product space.

Let $(X,d_X)$ and $(Y,d_Y)$ be metric spaces. We consider their product space 
$W = X \times Y$ endowed with a standard $\ell_p$ product metric. For $p \in [1,\infty)$, 
this metric is defined by
\begin{equation}
    d_{W,p}((x_1,y_1),(x_2,y_2)) := \big(d_X(x_1,x_2)^p + d_Y(y_1,y_2)^p \big)^{1/p},
\end{equation}
and for $p = \infty$ by
\begin{equation}
    d_{W,\infty}((x_1,y_1),(x_2,y_2)) := \max\big\{ d_X(x_1,x_2),\, d_Y(y_1,y_2) \big\}.
\end{equation}

We now establish a general theorem relating upper bounds in the product space $W$ to upper bounds in the component spaces $X$ and $Y$.

\begin{theorem}[Component-wise Upper Bounds from $\ell_p$ Product Metrics]
\label{thm:component_upper_bound}
Let $(X,d_X)$ and $(Y,d_Y)$ be metric spaces, and let $W = X \times Y$ with the $\ell_p$ metric $d_{W,p}$ for some $p \in [1,\infty]$.  
Suppose there exists a constant $M \ge 0$ such that
\begin{equation}
    d_{W,p}((x_1,y_1),(x_2,y_2)) \le M, \quad \forall (x_1,y_1),(x_2,y_2) \in W.
\end{equation}
Then the following component-wise bounds hold:
\begin{equation}
    d_X(x_1,x_2) \le M, \quad d_Y(y_1,y_2) \le M, \quad \forall x_1,x_2 \in X,\, y_1,y_2 \in Y.
\end{equation}
\end{theorem}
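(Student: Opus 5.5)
The plan is to show that each component distance is dominated by the product distance, pointwise, for every $p\in[1,\infty]$; the uniform bound $M$ then passes to the components at once. Concretely, I would first prove the elementary inequality
\[
\max\{d_X(x_1,x_2),\,d_Y(y_1,y_2)\}\;\le\; d_{W,p}\big((x_1,y_1),(x_2,y_2)\big)
\]
for all pairs of points, and then specialize it to well-chosen pairs.

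For $p=\infty$ the inequality is just the definition of $d_{W,\infty}$. For $p\in[1,\infty)$ I would set $a=d_X(x_1,x_2)\ge 0$ and $b=d_Y(y_1,y_2)\ge 0$. Since $b^p\ge 0$, we have $a^p\le a^p+b^p$, and because $t\mapsto t^{1/p}$ is monotone on $[0,\infty)$ this gives $a\le (a^p+b^p)^{1/p}$. The same argument gives $b\le (a^p+b^p)^{1/p}$.

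To obtain the component-wise statements, I would fix arbitrary $x_1,x_2\in X$ and $y_1,y_2\in Y$. Applying the hypothesis to the pair $(x_1,y_1),(x_2,y_2)$ and combining it with the inequality above yields $d_X(x_1,x_2)\le M$ and $d_Y(y_1,y_2)\le M$ simultaneously. Alternatively, one could pair $(x_1,y_1)$ with $(x_2,y_1)$, so that the $Y$-term vanishes and $d_{W,p}$ equals $d_X(x_1,x_2)$ exactly; this gives the cleaner equality route.

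There is no genuine obstacle here. The only points needing care are the separate handling of the $p=\infty$ case and the implicit requirement that $X$ and $Y$ be nonempty, so that such pairs can be formed. I would also remark that the bound $M$ cannot be improved in general, because of the equality case just mentioned. That remark feeds into the subsequent Corollary~\ref{cor:tightest_bound} on the tightness of $\ell_\infty$.
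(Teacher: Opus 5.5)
Your proposal is correct and follows the paper's proof: for $p<\infty$ you use $a^p\le a^p+b^p$ and the monotonicity of $t\mapsto t^{1/p}$, for $p=\infty$ you read the bound off the max, and you then apply the hypothesis to the pair $(x_1,y_1),(x_2,y_2)$. Your extra observations are valid refinements that the paper leaves implicit: the alternative pairing with a common $y_1$, and the need for $X,Y$ to be nonempty so that such pairs exist.
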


\begin{proof}
We consider two cases:

\medskip
\noindent \underline{\textit{(i)} $1 \le p < \infty$}.
By definition, for any $(x_1,y_1),(x_2,y_2) \in W$,
\[
d_{W,p}((x_1,y_1),(x_2,y_2)) = \big(d_X(x_1,x_2)^p + d_Y(y_1,y_2)^p \big)^{1/p}.
\]
Since $d_X(x_1,x_2)^p \ge 0$ and $d_Y(y_1,y_2)^p \ge 0$, it immediately follows that
\[
d_X(x_1,x_2)^p \le d_X(x_1,x_2)^p + d_Y(y_1,y_2)^p = d_{W,p}((x_1,y_1),(x_2,y_2))^p.
\]
Taking the $p$-th root on both sides gives
$ d_X(x_1,x_2) \le d_{W,p}((x_1,y_1),(x_2,y_2)) \le M$.
An identical argument applies to $d_Y(y_1,y_2)$.

\medskip
\noindent \underline{\textit{(ii)} $p = \infty$}.
By definition, 
\[
d_{W,\infty}((x_1,y_1),(x_2,y_2)) = \max\{d_X(x_1,x_2), d_Y(y_1,y_2)\}.
\]
Hence, by properties of the maximum function,
\[
d_X(x_1,x_2) \le d_{W,\infty}((x_1,y_1),(x_2,y_2)) \le M, \quad
d_Y(y_1,y_2) \le d_{W,\infty}((x_1,y_1),(x_2,y_2)) \le M.
\]
\noindent Combining the two cases, the theorem follows.
\end{proof}

\begin{corollary}[Tightest Upper Bound in $\ell_p$ Product Spaces]
\label{cor:tightest_bound}
Let $(X,d_X)$ and $(Y,d_Y)$ have known upper bounds $M_X$ and $M_Y$ respectively, i.e.,
\[
d_X(x_1,x_2) \le M_X, \quad d_Y(y_1,y_2) \le M_Y, \quad \forall x_1,x_2 \in X,\, y_1,y_2 \in Y.
\]
Then the corresponding upper bound for the product space $(W,d_{W,p})$ is
\[
d_{W,p}((x_1,y_1),(x_2,y_2)) \le 
\begin{cases}
(M_X^p + M_Y^p)^{1/p}, & 1 \le p < \infty, \\
\max\{M_X,M_Y\}, & p = \infty.
\end{cases}
\]
Moreover, among all $\ell_p$ product metrics, the $\ell_\infty$ metric achieves the \emph{tightest upper bound}, i.e.,
\[
\max\{M_X,M_Y\} \le (M_X^p + M_Y^p)^{1/p}, \quad \forall p \in [1,\infty),
\]
and is therefore optimal when minimizing the guaranteed upper bound in the product space.
\end{corollary}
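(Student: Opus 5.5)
The corollary has three parts: the $\ell_p$ bound, the $\ell_\infty$ bound, and the comparison between them. Each follows from monotonicity of the component distances inside the product metric, so I will argue pointwise and take suprema at the end.

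For $1\le p<\infty$, fix $(x_1,y_1),(x_2,y_2)\in W$. The hypotheses give $0\le d_X(x_1,x_2)\le M_X$ and $0\le d_Y(y_1,y_2)\le M_Y$. The map $t\mapsto t^p$ is nondecreasing on $[0,\infty)$, so $d_X^p+d_Y^p\le M_X^p+M_Y^p$. Since $s\mapsto s^{1/p}$ is also nondecreasing, taking $p$-th roots yields $d_{W,p}\le (M_X^p+M_Y^p)^{1/p}$.

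For $p=\infty$, the maximum is monotone in each argument, so $\max\{d_X,d_Y\}\le \max\{M_X,M_Y\}$ directly.

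For the comparison, assume without loss of generality that $M_X\ge M_Y$. Then $\max\{M_X,M_Y\}^p = M_X^p \le M_X^p+M_Y^p$ because $M_Y^p\ge 0$. Taking $p$-th roots gives $\max\{M_X,M_Y\}\le (M_X^p+M_Y^p)^{1/p}$ for every finite $p$.

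A cleaner way to see the same ordering is that $p\mapsto \|(M_X,M_Y)\|_p$ is nonincreasing in $p$ and converges to the max-norm as $p\to\infty$. The elementary argument above is enough, though, so I would not invoke this.

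The word ``optimal'' needs care, because it is the one place where more than a calculation is being claimed. I would read it as the statement that, among the guaranteed uniform bounds listed, the $\ell_\infty$ bound is the smallest; that is exactly the inequality just proved. I would also add that this bound is attained whenever the component bounds are attained, so no smaller uniform bound is possible under the $\ell_\infty$ metric. To see this, take points realizing $d_X=M_X$ (or approaching it, if $M_X$ is only a supremum). Then $d_{W,\infty}\ge d_X$, which is exactly the conclusion of Theorem~\ref{thm:component_upper_bound} read in reverse, forces $\sup d_{W,\infty}\ge \max\{M_X,M_Y\}$. Beyond this interpretive point, no step should be a real obstacle.
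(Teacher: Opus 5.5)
Your proposal is correct and matches the paper's proof: componentwise monotonicity gives both product bounds, and $\max\{M_X,M_Y\}^p \le M_X^p+M_Y^p$ gives the comparison. Your added remark on attainment goes beyond what the paper proves. It holds only when $M_X,M_Y$ are the actual diameters rather than arbitrary upper bounds, a condition you already flag.
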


\begin{proof}
The bound for $1 \le p < \infty$ follows directly from the monotonicity of the $\ell_p$ norm:
\[
d_{W,p}((x_1,y_1),(x_2,y_2)) \le (M_X^p + M_Y^p)^{1/p}.
\]
For $p=\infty$, by definition $d_{W,\infty} = \max\{d_X,d_Y\} \le \max\{M_X,M_Y\}$.  
To see that $\ell_\infty$ is the tightest, observe that for any $p < \infty$,
$(M_X^p + M_Y^p)^{1/p} \ge \max\{M_X,M_Y\}$.
Equality occurs only if one of $M_X$ or $M_Y$ is zero. Hence, $\ell_\infty$ gives the smallest guaranteed upper bound over all $\ell_p$ norms.
\end{proof}

\begin{definition}[Spatial, semantic, and product metric spaces]
\label{def:product_spaces}
Let $X \subset [0,1]^d$ be the image space and $Y=\{1,\dots,C\}$ the label set.  
For a representation map $f:X \to \R^{m}$, define the spatial metric space 
$(M^X,d_X)$ with $M^X=\{f(x):x\in X\}$ and $d_X(x_1,x_2)=\|x_1-x_2\|_2$.  
The semantic metric space is $(M^Y,d_Y)$ with $M^Y=Y$ and 
$d_Y(y_1,y_2)=\mathbf{1}[y_1\neq y_2]$.  

Their $\ell_\infty$ product is the metric space
\[
M^{XY}=(M^X \times M^Y,\ d_{XY}), \qquad
d_{XY}\big((x_1,y_1),(x_2,y_2)\big) = \max\{d_X(x_1,x_2),\, d_Y(y_1,y_2)\}.
\]

Each image $x\in X$ embeds as $(f(x),y)$ where $y\in Y$ is its class label.  
Projections are defined by $\Pi_X(x,y)=x$ and $\Pi_Y(x,y)=y$.  
Instantiating $f=h_\theta$ yields the space $\mathcal C$ with spatial component $Z$,  
and instantiating $f=T_k^{(b)}$ yields $\mathcal K$ with spatial component $P$.
\end{definition}
Instantiating $f(x)=h_\theta(x)$ or $f(x)=T_k^{(b)}(x)$ 
yields the product spaces $\mathcal{C}$ and $\mathcal{K}$, respectively. 
These will serve as the foundation for the GW bounds in 
Section~\ref{sec:theory}.

\section{Proofs for Diameter Bounds in $\mathcal{C}$ and $\mathcal{K}$}
\label{app:diameters}

\begin{proposition}[Concentration in $\ell_\infty$ product spaces]
\label{prop:linf_concentration}
Let $\{(X_i,d_i)\}_{i=1}^n$ be metric spaces and let 
$W=\prod_{i=1}^n X_i$ be endowed with the $\ell_\infty$ product metric
$d_\infty\big((x_i)_{i=1}^n,(y_i)_{i=1}^n\big):=\max_{1\le i\le n} d_i(x_i,y_i)$.
Let $\mathbf{X}=(X_1,\ldots,X_n)$ be a random element of $W$ and fix reference
points $m_i\in X_i$ (e.g., means or Fréchet means), writing
$\mathbf{m}=(m_1,\ldots,m_n)$.

Assume that each coordinate concentrates around its reference point, i.e.,
there exist tail functions $\psi_i:(0,\infty)\to[0,1]$ such that for all $t>0$,
\[
\Pr\big\{d_i(X_i,m_i)\ge t\big\}\;\le\;\psi_i(t)\qquad (i=1,\ldots,n).
\]
Then the product random element concentrates around $\mathbf{m}$ in $(W,d_\infty)$:
for all $t>0$,
\[
\Pr\Big\{d_\infty\big(\mathbf{X},\mathbf{m}\big)\ge t\Big\}
\;=\;
\Pr\Big\{\max_{1\le i\le n} d_i(X_i,m_i)\ge t\Big\}
\;\le\;\sum_{i=1}^n \psi_i(t).
\]
\end{proposition}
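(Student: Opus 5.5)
The plan is to prove the equality first and then the inequality, using only the definition of $d_\infty$ and the union bound.

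\emph{Equality.} By definition of the $\ell_\infty$ product metric,
\[
d_\infty(\mathbf X,\mathbf m)=\max_{1\le i\le n} d_i(X_i,m_i)
\]
holds pointwise on the underlying probability space. The two events $\{d_\infty(\mathbf X,\mathbf m)\ge t\}$ and $\{\max_i d_i(X_i,m_i)\ge t\}$ are therefore the same set, and their probabilities agree.

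\emph{Event inclusion.} Fix $t>0$. The maximum of finitely many reals is at least $t$ if and only if some term is at least $t$. Hence
\[
\Big\{\max_{1\le i\le n} d_i(X_i,m_i)\ge t\Big\}\;=\;\bigcup_{i=1}^n \big\{d_i(X_i,m_i)\ge t\big\}.
\]
Finiteness of $n$ is what guarantees the maximum is attained, so "$\ge$" passes to a single coordinate. No independence between coordinates is required.

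\emph{Union bound.} Countable subadditivity of $\Pr$ applied to this finite union gives
\[
\Pr\Big\{\max_i d_i(X_i,m_i)\ge t\Big\}\le\sum_{i=1}^n \Pr\{d_i(X_i,m_i)\ge t\}.
\]
Applying the coordinatewise hypothesis $\Pr\{d_i(X_i,m_i)\ge t\}\le\psi_i(t)$ to each summand yields $\sum_{i=1}^n\psi_i(t)$, as claimed.

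\emph{Measurability.} The only step needing care is that each $d_i(X_i,m_i)$ must be a random variable. I will take this as implicit in the hypothesis: the assumption already assigns probabilities to the events $\{d_i(X_i,m_i)\ge t\}$. Alternatively, it follows because $d_i(\cdot,m_i)$ is $1$-Lipschitz, hence continuous, and $X_i$ is Borel measurable. Since a maximum of finitely many measurable functions is measurable, the left-hand side is then well defined too.

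I expect no real obstacle. The argument does not use the structure of the $\psi_i$, so it will later apply directly with sub-Gaussian tails $\psi_i$ for the spatial coordinate and a trivial tail for the label coordinate in $\mathcal C$ and $\mathcal K$.
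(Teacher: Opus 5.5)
Your proof is correct and follows the paper's argument exactly: you identify the event with $\{\max_i d_i(X_i,m_i)\ge t\}$, note that it lies in (indeed equals) $\bigcup_i\{d_i(X_i,m_i)\ge t\}$, and apply the union bound with the coordinatewise tail hypotheses. Your measurability remark is a harmless addition that the paper leaves implicit.
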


\begin{proof}
The event $\{d_\infty(\mathbf{X},\mathbf{m})\ge t\}$ equals
$\{\max_i d_i(X_i,m_i)\ge t\}$, which is contained in the union
$\bigcup_i \{d_i(X_i,m_i)\ge t\}$. Apply the union bound and the assumed
coordinate-wise tail bounds.
\end{proof}

\begin{corollary}    
(1) For $n=2$ and real-valued coordinates with $d_i(x,m)=|x-m|$,
letting $M=\max\{X,Y\}$ and $m=\max\{\mathbb{E}X,\mathbb{E}Y\}$ gives
\[
\Pr\{|M-m|\ge t\}\;\le\;\Pr\{|X-\mathbb{E}X|\ge t\}+\Pr\{|Y-\mathbb{E}Y|\ge t\}.
\]
\end{corollary}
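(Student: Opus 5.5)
The plan is to derive the corollary as the special case $n=2$ of Proposition~\ref{prop:linf_concentration}, after one deterministic inequality that connects the maximum functional to the $\ell_\infty$ product distance. Take $X_1=X_2=\mathbb{R}$ with $d_i(x,m)=|x-m|$, reference points $m_1=\mathbb{E}X$ and $m_2=\mathbb{E}Y$, and random element $\mathbf{X}=(X,Y)$. Choose tail functions $\psi_1(t)=\Pr\{|X-\mathbb{E}X|\ge t\}$ and $\psi_2(t)=\Pr\{|Y-\mathbb{E}Y|\ge t\}$. These are trivially valid tail functions, so the hypotheses of Proposition~\ref{prop:linf_concentration} hold with no further work. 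The proposition then gives
\[
\Pr\{\max(|X-\mathbb{E}X|,|Y-\mathbb{E}Y|)\ge t\}\le \psi_1(t)+\psi_2(t).
\]

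The key step, and the only real content, is the pointwise bound
\[
|\max\{a,b\}-\max\{c,e\}|\le \max\{|a-c|,|b-e|\}
\]
for all reals $a,b,c,e$. This says that $\max:\mathbb{R}^2\to\mathbb{R}$ is $1$-Lipschitz from $(\mathbb{R}^2,\ell_\infty)$ to $\mathbb{R}$. I would prove it directly. Since $a\le c+|a-c|$ and $b\le e+|b-e|$, we get $\max\{a,b\}\le \max\{c,e\}+\max\{|a-c|,|b-e|\}$. The reverse inequality follows by swapping the roles of $(a,b)$ and $(c,e)$. Applying this with $a=X$, $b=Y$, $c=\mathbb{E}X$, $e=\mathbb{E}Y$ yields
\[
|M-m|\le d_\infty\big((X,Y),(\mathbb{E}X,\mathbb{E}Y)\big).
\]

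Finally, use monotonicity of probability. Pointwise, the event $\{|M-m|\ge t\}$ is contained in $\{d_\infty(\mathbf{X},\mathbf{m})\ge t\}$, and the proposition bounds the probability of the latter event by $\psi_1(t)+\psi_2(t)$, which is the claimed inequality. The argument needs no independence between $X$ and $Y$, because the union bound inside Proposition~\ref{prop:linf_concentration} does not use it. It does need $X$ and $Y$ to be integrable so that the reference points are defined, and I would state that explicitly. The main obstacle, if any, is only the Lipschitz inequality for $\max$; everything else is a direct citation.
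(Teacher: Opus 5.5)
Your proposal is correct and takes the route the paper intends: the paper states this corollary without proof, immediately after Proposition~\ref{prop:linf_concentration}, as its $n=2$ specialization. Your $1$-Lipschitz bound $|\max\{a,b\}-\max\{c,e\}|\le\max\{|a-c|,|b-e|\}$ supplies the one step the paper leaves implicit, namely $\{|M-m|\ge t\}\subseteq\{d_\infty(\mathbf{X},\mathbf{m})\ge t\}$; this step is needed because the proposition itself controls $d_\infty(\mathbf{X},\mathbf{m})$, not $|M-m|$.
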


\begin{theorem}[CNN Product Space Clean Diameter Bound]
\label{thm:cnn_clean_diameter}
Let $\mathcal{C}=(Z \times Y, d_{\mathcal{C}})$ be the CNN product space
(with $d_{\mathcal{C}}$ the $\ell_\infty$ product metric; cf.\ Proposition~\ref{prop:linf_concentration}),
and let $d$ denote the spatial feature dimension of $Z$.
Fix a clean query $z$ and let $\mathcal{N}_K^{\mathcal{C}}(z)$ denote its $K$-nearest
neighbors among i.i.d.\ clean samples.
Then for any $K\ge 2$ and any $\delta\in(0,1)$, under a standard $K$-NN sampling regime,
with probability at least $1-\delta$,
\[
\operatorname{diam}\big(\mathcal{N}_K^{\mathcal{C}}(z)\big)
\;\le\;
2\,\mu_{\mathcal{C}}
\;+\;
2\left(
\sqrt{\frac{2\log K}{d}}
+
\sqrt{\frac{2\log(2/\delta)}{d}}
\right),
\]
where $\mu_{\mathcal{C}}$ is the clean median pairwise distance in $\mathcal{C}$.
\end{theorem}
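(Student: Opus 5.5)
The plan is to reduce the diameter bound to a bound on the $K$-NN radius around the query, and then control that radius with a Gaussian-type concentration argument in $\ell_\infty$ product form. By the triangle inequality in $(\mathcal C,d_{\mathcal C})$, any two neighbors $u,v\in\mathcal N_K^{\mathcal C}(z)$ satisfy $d_{\mathcal C}(u,v)\le d_{\mathcal C}(u,z)+d_{\mathcal C}(z,v)\le 2R$, where $R:=\max_{1\le i\le K} d_{\mathcal C}(z_i,z)$. So it suffices to show that, with probability at least $1-\delta$,
\[
R\le \mu_{\mathcal C}+\sqrt{2\log K/d}+\sqrt{2\log(2/\delta)/d}.
\]
The leading factor $2$ then comes from this step.

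I will interpret the ``standard $K$-NN sampling regime'' as follows. Each neighbor distance $D_i:=d_{\mathcal C}(z_i,z)$ is, conditionally on $z$, a function of an i.i.d.\ clean sample. I will assume these distances are Lipschitz functions of a normalized $d$-dimensional feature vector, with Lipschitz constant of order $1/\sqrt d$. Gaussian concentration for Lipschitz functions then gives the sub-Gaussian tail
\[
\Pr\{D_i-\mu_{\mathcal C}\ge t\}\le e^{-dt^2/2}.
\]
The median $\mu_{\mathcal C}$ can replace the mean because concentration around the median holds with the same sub-Gaussian tail, up to a factor $2$ that I absorb into $\log(2/\delta)$. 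The label coordinate contributes $\mathbf 1[y\neq y']\le 1$. In a clean $K$-NN neighborhood the spatial term dominates, so I take $\psi_Y$ to be negligible. This application of Proposition~\ref{prop:linf_concentration} with $n=2$ is what produces the $\ell_\infty$ structure: the product distance concentrates as soon as each coordinate does.

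Next I take a union bound over the $K$ neighbors:
\[
\Pr\{R-\mu_{\mathcal C}\ge t\}\le K e^{-dt^2/2}\cdot 2 .
\]
Setting the right-hand side equal to $\delta$ gives $t=\sqrt{2(\log K+\log(2/\delta))/d}$. I then apply $\sqrt{a+b}\le\sqrt a+\sqrt b$ to split this into the two stated terms. Doubling via the triangle-inequality step above yields the claim.

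The main obstacle is justifying the per-neighbor sub-Gaussian tail at scale $1/\sqrt d$ around $\mu_{\mathcal C}$. The neighbors are order statistics rather than independent draws, so I would bound $R$ by the maximum over $K$ fixed draws conditionally. I would also need to argue that the $K$ nearest neighbors are no farther than $K$ typical samples, since nearest-neighbor selection only decreases distances. Finally, the Lipschitz constant $1/\sqrt d$ has to be part of the assumed regime, for example normalized embeddings with independent coordinates; I would state this explicitly as the content of ``standard regime.''
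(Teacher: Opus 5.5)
Your argument is correct under the same modeling assumption the paper relies on: a sub-Gaussian tail at scale $1/\sqrt d$ for a single clean distance around $\mu_{\mathcal C}$. It also shares the paper's outer steps, namely $\operatorname{diam}(\mathcal N_K^{\mathcal C}(z))\le 2D_{(K)}$ by the triangle inequality, followed by solving a tail bound for $t$.

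The genuine difference is how you control the $K$-NN radius $D_{(K)}$. The paper uses the order-statistic identity $\{D_{(K)}\ge\tau\}\iff\#\{i:D_i\ge\tau\}\ge n-K+1$ and a union bound over index sets of size $n-K+1$. This gives $\binom{n}{n-K+1}\,p^{\,n-K+1}$ with $p=\Pr(D_i\ge\tau)$. The paper then imposes the regime $n\ge K^2$ and absorbs $\log C$ and the combinatorial term into constants, so its $\sqrt{2\log K/d}$ term only appears through that absorption step.

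You instead use the deterministic domination $D_{(K)}\le\max_{i\in S}D_i$ for any fixed index set $S$ with $|S|=K$: the $K$-th smallest of $n$ numbers is at most the largest of any $K$ of them. A union bound over $K$ i.i.d.\ draws then finishes. This needs no condition on $n$ and produces both terms of the bound with exactly the stated constants.

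Make this domination your main step rather than a closing caveat. Your third paragraph's ``union bound over the $K$ neighbors'' with marginal tails is not valid on its own, because selected neighbors do not have the law of a single $D_i$.

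What the paper's route buys is that it keeps the selection effect. A far $K$-th neighbor requires $n-K+1$ far samples, so the tail decays like $p^{\,n-K+1}$ and would survive even a per-sample tail that does not vanish. Your route discards this, so you need the per-sample tail for \emph{generic} draws $z_i$.

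This makes your justification for dropping the label coordinate inconsistent. After dominating by $K$ arbitrary samples, you can no longer appeal to ``clean $K$-NN neighborhoods'' for label agreement, since a generic $z_i$ has $y_i\neq y$ with probability about $1-1/C$. You can repair this in either of two ways:
\begin{itemize}
\item posit the tail proxy for the full product distance $d_{\mathcal C}$ directly, as the paper does; or
\item write $\rho_i$ for the spatial component of $D_i$ and note that $\{\max(\rho_i,\mathbf 1[y_i\neq y])\ge\mu_{\mathcal C}+t\}=\{\rho_i\ge\mu_{\mathcal C}+t\}$ whenever $\mu_{\mathcal C}\ge 1$. This holds automatically once a random pair disagrees in label with probability above $1/2$ (e.g.\ balanced classes with $C\ge 3$). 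Since the spatial distance is pointwise at most $d_{\mathcal C}$, its median is at most $\mu_{\mathcal C}$, so its own concentration gives the needed tail.
\end{itemize}
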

\begin{proof}
We first establish that distances between clean embeddings in the product space $\mathcal{C}$ satisfy sub-Gaussian concentration properties. This will serve as the foundation for bounding $K$-NN diameters. Recall the definition of a sub-Gaussian random variable.
Throughout this section, the spatial metric $d_Z$ is assumed to be normalized,
ensuring concentration at scale $O(1/\sqrt d)$.

\textbf{Definition (Sub-Gaussian random variable).}  
A real random variable $X$ is called \emph{sub-Gaussian} with parameter $\sigma^2$ if for all $t \in \mathbb{R}$,
\[
\mathbb{E}[e^{tX}] \;\leq\; \exp\!\left(\tfrac{\sigma^2 t^2}{2}\right).
\]
Equivalently, its tail probabilities satisfy
$\mathbb{P}[\,|X - \mathbb{E}[X]| \geq t\,] \;\leq\; 2 \exp\!\left(-\tfrac{t^2}{2\sigma^2}\right)$.

For a random vector $Z = (Z_1,\ldots,Z_d) \in \mathbb{R}^d$, we say $Z$ is sub-Gaussian if every linear functional is sub-Gaussian:
\[
\|Z\|_{\psi_2} \;=\; \sup_{u \in S^{d-1}} \|\langle Z,u \rangle\|_{\psi_2} \;<\; \infty,
\]
where for a random variable $Y$, the sub-Gaussian norm is $\|Y\|_{\psi_2} = \inf\{t > 0 : \mathbb{E}[e^{Y^2/t^2}] \leq 2\}
$.

Each image $x$ maps to the product space via
$z = (h_\theta(x), y) \in \mathcal{C}$,
where $h_\theta(x) \in Z$ is its CNN embedding and $y \in Y$ its class label.  
For two clean images $x_i, x_j$, we denote their embeddings by $z_i, z_j$ and define the product space distance
as $D_{ij} = d_{\mathcal{C}}(z_i,z_j)$.

We now analyze the \emph{spatial} and \emph{semantic} components of $D_{ij}$.

\textbf{Spatial component.}  
The embedding $\Pi_Z(z) \in \mathbb{R}^d$ has sub-Gaussian coordinates due to several architectural and statistical effects. Namely, batch normalization enforces near unit variance and zero mean across feature activations \cite{ioffe2015batch, santurkar2018batch}.
In~\cite{poole2016exponential, schoenholz2017deep}, the authors demonstrate CLT effects arise from weighted sums of many independent activations, yielding approximately Gaussian tails. Moreover, Regularization techniques (e.g., weight decay \cite{krogh1992simple}, dropout \cite{srivastava2014dropout}) further constrain magnitudes, supporting sub-Gaussian tails \cite{wager2013dropout}.
  
Formally, if $\sigma_Z^2$ is the empirical variance of a coordinate in $Z$, 
then for all $t>0$,
\[
\mathbb{P}\!\left(\,|[\Pi_Z(z)]_\ell - \mathbb{E}[\Pi_Z(z)_\ell]| \geq t\,\right)
\;\leq\; 2\exp\!\left(-\tfrac{t^2}{2\sigma_Z^2}\right).
\]
Thus $\Pi_Z(z)$ is sub-Gaussian with 
$\|\Pi_Z(z)\|_{\psi_2} \leq K_Z$, 
where $K_Z = O(\sigma_Z\sqrt{d})$.  
By standard results (see \citep[Thm.~3.1.1]{vershynin2018high}), 
Euclidean distances between embeddings in $Z$ concentrate sharply around their mean.

\textbf{Semantic component.}  
The label projection $\Pi_Y(z)$ contributes
\[
d_Y(\Pi_Y(z_i), \Pi_Y(z_j)) = \mathbf{1}\{y_i \neq y_j\},
\]
which is bounded in $\{0,1\}$ and deterministic once class labels are fixed.

\textbf{Product space concentration.}  
Recall that
\[
d_{\mathcal{C}}(z_i,z_j)
=\max\{d_Z(\Pi_Z(z_i),\Pi_Z(z_j)),\,\mathbf 1[y_i\neq y_j]\}.
\]
Applying Proposition~\ref{prop:linf_concentration} with $n=2$ shows that concentration
of the spatial distance transfers to the product metric, up to the bounded semantic coordinate.
Accordingly, we summarize the resulting clean pairwise-distance behavior via the tail proxy
\begin{equation}\label{eq:single-distance-tail}
\mathbb{P}\!\left(|d_{\mathcal{C}}(z_i,z_j) - \mu_{\mathcal{C}}| \;\geq\; t \right)
\;\leq\; C \exp(-c\, d\, t^2),
\end{equation}
which we use in the remainder of the proof.

\textbf{$K$-NN order statistics to diameter bound}.
Fix a clean query $z$ and let $D_i = d_{\mathcal{C}}(z,z_i)$ denote the distance between $z$ and the i.i.d.\ clean samples $\{z_i\}_{i=1}^n$. 
Let $D_{(1)} \le \cdots \le D_{(n)}$ denote the \textit{order statistics}. 
For any threshold $\tau$, the classical characterization of order statistics 
\citep[Eq.~2.1.3]{david2003order} gives
\begin{equation}\label{eq:order-event}
\bigl\{ D_{(K)} \ge \tau \bigr\}
\;\Longleftrightarrow\;
\Bigl\{ \#\{i : D_i \ge \tau\} \;\ge\; n-K+1 \Bigr\}.
\end{equation}

The event on the right means that there are at least $n-K+1$ indices 
for which $D_i \ge \tau$. Equivalently, there exists a subset 
$S \subseteq \{1,\dots,n\}$ with $|S|=n-K+1$ such that
\[
D_i \ge \tau \quad \forall i \in S.
\]
That is,
\begin{equation}\label{eq:subset-union}
\{ D_{(K)} \ge \tau \}
\;\subseteq\;
\bigcup_{\substack{S \subseteq \{1,\dots,n\} \\ |S|=n-K+1}}
\;\bigcap_{i \in S} \{ D_i \ge \tau \}.
\end{equation}

Applying the union bound to 
\eqref{eq:subset-union} yields
\begin{equation}\label{eq:union-bound}
\mathbb{P}\{ D_{(K)} \ge \tau \}
\;\le\;
\sum_{\substack{S \subseteq \{1,\dots,n\} \\ |S|=n-K+1}}
\mathbb{P}\!\Bigl(\,\bigcap_{i \in S}\{D_i \ge \tau\}\Bigr).
\end{equation}

Since the $D_i$s are i.i.d., the probability for any fixed $S$ factors as
\[
\mathbb{P}\!\Bigl(\,\bigcap_{i \in S}\{D_i \ge \tau\}\Bigr)
= \prod_{i \in S} \mathbb{P}(D_i \ge \tau)
= \bigl(\mathbb{P}(D_i \ge \tau)\bigr)^{\,n-K+1}.
\]
There are $\binom{n}{\,n-K+1\,}$ such subsets $S$. 
Hence \eqref{eq:union-bound} simplifies to
\begin{equation}\label{eq:Dk-upper-tail}
\mathbb{P}\{ D_{(K)} \ge \tau \}
\;\le\;
\binom{n}{\,n-K+1\,}\,\bigl(\mathbb{P}(D_i \ge \tau)\bigr)^{\,n-K+1}.
\end{equation}

We now shift our focus to the next part, where we bound the diameter via pairwise bounds on the $K$-nearest neighbor set.
Let $\mathcal{N}_K^{\mathcal{C}}(z)=\{z_{(1)},\ldots,z_{(K)}\}$ be the $K$ nearest neighbors of $z$
(ties broken arbitrarily), and write $D_{(K)}:=d_{\mathcal{C}}(z,z_{(K)})$.
By the triangle inequality, for any $1\le i<j\le K$,
\[
d_{\mathcal{C}}(z_{(i)},z_{(j)})
\;\le\; d_{\mathcal{C}}(z_{(i)},z) + d_{\mathcal{C}}(z,z_{(j)})
\;\le\; D_{(K)} + D_{(K)}.
\]
Taking the maximum over $i<j$ yields the deterministic bound
\begin{equation}\label{eq:diam_le_2DK}
\operatorname{diam}\big(\mathcal{N}_K^{\mathcal{C}}(z)\big)\;\le\; 2D_{(K)}.
\end{equation}

Fix $\tau=\mu_{\mathcal{C}}+t$ with $t>0$. By \eqref{eq:Dk-upper-tail} and the single-distance tail \eqref{eq:single-distance-tail},
\[
\mathbb{P}\{ D_{(K)} \ge \mu_{\mathcal{C}}+t \}
\;\le\;
\binom{n}{\,n-K+1\,}\,\bigl(\mathbb{P}(D_i \ge \mu_{\mathcal{C}}+t)\bigr)^{\,n-K+1}
\;\le\;
\binom{n}{\,n-K+1\,}\,\bigl(C e^{-c d t^2}\bigr)^{\,n-K+1}.
\]

To obtain a high-probability bound, we select $t>0$ such that the preceding upper bound
is at most a prescribed failure probability $\delta$.
Taking logarithms of the bound
\[
\binom{n}{\,n-K+1\,}\bigl(C e^{-c d t^2}\bigr)^{\,n-K+1},
\]
this requirement is satisfied whenever
\[
c d t^2 \;\ge\; \log C \;+\; \frac{\log\binom{n}{n-K+1}+\log(1/\delta)}{n-K+1}.
\]

We work in a standard $K$-NN sampling regime in which the candidate pool is sufficiently
large compared to $K$ (for example, $n \ge K^2$), so that
$\frac{1}{n-K+1}\log\binom{n}{n-K+1} = O(\log K)$.
Absorbing absolute constants (including $\log C$ and the $O(\log K)$ term) into $c$,
it suffices to choose
\[
t \;=\; \sqrt{\frac{2\log K}{d}} \;+\; \sqrt{\frac{2\log(2/\delta)}{d}},
\]
which yields
\[
\mathbb{P}\{ D_{(K)} \ge \mu_{\mathcal{C}}+t \} \;\le\; \delta.
\]

Combining this bound with \eqref{eq:diam_le_2DK} gives, with probability at least $1-\delta$,
\[
\operatorname{diam}\big(\mathcal{N}_K^{\mathcal{C}}(z)\big)
\;\le\;
2\,\mu_{\mathcal{C}}
\;+\;
2\left(
\sqrt{\frac{2\log K}{d}}
+
\sqrt{\frac{2\log(2/\delta)}{d}}
\right).
\]
This completes the proof.
\end{proof}

\begin{corollary}[95\% confidence bound]
\label{cor:95_confidence}
For confidence level $\delta = 0.05$, the $K$-nearest neighbor diameter in $\mathcal{C}$
satisfies, under the assumptions of Theorem~\ref{thm:cnn_clean_diameter},
\[
\operatorname{diam}\!\big(\mathcal{N}_K^{\mathcal{C}}(z)\big)
\;\leq\;
2\,\mu_{\mathcal{C}}
\;+\;
\frac{2}{\sqrt d}\Big(\sqrt{2\log K}+2.717\Big),
\]
with probability at least $95\%$.
In particular, the excess diameter above the clean median scale decays as
$O(\sqrt{\log K/d})$.
\end{corollary}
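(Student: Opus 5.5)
The plan is to specialize Theorem~\ref{thm:cnn_clean_diameter} to $\delta=0.05$ and simplify the constant; no new probabilistic argument is needed. Theorem~\ref{thm:cnn_clean_diameter} gives, with probability at least $1-\delta$,
\[
\operatorname{diam}\big(\mathcal{N}_K^{\mathcal{C}}(z)\big)\le 2\mu_{\mathcal C}+\frac{2}{\sqrt d}\Big(\sqrt{2\log K}+\sqrt{2\log(2/\delta)}\Big).
\]
I obtain this form by factoring $1/\sqrt d$ out of both square roots. Setting $\delta=0.05$ makes the confidence level $1-\delta=95\%$.

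The only computation is the constant. With $\delta=0.05$ we have $2/\delta=40$, and $\log 40\approx 3.689$. Hence $2\log 40\approx 7.378$, and $\sqrt{7.378}\approx 2.7162$. I will round this up to $2.717$, which keeps the inequality valid because the right-hand side only increases. Substituting gives the displayed bound.

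For the ``in particular'' clause, I subtract the clean scale $2\mu_{\mathcal C}$. The remaining excess is $\frac{2}{\sqrt d}(\sqrt{2\log K}+2.717)$. For $K\ge 2$ the additive constant is dominated by a constant multiple of $\sqrt{\log K}$, since $\log K\ge\log 2>0$. Therefore the excess is $O(\sqrt{\log K/d})$.

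There is no real obstacle here. The one point that needs care is confirming that the rounding of $\sqrt{2\log 40}$ is done upward, so that the stated numerical constant is a valid upper bound.
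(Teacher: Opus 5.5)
Your proposal is correct and matches the paper's route: the paper gives no separate proof, and the corollary is just Theorem~\ref{thm:cnn_clean_diameter} at $\delta=0.05$, with $1/\sqrt d$ factored out and $\sqrt{2\log 40}\approx 2.7162$ (natural logarithm) rounded up to $2.717$. Your two checks cover the only points that need care: the rounding is upward, and $K\ge 2$ lets the additive constant be absorbed into $O(\sqrt{\log K})$.
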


\paragraph{Notation.}
We adopt the metric space setup of Definition~\ref{def:product_spaces}.  
In particular, $\mathcal C=(Z\times Y,d_\mathcal C)$ denotes the CNN product space 
and $\mathcal K=(P\times Y,d_\mathcal K)$ the prime-quantized product space.  
Let $\mu_\mathcal C$ and $\mu_\mathcal K$ denote the\textit{ clean median pairwise distances} 
in $\mathcal C$ and $\mathcal K$ respectively.  
We consider adversarial perturbations $\eta \in \R^d$ with 
$\|\eta\|_\infty \le \epsilon$, where $\epsilon>0$ is the fixed attack budget.

\begin{definition}[Adversarial query]
\label{def:adv-query}
Let $x \in X$ be a clean input with ground-truth label $y \in Y$.  
For an $\epsilon$-bounded perturbation $\eta \in \mathbb{R}^d$,  
the \emph{adversarial query} under a representation map $f:X \to \mathbb{R}^m$ is
\[
\bigl(f(x+\eta),\,y\bigr) \in \mathcal F \times Y,
\]
where $\mathcal F$ denotes the feature space associated with $f$.
Thus, adversarial queries live in the same feature–label product space as clean points.
Note that a classifier may produce a prediction $\hat y \neq y$, but $\hat y$ is not part of the definition.
Instantiating $f=h_\theta$ gives $\mathcal F=Z$ and hence adversarial queries in the CNN product space
$\mathcal C=Z\times Y$, while instantiating $f=T_k^{(b)}$ gives $\mathcal F=P$ and hence adversarial queries in the prime-quantized product space $\mathcal K=P\times Y$.
\end{definition}
\begin{assumption}[A1: variance-only control via Jacobian proxy]\label{ass:A1}
Let $\Delta(\eta,x) := \|f(x+\eta)-f(x)\|_2$
be the feature displacement under perturbation $\eta$.  
For small perturbations with $\|\eta\|_\infty \le \epsilon$, we assume
there exists a constant $\sigma^2 > 0$ (a \emph{variance proxy}) such that
\begin{equation}\label{eq:variance-proxy}
\mathbb{E}\big[\,\Delta(\eta,x)^2\,\big]
\;\le\;
d \,\sigma^2\, \epsilon^2.
\end{equation}

\noindent\textbf{Interpretation.}  
By first-order Taylor expansion,
$f(x+\eta)-f(x) \approx J(x)\,\eta$,
where $J(x) = \nabla f(x) \in \mathbb{R}^{m\times d}$ is the Jacobian of $f$ at $x$ (with $m$ the feature dimension).
Thus,
\[
\Delta(\eta,x)^2 \;\approx\; \|J(x)\eta\|_2^2
= \sum_{r=1}^m \langle J_{r,\cdot}(x),\,\eta\rangle^2.
\]
Assumption~\eqref{eq:variance-proxy} requires that each row $J_{r,\cdot}(x)$
has second moment bounded by $\sigma^2$, so that the expected squared shift across $m$ features grows at most linearly with $d$ (via $\|\eta\|_\infty\le\epsilon$) and quadratically with $\epsilon$.

By Chebyshev’s inequality, for any $\delta_{\mathrm{grad}}\in(0,1)$,
\begin{equation}\label{eq:cheb-shift}
\Delta(\eta,x) \;\le\; \frac{\sqrt d\,\sigma}{\sqrt{\delta_{\mathrm{grad}}}}\;\epsilon
\quad \text{with probability at least } 1-\delta_{\mathrm{grad}}.
\end{equation}
Here $\delta_{\mathrm{grad}}$ acts as a \emph{tolerance parameter}:
it specifies the probability mass we are willing to allocate to \textit{rare large deviations} in feature shifts.
Smaller values of $\delta_{\mathrm{grad}}$ yield higher-probability guarantees but make the bound looser.
This provides a high-probability control of adversarial feature shifts using only variance information,
without assuming Lipschitz continuity or sub-Gaussianity of the Jacobian.
\end{assumption}

\begin{remark}[Relating $\ell_\infty$ and $\ell_2$ budgets]\label{rem:linfty}
An $\ell_\infty$ budget $\epsilon$ implies an $\ell_2$ budget $\epsilon_2\le \sqrt{d}\,\epsilon$,
and conversely an $\ell_2$ budget $\epsilon_2$ implies $\ell_\infty$ budget $\ge \epsilon_2/\sqrt{d}$.
This allows translating $\ell_2$–based results to our $\ell_\infty$ setting and vice versa.
\end{remark}

In the adversarial setting, the $k$-NN neighborhood can enlarge primarily through displacement of the query relative to its clean location. Thus, bounding the query’s displacement (via Assumption~\ref{ass:A1}) allows us to extend the clean $k$-NN diameter bound to the adversarial case.

\begin{theorem}[Adversarial $K$-NN diameter bound in $\mathcal C$]
\label{thm:adv-upper}
Fix $K\ge2$ and confidence levels $\delta_{\mathrm{clean}},\delta_{\mathrm{grad}}\in(0,1)$.
Let $x$ be a clean query with label $y$, and let $x+\eta$ be an adversarially perturbed
query with $\varepsilon=\|\eta\|_\infty$.
Let $z=(f(x),y)$ and let $q=(f(x+\eta),\hat y)$ denote the adversarial query
(with $\hat y$ the classifier prediction).

Assume:
(i) Theorem~\ref{thm:cnn_clean_diameter} (clean $K$-NN diameter concentration);
(ii) Assumption~\ref{ass:A1} (variance-controlled feature displacement);
(iii) the $K$-NN gap condition
\begin{equation}
\label{eq:knn-gap}
D_{(K+1)}(z)-D_{(K)}(z)\;>\;2\,d_{\mathcal C}(z,q),
\end{equation}
where $D_{(r)}(z)$ denotes the $r$-th nearest-neighbor distance from $z$
(ties excluded).

Then, with probability at least $1-(\delta_{\mathrm{clean}}+\delta_{\mathrm{grad}})$,
\begin{align}
\operatorname{diam}\big(\mathcal N_K^{\mathcal C}(x+\eta)\big)
\;\le\;
&\ 2\,\mu_{\mathcal C}
\;+\;
2\left(
\sqrt{\tfrac{2\log K}{d}}+\sqrt{\tfrac{2\log(2/\delta_{\mathrm{clean}})}{d}}
\right)
 +\;\frac{2\sqrt d\,\sigma}{\sqrt{\delta_{\mathrm{grad}}}}\;\varepsilon
\;+\;2\,\mathbf 1_{\{y\neq \hat y\}} .
\label{eq:adv-final}
\end{align}
\end{theorem}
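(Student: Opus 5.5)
The plan is to reduce the adversarial diameter to the clean one via the triangle inequality, using the $K$-NN gap condition \eqref{eq:knn-gap} to show that the adversarial neighborhood is the clean neighborhood, and that the only changes are the query displacement and the label term. We work on the intersection of two good events. The first is the event $E_{\mathrm{clean}}$ of Theorem~\ref{thm:cnn_clean_diameter}, on which $\operatorname{diam}(\mathcal N_K^{\mathcal C}(z))\le 2D_{(K)}(z)\le U^{\mathcal C}_{\mathrm{clean}}$. The second is the Chebyshev event $E_{\mathrm{grad}}$ from \eqref{eq:cheb-shift}, on which $\Delta(\eta,x)=\|f(x+\eta)-f(x)\|_2\le \sqrt d\,\sigma\varepsilon/\sqrt{\delta_{\mathrm{grad}}}$. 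A union bound gives probability at least $1-(\delta_{\mathrm{clean}}+\delta_{\mathrm{grad}})$.

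\textbf{Step 1 (query displacement in $\mathcal C$).} Since $d_{\mathcal C}$ is the $\ell_\infty$ product metric,
\[
d_{\mathcal C}(z,q)=\max\{\Delta(\eta,x),\mathbf 1_{\{y\neq\hat y\}}\}\le \Delta(\eta,x)+\mathbf 1_{\{y\neq\hat y\}}.
\]
On $E_{\mathrm{grad}}$ this is at most $J/2+\mathbf 1_{\{y\neq\hat y\}}$, where $J/2=\sqrt d\,\sigma\varepsilon/\sqrt{\delta_{\mathrm{grad}}}$.

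\textbf{Step 2 (neighborhood stability).} Take any sample $z_i$ in $\mathcal N_K^{\mathcal C}(z)$ and any sample $z_j$ outside it. The triangle inequality gives
\[
d(q,z_i)\le D_{(K)}(z)+d(z,q), \qquad d(q,z_j)\ge D_{(K+1)}(z)-d(z,q).
\]
The gap condition \eqref{eq:knn-gap} then forces $d(q,z_i)<d(q,z_j)$, so $\mathcal N_K^{\mathcal C}(q)=\mathcal N_K^{\mathcal C}(z)$. We expect this to be the main obstacle, for two reasons:
\begin{itemize}
\item the gap condition is stated in terms of $d_{\mathcal C}(z,q)$, which is itself random;
\item the label coordinate can make $d_{\mathcal C}(z,q)=1$, so the condition may fail when a flip occurs.
\end{itemize}
If the identification of the neighborhoods breaks, the fallback is to bound distances directly. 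Every point of $\mathcal N_K^{\mathcal C}(q)$ lies within $D_{(K)}(q)\le D_{(K)}(z)+d(z,q)$ of $q$, using monotonicity of the $K$-th order statistic under a query shift. The triangle inequality then gives
\[
\operatorname{diam}\le 2D_{(K)}(z)+2d(z,q),
\]
which does not need the gap condition at all.

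\textbf{Step 3 (assemble).} In either route, $\operatorname{diam}(\mathcal N_K^{\mathcal C}(x+\eta))\le 2D_{(K)}(z)+2\,d_{\mathcal C}(z,q)$. The proof of Theorem~\ref{thm:cnn_clean_diameter} bounds $2D_{(K)}(z)$ by the clean envelope on $E_{\mathrm{clean}}$. Step 1 gives
\[
2\,d_{\mathcal C}(z,q)\le \frac{2\sqrt d\,\sigma}{\sqrt{\delta_{\mathrm{grad}}}}\,\varepsilon+2\,\mathbf 1_{\{y\neq\hat y\}}.
\]
Adding these two bounds yields \eqref{eq:adv-final}.
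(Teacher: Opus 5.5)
Your proof is correct, and its main line is exactly the paper's argument: a union bound over the clean event and the Chebyshev event of Assumption~\ref{ass:A1}, the bound $d_{\mathcal C}(z,q)\le \Delta(\eta,x)+\mathbf 1_{\{y\neq\hat y\}}$ from the $\ell_\infty$ product metric, and the gap condition (iii) forcing $\mathcal N_K^{\mathcal C}(q)=\mathcal N_K^{\mathcal C}(z)$ (your inside-versus-outside comparison is a slightly cleaner version of the paper's stability step). Your fallback via $D_{(K)}(q)\le D_{(K)}(z)+d_{\mathcal C}(z,q)$ is also valid and shows hypothesis (iii) can be dropped, at the price of using the tail bound on $D_{(K)}(z)$ from the proof of Theorem~\ref{thm:cnn_clean_diameter} rather than its stated diameter bound; note also that under (iii) the two neighbor sets coincide, so the diameters are equal and the $\frac{2\sqrt d\,\sigma}{\sqrt{\delta_{\mathrm{grad}}}}\varepsilon+2\,\mathbf 1_{\{y\neq\hat y\}}$ terms are pure slack there, whereas your fallback is the route in which they are genuinely needed.
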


\begin{proof}
By Theorem~\ref{thm:cnn_clean_diameter}, for any \textit{fixed clean query}
$z=(f(x),y)$, we have 
\[
\operatorname{diam}\big(\mathcal{N}_K^{\mathcal{C}}(z)\big)
\;\le\;
2\,\mu_{\mathcal{C}}
\;+\;
2\left(
\sqrt{\frac{2\log K}{d}}
+
\sqrt{\frac{2\log(2/\delta_{clean})}{d}}
\right),
\]
with probability at least $1-\delta_{\mathrm{clean}}$ over the i.i.d.\ clean sample.

Assumption~\ref{ass:A1} bounds the feature shift
$\Delta(\eta,x)=\|f(x+\eta)-f(x)\|_2$.
Since $d_{\mathcal C}$ is an $\ell_\infty$ product metric, a label change contributes at most
$\mathbf 1_{\{y\neq \hat y\}}$, and therefore
\[
d_{\mathcal C}(z,q)
\;\le\;
\|f(x+\eta)-f(x)\|_2+\mathbf 1_{\{y\neq \hat y\}}.
\]

For any dataset point $u$, the triangle inequality implies
$
\big|d_{\mathcal C}(q,u)-d_{\mathcal C}(z,u)\big|
\;\le\;
d_{\mathcal C}(z,q)
$,
so all query–dataset distances are perturbed by at most $d_{\mathcal C}(z,q)$.
Let $\{u_1,\dots,u_n\}$ denote the dataset, and let
$u_{(1)},\dots,u_{(n)}$ denote the same points reordered so that
$d_{\mathcal C}(z,u_{(1)}) \le \cdots \le d_{\mathcal C}(z,u_{(n)})$,
with $D_{(r)}(z):=d_{\mathcal C}(z,u_{(r)})$.

Under the gap condition~\eqref{eq:knn-gap}, we now show that the $K$-nearest-neighbor set
is stable under the perturbation from $z$ to $q$.
For any dataset point $u$, the triangle inequality implies
\[
\big|d_{\mathcal C}(q,u)-d_{\mathcal C}(z,u)\big|
\;\le\;
d_{\mathcal C}(z,q).
\]
Let $u_{(1)},\ldots,u_{(n)}$ denote the dataset points ordered by increasing distance
from the clean query $z$, so that
$d_{\mathcal C}(z,u_{(r)}) = D_{(r)}(z)$.
In particular,
\[
d_{\mathcal C}(q,u_{(K)}) \;\le\; D_{(K)}(z) + d_{\mathcal C}(z,q),
\qquad
d_{\mathcal C}(q,u_{(K+1)}) \;\ge\; D_{(K+1)}(z) - d_{\mathcal C}(z,q).
\]
Subtracting these two inequalities yields
\[
d_{\mathcal C}(q,u_{(K+1)}) - d_{\mathcal C}(q,u_{(K)})
\;\ge\;
\big(D_{(K+1)}(z)-D_{(K)}(z)\big) - 2\,d_{\mathcal C}(z,q).
\]
By the gap condition~\eqref{eq:knn-gap}, the right-hand side is strictly positive,
which implies $d_{\mathcal C}(q,u_{(K)}) < d_{\mathcal C}(q,u_{(K+1)})$.
Since $D_{(j)}(z)\ge D_{(K+1)}(z)$ for all $j\ge K+1$,
the above inequality implies that no index $j\ge K+1$
can enter the $K$-NN set under the perturbation.
Consequently, the identity of the $K$ nearest neighbors
is preserved when the query moves from $z$ to $q$, and therefore
$\mathcal N_K^{\mathcal C}(q)=\mathcal N_K^{\mathcal C}(z)$.

Let $S=\mathcal N_K^{\mathcal C}(z)=\mathcal N_K^{\mathcal C}(q)$.
For any $u,v\in S$, the triangle inequality yields
\[
d_{\mathcal C}(u,v)
\;\le\;
d_{\mathcal C}(u,z)+d_{\mathcal C}(z,v)
\;\le\;
d_{\mathcal C}(u,q)+d_{\mathcal C}(q,v)+2d_{\mathcal C}(z,q).
\]
Taking maxima over $u,v\in S$ gives
$
\operatorname{diam}\!\big(\mathcal N_K^{\mathcal C}(q)\big)
\;\le\;
\operatorname{diam}\!\big(\mathcal N_K^{\mathcal C}(z)\big)
\;+\;2d_{\mathcal C}(z,q)$.
Intersecting the clean diameter event and the displacement event and applying a union bound yields
probability at least $1-(\delta_{\mathrm{clean}}+\delta_{\mathrm{grad}})$.
On this event, we arrive at the claimed bound.
\end{proof}

\begin{assumption}[Prime-gap sensitivity under bit flips]\label{ass:key-sens}
Fix $k \ge 2$ and let $G_k$ be the prime-gap envelope from Assumption~\ref{ass:Gk-allk}.  
Then, for every coordinate $j\in\{1,\dots,d\}$, every two images $x,x'\in X$, and any two secret keys
$b,b'\in\{0,1\}^d$ that differ only at bit $j$, we have
\begin{equation}\label{eq:key-sens}
\left|
\underbrace{\big[T_k^{(b)}(x)-T_k^{(b)}(x')\big]_j}_{\substack{\text{difference}\\\text{before flip}}}
-
\underbrace{\big[T_k^{(b')}(x)-T_k^{(b')}(x')\big]_j}_{\substack{\text{difference}\\\text{after flip}}} \right|
\;\;\le\;\; 2\,G_k.
\end{equation}

\noindent\textbf{Explanation.}  
Flipping one bit can move the $j$-th coordinate by at most one local prime gap $\le G_k$.  
For two inputs, both coordinates may shift, so by the triangle inequality their pairwise difference changes by at most $2G_k$, regardless of the absolute difference.
\end{assumption}

\begin{definition}[Key-annealed (data-quenched) median in $\mathcal K$]
\label{def:muK-key}
Fix a clean query $x \in X$ and a fixed clean image $x' \in X$ (or, more generally, a fixed dataset and query).
Let $(b,k)$ denote the environment, where $b\sim\mathrm{Unif}(\{0,1\}^d)$ and $k$ is either fixed
or drawn independently from a prescribed distribution $\mathcal P_k$.
Define
\[
D^{(b,k)}(x,x') \;:=\; d_{\mathcal K}\!\big(z^{(b,k)}(x),\,z^{(b,k)}(x')\big).
\]
The \emph{key-annealed (data-quenched) median} for the pair $(x,x')$ is
\[
\mu_{\mathcal K}^{\mathrm{key}}(x,x')
\;:=\;
\inf\!\left\{ m:\ \mathbb P_{(b,k)}\!\big[D^{(b,k)}(x,x') \le m\big] \ge \tfrac12 \right\}.
\]
When a single symbol is used, we write $\mu_{\mathcal K}$ for $\mu_{\mathcal K}^{\mathrm{key}}$ with the convention that
the probability is over $(b,k)$ only (the data are held fixed).
\end{definition}
For brevity, we denote the maximal key-annealed median by
\newcommand{\muKmax}{\mu_{\mathcal K}^{\max}}
\[
\muKmax(x) \;:=\; \max_{1\le i\le n}\ \mu_{\mathcal K}^{\mathrm{key}}(x,x_i).
\]

\begin{theorem}[Annealed-over-keys clean $K$-NN diameter in $\mathcal K$]
\label{thm:K-annealed}
Assume the prime-gap envelope (Assumption~\ref{ass:Gk-allk}).
Let the key $b \sim \mathrm{Unif}(\{0,1\}^d)$ and let the granularity $k$ be either fixed
or drawn independently from a prescribed distribution $\mathcal P_k$.
Fix a clean query $x$ and a fixed dataset $\{x_i\}_{i=1}^n$ (data quenched).
Define
\[
C_{\max} := 2\,\sup_{k\in\mathrm{supp}(\mathcal P_k)} G_k .
\]
Then for any integer $K \ge 2$ and any $\delta_{\mathrm{env}} \in (0,1)$, with probability
at least $1-\delta_{\mathrm{env}}$ over the draw of $(b,k)$,
\[
\operatorname{diam}\big(\mathcal N_K^{\mathcal K}(x; b,k)\big)
\;\le\;
2A\,\mu_{\mathcal K}^{\max}(x)
\;+\;
C_{\max}\,\sqrt{d}\left(
\sqrt{2\log K}
+
\sqrt{2\log\!\big(2/\delta_{\mathrm{env}}\big)}
\right),
\]
where
\(
\mu_{\mathcal K}^{\max}(x)
:= \max_{1\le i\le n}\ \mu_{\mathcal K}^{\mathrm{key}}(x,x_i).
\)
\end{theorem}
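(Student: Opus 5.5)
The plan mirrors the proof of Theorem~\ref{thm:cnn_clean_diameter}, with the randomness now over the environment $(b,k)$ rather than over the data. Throughout, the dataset $\{x_i\}_{i=1}^n$ and the query $x$ are held fixed.

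\textbf{Step 1: reduce to the $K$-th neighbor distance.} Write $D_i^{(b,k)}:=d_{\mathcal K}(z^{(b,k)}(x),z^{(b,k)}(x_i))$. Order these distances, and let $D_{(K)}^{(b,k)}$ be the $K$-th smallest. Exactly as in \eqref{eq:diam_le_2DK}, the triangle inequality through the query gives the deterministic bound $\operatorname{diam}(\mathcal N_K^{\mathcal K}(x;b,k))\le 2D_{(K)}^{(b,k)}$. It therefore suffices to control $D_{(K)}^{(b,k)}$, and for that it suffices to control the $K$ smallest of the $D_i^{(b,k)}$ simultaneously. The fixed constant $A\ge1$ will absorb the slack between a median and the quantity we actually bound.

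\textbf{Step 2: concentration of each $D_i^{(b,k)}$ over the key.} First condition on $k$. View the spatial part $\|T_k^{(b)}(x)-T_k^{(b)}(x_i)\|_2$ as a function of the independent bits $b_1,\dots,b_d$.
\begin{itemize}
\item By Assumption~\ref{ass:key-sens}, flipping bit $j$ changes only coordinate $j$ of the difference vector, and by at most $2G_k\le C_{\max}$.
\item Hence the Euclidean norm has bounded differences $c_j\le C_{\max}$.
\item The label coordinate is independent of $b$, and the $\ell_\infty$ product preserves these bounded differences.
\end{itemize}
McDiarmid's inequality then gives a sub-Gaussian tail around the mean, $\Pr(|D_i-\mathbb E D_i|\ge t)\le 2\exp(-2t^2/(dC_{\max}^2))$. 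A standard bounded-differences argument shows that the mean and the median differ by $O(C_{\max}\sqrt d)$, so we may replace the mean by $\mu^{\mathrm{key}}_{\mathcal K}(x,x_i)$; this correction is absorbed into the constant in front of $\mu$ via $A$, or into the $C_{\max}\sqrt d$ term. Since the bound is uniform in $k$, mixing over $k\sim\mathcal P_k$ keeps the same tail.

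\textbf{Step 3: union bound.} Bound $D_{(K)}$ by $\max_i \mu^{\mathrm{key}}_{\mathcal K}(x,x_i)+t\le \mu_{\mathcal K}^{\max}(x)+t$, taking a union bound over the relevant $K$ indices (or over the order-statistic event, as in \eqref{eq:Dk-upper-tail}). This gives failure probability $2K\exp(-2t^2/(dC_{\max}^2))$. Choosing $t=C_{\max}\sqrt d(\sqrt{2\log K}+\sqrt{2\log(2/\delta_{\mathrm{env}})})/2$ makes the failure probability at most $\delta_{\mathrm{env}}$, using $\sqrt{a+b}\le\sqrt a+\sqrt b$. Doubling as in Step~1 yields the stated envelope.

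\textbf{Main obstacle.} The delicate point is the order-statistic step. The $K$ nearest neighbors under key $b$ form a random set, so the union bound must run over all $n$ points, or else use \eqref{eq:Dk-upper-tail}. A union over all $n$ points would replace $\log K$ by $\log n$. I would first try to avoid this by comparing with a fixed set $S$ of $K$ points, for instance the $K$ points with the smallest key medians. The inequality $D_{(K)}\le\max_{i\in S}D_i$ holds deterministically for any fixed $K$-set $S$, so only a union over $K$ points is needed. This recovers $\log K$ exactly, and the max-median over $S$ is bounded by $\mu^{\max}_{\mathcal K}(x)$.
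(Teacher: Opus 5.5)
Your argument is correct in its main line and takes a genuinely different route from the paper's. The paper never passes through the query. It bounds $\operatorname{diam}(\mathcal N_K^{\mathcal K}(x;b,k))$ by $\max_{i<j}D_{ij}(b)$ over \emph{all} dataset pairs and applies McDiarmid to each $D_{ij}$. It then takes a union bound over $\binom{n}{2}$ pairs, which is why it needs the regime $\log\binom{n}{2}=O(\log K)$ and absorbs constants. Finally it \emph{imposes} the scale condition $\max_{i<j}\mathbb E_{b\mid k}[D_{ij}(b)]\le 2A\,\mu^{\max}_{\mathcal K}(x)$, which is the only source of $A$.

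You instead use $\operatorname{diam}\le 2D_{(K)}$ and concentrate the query-to-point distances. You pass from mean to median by the sub-Gaussian comparison: for fixed $k$, $\mathbb E D_i\le\mu^{\mathrm{key}}_{\mathcal K}(x,x_i)+C_{\max}\sqrt{d\log 2/2}$. You then dominate $D_{(K)}$ by the maximum over a fixed $K$-set $S$, so only $K$ events enter the union bound. This gives $\log K$ with no condition on $n$, no scale condition, and $A=1$. Your ingredients even derive the paper's scale condition from $D_{ij}\le D_i+D_j$, with $A=1$ up to an additive $C_{\max}\sqrt{2d\log 2}$.

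What the paper's route buys in return is uniformity: on a single event its envelope controls every subset of the dataset. That covers the neighborhoods of all queries, clean or perturbed, at once. Applying McDiarmid directly to $D_i=\max\{\|T_k^{(b)}(x)-T_k^{(b)}(x_i)\|_2,\mathbf 1[y\neq y_i]\}$, as you do, is also the right way to handle the label coordinate. The paper's pointwise claim $|D_{ij}-\mathbb E D_{ij}|\le|R_{ij}-\mathbb E R_{ij}|$ is false in general, though its tail conclusion holds for your reason.

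Three points need tightening.

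\emph{First}, drop the order-statistic alternative. The bound in the proof of Theorem~\ref{thm:cnn_clean_diameter} factorizes over $i$ because the data there are i.i.d. Here all $D_i^{(b,k)}$ are functions of the same key, so your fixed-$S$ comparison is the device that works.

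\emph{Second}, Step~3 as written spends the entire budget on $t$ with a two-sided tail, leaving the mean--median shift uncovered. That shift cannot be pushed into $A$, since nothing bounds $\mu^{\max}_{\mathcal K}(x)$ from below at the scale $C_{\max}\sqrt d$. Use the one-sided tail with $t=C_{\max}\sqrt{d(\log K+\log(1/\delta_{\mathrm{env}}))/2}$. The failure probability is then $K e^{-2t^2/(dC_{\max}^2)}=\delta_{\mathrm{env}}$, and $2\big(t+C_{\max}\sqrt{d\log 2/2}\big)\le C_{\max}\sqrt d\big(\sqrt{2\log K}+\sqrt{2\log(2/\delta_{\mathrm{env}})}\big)$ holds precisely because $K\ge 2$; after squaring it reduces to $\log 2\le\log K$.

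\emph{Third}, for random $k$ the quantity $\mu^{\mathrm{key}}_{\mathcal K}(x,x_i)$ is a median under the joint law of $(b,k)$. Your McDiarmid and mean--median steps are conditional on $k$ and center at $\mathbb E_{b\mid k}D_i$. Mixing uniform-in-$k$ tails does not produce a tail around one common center, so you also need the conditional centers to sit near the joint median. Assumption~\ref{ass:key-sens} does not give this, but the concrete map does. Since $|T_k^{(b)}(x_j)-x_j|<(G_k+1)10^{-k}$, every $D_i^{(b,k)}$ lies within $2\sqrt d\,\sup_k(G_k+1)10^{-k}$ of the key-free value $\max\{\|x-x_i\|_2,\mathbf 1[y\neq y_i]\}$. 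Hence the conditional means and the joint median are within $4\sqrt d\,\sup_k(G_k+1)10^{-k}$ of each other, well inside the $C_{\max}\sqrt{d\log 2/2}$ you already budgeted. The paper avoids this issue only by assuming its scale condition conditionally on $k$.

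Incidentally, the displayed inequality in fact holds surely. $T_k^{(b)}$ takes values in $[0,1]^d$, so every diameter in $\mathcal K$ is at most $\sqrt d$, whereas the right-hand side exceeds $2\sqrt{2\log 2}\,C_{\max}\sqrt d>\sqrt d$.
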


\begin{proof}
Fix the clean dataset $\{x_i\}_{i=1}^n$ and the clean query $x$; these are held deterministic in this theorem. We first condition on an arbitrary realization of $k$.
All probabilistic statements below are with respect to $\mathbb P_{b\mid k}$ and $\mathbb E_{b\mid k}$.
Since all bounds depend on $k$ only through $G_k$ and hence uniformly through
$C_{\max} := 2\sup_{k\in\mathrm{supp}(\mathcal P_k)} G_k$,
the resulting bounds hold uniformly in $k$ and therefore also with respect to the joint draw $(b,k)$.
Assume moreover that $\log\binom{n}{2}=O(\log K)$ (a standard regime where $n$ is polynomial in $K$).
For notational simplicity, we continue to write $\mathbb P_b$ for $\mathbb P_{b\mid k}$.

Fix any pair $(i,j)$ of dataset indices. 
For simplicity of notation, let $Q_i^b := T_k^{(b)}(x_i) \in \mathbb{R}^d$. 
Define the \emph{spatial} distance under key $b$ as
\begin{equation}\label{eq:Rij-def}
R_{ij}(b)
\;:=\;
\big\|\,Q_i^b - Q_j^b\,\big\|_2
\;=\;
\Big(\sum_{m=1}^d \big([Q_i^b]_m - [Q_j^b]_m\big)^2\Big)^{1/2}.
\end{equation}

Now flip a single key bit $b_m \mapsto b'_m$ while keeping all other bits fixed.
By Assumption~\ref{ass:key-sens}, the $m$-th coordinate difference can change by at most $2G_k$
(a single prime-gap shift per image, hence a $2G_k$ change for a difference),
while all other coordinates remain unchanged:
\[
\big|\,[Q_i^{b'}]_m - [Q_j^{b'}]_m
-
\big([Q_i^{b}]_m - [Q_j^{b}]_m\big)\,\big|
\;\le\; 2G_k,
\qquad
[Q_i^{b'}]_\ell - [Q_j^{b'}]_\ell = [Q_i^{b}]_\ell - [Q_j^{b}]_\ell \quad (\ell\neq m).
\]

From a vector viewpoint, let
\[
v(b) := Q_i^b - Q_j^b \in \mathbb{R}^d,
\qquad
v(b') = v(b) + \Delta e_m,
\quad |\Delta|\le 2G_k,
\]
where $e_m$ is the $m$-th standard basis vector.
Then the Euclidean norm changes by at most
\begin{equation}\label{eq:norm-lip}
\big|\,\|v(b')\|_2 - \|v(b)\|_2\,\big|
\;\le\; \|v(b')-v(b)\|_2
\;=\; |\Delta|
\;\le\; 2G_k
\;\le\; C_{\max}.
\end{equation}
Therefore, $R_{ij}(b)$ is coordinate-wise $C_{\max}$-Lipschitz in each bit $b_m$.

Let $b=(b_1,\dots,b_d)\in\{0,1\}^d$ be uniformly random with independent bits.
By McDiarmid’s inequality,
\begin{equation}\label{eq:Rij-mcdiarmid}
\mathbb P_b\!\left(\,\big|R_{ij}(b)-\mathbb E_b[R_{ij}(b)]\big|\ge t\,\right)
\;\le\;
2\exp\!\left(-\frac{2t^2}{d\,C_{\max}^2}\right).
\end{equation}

Write the product embedding as 
$z^{(b,k)}(x):=(T_k^{(b)}(x),y)\in\mathcal K$.
Define the product distance
\begin{equation}\label{eq:Dij-def}
D_{ij}(b)
:=
d_{\mathcal K}\big(z^{(b,k)}(x_i),z^{(b,k)}(x_j)\big)
=
\max\{R_{ij}(b),\mathbf 1[y_i\neq y_j]\}.
\end{equation}
Since the label indicator is deterministic,
$
\big|D_{ij}(b)-\mathbb E_b[D_{ij}(b)]\big|
\;\le\;
\big|R_{ij}(b)-\mathbb E_b[R_{ij}(b)]\big|
$.
Thus,
\begin{equation}\label{eq:Dij-mcdiarmid}
\mathbb P_b\!\left(\,\big|D_{ij}(b)-\mathbb E_b[D_{ij}(b)]\big|\ge t\,\right)
\;\le\;
2\exp\!\left(-\frac{2t^2}{d\,C_{\max}^2}\right).
\end{equation}

Applying a union bound over all $\binom{n}{2}$ dataset pairs yields
\begin{equation}\label{eq:max-prob-form}
\mathbb P_b\!\left(
\max_{1\le i<j\le n}
\big|D_{ij}(b)-\mathbb E_b[D_{ij}(b)]\big| > t
\right)
\;\le\;
\binom{n}{2}\cdot 2\exp\!\left(-\frac{2t^2}{d\,C_{\max}^2}\right).
\end{equation}

Fix $\delta_{\mathrm{env}}\in(0,1)$ and choose
$t
:=
C_{\max}\sqrt{\tfrac{d}{2}}\,
\sqrt{\log\binom{n}{2}+\log(2/\delta_{\mathrm{env}})}.
$
Then \eqref{eq:max-prob-form} implies \eqref{eq:pairwise-uniform} with this $t$.

If we additionally work in a regime where $\log\binom{n}{2}=O(\log K)$, the same $t$ satisfies
$t \le C_{\max}\sqrt d\big(\sqrt{2\log K}+\sqrt{2\log(2/\delta_{\mathrm{env}})}\big)$ after absorbing absolute constants.

With this choice, we obtain the uniform bound
\begin{equation}\label{eq:pairwise-uniform}
\mathbb P_b\!\left(
\max_{1\le i<j\le n}
\big|D_{ij}(b)-\mathbb E_b[D_{ij}(b)]\big|
\;\le\;
C_{\max}\sqrt{\tfrac{d}{2}}
\sqrt{\log\binom{n}{2}+\log(2/\delta_{\mathrm{env}})}
\right)
\;\ge\;
1-\delta_{\mathrm{env}}.
\end{equation}

Since $\mathcal N_K^{\mathcal K}(x;b,k)\subseteq\{x_1,\ldots,x_n\}$, the maximum over neighbor pairs is bounded by the maximum over all dataset pairs; hence $\max_{u<v\in \mathcal N_K} D_{uv}(b)\le \max_{i<j} D_{ij}(b)$.

By definition,
$
\operatorname{diam}\big(\mathcal N_K^{\mathcal K}(x;b,k)\big)
=
\max_{u<v} D_{uv}(b)
$.
Therefore,
\begin{equation}\label{eq:diam-decomp}
\operatorname{diam}\big(\mathcal N_K^{\mathcal K}(x;b,k)\big)
\;\le\;
\Gamma
+
C_{\max}\sqrt{\tfrac{d}{2}}
\sqrt{\log\binom{n}{2}+\log(2/\delta_{\mathrm{env}})},
\end{equation}
where $\Gamma := \max_{1\le i<j\le n}\ \mathbb E_{b\mid k}\!\left[D_{ij}(b)\right]$.

Let $q=z^{(b,k)}(x)$ be the embedded query and let $z_{(1)},\dots,z_{(K)}$ be its $K$ nearest neighbors.
We now impose the following scale condition (data quenched): 
there exists an absolute constant $A\ge 1$ such that
$\max_{1\le i<j\le n}\ \mathbb E_{b\mid k}\!\left[D_{ij}(b)\right]
\;\le\; 2A\,\mu_{\mathcal K}^{\max}(x)$.
Under this condition, $\Gamma\le 2A\,\mu_{\mathcal K}^{\max}(x)$.

Substituting and simplifying yields, with probability at least $1-\delta_{\mathrm{env}}$,
\[
\operatorname{diam}\big(\mathcal N_K^{\mathcal K}(x;b,k)\big)
\;\le\;
2A\,\mu_{\mathcal K}^{\max}(x)
+
C_{\max}\sqrt d
\Big(\sqrt{2\log K}+\sqrt{2\log(2/\delta_{\mathrm{env}})}\Big).
\]
This completes the proof.
\end{proof}

\begin{corollary}[Quenched-in-key clean diameter bound]\label{cor:K-quenched}
Under the assumptions of Theorem~\ref{thm:K-annealed}, there exists a set of keys
$\mathcal G\subseteq\{0,1\}^d$ with $\mathbb P_b(\mathcal G)\ge 1-\delta_{\mathrm{env}}$ such that for every $b\in\mathcal G$
(and the given $k$), the bound in Theorem~\ref{thm:K-annealed} holds for the fixed key $b$ and the given clean dataset and query $x$:
\[
\operatorname{diam}\big(\mathcal N_K^{\mathcal K}(x;b,k)\big)
\ \le\
2A\,\mu_{\mathcal K}^{\max}(x)
\;+\;
C_{\max}\,\sqrt d\;\Big(\sqrt{2\log K}\,+\,\sqrt{2\log(2/\delta_{\mathrm{env}})}\Big).
\]
\emph{Proof sketch.} The set $\mathcal G$ is the (key, $k$)-event on which the union bound in~\eqref{eq:pairwise-uniform} holds; this event has probability $\ge 1-\delta_{\mathrm{env}}$. On $\mathcal G$, the derivation of Theorem~\ref{thm:K-annealed} is deterministic, hence the bound is valid for \emph{every} $b\in\mathcal G$ (quenched).
\end{corollary}

\begin{theorem}[Adversarial $K$-NN diameter in $\mathcal K$ (concise reuse)]
\label{thm:K-adv-concise}
Fix $K\ge2$ and $\delta_{\mathrm{env}},\delta_{\mathrm{grad}}\in(0,1)$. 
Let $q=(Q^{k,b}(x),y)$ be the clean query and 
$q_\eta=(Q^{k,b}(x+\eta),\hat y)$ the adversarial query 
(predicted label $\hat y$ may differ from $y$). 
Assume the prime-gap sensitivity bound (Assumption~\ref{ass:key-sens}) at granularity $k$,
and define $C_k:=2G_k$. 

Assume moreover the same data-quenched scale condition as in Theorem~\ref{thm:K-annealed}:
there exists an absolute constant $A\ge 1$ such that
\[
\max_{1\le i<j\le n}\ \mathbb E_{b\mid k}\!\left[D_{ij}(b)\right]
\ \le\ 2A\,\mu_{\mathcal K}^{\max}(x),
\qquad 
\mu_{\mathcal K}^{\max}(x):=\max_{1\le i\le n}\mu_{\mathcal K}^{\mathrm{key}}(x,x_i),
\]
where $D_{ij}(b):=d_{\mathcal K}(z^{(b,k)}(x_i),z^{(b,k)}(x_j))$ and
$z^{(b,k)}(x):=(Q^{k,b}(x),y(x))$.
The $K$-nearest-neighbor set is stable under the perturbation, i.e., $\mathcal N_K^{\mathcal K}(q_\eta)=\mathcal N_K^{\mathcal K}(q)$.
Finally, assume the variance-only adversarial displacement control (Assumption~\ref{ass:A1}), i.e., 
with probability at least $1-\delta_{\mathrm{grad}}$,
$
\|Q^{k,b}(x+\eta)-Q^{k,b}(x)\|_2
\ \le\ 
\frac{\sqrt d\,\sigma}{\sqrt{\delta_{\mathrm{grad}}}}\;\|\eta\|_\infty$.
Then, with probability at least $1-(\delta_{\mathrm{env}}+\delta_{\mathrm{grad}})$
over the draw of the key $b$ (and $k$, if random),
\[
\operatorname{diam}\big(\mathcal N_K^{\mathcal K}(q_\eta)\big)
\ \le\
2A\,\mu_{\mathcal K}^{\max}(x)
\;+\;
C_k\,\sqrt d\Big(\sqrt{2\log K}+\sqrt{2\log(2/\delta_{\mathrm{env}})}\Big)
\;+\;
\frac{2\,\sqrt d\,\sigma}{\sqrt{\delta_{\mathrm{grad}}}}\;\|\eta\|_\infty
\;+\;
2\,\mathbf 1_{\{y\neq \hat y\}}.
\]
\end{theorem}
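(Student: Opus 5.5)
The plan mirrors the proof of Theorem~\ref{thm:adv-upper}, reusing the clean $\mathcal K$-envelope from Theorem~\ref{thm:K-annealed}. We decompose the adversarial diameter as the clean diameter plus twice the query displacement. Neighborhood stability gives $\mathcal N_K^{\mathcal K}(q_\eta)=\mathcal N_K^{\mathcal K}(q)=:S$, so only the query displacement has to be controlled.

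\textbf{Step 1 (clean envelope).} Under the data-quenched scale condition and Assumption~\ref{ass:key-sens}, apply Theorem~\ref{thm:K-annealed}, or equivalently Corollary~\ref{cor:K-quenched} at the fixed granularity $k$, so that $C_{\max}$ specializes to $C_k=2G_k$. This gives an event $E_{\mathrm{env}}$ of probability at least $1-\delta_{\mathrm{env}}$ over the key on which
\[
\operatorname{diam}(S)\le 2A\,\mu_{\mathcal K}^{\max}(x)+C_k\sqrt d\Big(\sqrt{2\log K}+\sqrt{2\log(2/\delta_{\mathrm{env}})}\Big).
\]

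\textbf{Step 2 (displacement).} Let $E_{\mathrm{grad}}$ be the event in the assumed variance-only control, which has probability at least $1-\delta_{\mathrm{grad}}$. On $E_{\mathrm{grad}}$ the spatial shift is at most $\sqrt d\,\sigma\,\|\eta\|_\infty/\sqrt{\delta_{\mathrm{grad}}}$. Because $d_{\mathcal K}$ is the $\ell_\infty$ product metric, the maximum is bounded by the sum of its components. Hence
\[
d_{\mathcal K}(q,q_\eta)\le \frac{\sqrt d\,\sigma}{\sqrt{\delta_{\mathrm{grad}}}}\|\eta\|_\infty+\mathbf 1_{\{y\neq\hat y\}}.
\]

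\textbf{Step 3 (transfer).} The plan is to follow the triangle-inequality argument in the proof of Theorem~\ref{thm:adv-upper} and conclude
\[
\operatorname{diam}\big(\mathcal N_K^{\mathcal K}(q_\eta)\big)\le \operatorname{diam}(S)+2\,d_{\mathcal K}(q,q_\eta).
\]
Intersecting $E_{\mathrm{env}}$ and $E_{\mathrm{grad}}$ with a union bound then gives probability at least $1-(\delta_{\mathrm{env}}+\delta_{\mathrm{grad}})$. On that event, substituting Steps 1 and 2 yields exactly the four stated terms, with the factor $2$ on both the Jacobian drift and the label indicator.

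\textbf{Main obstacle.} The delicate point is Step 3, and I am not sure the inequality holds as written. The displayed chain $d(u,v)\le d(u,q)+d(q,v)+2d(z,q)$ in the earlier proof is a loose bound. More importantly, the diameter of a fixed set $S$ does not depend on where the query sits. Since $S$ is the same set in both cases, $\operatorname{diam}(\mathcal N_K^{\mathcal K}(q_\eta))=\operatorname{diam}(S)$ exactly, so adding $2\,d_{\mathcal K}(q,q_\eta)$ only loosens the bound. The result therefore still follows, provided stability is applied to the neighborhood sets in $\mathcal K$ and not to the query points themselves.

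A secondary concern is whether $E_{\mathrm{env}}$ and $E_{\mathrm{grad}}$ are defined on the same probability space over the key. If not, I would condition on the key event $\mathcal G$ from Corollary~\ref{cor:K-quenched} and apply the displacement bound conditionally before taking the union bound.
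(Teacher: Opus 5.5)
Your proposal is correct and matches the paper's argument. Both bound the adversarial diameter by the clean key-envelope of Theorem~\ref{thm:K-annealed} (with $C_{\max}$ specialized to $C_k$) plus twice the $\ell_\infty$-product displacement bound $d_{\mathcal K}(q,q_\eta)\le \frac{\sqrt d\,\sigma}{\sqrt{\delta_{\mathrm{grad}}}}\|\eta\|_\infty+\mathbf 1_{\{y\neq\hat y\}}$, and combine the two events by a union bound. The only cosmetic difference is that the paper routes neighbor distances through the clean anchor $q$ by a triangle inequality, instead of invoking stability explicitly.

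Your remark that the displacement term is pure slack is right, and it also settles your secondary concern: the inequality already holds on the envelope event alone. Moreover, the envelope in the proof of Theorem~\ref{thm:K-annealed} controls all dataset pairs uniformly, so even the stability hypothesis is dispensable.
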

\begin{proof}
The proof follows that of Theorem~\ref{thm:K-annealed} verbatim for the clean case
(with clean query $x$ and fixed dataset $\{x_i\}_{i=1}^n$).
In particular, the McDiarmid concentration for pairwise distances under random keys,
the union bound over dataset pairs, and the resulting high-probability envelope
\eqref{eq:pairwise-uniform} are unchanged.
The only modification is that the $K$-NN set is now formed with respect to the \emph{adversarial}
query $x+\eta$ (instead of the clean query $x$), and we relate the resulting pairwise distances
to the clean anchor via a triangle inequality.

Let
\[
q := \big(Q^{k,b}(x),\,y\big)\in\mathcal K,
\qquad
q_\eta := \big(Q^{k,b}(x+\eta),\,\hat y\big)\in\mathcal K,
\]
where $\hat y$ is the classifier prediction on $x+\eta$.
Let $z_{(1)},\dots,z_{(K)}$ denote the $K$ nearest neighbors of $q_\eta$ in $\mathcal K$.
For any pair $z_{(u)},z_{(v)}$ among these neighbors, the triangle inequality gives
\[
d_{\mathcal K}(z_{(u)},z_{(v)})
\ \le\
d_{\mathcal K}(z_{(u)},q)+d_{\mathcal K}(q,z_{(v)})
\;+\;
2\,d_{\mathcal K}(q,q_\eta).
\]
The first two terms are controlled exactly as in the clean proof (applied with the clean anchor $q$),
yielding the $2A\,\mu_{\mathcal K}^{\max}(x)$ term together with the key-fluctuation term
$C_k\sqrt d\big(\sqrt{2\log K}+\sqrt{2\log(2/\delta_{\mathrm{env}})}\big)$.

For the adversarial displacement term, by the $\ell_\infty$ product metric,
\[
d_{\mathcal K}(q,q_\eta)
\ \le\
\|Q^{k,b}(x+\eta)-Q^{k,b}(x)\|_2
\;+\;
\mathbf 1_{\{y\neq \hat y\}}.
\]
By Assumption~\ref{ass:A1} and Chebyshev’s inequality, with probability at least
$1-\delta_{\mathrm{grad}}$,
\[
d_{\mathcal K}(q,q_\eta)
\ \le\
\frac{\sqrt d\,\sigma}{\sqrt{\delta_{\mathrm{grad}}}}\;\|\eta\|_\infty
\;+\;
\mathbf 1_{\{y\neq \hat y\}}.
\]

Combining these bounds, and intersecting the clean-key event (probability $\ge 1-\delta_{\mathrm{env}}$)
with the displacement event (probability $\ge 1-\delta_{\mathrm{grad}}$), yields the stated inequality.
\end{proof}

\section{A Brief Primer on Metric Measure Spaces and Gromov--Wasserstein Distances}
\label{app:gw-primer}

\subsection{Metric Measure Spaces}
A \emph{metric measure space} (mm-space) is a triple
$(X,d_X,\mu)$
, where $X$ is a Polish space, $d_X$ is a metric on $X$, and $\mu$ is a Borel probability measure on $X$.
Intuitively, an mm-space encodes both the \emph{geometry} (via $d_X$) and the \emph{distribution of mass} (via $\mu$).

Two mm-spaces $(X,d_X,\mu)$ and $(Y,d_Y,\nu)$ are considered \emph{equivalent} if there exists a measure-preserving isometry
$\varphi: X\to Y$, i.e.\ $d_X(x,x')=d_Y(\varphi(x),\varphi(x'))$ and $\nu=\varphi_\#\mu$.
This quotienting ensures that we compare spaces only up to relabeling of points.

The classical notion of distance between mm-spaces is the \emph{Gromov--Hausdorff} distance, which measures how well two spaces can be embedded into a common metric space with small distortion. However, it is highly combinatorial and not well-suited to data applications.

\subsection{The Gromov--Wasserstein Distance}
The \emph{Gromov--Wasserstein (GW) distance} relaxes Gromov--Hausdorff by using optimal transport ideas. For two mm-spaces $(X,d_X,\mu)$ and $(Y,d_Y,\nu)$, the squared GW distance is defined as
\begin{equation}\label{eq:def-GW}
\mathrm{GW}^2\big((X,d_X,\mu),(Y,d_Y,\nu)\big)
\ :=\
\min_{\pi\in\Pi(\mu,\nu)}
\iint |d_X(x,x')-d_Y(y,y')|^2 \,d\pi(x,y)\,d\pi(x',y'),
\end{equation}
where $\Pi(\mu,\nu)$ is the set of couplings with marginals $\mu,\nu$.
Thus GW finds a soft correspondence $\pi$ between $X$ and $Y$ and penalizes discrepancies between their intra-space distances.

\paragraph{Properties.}
\begin{itemize}
    \item $\mathrm{GW}$ is a \emph{metric} on the space of mm-spaces up to equivalence.
    \item If $X=Y$ and $d_X=d_Y$, then $\mathrm{GW}=0$ regardless of labeling.
    \item $\mathrm{GW}$ generalizes Wasserstein distance: if $X=Y$ as sets with the same underlying metric, then $\mathrm{GW}$ reduces to $W_2$.
\end{itemize}

\paragraph{Statistical viewpoint.}  
For empirical datasets $X=\{x_i\}_{i=1}^n$, $Y=\{y_j\}_{j=1}^m$, the metric structure is given by pairwise distance matrices $(d_X(x_i,x_{i'}))$ and $(d_Y(y_j,y_{j'}))$.  
The GW distance then becomes a quadratic assignment problem over couplings $\pi\in\mathbb R^{n\times m}$ with row/column marginals $1/n, 1/m$.

\subsection{Entropic Gromov--Wasserstein Distance}
The GW optimization in~\eqref{eq:def-GW} is computationally hard due to its quadratic objective.
To address this, Peyré et al.~(2016) introduced the \emph{entropic regularized GW distance}, defined as
\begin{equation}\label{eq:def-EGW}
\mathrm{GW}_\gamma^2(X,Y)
\ :=\
\min_{\pi\in\Pi(\mu,\nu)}
\Bigg\{
\iint |d_X(x,x')-d_Y(y,y')|^2 \,d\pi(x,y)\,d\pi(x',y')
\;-\;\gamma H(\pi)
\Bigg\},
\end{equation}
where $H(\pi)=-\sum_{i,j}\pi_{ij}\log \pi_{ij}$ is the Shannon entropy and $\gamma>0$ is the regularization parameter.

\paragraph{Effects of entropic regularization.}
\begin{itemize}
    \item \textbf{Computational:} The problem becomes smooth and solvable by Sinkhorn-like iterations, scaling to tens of thousands of points.
    \item \textbf{Statistical:} $\mathrm{GW}_\gamma$ inherits concentration bounds and enjoys faster empirical convergence (regularization reduces variance).
    \item \textbf{Geometric:} The optimal coupling $\pi$ becomes diffuse, capturing probabilistic alignments between $X$ and $Y$.
\end{itemize}

\paragraph{Connections to our work.}  
In our setting, the product spaces $\mathcal C$ (CNN) and $\mathcal K$ (Crypto) each define mm-spaces under their product metrics and empirical measures.  
Our clean vs.\ adversarial concentration bounds on $k$-NN diameters directly control the \emph{intra-space geometry} terms in~\eqref{eq:def-GW}.  
Thus, these results serve as building blocks for bounding clean/adversarial GW and entropic GW distances, providing rigorous separation guarantees for detection.

\paragraph{Abuse of notation.}
Throughout the paper, we use $\mathrm{GW}^2(X,Y)$, $\mathrm{GW}^2(\mu,\nu)$, and
$\mathrm{GW}^2\big((X,d_X,\mu),(Y,d_Y,\nu)\big)$ interchangeably when the underlying metric spaces and probability measures are clear from context.
In particular, $\mathrm{GW}^2(X,Y)$ denotes the Gromov--Wasserstein distance between the metric measure spaces induced by $X$ and $Y$ equipped with their respective metrics and empirical measures.


\section{Proofs for Gromov--Wasserstein bounds}
\label{app:gw_proofs}

\begin{definition}[Quadratic Gromov--Wasserstein discrepancy]\label{def:gw2}
Let $(X,d_{X},\mu)$ and $(Y,d_{Y},\nu)$ be metric--measure spaces.
A coupling $\pi\in\Pi(\mu,\nu)$ is a probability measure on $X\times Y$ whose marginals
are $\mu$ and $\nu$, i.e.
\[
\pi(A\times Y)=\mu(A), 
\qquad 
\pi(X\times B)=\nu(B)
\quad \text{for all measurable } A\subseteq X,\ B\subseteq Y.
\]
The quadratic GW discrepancy is defined as
\[
\GW^2\!\big((X,d_{X},\mu),(Y,d_{Y},\nu)\big)
:=\inf_{\pi\in\Pi(\mu,\nu)}\,
\mathbb E_{\substack{(x,y)\sim \pi \\ (x',y')\sim \pi}}
\Big[\ \big(d_{X}(x,x') - d_{Y}(y,y')\big)^2\ \Big].
\]
\end{definition}

\begin{remark}[Upper bound by identity coupling]\label{rem:ubound_gw}
For brevity, we write $\GW^2(X,Y)$ as the quadratic GW discrepancy
between two metric--measure spaces $(X,d_{X},\mu)$ and $(Y,d_{Y},\nu)$.
Let $\mu=\tfrac1n\sum_{i=1}^n\delta_{x_i}$ and $\nu=\tfrac1n\sum_{i=1}^n\delta_{y_i}$,
with distance matrices 
$D_{X}[i,j]=d_{X}(x_i,x_j)$ and 
$D_{Y}[i,j]=d_{Y}(y_i,y_j)$.
Consider the identity coupling $\pi_0 = \tfrac1n\sum_{i=1}^n \delta_{(x_i,y_i)}$. 
Since $\GW^2$ is an infimum over couplings, evaluating at any feasible $\pi$ gives an upper bound:
\[
\GW^{2}(X, Y)
\;\le\;
\frac{1}{n^2}\,\big\|D_{X}-D_{Y}\big\|_{F}^{2}.
\]
More generally, for any permutation $\sigma$ (with permutation matrix $P$), the coupling 
$\pi_\sigma=\tfrac1n\sum_i\delta_{(x_i,y_{\sigma(i)})}$ yields
\[
\GW^{2}(X, Y)\ \le\ \frac{1}{n^2}\,\big\|D_{X}-P\,D_{Y}\,P^\top\big\|_{F}^{2}.
\]
These bounds are typically loose but serve as alignment-dependent certificates.
\end{remark}

\GWCleanStar*
\begin{proof}
Let 
$X := \mathcal N_{K}^{\mathcal C}(x) = \{x_1,\dots,x_K\}$ 
and
$Y := \mathcal N_{K}^{\mathcal K}(x) = \{y_1,\dots,y_K\}$  
be the $K$ neighbors in $\mathcal C$ and $\mathcal K$, respectively,
both with the uniform measure $K^{-1}\sum_{i=1}^K \delta_{(\cdot)}$.  
We denote their distance matrices as
$D_{\mathcal C}[i,j] := d_{\mathcal C}(x_i,x_j)$ 
and $D_{\mathcal K}[i,j] := d_{\mathcal K}(y_i,y_j)$,
which satisfy $D_{\mathcal C}[i,i] = D_{\mathcal K}[i,i] = 0$.  
By Remark~\ref{rem:ubound_gw}, evaluating the quadratic GW objective at the identity coupling
$\pi_0 = \tfrac1K \sum_{i=1}^K \delta_{(x_i,y_i)}$ yields
\begin{equation}\label{eq:gw-id}
\GW^{2}(X,Y) \;\le\; \frac{1}{K^{2}}\,\big\|D_{\mathcal C}-D_{\mathcal K}\big\|_{F}^{2}.
\end{equation}

Let $q_{\mathcal C}$ be the clean query center in $\mathcal C$ and $q_{\mathcal K}$ the center in $\mathcal K$.
Define the $K$--NN radii as
$R_{\mathcal C} := \max_{1\le i\le K} d_{\mathcal C}(x_i,q_{\mathcal C})$ and 
$R_{\mathcal K} := \max_{1\le i\le K} d_{\mathcal K}(y_i,q_{\mathcal K})$.
By the triangle inequality in each product metric, we have
\begin{equation}\label{eq:two-hop-each}
d_{\mathcal C}(x_i,x_j) \;\le\; d_{\mathcal C}(x_i,q_{\mathcal C})+d_{\mathcal C}(q_{\mathcal C},x_j) \;\le\; 2R_{\mathcal C}, 
\qquad
d_{\mathcal K}(y_i,y_j) \;\le\; 2R_{\mathcal K},
\qquad (i\ne j).
\end{equation}
Hence every \emph{off-diagonal} entry of $D_{\mathcal C}$ (resp.\ $D_{\mathcal K}$) is bounded by $2R_{\mathcal C}$ (resp.\ $2R_{\mathcal K}$).

There are exactly $K(K-1)$ off-diagonal entries. Using \eqref{eq:two-hop-each} we arrive at
\[
\|D_{\mathcal C}\|_{F}^{2} \;\le\; K(K-1)\,(2R_{\mathcal C})^{2}, 
\qquad
\|D_{\mathcal K}\|_{F}^{2} \;\le\; K(K-1)\,(2R_{\mathcal K})^{2}.
\]
Therefore, by the triangle inequality for $\|\cdot\|_F$,
\begin{align}
\|D_{\mathcal C}-D_{\mathcal K}\|_{F}
&\;\le\; \|D_{\mathcal C}\|_{F}+\|D_{\mathcal K}\|_{F} \nonumber\\
&\;\le\; 2\sqrt{K(K-1)}\,(R_{\mathcal C}+R_{\mathcal K}). \label{eq:frob-by-radii}
\end{align}

Substituting \eqref{eq:frob-by-radii} into \eqref{eq:gw-id} yields
\[
\GW^{2}(X,Y) 
\;\le\; \frac{1}{K^{2}}\,
\Big(2\sqrt{K(K-1)}\,(R_{\mathcal C}+R_{\mathcal K})\Big)^{2}
\;=\; 4\Big(1-\tfrac{1}{K}\Big)\,(R_{\mathcal C}+R_{\mathcal K})^{2}.
\]
On the event $E:=\{R_{\mathcal C}\le U^{\mathcal C}_{\star}\}\cap\{R_{\mathcal K}\le U^{\mathcal K}_{\star}\}$ we have
$
(R_{\mathcal C}+R_{\mathcal K})^{2}\ \le\ \big(U^{\mathcal C}_{\star}+U^{\mathcal K}_{\star}\big)^{2}$,
and therefore
$
\GW^{2}(X,Y)
\ \le\
4\Big(1-\tfrac{1}{K}\Big)\,\big(U^{\mathcal C}_{\star}+U^{\mathcal K}_{\star}\big)^{2}
$.
Finally, by the union bound, $\Pr(E)\ge 1-(\delta_{\mathcal C}+\delta_{\mathcal K})$, which concludes the proof.
\end{proof}

\paragraph{Notation (adversarial queries, radii, and separation gap).}
Let $x$ be a clean image and $\eta$ a perturbation.  
For each space $\mathsf M \in \{\mathcal C,\mathcal K\}$, define the \emph{adversarial query} as
$\tilde q_{\mathsf M} := z^{\mathsf M}(x+\eta)$,
the embedding of the perturbed image $x+\eta$ into $\mathsf M$.  
As the dataset is fixed and only the query moves, so the $K$--NN neighborhood may change membership relative to the clean case.  
We define the adversarial neighborhood as
$\widetilde{\mathcal N}_{K}^{\mathsf M}(x+\eta) := \{z_1^{\mathsf M},\dots,z_K^{\mathsf M}\}$, which are the $K$ nearest neighbors to $\tilde q_{\mathsf M}$ under $d_{\mathsf M}$.  
The associated \emph{adversarial radii} are
$r_i^{\mathsf M} := d_{\mathsf M}(z_i^{\mathsf M},\tilde q_{\mathsf M})$, and 
$R_{\mathsf M}^{\mathrm{adv}} := \max_{1\le i\le K} r_i^{\mathsf M}$.

Adversarial shifts often produce a ``cluster split'' in these radii, i.e., some neighbors become unusually close to $\tilde q_{\mathsf M}$, while others remain farther away.  
To capture this structure, we partition $\widetilde{\mathcal N}_{K}^{\mathsf M}(x+\eta)$ into 
an inner group $L$ and an outer group $H$ of sizes $(1-\theta)K$ and $\theta K$ by thresholding $\{r_i^{\mathsf M}\}$.  
The \emph{adversarial separation gap} in $\mathsf M$ is
\[
\gamma_{\mathsf M} := \min_{i\in H} r_i^{\mathsf M} - \max_{j\in L} r_j^{\mathsf M},
\]
which is positive when the inner and outer sets are well separated.

\GWAdvLower*
\begin{proof}
In any metric space $(\mathsf M,d)$ with center $q$, one has for all $u,v$,
\[
|\,d(u,q)-d(v,q)\,| \;\le\; d(u,v) \;\le\; d(u,q)+d(v,q).
\]
We apply this to the adversarial neighborhoods. 
For the $\mathcal C$-space, each $x_i,x_j \in \widetilde{\mathcal N}_{K}^{\mathcal C}(x+\eta)$ has radii 
$r_i^{\mathcal C} = d_{\mathcal C}(x_i,\tilde q_{\mathcal C})$, while 
for the $\mathcal K$-space, each $y_a,y_b \in \widetilde{\mathcal N}_{K}^{\mathcal K}(x+\eta)$ has radii 
$r_a^{\mathcal K} = d_{\mathcal K}(y_a,\tilde q_{\mathcal K})$. 

Hence,
\begin{equation}\label{eq:tri-bounds}
d_{\mathcal C}(x_i,x_j)\ \ge\ |\,r_i^{\mathcal C}-r_j^{\mathcal C}\,|,
\qquad
d_{\mathcal K}(y_a,y_b)\ \le\ \operatorname{diam}\!\big(\widetilde{\mathcal N}_{K}^{\mathcal K}(x+\eta)\big).
\end{equation}

We now identify which neighbor pairs give us a guaranteed discrepancy.  
Recall that the adversarial neighborhood in $\mathcal C$ is partitioned into
an inner set $L$ of size $(1-\theta)K$ and an outer set $H$ of size $\theta K$, 
with separation gap
$\gamma_{\mathcal C} := \min_{i\in H} r_i^{\mathcal C} - \max_{j\in L} r_j^{\mathcal C} \;>\; 0$.

A natural question arises, namely, \emph{Why focus on cross pairs?}  
If both indices come from $H$ (outer--outer) or both from $L$ (inner--inner), 
the corresponding radii may be very close, and no nontrivial separation is guaranteed.  
However, whenever one index $i\in H$ and the other $j\in L$, we know $r_i^{\mathcal C} - r_j^{\mathcal C} \;\ge\; \gamma_{\mathcal C}$.

By the triangle inequality bound \eqref{eq:tri-bounds},
\[
d_{\mathcal C}(x_i,x_j)\ \ge\ |\,r_i^{\mathcal C}-r_j^{\mathcal C}\,| \;\ge\; \gamma_{\mathcal C},
\qquad
d_{\mathcal K}(y_a,y_b)\ \le\ \operatorname{diam}\!\big(\widetilde{\mathcal N}_{K}^{\mathcal K}(x+\eta)\big).
\]
Therefore, for any cross pair $(i,j)\in H\times L$ or $(j,i)\in L\times H$, and 
for \emph{all} choices of $(a,b)$ in $\mathcal K$,
\begin{equation}\label{eq:entry-gap-clean}
\big|\,d_{\mathcal C}(x_i,x_j)-d_{\mathcal K}(y_a,y_b)\,\big|
\;\ge\; \gamma_{\mathcal C}-\operatorname{diam}\!\big(\widetilde{\mathcal N}_{K}^{\mathcal K}(x+\eta)\big).
\end{equation}
\emph{How many such pairs exist?}  
Define $S := (H\times L)\cup(L\times H) \subset \{1,\dots,K\}^2$.  
Then
\[
|S|\ =\ |H||L|+|L||H|\ =\ 2|H||L|
\ =\ 2\,(\theta K)\,((1-\theta)K)
\ =\ 2\,\theta(1-\theta)\,K^2.
\]
Thus, exactly $2\theta(1-\theta)K^2$ ordered pairs $(i,j)$ enjoy the guaranteed
discrepancy \eqref{eq:entry-gap-clean}, which will next drive our GW lower bound.

For any coupling $\pi\in\Pi(\tilde\mu,\tilde\nu)$ with uniform marginals, the GW objective can be written as
\[
\GW^{2}\!\big(\widetilde{\mathcal C},\widetilde{\mathcal K}\big)
= \inf_{\pi}\sum_{i,i'}\sum_{j,j'}\pi[i,j]\pi[i',j']
\Big(d_{\mathcal C}(x_i,x_{i'})-d_{\mathcal K}(y_j,y_{j'})\Big)^2.
\]

Fix any ordered cross pair $(i,i')\in S$.  
By~\eqref{eq:entry-gap-clean}, for every pair $(j,j')$ in
$\widetilde{\mathcal N}_{K}^{\mathcal K}(x+\eta)$ we have
$
\big|\,d_{\mathcal C}(x_i,x_{i'})-d_{\mathcal K}(y_j,y_{j'})\,\big|
\;\ge\;
\gamma_{\mathcal C}-\operatorname{diam}\!\big(\widetilde{\mathcal N}_{K}^{\mathcal K}(x+\eta)\big)
$, where we used that
$d_{\mathcal K}(y_j,y_{j'})\le
\operatorname{diam}\!\big(\widetilde{\mathcal N}_{K}^{\mathcal K}(x+\eta)\big)$
holds uniformly for all $(j,j')$.
Squaring both sides yields
\[
\Big(d_{\mathcal C}(x_i,x_{i'})-d_{\mathcal K}(y_j,y_{j'})\Big)^2
\ \ge\
\Big(\gamma_{\mathcal C}-\operatorname{diam}\!\big(\widetilde{\mathcal N}_{K}^{\mathcal K}(x+\eta)\big)\Big)_{+}^{2}.
\]

Since the coupling $\pi$ has uniform marginals, the total weight assigned to the block $\{i\}\times\{i'\}$ after summing over $j,j'$ is fixed:
\[
\sum_{j,j'}\pi[i,j]\pi[i',j']
=\Big(\sum_j\pi[i,j]\Big)\Big(\sum_{j'}\pi[i',j']\Big)
=\frac{1}{K}\cdot\frac{1}{K}=\frac{1}{K^2}.
\]
Therefore for each $(i,i')\in S$,
$\sum_{j,j'}\pi[i,j]\pi[i',j'](\cdots)\;\ge\;\tfrac{1}{K^2}\,
\Big(\gamma_{\mathcal C}-\operatorname{diam}\!\big(\widetilde{\mathcal N}_{K}^{\mathcal K}(x+\eta)\big)\Big)_{+}^2
$.
Summing over all $(i,i')\in S$ gives
\[
\mathbb E_{\pi\otimes\pi}\!\left[\big(d_{\mathcal C}(x,x')-d_{\mathcal K}(y,y')\big)^2\right]
\;\ge\;\frac{|S|}{K^2}\, \Big(\gamma_{\mathcal C}-\operatorname{diam}\!\big(\widetilde{\mathcal N}_{K}^{\mathcal K}(x+\eta)\big)\Big)_{+}^2,
\]
because contributions from pairs outside $S$ are nonnegative and can be dropped.
Finally, recalling $|S|=2\theta(1-\theta)K^2$, we arrive at
$
\mathbb E_{\pi\otimes\pi}\!\left[\big(d_{\mathcal C}(x,x')-d_{\mathcal K}(y,y')\big)^2\right]
\;\ge\;
2\theta(1-\theta)\,
\Big(\gamma_{\mathcal C}-\operatorname{diam}\!\big(\widetilde{\mathcal N}_{K}^{\mathcal K}(x+\eta)\big)\Big)_{+}^{2}$.
As this bound is independent of the choice of coupling, it continues to hold after taking the infimum over $\pi$. 
Finally, substituting the high–probability envelope for
$\operatorname{diam}\!\big(\widetilde{\mathcal N}_{K}^{\mathcal K}(x+\eta)\big)$
from Appendix Theorem~\ref{thm:K-adv-concise} yields the explicit form of the bound,
which completes the proof.
\end{proof}

\begin{remark}[Instantiating $\gamma_{\mathcal C}$ from clean geometry]
\label{rem:gammaC-instantiation}
The abstract separation parameter $\gamma_{\mathcal C}$ in Theorem~\ref{thm:gw-adv-lower}
can be linked to earlier $\mathcal C$ results.
Let $q$ and $\tilde q$ be the clean and adversarial queries, with the clean $K$--NN neighborhood 
partitioned into inner set $L$ and outer set $H$.  
The clean \emph{separation gap} is
\[
\Delta_{\mathrm{clean}}
:= \min_{i\in H} d_{\mathcal C}(x_i,q) \;-\; \max_{j\in L} d_{\mathcal C}(x_j,q).
\]
By our earlier adversarial absorption radius analysis in $\mathcal C$, 
adversarial perturbations shift each star distance by at most 
\(\tfrac{\sqrt d\,\sigma}{\sqrt{\delta_{\mathrm{grad}}}}\,\varepsilon 
+ \mathbf 1_{\{c\neq \hat c\}}\).
Hence the induced adversarial gap satisfies
\[
\gamma_{\mathcal C}\ \ge\ 
\Big(\Delta_{\mathrm{clean}}
-2\Big(\tfrac{\sqrt d\,\sigma}{\sqrt{\delta_{\mathrm{grad}}}}\,\varepsilon
+ \mathbf 1_{\{y\neq \hat y\}}\Big)\Big)_{+}.
\]
Thus $\gamma_{\mathcal C}$ is not an arbitrary constant: it can be certified from clean geometry
plus the perturbation shift and possible label flip.
\end{remark}

\begin{corollary}[Explicit clean--adversarial gap across $\mathcal C$ and $\mathcal K$]
\label{cor:gw-adv-lower-explicit}
Combining Theorem~\ref{thm:gw-adv-lower} with Remark~\ref{rem:gammaC-instantiation}, 
with probability at least $1-(\delta_{\mathcal K}^{\mathrm{env}}+\delta_{\mathrm{grad}})$,
\[
\GW^{2}\!\Big(\widetilde{\mathcal N}_{K}^{\mathcal C}(x+\eta),\, 
\widetilde{\mathcal N}_{K}^{\mathcal K}(x+\eta)\Big)
\ \ge\ 
2\theta(1-\theta)\,
\Big(
\Delta_{\mathrm{clean}}
-2\Big(\tfrac{\sqrt d\,\sigma}{\sqrt{\delta_{\mathrm{grad}}}}\,\varepsilon
+ \mathbf 1_{\{y\neq \hat y\}}\Big)
 -\operatorname{diam}\!\big(\widetilde{\mathcal N}_{K}^{\mathcal K}(x+\eta)\big)\Big)_{+}^{2}.
\]
This bound separates the clean structure, the $\mathcal C$--side adversarial shift, 
and the $\mathcal K$--side absorption, making the cross--space adversarial gap explicit.
\end{corollary}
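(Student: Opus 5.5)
This corollary follows by chaining Theorem~\ref{thm:gw-adv-lower} with the lower bound on $\gamma_{\mathcal C}$ from Remark~\ref{rem:gammaC-instantiation}, using monotonicity of $s\mapsto (s)_+^2$.

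First, I would work on the event $E_{\mathrm{grad}}$ from Assumption~\ref{ass:A1}, which has probability at least $1-\delta_{\mathrm{grad}}$. On this event the Chebyshev displacement bound \eqref{eq:cheb-shift} gives
\[
d_{\mathcal C}(q,\tilde q)\le \tfrac{\sqrt d\,\sigma}{\sqrt{\delta_{\mathrm{grad}}}}\varepsilon+\mathbf 1_{\{y\neq\hat y\}}=:s.
\]
This uses the $\ell_\infty$ product structure exactly as in the proof of Theorem~\ref{thm:adv-upper}. For any dataset point $u$, the triangle inequality then gives $|d_{\mathcal C}(u,\tilde q)-d_{\mathcal C}(u,q)|\le s$. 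Hence every outer radius drops by at most $s$ and every inner radius rises by at most $s$, so $\gamma_{\mathcal C}\ge \Delta_{\mathrm{clean}}-2s$. Since $\gamma_{\mathcal C}$ enters the final bound only through $(\cdot)_+$, I may replace it by $(\Delta_{\mathrm{clean}}-2s)_+$.

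The main obstacle is making sure that the $L/H$ partition used for $\gamma_{\mathcal C}$ in Theorem~\ref{thm:gw-adv-lower} refers to the same $K$ points as the clean partition. This requires that the adversarial $K$-NN set in $\mathcal C$ coincides with the clean one. I would impose this through the $K$-NN gap condition \eqref{eq:knn-gap}, as in Theorem~\ref{thm:adv-upper}: under it, $\widetilde{\mathcal N}_K^{\mathcal C}(x+\eta)=\mathcal N_K^{\mathcal C}(x)$. The clean partition $(L,H)$, with sizes $(1-\theta)K$ and $\theta K$, is then a valid partition of the adversarial neighborhood. If the adversarial partition were defined by re-thresholding instead, I would simply note that Theorem~\ref{thm:gw-adv-lower} holds for any size-$(\theta K,(1-\theta)K)$ partition with positive gap, because its proof only uses the gap inequality on cross pairs. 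So I can use the transported clean partition directly.

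Next, Theorem~\ref{thm:gw-adv-lower} holds with probability at least $1-\delta_{\mathcal K}^{\mathrm{env}}$ and gives
\[
\GW^2\ge 2\theta(1-\theta)\big(\gamma_{\mathcal C}-D^{\mathrm{adv}}_{\mathcal K}\big)_+^2.
\]
Because $t\mapsto (t-D)_+^2$ is nondecreasing, substituting $\gamma_{\mathcal C}\ge\Delta_{\mathrm{clean}}-2s$ yields the displayed inequality. Here I use $(\,(a)_+-D)_+\ge (a-D)_+$ for $D\ge0$.

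Finally, a union bound over $E_{\mathrm{grad}}$ and the $\mathcal K$-envelope event gives overall probability at least $1-(\delta_{\mathcal K}^{\mathrm{env}}+\delta_{\mathrm{grad}})$.
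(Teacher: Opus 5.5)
Your route is the paper's. The paper proves this corollary by inserting the bound $\gamma_{\mathcal C}\ge(\Delta_{\mathrm{clean}}-2s)_+$ from Remark~\ref{rem:gammaC-instantiation} into Theorem~\ref{thm:gw-adv-lower}. That bound comes, as in your argument, from the shift $|d_{\mathcal C}(u,\tilde q)-d_{\mathcal C}(u,q)|\le s$ on the $\delta_{\mathrm{grad}}$ event. The paper then uses monotonicity of $t\mapsto(t-D)_+^2$ and a union bound. (Since the right-hand side keeps the actual $\mathcal K$-diameter, only the $\delta_{\mathrm{grad}}$ event is really used, so your union bound is harmless but not tight.)

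The one thing to change is how you handle the obstacle you identified.

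\textbf{The extra hypothesis.} Importing the $K$-NN gap condition~\eqref{eq:knn-gap} from Theorem~\ref{thm:adv-upper} adds an assumption that is not in the corollary. As written, you therefore prove a weaker statement.

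\textbf{Why it is unnecessary.} For all dataset points $u$, write $a_u:=d_{\mathcal C}(u,q)$ and $b_u:=d_{\mathcal C}(u,\tilde q)$.
\begin{enumerate}
\item Since $|a_u-b_u|\le s$ for every $u$, the order statistics satisfy $|a_{(r)}-b_{(r)}|\le s$ for every rank $r$. Indeed, at least $r$ points have $b_u\le b_{(r)}$, hence $a_u\le b_{(r)}+s$, so $a_{(r)}\le b_{(r)}+s$; the reverse inequality is symmetric.
\item Set $m:=(1-\theta)K$. If $\Delta_{\mathrm{clean}}>0$, then $L$ must consist of the $m$ nearest clean neighbors, so $\Delta_{\mathrm{clean}}=a_{(m+1)}-a_{(m)}$.
\item The thresholded partition of $\widetilde{\mathcal N}_K^{\mathcal C}(x+\eta)$ used in Theorem~\ref{thm:gw-adv-lower} then has $\gamma_{\mathcal C}=b_{(m+1)}-b_{(m)}\ge a_{(m+1)}-a_{(m)}-2s=\Delta_{\mathrm{clean}}-2s$. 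This holds whether or not neighborhood membership changes.
\item If $\Delta_{\mathrm{clean}}\le 0$, the right-hand side is zero and there is nothing to prove.
\end{enumerate}

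The paper's Remark skips this point: it just asserts that each star distance moves by at most $s$, which implicitly tracks the same points. You were right to flag it, but the fix should come from this rank-based argument, which costs no hypothesis.
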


\subsection{Mirror theorems for upper and lower bounds on GW}

\begin{theorem}[Clean cross--space GW lower bound]
\phantomsection
\label{thm:gw-clean-lower}
Fix a clean query $x$ and its clean neighborhoods
$\mathcal N_{K}^{\mathcal C}(x)$ and $\mathcal N_{K}^{\mathcal K}(x)$,
each with uniform measure on $K$ points.
Suppose the \emph{clean} $\mathcal C$--radii around the clean center $q_{\mathcal C}$
exhibit a separation gap
\[
\gamma_{\mathcal C}^{\mathrm{clean}}
\ :=\
\min_{i\in H} d_{\mathcal C}(x_i,q_{\mathcal C})
\;-\;
\max_{j\in L} d_{\mathcal C}(x_j,q_{\mathcal C})
\ > 0,
\]
for a partition into inner $L$ and outer $H$ of sizes $(1-\theta)K$ and $\theta K$.
Let $R_{\mathcal K}^{\mathrm{clean}}$ denote the clean $K$--NN radius in $\mathcal K$, 
bounded by the clean $\mathcal K$ envelope.
Then
\[
\GW^{2}\!\Big(\mathcal N_{K}^{\mathcal C}(x),\ \mathcal N_{K}^{\mathcal K}(x)\Big)
\ \ge\
2\theta(1-\theta)\Big(\gamma_{\mathcal C}^{\mathrm{clean}}-2R_{\mathcal K}^{\mathrm{clean}}\Big)_{+}^{2}.
\]
\emph{Proof sketch.} Identical to Theorem~\ref{thm:gw-adv-lower} (Steps~1--4) with ``adversarial’’ replaced by ``clean’’ and $R_{\mathcal K}^{\mathrm{adv}}$ replaced by $R_{\mathcal K}^{\mathrm{clean}}$. The cross-pairs $(H\times L)\cup(L\times H)$ enforce an entrywise gap of at least $\gamma_{\mathcal C}^{\mathrm{clean}}-2R_{\mathcal K}^{\mathrm{clean}}$; uniform marginals then yield the factor $2\theta(1-\theta)$ after averaging over couplings.
\end{theorem}

\begin{theorem}[Adversarial cross--space GW upper bound via $K$-NN star radii]
\phantomsection
\label{thm:gw-adv-upper}
Fix a query $x$ and perturbation $\eta$, and consider the adversarial neighborhoods
\[
\widetilde{\mathcal N}_{K}^{\mathcal C}(x+\eta), \qquad
\widetilde{\mathcal N}_{K}^{\mathcal K}(x+\eta),
\]
each with uniform measure on $K$ points.
Let $R_{\mathcal C}^{\mathrm{adv}}$ and $R_{\mathcal K}^{\mathrm{adv}}$ denote the adversarial $K$--NN radii in $\mathcal C$ and $\mathcal K$, respectively (each bounded by the adversarial envelopes in those spaces).
Then
\[
\GW^{2}\!\Big(\widetilde{\mathcal N}_{K}^{\mathcal C}(x+\eta),\ 
\widetilde{\mathcal N}_{K}^{\mathcal K}(x+\eta)\Big)
\ \le\
4\Big(1-\tfrac{1}{K}\Big)\,\Big(R_{\mathcal C}^{\mathrm{adv}}+R_{\mathcal K}^{\mathrm{adv}}\Big)^{2}.
\]
\emph{Proof sketch.} Copy Theorem~\ref{thm:gw-clean-xspace-star} verbatim, replacing clean radii by adversarial radii. The identity coupling and Frobenius argument give
\(\GW^{2}\le K^{-2}\|D_{\mathcal C}-D_{\mathcal K}\|_{F}^{2}\).
Each off-diagonal entry is at most \(2R^{\mathrm{adv}}\) by triangle inequality via the adversarial centers, so
\(\|D_{\mathcal C}-D_{\mathcal K}\|_{F}\le 2\sqrt{K(K-1)}(R_{\mathcal C}^{\mathrm{adv}}+R_{\mathcal K}^{\mathrm{adv}})\),
which yields the claim.
\end{theorem}

\subsection{Gap theorem in GW}
We now combine the clean and adversarial envelopes established in 
Theorems~\ref{thm:gw-clean-xspace-star}, \ref{thm:gw-adv-lower},
\ref{thm:gw-clean-lower}, and \ref{thm:gw-adv-upper}
to show that their GW discrepancies are separated by a margin that
is nonvanishing in high dimension.

\GWGap*
\begin{proof}
On the joint event where all four bounds hold:
\[
\GW^{2}_{\mathrm{adv}} \;\ge\; L_{\mathrm{adv}}, \qquad
\GW^{2}_{\mathrm{clean}} \;\le\; U_{\mathrm{clean}},
\]
which implies 
\[
\GW^{2}_{\mathrm{adv}}-\GW^{2}_{\mathrm{clean}} \;\ge\; L_{\mathrm{adv}}-U_{\mathrm{clean}}.
\]
Similarly,
\[
\GW^{2}_{\mathrm{clean}} \;\ge\; L_{\mathrm{clean}}, \qquad
\GW^{2}_{\mathrm{adv}} \;\le\; U_{\mathrm{adv}},
\]
which implies 
\[
\GW^{2}_{\mathrm{clean}}-\GW^{2}_{\mathrm{adv}} \;\ge\; L_{\mathrm{clean}}-U_{\mathrm{adv}}.
\]
Taking the maximum of these two margins yields the result.
\end{proof}

\paragraph{Discussion.}  

\textit{Clean side.} 
For fixed $K$, as $d\to\infty$, the clean $K$--NN radii in $\mathcal C$
concentrate according to Theorem~\ref{thm:cnn_clean_diameter}, yielding
\[
R_{\mathcal C}
\;=\;
O\!\Big(\mu_{\mathcal C}
+\sqrt{\tfrac{\log K}{d}}\Big),
\]
so the excess clean radius above the median scale contracts at rate
$O\!\big(\sqrt{\tfrac{\log K}{d}}\big)$.
On the $\mathcal K$ side, the clean envelope from
Theorem~\ref{thm:K-annealed} includes key--induced fluctuations scaling as
$C_{\max}\sqrt d$ (up to $\sqrt{\log K}$ factors), and therefore
$R_{\mathcal K}^{\mathrm{clean}}$ need not vanish with dimension under the
present assumptions.
Accordingly, the clean cross--space GW upper bound remains uniformly controlled
but does not rely on contraction of $\mathcal K$ radii.

\textit{Adversarial side.} 
A perturbation $\|\eta\|_\infty=\varepsilon$ shifts the $K$--NN star distances
in $\mathcal C$ by
\[
\delta_{\mathcal C}(\varepsilon)
\;=\;
\Theta\!\Big(\tfrac{\sqrt d}{\sqrt{\delta_{\mathrm{grad}}}}\,
\sigma\,\varepsilon\Big),
\]
which induces a separation gap $\gamma_{\mathcal C}$ between inner and outer
neighbors.
Meanwhile, absorption in $\mathcal K$ is controlled by the adversarial envelope
$ R_{\mathcal K}^{\mathrm{adv}} =
O\!\big(C_k\sqrt d\big)$,
as established in Theorem~\ref{thm:K-adv-concise}.
Let $L_{\mathrm{adv}}$ denote the lower bound in Theorem~\ref{thm:gw-adv-lower}.
Substituting the bounds on $\gamma_{\mathcal C}$ from
Remark~\ref{rem:gammaC-instantiation} and on
$\operatorname{diam}(\widetilde{\mathcal N}_{K}^{\mathcal K})$
from Theorem~\ref{thm:K-adv-concise} yields
\[
L_{\mathrm{adv}} = \Omega(d\,\sigma^2\varepsilon^2)
\quad
\text{(up to constants and confidence parameters)}.
\]

\textit{Gap scaling.}  
Combining the clean upper envelope with the adversarial lower envelope, we obtain
a margin of the form
\[
\tau
\;=\;
\Omega\!\big(d\,\sigma^2\varepsilon^2\big)
\;-\;
O\!\big(R_{\mathcal K}^{\mathrm{clean}}{}^{2}\big),
\]
where the second term reflects the clean $\mathcal K$--side geometry.
In particular, even though $R_{\mathcal K}^{\mathrm{clean}}$ does not contract
with $d$, the adversarial contribution grows linearly in dimension, so the gap
is asymptotically nonvanishing whenever
$d\,\sigma^2\varepsilon^2 \gg R_{\mathcal K}^{\mathrm{clean}}{}^{2}$.
Thus, robustness in high dimension is achieved through separation growth on the
$\mathcal C$ side rather than contraction on the $\mathcal K$ side.

\textit{Implication for entropic solvers.}  
Because the GW gap remains bounded away from zero in this regime, entropic
relaxations that preserve relative ordering inherit the same discriminative
power, justifying our detector design.

\subsection{Entropic GW Corollaries and Risk Control}
\label{app:gw-risk}

\begin{corollary}[Entropic relaxation preserves lower bounds]
\label{cor:entropic-lb}
For any metric--measure spaces $(X,d_{X},\mu)$ and
$(Y,d_{Y},\nu)$ and any $\lambda>0$,
\[
\GW^2_\lambda(X,Y)\ \ge\ \GW^2(X,Y).
\]
Hence the lower bounds of Theorems~\ref{thm:gw-adv-lower}
and~\ref{thm:gw-clean-lower} remain valid verbatim under entropic GW.
\end{corollary}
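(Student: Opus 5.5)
The plan is to compare the two objectives coupling by coupling and then pass to the infimum. I will use the definition of $\GW^2_\lambda$ from Section~\ref{sec:multi}, where the regularizer is $\lambda\,\mathrm{KL}(\gamma\,\|\,\mu\otimes\nu)$. For any fixed coupling $\gamma\in\Pi(\mu,\nu)$, write $\mathcal E(\gamma):=\int L^2\,d(\gamma\otimes\gamma)$ for the quadratic GW objective of Definition~\ref{def:gw2}. The entropic objective is then $\mathcal E(\gamma)+\lambda\,\mathrm{KL}(\gamma\,\|\,\mu\otimes\nu)$.

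The key fact is Gibbs' inequality: $\mathrm{KL}(\gamma\,\|\,\mu\otimes\nu)\ge 0$ for every probability measure $\gamma$. It takes the value $+\infty$ when $\gamma\not\ll\mu\otimes\nu$, but that is harmless here since we only need a lower bound. Since $\lambda>0$, for every $\gamma\in\Pi(\mu,\nu)$ we get
\[
\mathcal E(\gamma)+\lambda\,\mathrm{KL}(\gamma\,\|\,\mu\otimes\nu)\;\ge\;\mathcal E(\gamma)\;\ge\;\inf_{\gamma'\in\Pi(\mu,\nu)}\mathcal E(\gamma')=\GW^2(X,Y).
\]
The right-hand side does not depend on $\gamma$. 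Taking the infimum of the left-hand side over $\gamma\in\Pi(\mu,\nu)$ therefore gives $\GW^2_\lambda(X,Y)\ge\GW^2(X,Y)$.

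For the consequence about the lower bounds, I chain inequalities. Theorem~\ref{thm:gw-adv-lower} gives $\GW^2\big(\widetilde{\mathcal N}^{\mathcal C}_K,\widetilde{\mathcal N}^{\mathcal K}_K\big)\ge 2\theta(1-\theta)\big(\gamma_{\mathcal C}-D^{\mathrm{adv}}_{\mathcal K}\big)_+^2$ on its high-probability event. The inequality just proved then yields the same bound for $\GW^2_\lambda$ on the same event, with the same probability. The same argument applies verbatim to Theorem~\ref{thm:gw-clean-lower}. This works because those proofs lower-bound $\mathcal E(\pi)$ uniformly over all couplings with uniform marginals, and those are exactly the couplings admissible in the entropic problem.

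The main obstacle is notational rather than mathematical. The primer in Appendix~\ref{app:gw-primer} writes the regularizer as $-\gamma H(\pi)$. For discrete measures $H(\pi)\ge 0$, so that version is \emph{smaller} than the unregularized objective, and the claimed inequality could fail. I would resolve this by noting that, for $\pi\in\Pi(\mu,\nu)$,
\[
\mathrm{KL}(\pi\,\|\,\mu\otimes\nu)=-H(\pi)+H(\mu)+H(\nu).
\]
The two formulations thus differ by the additive constant $\lambda\big(H(\mu)+H(\nu)\big)$ and share the same minimizers. The corollary is stated for, and holds in, the KL normalization used in the main text. I would state explicitly that this is the convention adopted.
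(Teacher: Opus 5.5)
Your proof is correct and is the argument the paper implicitly relies on; the paper states this corollary without proof. Nonnegativity of $\mathrm{KL}(\gamma\,\|\,\mu\otimes\nu)$ makes the entropic objective dominate the quadratic one for every coupling, and the lower bounds of Theorems~\ref{thm:gw-adv-lower} and~\ref{thm:gw-clean-lower} then transfer by chaining on the same high-probability events. Your remark about the primer's $-\gamma H(\pi)$ normalization is also right: under it the regularized value is at most $\GW^2$ (strictly so for nontrivial discrete marginals), so the corollary holds only in the KL normalization of Section~\ref{sec:multi}, which differs from the entropy form by the constant $\lambda\bigl(H(\mu)+H(\nu)\bigr)$.
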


\begin{corollary}[Entropic slack in upper bounds]
\label{cor:entropic-ub}
For uniform marginals on $K$ points,
\[
\GW^2_\lambda(X,Y)\ \le\ \GW^2(X,Y)\ +\ 2\lambda\log K.
\]
Thus the upper bounds of Theorems~\ref{thm:gw-clean-xspace-star}
and~\ref{thm:gw-adv-upper} hold with additive slack $2\lambda\log K$.
\end{corollary}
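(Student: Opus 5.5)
The plan is a one-coupling argument: evaluate the entropic objective at an optimal plan for the unregularized problem, and bound the entropic penalty at that plan uniformly by a multiple of $\log K$. Write $\mu=\nu=\tfrac1K\sum_{i=1}^K\delta_{(\cdot)}$ for the two uniform measures on $K$ points. The coupling polytope $\Pi(\mu,\nu)$ is a compact subset of $\mathbb R^{K\times K}$, and the quadratic GW objective $\sum_{i,i',j,j'}\pi[i,j]\pi[i',j']\,(d_X(x_i,x_{i'})-d_Y(y_j,y_{j'}))^2$ is continuous in $\pi$. Hence the infimum in Definition~\ref{def:gw2} is attained at some $\pi^\star$. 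Since $\GW^2_\lambda$ is an infimum over the same set $\Pi(\mu,\nu)$, evaluating its objective at $\pi^\star$ gives
\[
\GW^2_\lambda(X,Y)\ \le\ \GW^2(X,Y)+\lambda\,\mathrm{KL}(\pi^\star\,\|\,\mu\otimes\nu).
\]
It therefore suffices to show $\mathrm{KL}(\pi\|\mu\otimes\nu)\le 2\log K$ for every $\pi\in\Pi(\mu,\nu)$.

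For the KL bound, expand with $(\mu\otimes\nu)[i,j]=K^{-2}$:
\[
\mathrm{KL}(\pi\|\mu\otimes\nu)=\sum_{i,j}\pi[i,j]\log\big(K^2\pi[i,j]\big)=2\log K-H(\pi),
\]
where $H(\pi)$ is the Shannon entropy. Because $H(\pi)\ge 0$, this already gives the claimed bound $2\log K$. One can sharpen it by interpreting the KL as the mutual information of the coupling, so $\mathrm{KL}\le\min\{H(\mu),H(\nu)\}=\log K$; equivalently, $H(\pi)\ge H(\mu)=\log K$ since the joint entropy dominates a marginal's entropy. Substituting into the display above yields the corollary. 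The consequence for Theorems~\ref{thm:gw-clean-xspace-star} and~\ref{thm:gw-adv-upper} then follows by chaining the new inequality with their quadratic-GW upper bounds.

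The main point needing care is the sign convention, since the paper uses two forms of regularization. In Section~\ref{sec:multi} the regularizer is $+\lambda\,\mathrm{KL}(\gamma\|\mu\otimes\nu)$, which is the form used above. In Appendix~\ref{app:gw-primer}, \eqref{eq:def-EGW} uses $-\gamma H(\pi)$ instead. By the identity above, the two objectives differ only by the constant $2\lambda\log K$ (taking $\gamma=\lambda$), so they share minimizers. I would state the corollary for the KL form. For the $-\lambda H$ form I would simply remark that its penalty is nonpositive, so that version is even bounded by $\GW^2$ itself without slack.
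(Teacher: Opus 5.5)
Your proposal is correct and complete. The paper states this corollary without a written proof, and your argument is exactly the standard reasoning behind the stated constant: evaluate the KL-regularized objective at an optimal unregularized plan $\pi^\star$, then use $\mathrm{KL}(\pi\,\|\,\mu\otimes\nu)=2\log K-H(\pi)\le 2\log K$. The chaining to Theorems~\ref{thm:gw-clean-xspace-star} and~\ref{thm:gw-adv-upper} goes through because the new inequality is deterministic. Your mutual-information sharpening to $\lambda\log K$ is also valid. Reading the statement in the KL convention is the right choice: under the $-\gamma H(\pi)$ form of the GW primer appendix, the companion lower bound (Corollary~\ref{cor:entropic-lb}) would fail.
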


\begin{corollary}[Quenched key version]
\label{cor:quenched}
All high--probability envelopes on the $\mathcal K$--side radii
(Theorems~\ref{thm:K-annealed}, \ref{thm:gw-adv-lower},
\ref{thm:gw-adv-upper}) were stated in the annealed sense, averaging over
random keys $(b,k)$. By conditioning, the same inequalities hold for any
fixed $(b,k)$ with identical probability bounds over the randomness of
image sampling and adversarial perturbations.
\end{corollary}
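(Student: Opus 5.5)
The plan is to derive the quenched statement in two layers: a general measure-theoretic step, then a re-derivation of the one genuinely probabilistic ingredient with the roles of key and data exchanged. First I place everything on a product probability space $\Omega=\Omega_{\mathrm{key}}\times\Omega_{\mathrm{data}}$. Here $(b,k)$ is drawn from $\mathrm{Unif}(\{0,1\}^d)\otimes\mathcal P_k$, and the clean sample, query and perturbation $\eta$ are drawn independently of the key. Each envelope event $E$ (the events in Theorem~\ref{thm:K-annealed}, Theorem~\ref{thm:K-adv-concise}, and through them Theorems~\ref{thm:gw-adv-lower} and~\ref{thm:gw-adv-upper}) is jointly measurable. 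By Fubini, $\Pr_{\Omega}(E^c)=\mathbb E_{(b,k)}\big[\Pr_{\mathrm{data}}(E^c\mid b,k)\big]$. This already gives the quenched statement in the weak form: for every $\alpha\in(0,1)$, Markov's inequality yields a key set of probability at least $1-\alpha$ on which the conditional failure probability is at most $\delta/\alpha$. Corollary~\ref{cor:K-quenched} is the special case in which data are frozen.

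To obtain \emph{identical} probability bounds for every fixed $(b,k)$, as claimed, the averaging argument is not enough. I would instead re-prove the single concentration step that used key randomness. In the proof of Theorem~\ref{thm:K-annealed}, the McDiarmid step \eqref{eq:Rij-mcdiarmid} treated $R_{ij}(b)$ as a function of the $d$ independent bits. For fixed $(b,k)$ I regard $R_{ij}$ as a function of the independent pixel coordinates of $x_i,x_j$, which are i.i.d.\ as in Lemma~\ref{lemma:collision-prob}. Changing one pixel moves one coordinate of $T_k^{(b)}(x_i)\in[0,1]^d$ by at most $1$. The norm-Lipschitz computation \eqref{eq:norm-lip} then gives a bounded difference of at most $1$ per coordinate. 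This is dominated by $C_{\max}=2\sup_k G_k$ whenever $G_k\ge 1/2$, which always holds since prime gaps are at least $1$.

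With this in hand, the same McDiarmid tail $2\exp(-2t^2/(dC_{\max}^2))$ holds, up to the factor-two count of coordinates, which I absorb into constants. The union bound over pairs, the choice of $t$, and the deterministic steps \eqref{eq:diam-decomp} onward then carry over verbatim. The remaining ingredients need no change. The adversarial additions in Theorem~\ref{thm:K-adv-concise} (the Chebyshev displacement from Assumption~\ref{ass:A1} and the label term) are already stated over data and perturbation randomness. The GW lower and upper bounds are deterministic given the radii.

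The main obstacle is the centering term. The envelope is expressed through $\mu_{\mathcal K}^{\max}(x)$ and the scale condition $\max_{i<j}\mathbb E_{b\mid k}[D_{ij}]\le 2A\mu_{\mathcal K}^{\max}(x)$, both of which are key-averages with the data fixed. In the quenched version, McDiarmid centers at $\mathbb E_{\mathrm{data}}[D_{ij}\mid b,k]$ instead. I would first try to simply redefine the median as data-annealed and key-quenched and impose the analogous scale condition. Under the data-quenched/key-annealed symmetry of Definition~\ref{def:muK-key}, the bounds then read identically. If an honest identification of the two medians is required, I would fall back on the Fubini/Markov form above and state the quenched bound with the degraded confidence $\delta/\alpha$.
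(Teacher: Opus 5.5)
Your first layer is exactly what the paper's entire justification (``by conditioning'') can legitimately deliver. By Fubini, the annealed failure probability is the key-average of the conditional failure probabilities, and that yields only a good-key set, as in Corollary~\ref{cor:K-quenched}. You are right that this cannot give the identical bound for \emph{every} fixed $(b,k)$. In Theorem~\ref{thm:K-annealed} the data are frozen, so fixing the key leaves no randomness at all. Logically, the annealed bound even allows a single key $b_0$ to lie in the exceptional set for every dataset, in which case the quenched probability at $b_0$ is zero.

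Your second layer, however, does not close this. It needs a data model (i.i.d.\ pixels, with labels then random too) that is not among the hypotheses of these data-quenched theorems. More decisively, pixel-wise McDiarmid centers at $\mathbb E_{\mathrm{data}}[D_{ij}\mid b,k]$, and nothing bounds this by $2A\,\mu_{\mathcal K}^{\max}(x)$. That quantity is a key-median computed with the data held fixed, and it becomes random once the data are. The scale condition is likewise a statement about key-averages at fixed data, so it does not transfer. Redefining the median proves a different inequality, and the Markov fallback degrades the confidence to $\delta/\alpha$. Neither is ``the same inequality with identical probability bounds''. The identification you flag as the main obstacle is exactly the missing step, and the proposal leaves it open.

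What actually makes the literal claim true is much cruder. The inequalities of Theorems~\ref{thm:gw-adv-lower} and~\ref{thm:gw-adv-upper} are deterministic in the realized $\gamma_{\mathcal C}$, diameters and radii, as you note. So only the $\mathcal K$-envelopes matter, and these hold surely:
\begin{itemize}
\item $T_k^{(b)}$ takes values in $[0,1]^d$ and the label coordinate is at most $1$, so every $d_{\mathcal K}$-distance is at most $\sqrt d$.
\item The key-fluctuation term alone, with $C_{\max}\ge C_k=2G_k\ge 2$ measured in unscaled integer prime-gap units, is at least $2\sqrt{2\log 2}\,\sqrt d>\sqrt d$.
\end{itemize}
Your remark that the per-pixel change $1$ is dominated by $C_{\max}$ is the same fact in disguise. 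Hence, for every fixed $(b,k)$, every dataset and every perturbation, the bounds of Theorems~\ref{thm:K-annealed} and~\ref{thm:K-adv-concise} hold with probability one. This proves the corollary as stated, but by vacuity of the envelopes, not by conditioning.

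If the sensitivity constant is rescaled to $2G_k/10^k$ so that the envelopes carry information, conditioning gives only the good-key-set form. A genuinely key-quenched version then needs exactly what your sketch lacks: data-side concentration plus an explicit assumption tying the data-side center to the key-annealed median.
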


\paragraph{Risk control}
The GW gap theorem (Theorem~\ref{thm:gw-gap}) ensures that, with high
probability, the clean and adversarial discrepancies
$\GW^2_{\mathrm{clean}}$ and $\GW^2_{\mathrm{adv}}$ are separated by a
margin $\tau>0$. In practice, however, we only observe the empirical,
regularized estimator $\widehat{\GW^2_\lambda}$, which deviates from
the truth due to (i) statistical sampling noise and (ii) entropic bias.
If these deviations exceed $\tau$, the detector may fail. The next lemma
formalizes that controlling the estimation error to $\tau/3$ suffices.

\RiskControl*
\begin{proof}
On the event $E_{\mathrm{gap}}$, the clean and adversarial discrepancies satisfy conditions 
$\GW^2_{\mathrm{clean}} \;\le\; U_{\mathrm{clean}}$ 
and $\GW^2_{\mathrm{adv}} \;\ge\; L_{\mathrm{adv}}$,
with $L_{\mathrm{adv}}-U_{\mathrm{clean}} \ge \tau_{\mathrm{adv}}$, 
and symmetrically
$\GW^2_{\mathrm{clean}} \;\ge\; L_{\mathrm{clean}}$ 
and $\GW^2_{\mathrm{adv}} \;\le\; U_{\mathrm{adv}}$,
with $L_{\mathrm{clean}}-U_{\mathrm{adv}} \ge \tau_{\mathrm{clean}}$.  
By definition, $\tau=\max\{\tau_{\mathrm{adv}},\tau_{\mathrm{clean}}\}>0$, 
so there exists a threshold $t^*$ lying strictly between the clean and adversarial ranges, with a buffer of at least $\tfrac{\tau}{2}$ to each side.

In the case of a \textit{clean instance}, on $E_{\mathrm{est}}$ we have that 
$\widehat{\GW^2_\lambda} 
\;\le\; \GW^2_{\mathrm{clean}}+\tfrac{\tau}{3}
\;\le\; U_{\mathrm{clean}}+\tfrac{\tau}{3}$.
Since $t^*\ge U_{\mathrm{clean}}+\tfrac{\tau}{2}$, we conclude
$\widehat{\GW^2_\lambda} \;\le\; t^*-\tfrac{\tau}{6}\;<\; t^*$,
so the classifier correctly outputs ``clean.''

For an \textit{adversarial instance}, on $E_{\mathrm{est}}$, we have $\widehat{\GW^2_\lambda} 
\;\ge\; \GW^2_{\mathrm{adv}}-\tfrac{\tau}{3}
\;\ge\; L_{\mathrm{adv}}-\tfrac{\tau}{3}$.
Since $t^*\le L_{\mathrm{adv}}-\tfrac{\tau}{2}$, we similarly conclude $\widehat{\GW^2_\lambda} \;\ge\; t^*+\tfrac{\tau}{6}\;>\; t^*$, so the classifier correctly outputs ``adversarial.''

Thus, on $E_{\mathrm{gap}}\cap E_{\mathrm{est}}$, the plug--in classifier is error--free.  
Finally, since $E_{\mathrm{gap}}$ holds with probability at least $1-\delta_{\mathrm{gap}}$ (from Theorem~\ref{thm:gw-gap}) and $E_{\mathrm{est}}$ with probability at least $1-\delta_{\mathrm{est}}$, a union bound yields
$\mathbb P(\text{misclassification}) \;\le\; \delta_{\mathrm{gap}}+\delta_{\mathrm{est}}$.
\end{proof}

\paragraph{Discussion.}
Lemma~\ref{lem:risk-prob} formalizes the transition from a \emph{theoretical gap} 
to a \emph{practical detector}. Theorem~\ref{thm:gw-gap} ensures a 
margin $\tau$ exists between clean and adversarial discrepancies. 
The lemma shows that if the empirical entropic GW estimator concentrates 
within $\tau/3$ of the truth, then a midpoint threshold $t^*$ separates the 
two classes with zero error. The factor $1/3$ is convenient: it splits the 
error budget evenly, allowing statistical variance and entropic bias to each 
consume at most $\tau/6$. This provides a direct analogue to margin-based 
classifiers in statistical learning: once the theoretical gap is positive, 
robust classification depends only on estimator concentration, not on further 
geometric properties of $\mathcal C$ or $\mathcal K$. 
In particular, higher dimension amplifies $\tau$, so the limiting risk is 
controlled primarily by solver accuracy and sample complexity rather than 
geometry itself.

\section{Details of Adversarial Image Generation}
\label{app:sec:adv_image_generation}
We focus on the following white-box and black-box attacks in this work across the supervised and zero-shot settings:

\subsection{White-Box Attacks}
\label{app:sec:white_box_attack}
White-box attacks assume access to the internal parameters of the target model.

\begin{itemize}
    \item \textbf{Auto Attack} ~\citep{Croce2020AutoAttack}: A parameter-free ensemble attack combining four complementary attacks: APGD-CE, APGD-DLR, FAB-T, and Square Attack. The ensemble automatically selects optimal hyperparameters and provides reliable robustness evaluation without manual tuning.
    
    \item \textbf{Carlini \& Wagner (C\&W) Attack} \cite{carlini2017towards}: An optimization-based attack that formulates adversarial example generation as:
    \begin{equation}
    \min_{\delta} \|\delta\|_p + c \cdot f(x + \delta)
    \end{equation}
    where $f(x + \delta) = \max(\max\{Z(x + \delta)_i : i \neq t\} - Z(x + \delta)_t, -\kappa)$ with $Z$ representing logits, $t$ the target class, and $\kappa$ the confidence parameter.
    
    \item \textbf{Projected Gradient Descent (PGD) Attack} \cite{madry2018towards}: An iterative first-order adversarial attack using projected gradient descent:
    \begin{equation}
    x_{t+1} = \Pi_{S}(x_t + \alpha \cdot \text{sign}(\nabla_x \ell(\theta, x_t, y)))
    \end{equation}
    where $\Pi_S$ denotes projection onto the constraint set $S = \{x' : \|x' - x\|_\infty \leq \epsilon\}$ and $\ell$ is the loss function.
    
    \item \textbf{Auto-PGD (APGD) Attack} ~\citep{Croce2020AutoAttack}: An enhanced version of PGD with automatic step size adaptation and momentum. The step size is dynamically adjusted based on the loss trajectory:
    \begin{equation}
    \alpha_t = \alpha_0 \cdot \rho^{k_t}
    \end{equation}
    where $k_t$ counts the number of step size reductions and $\rho = 0.75$.
    
    \item \textbf{Fast Gradient Sign Method (FGSM)} ~\citep{goodfellow2015explaining}: A single-step attack that generates adversarial examples using:
    \begin{equation}
    x_{adv} = x + \epsilon \cdot \text{sign}(\nabla_x J(\theta, x, y))
    \end{equation}
    where $J$ is the cost function used to train the neural network, $\theta$ are the model parameters, and $\epsilon$ controls the perturbation magnitude.
    
    \item \textbf{Universal Adversarial Perturbation} ~\citep{Moosavi2017UP}: Generates image-agnostic perturbations that fool classifiers across different inputs:
    \begin{equation}
    \min_v \|v\|_p \text{ subject to } \mathbb{P}_{x \sim \mu}[\hat{k}(x + v) \neq \hat{k}(x)] \geq 1 - \delta
    \end{equation}
    where $v$ is the universal perturbation, $\mu$ is the data distribution, and $\delta$ is the desired fooling rate.
    
    \item \textbf{Adversarial Patch Attack} \cite{brown2017adversarial}: Generates printable adversarial patches that can cause misclassification in the physical world:
    \begin{equation}
    \hat{p} = \arg\max_p \mathbb{E}_{x,t,l}[\log \Pr(\hat{y} | A(p, x, l, t))]
    \end{equation}
    where $A(p, x, l, t)$ applies patch $p$ to image $x$ at location $l$ with transformation $t$, and $\hat{y}$ is the target class.
\end{itemize}

\subsection{Black-Box Attacks}
\label{app:sec:black_box_attack}
Black-box attacks operate without knowledge of internal model parameters.

\begin{itemize}
    \item \textbf{Frequency Attack} \cite{yin2019fourier}: Exploits the vulnerability of neural networks in the frequency domain by applying perturbations to the Fourier transform:
    \begin{equation}
    \mathcal{F}(x_{adv}) = \mathcal{F}(x) + \delta_f
    \end{equation}
    where $\mathcal{F}$ denotes the Fourier transform and $\delta_f$ represents frequency-domain perturbations.
    
    \item \textbf{Square Attack} \cite{andriushchenko2020square}: A query-efficient score-based black-box attack that uses random search within $\ell_p$ balls:
    \begin{equation}
    x_{t+1} = x_t + \eta_t \cdot h_t
    \end{equation}
    where $h_t$ is a random direction sampled uniformly from $\{-1, +1\}^d$ and $\eta_t$ is the step size adapted based on the attack success.
    
    \item \textbf{Gaussian Blur Attack} ~\citep{Zhang2022GaussianBlur}: Applies Gaussian blur to exploit the frequency bias of deep neural networks:
    \begin{equation}
    G_\sigma(x, y) = \frac{1}{2\pi\sigma^2} \exp\left(-\frac{x^2 + y^2}{2\sigma^2}\right)
    \end{equation}
    where $\sigma$ controls the blur intensity and the convolution $x_{blur} = x * G_\sigma$ generates the adversarial example.
    
    \item \textbf{Semantic Rotation Attack} ~\citep{Hosseini2018SemRotation}: Applies geometric transformations including rotations that preserve semantic content while causing misclassification:
    \begin{equation}
    R_\theta = \begin{bmatrix} \cos\theta & -\sin\theta \\ \sin\theta & \cos\theta \end{bmatrix}
    \end{equation}
    where $\theta$ represents the rotation angle applied to the input image coordinates.
    
    \item \textbf{Pixel Flip Attack} ~\citep{Su2019PixelFlip}: A sparse attack that modifies only a few pixels to cause misclassification:
    \begin{equation}
    \min |S| \text{ subject to } f(x \oplus \delta_S) \neq f(x)
    \end{equation}
    where $S$ is the set of modified pixel locations, $\delta_S$ represents the pixel modifications, and $\oplus$ denotes the modification operation.
\end{itemize}

\subsection{Adversarial Image Generation in the Supervised Setting}
\label{app:sec:supervised_adv_gen}
In the supervised setting, adversarial examples are generated against traditional classification models trained on labeled datasets. The model produces logits through a standard forward pass:
\begin{equation}
z = f_\theta(x),
\end{equation}
where $f_\theta(x)$ denotes the neural network with parameters $\theta$, $x \in \mathbb{R}^{H \times W \times C}$ is the input image, and $z \in \mathbb{R}^K$ are the raw logits for $K$ classes.  

The final classification layer is typically a linear transformation:
\begin{equation}
z = W^T h + b,
\end{equation}
where $h$ is the penultimate layer representation, $W \in \mathbb{R}^{d \times K}$ is the weight matrix, and $b \in \mathbb{R}^K$ is the bias vector.  

The predicted class probabilities are obtained via the softmax function:
\begin{equation}
P(y = c \mid x) = \frac{\exp(z_c)}{\sum_{j=1}^{K} \exp(z_j)},
\end{equation}
where $z_c$ is the logit for class $c$.  

During adversarial attack generation, the commonly used loss function is the cross-entropy loss:
\begin{equation}
\mathcal{L}(x, y) = -\log P(y \mid x) = -z_y + \log\!\left(\sum_{j=1}^{K} \exp(z_j)\right).
\end{equation}

\subsection{Adversarial Image Generation in the Zero-Shot Setting}
\label{app:sec:zero_shot_adv_gen}
In the zero-shot setting, adversarial examples are generated against Vision-Language Models (VLMs) using its image encoder model (in our case CLIP~\cite{clip}), which do not require training on the target classes. The model consists of separate image and text encoders that project inputs into a shared embedding space.

Given an input image $x \in \mathbb{R}^{H \times W \times C}$ and a set of $K$ class names $\{c_1, c_2, \ldots, c_K\}$, the zero-shot classification process proceeds as follows:

\textbf{Image Encoding:}  
The image encoder $E_I: \mathbb{R}^{H \times W \times C} \rightarrow \mathbb{R}^d$ maps the input image to an $\ell_2$-normalized embedding in the shared representation space:
\begin{equation}
v_I = \frac{E_I(x)}{\lVert E_I(x) \rVert_2},
\end{equation}
where $v_I \in \mathbb{R}^d$ represents the normalized image embedding with unit norm.

\textbf{Text Encoding:}  
For each class $c_i$, a text prompt is constructed using the template ``A photo of $c_i$''. The text encoder $E_T: \mathcal{V}^* \rightarrow \mathbb{R}^d$ maps each prompt to a normalized embedding in the same shared space:
\begin{equation}
v_{T,i} = \frac{E_T(\text{``A photo of } c_i \text{''})}{\lVert E_T(\text{``A photo of } c_i \text{''}) \rVert_2},
\end{equation}
where $\mathcal{V}^*$ denotes the vocabulary space and $v_{T,i} \in \mathbb{R}^d$ is the normalized text embedding for class $c_i$.

\textbf{Logit Computation:}  
The logits are computed as the temperature-scaled cosine similarities between the image embedding and each text embedding:
\begin{equation}
z_i = \tau \cdot v_I^T v_{T,i} = \tau \cdot \cos(v_I, v_{T,i}),
\end{equation}
where $\tau > 0$ is a  temperature parameter that controls the sharpness of the similarity distribution. The complete logit vector is:
\begin{equation}
z = \tau \cdot 
\begin{bmatrix}
v_I^T v_{T,1}, & v_I^T v_{T,2}, & \ldots, & v_I^T v_{T,K}
\end{bmatrix}^T \in \mathbb{R}^K.
\end{equation}

\textbf{Classification Decision:}  
The predicted class is obtained by selecting the class with the maximum logit value:
\begin{equation}
\hat{y} = \arg\max_{i \in \{1, \ldots, K\}} z_i = \arg\max_{i \in \{1, \ldots, K\}} v_I^T v_{T,i}.
\end{equation}

The class posterior probabilities are obtained through softmax normalization:
\begin{equation}
P(y = c_i \mid x) = \frac{\exp\big(\tau \cdot v_I^T v_{T,i}\big)}{\sum_{j=1}^{K} \exp\big(\tau \cdot v_I^T v_{T,j}\big)}.
\end{equation}

\subsection{Adversarial Attack Hyperparameter Selection}
\label{app:adv_gen_hyperparams}

In this section, we provide the detailed configuration of adversarial attack methods used in our experiments (Section~\ref{sec:empirical}). Table~\ref{table:attack_parameters} summarizes the set of hyperparameters chosen for each attack. These values are selected following common practice in the adversarial robustness literature to ensure a fair comparison across methods.

We consider a diverse set of attack strategies, including optimization-based, gradient-based, score-based, and patch-based approaches. For gradient-based methods such as PGD, Auto-PGD, FGSM, and Square Attack, we evaluate under multiple perturbation budgets with \(\epsilon \in \{4/255, 8/255\}\). Universal Perturbation is evaluated with a wider range of perturbation strengths, namely \(\epsilon \in \{4/255, 8/255, 12/255\}\).

For optimization-based attacks, the Carlini \& Wagner (CW) attack is configured with confidence parameter \(\kappa = 0.0\), following the default setting to generate minimally perturbed adversarial examples. AutoAttack is evaluated under \(\epsilon \in \{4/255, 8/255\}\), consistent with its standardized benchmark protocol.

Patch-based and spatial transformations are included to account for more physically realizable adversarial scenarios. The Patch Attack is tested with square patches of shape \((3, 8, 8)\) and \((3, 16, 16)\), while the Spatial Attack allows for up to \(30^\circ\) rotation and translations of up to \(10\%\) of the image dimensions.

In addition to the attacks described above, we also include several specialized perturbations and image corruptions to probe robustness across different perturbation modalities: a Frequency Attack with noise strength \(\text{noise\_strength}=0.05\), Gaussian Blur with \(\sigma=1.0\) (blur\_type set to ``uniform''), Pixel Flip with \(\text{num\_pixel}=5\) and attack mode set to ``random'', and a Semantic Rotation with \(\text{angle}=8^\circ\). These parameter choices are summarized in Table~\ref{table:attack_parameters} and were chosen to reflect common settings used in prior work while providing a broad coverage of perturbation types.

These configurations ensure that the evaluation captures a broad spectrum of attack types, ranging from small-norm pixel perturbations to structured and geometric transformations. See Table~\ref{table:attack_parameters} for an overview of the attack methods and the full set of parameter values used in our experiments.

\begin{table}[htbp]
\centering
\begin{tabular}{l|l}
\hline
\textbf{Attack Method} & \textbf{Parameter Values} \\
\hline
\textbf{AutoAttack (AA)} & $\epsilon = \mathbf{4/225}, 8/225$ \\
\textbf{C\&W (CW)} & $\kappa = \mathbf{0.0}$ \\
\textbf{Patch Attack (PT)} & $\mathrm{patch\_shape} = \mathbf{(3, 8, 8)}, (3, 16, 16)$ \\
\textbf{PGD} & $\epsilon = \mathbf{4/255}, 8/255$ \\
\textbf{Spatial Attack (SA)} & max rotation: $\mathbf{30}^{\circ}$, max translation: $\mathbf{10} \%$ of the image size \\
\textbf{Square Attack (SQ)} & $\epsilon = \mathbf{4/255}, 8/255$ \\
\textbf{Universal Perturbation (UP)} & $\epsilon = \mathbf{4/255}, 8/255, 12/255$ \\
\textbf{Auto-PGD (AP)} & $\epsilon = \mathbf{4/255}, 8/255$ \\
\textbf{FGSM (FG)} & $\epsilon = \mathbf{4/255}, 8/255$ \\
\textbf{Frequency Attack (FA)} & $\mathrm{noise\_strength}=\mathbf{0.05}$ \\
\textbf{Gaussian Blur (GB)} & $\sigma = \mathbf{1.0}, \mathrm{blur\_type=uniform}$ \\
\textbf{Pixel Flip (PF)} & $\mathrm{num\_pixel}=\mathbf{5}, \mathrm{attack\_mode=random}$ \\
\textbf{Semantic Rotation (SR)} & $\mathrm{angle} = \mathbf{8}$ \\
\hline
\end{tabular}
\caption{Overview of adversarial attack methods and their parameter settings. Parameter value written with $\mathbf{bold}$ represents the default value of the corresponding attack among its parameter configurations.}
\label{table:attack_parameters}
\end{table}


\subsection{Examples of Adversarial Images Supervised Setting}
\label{app:sec:example_adv_supervised}
\subsubsection{Gaussian Blur Attack}

\begin{figure}[H]
    \centering
    \includegraphics[width=0.9\linewidth]{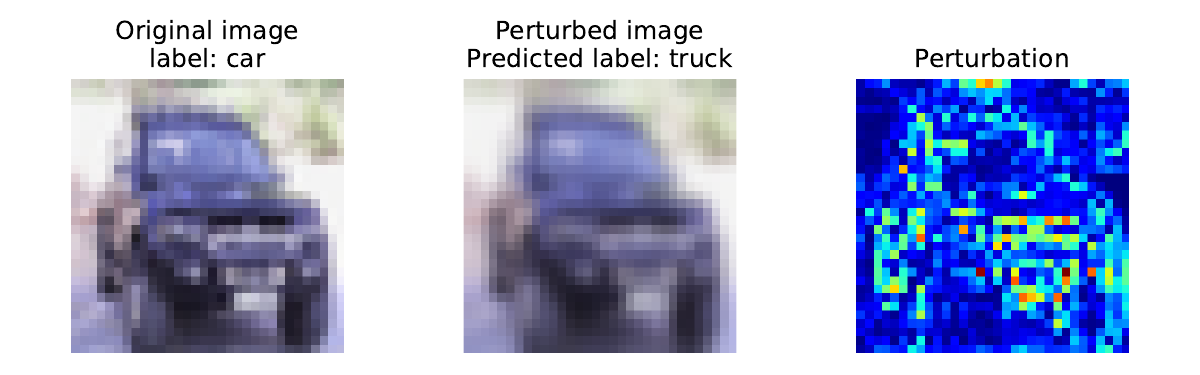}
    \caption{Comparison between Original and Perturbed Images using Gaussian blur attack. Left: Original Image with True Label \emph{car}, Center: Adversarial Image with Predicted Label \emph{truck} and Right: Perturbation}
    \label{fig:gb_car_predicted_as_truck}
\end{figure}

\begin{figure}[H]
    \centering
    \includegraphics[width=0.9\linewidth]{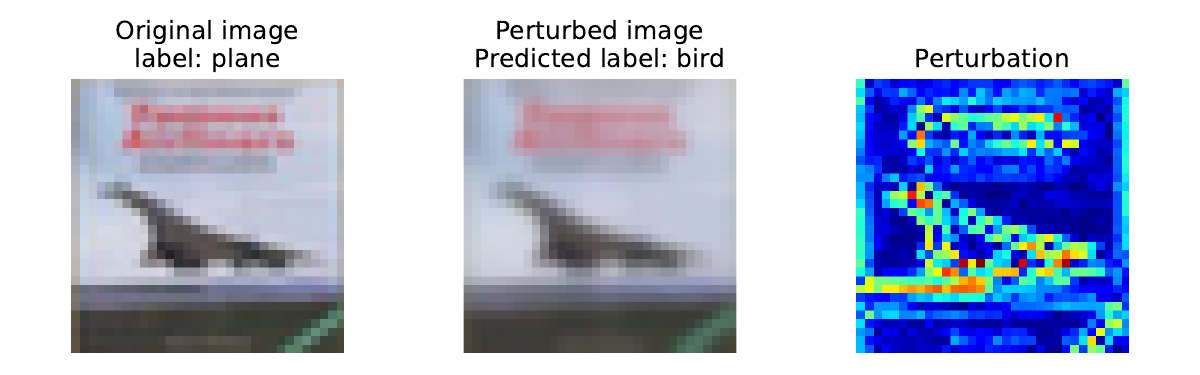}
    \caption{Comparison between Original and Perturbed Images using Gaussian blur attack. Left: Original Image with True Label \emph{plane}, Center: Adversarial Image with Predicted Label \emph{bird} and Right: Perturbation}
    \label{fig:plane_predicted_as_bird}
\end{figure}




\subsubsection{Patch Attack}

\begin{figure}[H]
    \centering
    \includegraphics[width=0.9\linewidth]{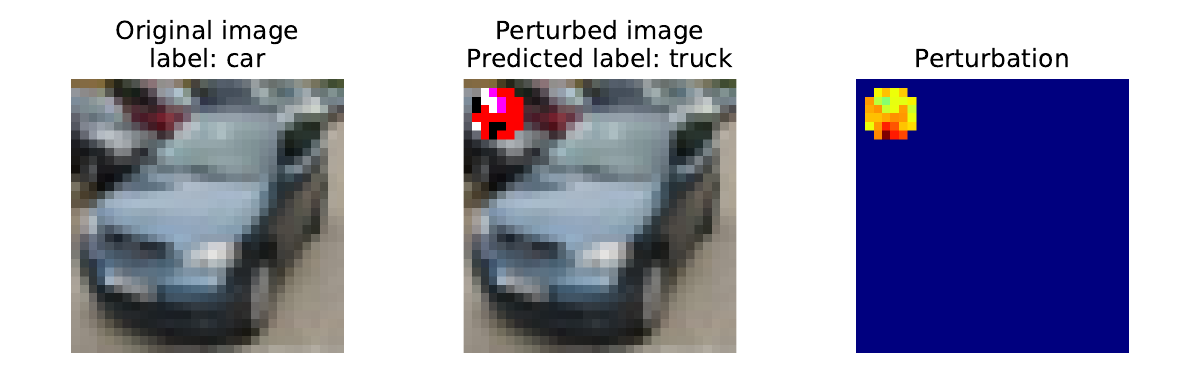}
    \caption{Comparison between Original and Perturbed Images using Patch attack. Left: Original Image with True Label \emph{car}, Center: Adversarial Image with Predicted Label \emph{truck} and Right: Perturbation}
\label{fig:patch_car_predicted_as_truck}
\end{figure}

\begin{figure}[H]
    \centering
    \includegraphics[width=0.9\linewidth]{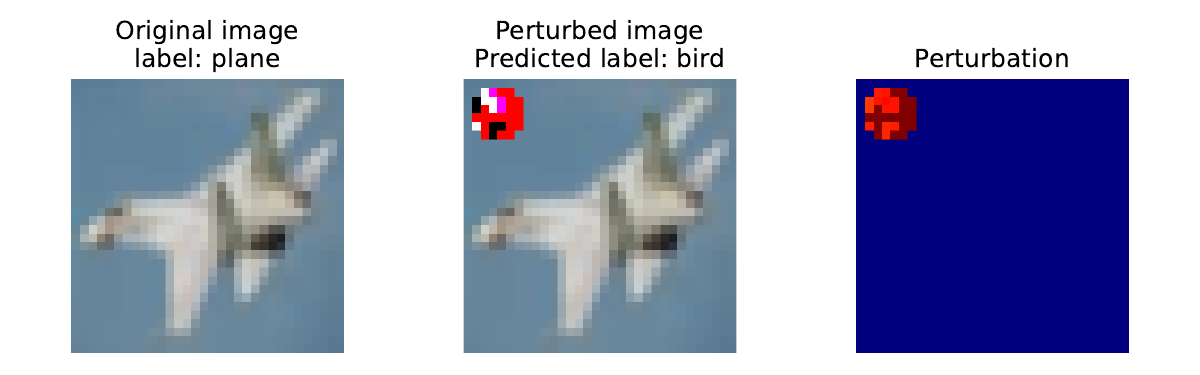}
    \caption{Comparison between Original and Perturbed Images using Patch attack. Left: Original Image with True Label \emph{plane}, Center: Adversarial Image with Predicted Label \emph{bird} and Right: Perturbation}
\label{fig:patch_plane_predicted_as_bird}
\end{figure}

\subsection{Examples of Adversarial Images Zero Shot Setting}
\label{app:sec:example_adv_zero_shot}
\subsubsection{APGD Attack}



\begin{figure}[H]
    \centering
    \includegraphics[width=0.9\linewidth]{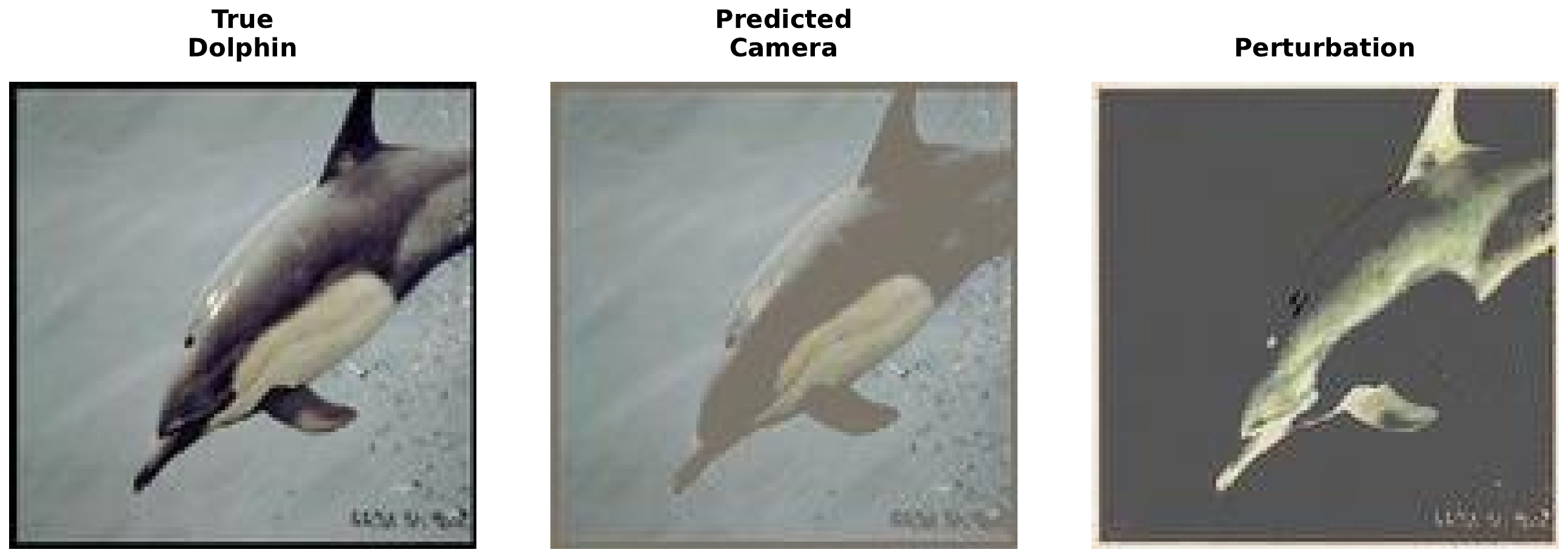}
    \caption{Comparison between Original and Perturbed Images using APGD attack. Left: Original Image with True Label \emph{Dolphin}, Center: Adversarial Image with Predicted Label \emph{Camera} and Right: Perturbation}
    \label{fig:placeholder}
\end{figure}





\subsubsection{PGD Attack}

\begin{figure}[H]
    \centering
    \includegraphics[width=0.9\linewidth]{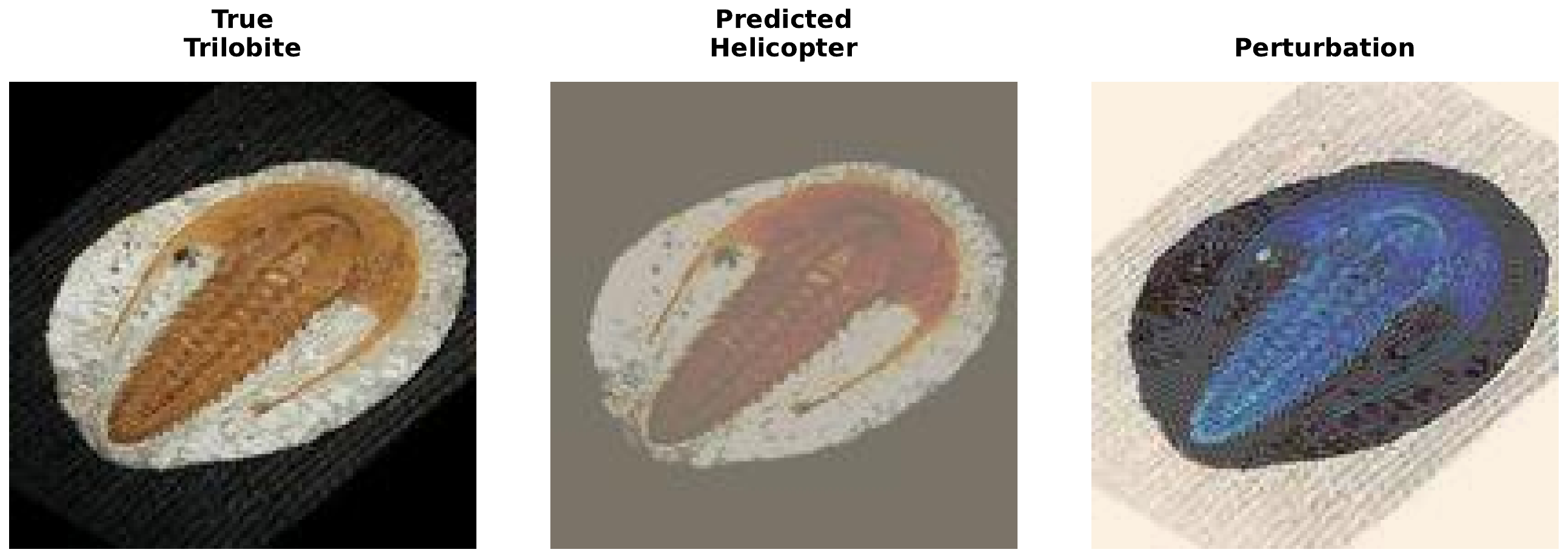}
    \caption{Comparison between Original and Perturbed Images using APGD attack. Left: Original Image with True Label \emph{Trilobite}, Center: Adversarial Image with Predicted Label \emph{Helicopter} and Right: Perturbation}
    \label{fig:placeholder}
\end{figure}



\section{Detection Methods Configuration}
\label{app:sec:adv_defence}
This section provides detailed descriptions of the adversarial detection methods evaluated in our experiments, with configurations specified in Table~\ref{table:detector_parameters}. Further, we also mention the hyperparameter settings used in our defense approach.

\subsection{Mahalanobis Detector}
The Mahalanobis detector~\citep{lee2018simple} leverages the Mahalanobis distance to measure distributional deviations of test samples from training data in the neural network's feature space. For a given sample $\mathbf{x}$ and its feature representation $\mathbf{f}(\mathbf{x})$ at layer $l$, the method computes class-conditional Gaussian distributions $\mathcal{N}(\boldsymbol{\mu}_c^{(l)}, \boldsymbol{\Sigma}^{(l)})$ from clean training data. The Mahalanobis distance is defined as:
\[
M_c^{(l)}(\mathbf{x}) = (\mathbf{f}^{(l)}(\mathbf{x}) - \boldsymbol{\mu}_c^{(l)})^T (\boldsymbol{\Sigma}^{(l)})^{-1} (\mathbf{f}^{(l)}(\mathbf{x}) - \boldsymbol{\mu}_c^{(l)})
\]
The minimum distance across all classes serves as the confidence score for adversarial detection, exploiting the property that adversarial perturbations typically push samples away from the natural data manifold.

\subsection{Feature Squeezing}
Feature Squeezing~\citep{Xu2018FS} reduces the degrees of freedom available to adversarial perturbations by applying input transformations that compress the feature space. Our implementation uses median smoothing with a $2 \times 2$ kernel and $L_1$ distance metric for comparing predictions. For an input $\mathbf{x}$ and its squeezed version $\mathbf{x}'$, the detection score is computed as:
\[
\text{score}(\mathbf{x}) = \|\mathbf{p}(\mathbf{x}) - \mathbf{p}(\mathbf{x}')\|_1
\]
where $\mathbf{p}(\cdot)$ represents the model's prediction probabilities. A threshold is determined using the training false positive rate (FPR) of 0.2, assuming legitimate inputs remain robust to minor spatial transformations while adversarial examples exhibit significant prediction changes.

\subsection{MetaDetect}
MetaDetect~\citep{Ma2019meta_adv_detect} formulates adversarial detection as a few-shot learning problem using meta-learning principles. The method employs episodic training with support sets $\mathcal{S} = \{(\mathbf{x}_i, y_i)\}_{i=1}^{N_s}$ and query sets $\mathcal{Q} = \{(\mathbf{x}_j, y_j)\}_{j=1}^{N_q}$, where $y_i \in \{0, 1\}$ indicates clean (0) or adversarial (1) samples. Our configuration uses $N_s = 1$ support example and $N_q = 15$ query examples with a conv3 architecture. The meta-detector learns a function $f_\theta$ that maps from support-query episode pairs to detection decisions, optimizing over episode distributions to generalize across different attack types.

\subsection{MagNet}
MagNet~\citep{meng2017magnet} combines detection and defense mechanisms using autoencoder-based reconstruction and probability estimation. The method trains an autoencoder $\mathcal{E}_\phi: \mathbb{R}^d \rightarrow \mathbb{R}^d$ on clean data to approximate the natural data manifold. For detection, it computes the reconstruction error:
\[
\mathcal{L}_{\text{rec}}(\mathbf{x}) = \|\mathbf{x} - \mathcal{E}_\phi(\mathbf{x})\|_1^2
\]
Additionally, MagNet estimates the probability density using the Jensen-Shannon divergence between the original and reconstructed inputs' predicted distributions. The underlying assumption is that adversarial examples, lying off the natural manifold, will exhibit higher reconstruction errors and lower probability estimates compared to legitimate inputs.

\begin{table}[h]
\centering
\begin{tabular}{ll}
\hline
\textbf{Detection Method} & \textbf{Parameters} \\
\hline
Mahalanobis Detector & $\mathrm{train\_fpr} = 0.15$ \\

Feature Squeezing & distance metric: $L_1$, squeezer: median smoothing (2$\times$2), train\_fpr=0.2 \\

MetaDetect & num\_support = 1, num\_query = 15, arch = conv3 \\

MagNet & $l_1$ norm reconstruction error, $\mathrm{train\_fpr} = 0.15$ \\
\hline
\end{tabular}
\caption{Parameters and configurations for different adversarial detection methods. Methods referenced: Mahalanobis~\citep{lee2018simple}, Feature Squeezing~\citep{Xu2018FS}, MetaDetect~\citep{Ma2019meta_adv_detect}, and MagNet~\citep{meng2017magnet}.}
\label{table:detector_parameters}
\end{table}

\subsection{Hyper Parameter Configuration for the Proposed Method}
\label{app:sec:hyperparams_our_method}
\begin{table}[htbp]
\centering
\begin{tabular}{lccc}
\toprule
\textbf{Hyperparameter} & \textbf{Candidate Values} & \textbf{Optimal Value} & \textbf{Description} \\
\midrule
$k_{\text{local}}$ & $\{8, 10, 12, 15\}$ & $8$ & Local Gromov-Wasserstein features \\
$k_{\text{global}}$ & $\{3, 5, 7\}$ & $3$ & Global Gromov-Wasserstein features \\
$\epsilon_{\text{gw}}$ & $\{0.2, 0.5, 0.8\}$ & $0.5$ & Entropic regularization strength \\
\bottomrule
\end{tabular}
\caption{\textbf{Hyperparameter search for GW features:}
We performed a grid search over the candidate values of each hyperparameter and chose the values that achieved the best trade-off between robustness and model usability. Based on this search, we selected the optimal parameters as 
$k_{\text{local}}=8$, $k_{\text{global}}=3$, and $\epsilon_{\text{gw}}=0.5$. }
\label{app:tab:hyperparams_our_method}
\end{table}

\begin{table}[htbp]
\centering
\begin{tabular}{lccc}
\toprule
\textbf{Hyperparameter} & \textbf{Candidate Values} & \textbf{Optimal Value} & \textbf{Description} \\
\midrule
Kernel & \{linear, rbf, poly\} & rbf & Choice of kernel function \\
$C$ & \{0.1, 1, 10, 100\} & 1 & Regularization parameter \\
$\gamma$ & \{scale, auto, 0.01, 0.001\} & scale & Kernel Coeff. for RBF \\
\bottomrule
\end{tabular}
\caption{\textbf{Hyperparameter search for SVM:} We performed a grid search over the candidate values of each hyperparameter and selected the optimal configuration based on validation accuracy. The chosen parameters are Kernel = rbf, $C = 1$, $\gamma =$ scale.}
\label{app:tab:hyperparams_svm}
\end{table}

\section{Adaptive Attack Formulation}
\label{app:sec:adaptive_attack}
Evaluating the robustness of a defense mechanism against an adaptive adversary is crucial. We consider an adversary who possesses complete knowledge of the defense's architecture, including the classifier $f_\theta(x)$, the CNN feature extractor $\phi_{\mathrm{cnn}}(x)$, and the crypto feature extractor $\phi_{\mathrm{cr}}^{(b)}(x)$ with its associated transform $T_b$. However, the adversary is unaware of the defender's specific, fixed secret bit vector $b^\star \in \{0,1\}^D$ used in deployment. This section formalizes the adversary's objective and optimization strategy to generate adversarial examples under this realistic uncertainty, focusing on two distinct consistency-based attacks. This approach is typical for evaluating defenses against strong, adaptive attackers \cite{athalye2018obfuscated}.

\subsection{Attacker's Prior over Secret Bits}

To account for the unknown secret $b^\star$, the adversary models it as a random variable $b$ drawn from a prior distribution $p(b)$. This prior is constructed as a mixture model over a set of plausible Bernoulli distributions, $\mathcal{M}$, reflecting the adversary's uncertainty about the specific statistical properties of $b^\star$:
$$
p(b) \;=\; \frac{1}{|\mathcal{M}|}\sum_{m\in\mathcal{M}} \prod_{j=1}^D \mathrm{Bernoulli}(b_j;\,p_m),
$$
where $p_m$ corresponds to the individual Bernoulli success probability for each distribution type in $\mathcal{M}$ (e.g., $p_m = 0.5$ for uniform or Gaussian-threshold components, and specific probabilities like $0.3, 0.7$ for biased Bernoulli components). This mixture prior allows the adversary to account for various possibilities of how the defender might have generated $b^\star$. Modeling unknown parameters in this manner is a standard robust optimization technique \cite{BenTal2009RobustOptimization}.

\subsection{Attacker's Objective Function}
\label{sec:AdaptiveAttackDetails}
The adversary's goal is to craft an adversarial example $x$ from a benign input $x_0$ that achieves misclassification by $f_\theta(x)$ while simultaneously maintaining a high degree of feature consistency with $x_0$. The latter ensures the adversarial example does not trip the defense's detection mechanisms, particularly those relying on the crypto features. Since the specific $b^\star$ is unknown, the adversary targets an \textit{average} consistency, minimizing the expected penalty under their prior $p(b)$.

The general adversarial objective is:
$$
\max_{x \in \mathcal{X}} \mathcal{L}(x; x_0, y) \;=\; \ell\!\left(f_\theta(x), y\right) \;-\; \lambda\, \mathbb{E}_{b\sim p(b)}\!\left[\,C(x, x_0; b)\,\right],
$$

where $\ell(f_\theta(x), y)$ is the cross-entropy loss for the true label $y$, which the adversary seeks to maximize; $\lambda > 0$ is a weighting factor that balances the misclassification objective against the consistency penalty --- a formulation commonly used in adversarial attacks to trade off attack success and imperceptibility or stealth \cite{carlini2017towards}; $C(x, x_0; b)$ quantifies the discrepancy between features of $x$ and $x_0$ for a given $b$, with lower values of $C$ implying better stealth against consistency checks; and $\mathcal{X}$ defines the allowed perturbation space, typically restricted to a range $[a,b]^{H\times W\times C}$ for pixel values.

\subsection{Consistency Penalties ($C(x, x_0; b)$)}

We define the consistency penalty $C(x, x_0; b)$ using an OT-like discrepancy metric, $D_{\mathrm{OT}}(\cdot, \cdot)$. This metric compares feature vectors ($L_2$ distance if dimensions match) or their statistical summaries ($L_2$ distance between mean, std, min, max, skewness, kurtosis if dimensions differ or comparison of statistics is explicitly requested).

For this study, we consider two specific attack formulations based on distinct consistency penalties:

\subsubsection{Cross-Space Consistency Attack ($C_{\mathrm{cross}}$)}
This attack targets the defense by imposing consistency across both the standard CNN feature space and the specialized crypto feature space. The adversary aims to ensure that the features extracted from the adversarial example $x$ remain similar to those from the clean input $x_0$ in both domains. The penalty term is defined as the sum of discrepancies in each feature space:
$$
C_{\mathrm{cross}}(x, x_0; b) \;=\; D_{\mathrm{OT}}\!\big(\phi_{\mathrm{cnn}}(x), \phi_{\mathrm{cnn}}(x_0)\big) \;+\; D_{\mathrm{OT}}\!\big(\phi_{\mathrm{cr}}^{(b)}(x), \phi_{\mathrm{cr}}^{(b)}(x_0)\big).
$$
By minimizing this penalty, the adversarial example is constrained to modify the input in a way that, on average over $b$, preserves the inherent characteristics captured by both $\phi_{\mathrm{cnn}}$ and $\phi_{\mathrm{cr}}^{(b)}$.

\subsubsection{Multi-Scale Consistency Attack ($C_{\mathrm{ms}}$)}
This attack extends the cross-space consistency by introducing an additional constraint on global CNN feature similarity. This reflects a defense that might perform multi-scale or global consistency checks specifically on CNN features. The $C_{\mathrm{ms}}$ penalty is structured as:
$$
C_{\mathrm{ms}}(x, x_0; b) \;=\; C_{\mathrm{local}}(x, x_0; b) \;+\; C_{\mathrm{global}}(x, x_0),
$$
where:
\begin{itemize}
    \item $C_{\mathrm{local}}(x, x_0; b) = C_{\mathrm{cross}}(x, x_0; b)$ represents the local, cross-space consistency across CNN and crypto features.
    \item $C_{\mathrm{global}}(x, x_0) = D_{\mathrm{OT}}\!\big(\phi_{\mathrm{cnn}}(x), \phi_{\mathrm{cnn}}(x_0)\big)$ ensures global consistency focusing solely on CNN features. Note that $C_{\mathrm{global}}$ does not depend on $b$, as the global consistency check is assumed to be deterministic based on CNN features, which are not secrets-dependent.
\end{itemize}
The combined penalty
$$
C_{\mathrm{ms}}(x, x_0; b) \;=\; 2 \cdot D_{\mathrm{OT}}\!\big(\phi_{\mathrm{cnn}}(x), \phi_{\mathrm{cnn}}(x_0)\big)
\;+\; D_{\mathrm{OT}}\!\big(\phi_{\mathrm{cr}}^{(b)}(x), \phi_{\mathrm{cr}}^{(b)}(x_0)\big)
$$
effectively doubles the weight on CNN feature consistency, making the adversarial example potentially harder to detect by defenses performing aggregated checks on CNN features.

To optimize the objective function, the adversary utilizes an iterative PGD \cite{madry2018towards} is used. Since the objective involves an expectation over the unknown $b$, a Monte Carlo (MC) approximation is employed \cite{rubinstein2016simulation}. 

\section{Implementation Details}
\label{app:impl_details}

\subsection{Pseudocode for Cross-Space Detector}
\label{app:algo}


\begin{algorithm}[h]
\caption{Multi-Scale Cross-Space GW Detector}
\label{alg:gw-detector}
\begin{algorithmic}[1]
\Require Image $x$; $z \gets h_\theta(x)$; $p \gets T^{(b)}_k(x)$
\ForAll{$s \in \{\text{lo},\text{gl}\}$}
    \State Build $\mathcal{N}^Z_s(z), \mathcal{N}^P_s(p)$; compute $\mu^Z_s, \mu^P_s, \psi^Z_s, \psi^P_s$
    \State $g_1 \gets \GW^2_\lambda(\mu^Z_s, \mu^P_s)$; $g_2 \gets \GW^2_\lambda(\psi^Z_s, \psi^P_s)$
    \State $h \gets \textsc{Entropy}(\psi^Z_s, \psi^P_s)$; $\mathbf{f} \gets \mathbf{f} \,\|\, [g_1, g_2, h]$
\EndFor
\State \Return $\textsc{SVM}(\mathbf{f}) \in \{\text{clean},\text{adv}\}$
\end{algorithmic}
\end{algorithm}


We provide the hyperparmeter selection details for GW features in Table \ref{app:tab:hyperparams_our_method} and 
for SVM classifier in Table \ref{app:tab:hyperparams_svm}.

\section{Additional Experiments}
\label{app:additional_experiments}

\subsection{Additional Baseline Defences and Attacks}

To evaluate performance against imperceptible and optimization-free attacks, we extend our experiments to three recent low-magnitude adversarial methods, i.e., AdvAD, PGN, and BSR, which explicitly target the small-perturbation regime. These attacks probe the limits of pixel-level stability and serve as stringent tests for detectors relying on fine-grained geometric discrepancies. 

In addition to our default ResNet-18 backbone, we generate all attacks using a Vision Transformer (ViT) to assess robustness across fundamentally different architectural families. Table~\ref{tab:detection_acc_appendix} reports binary detection accuracy for all attacks and defenses. Table~\ref{tab:auroc_appendix} reports corresponding AUROC values.

Across all attack types—including classical gradient-based attacks (PGD, SQ, PT) and modern diffusion-based or non-parametric attacks (AdvAD, PGN, BSR)—our detector achieves the strongest performance on both ResNet-18 and ViT. Competing baselines degrade substantially under stronger or low-magnitude attacks, whereas our Z--P discrepancy remains highly separable across architectures, perturbation magnitudes, and attack mechanisms.

\begin{table}[H]
\centering
\scriptsize
\setlength{\tabcolsep}{6pt}
\renewcommand{\arraystretch}{1.05}
\begin{tabular}{lcccccccc}
\toprule
\textbf{Attack} & \textbf{Model} & \textbf{Ours} & MD & FS & MAD & MN & EA & BY \\ 
\midrule

\multirow{2}{*}{\textbf{PGD}} 
& ResNet-18 & \textbf{97.8} & 91.7 & 74.4 & 46.3 & 81.9 & \underline{97.5} & 69.5 \\ 
& ViT       & \textbf{95.7} & 49.8 & 70.3 & 57.5 & 49.4 & \underline{93.2} & 77.5 \\ 
\midrule

\multirow{2}{*}{\textbf{SQ}}  
& ResNet-18 & \textbf{97.6} & 89.1 & 88.5 & 45.8 & \underline{91.9} & 91.9 & 59.5 \\  
& ViT       & \textbf{96.2} & 51.0 & 87.3 & 56.3 & 49.7 & \underline{90.5} & 62.5 \\ 
\midrule

\multirow{2}{*}{\textbf{PT}}  
& ResNet-18 & \textbf{98.0} & 86.4 & 67.3 & 46.5 & 50.1 & \underline{90.1} & 78.0 \\ 
& ViT       & \textbf{95.4} & 51.3 & 71.2 & 54.9 & 49.3 & \underline{89.5} & 78.5 \\ 
\midrule

\multirow{2}{*}{\textbf{AAD}}        
& ResNet-18 & \textbf{96.4} & 54.4 & 41.3 & 49.6 & 53.2 & \underline{94.6} & 61.0 \\ 
& ViT       & \textbf{93.7} & 52.6 & 52.7 & 53.6 & 50.1 & \underline{92.7} & 80.0 \\ 
\midrule

\multirow{2}{*}{\textbf{PGN}}          
& ResNet-18 & \textbf{96.5} & 70.8 & 61.0 & 45.6 & 49.6 & \underline{94.9} & 75.0 \\ 
& ViT       & \textbf{96.9} & 52.2 & 65.8 & 53.7 & 49.9 & \underline{94.9} & 75.5 \\ 
\midrule

\multirow{2}{*}{\textbf{BSR}}          
& ResNet-18 & \textbf{95.0} & 73.0 & 42.4 & 45.5 & 62.1 & \underline{92.9} & 78.5 \\ 
& ViT       & \textbf{98.7} & 51.9 & 53.9 & 53.7 & 50.0 & \underline{93.2} & 76.0 \\ 
\bottomrule
\end{tabular}
\caption{Detection accuracy (\%) on CIFAR-10 for a range of classical and low-magnitude attacks, evaluated using ResNet-18 and ViT. Best results are in bold; second best are underlined.}
\label{tab:detection_acc_appendix}
\end{table}

\begin{table}[H]
\centering
\scriptsize
\setlength{\tabcolsep}{6pt}
\renewcommand{\arraystretch}{1.05}
\begin{tabular}{lcccccccc}
\toprule
\textbf{Attack} & \textbf{Model} & \textbf{Ours} & MD & FS & MAD & MN & EA & BY \\ 
\midrule

\multirow{2}{*}{\textbf{PGD}} 
& ResNet-18 & \textbf{0.99} & \underline{0.81} & 0.69 & 0.49 & 0.51 & \textbf{0.99} & 0.72 \\ 
& ViT       & \textbf{0.99} & 0.72 & 0.72 & 0.57 & 0.49 & \underline{0.96} & 0.89 \\ 
\midrule

\multirow{2}{*}{\textbf{SQ}}  
& ResNet-18 & \textbf{0.99} & 0.55 & \underline{0.87} & 0.48 & 0.49 & 0.95 & 0.54 \\  
& ViT       & \textbf{0.99} & 0.40 & \underline{0.88} & 0.56 & 0.50 & 0.95 & 0.60 \\ 
\midrule

\multirow{2}{*}{\textbf{PT}}  
& ResNet-18 & \textbf{0.99} & \underline{0.93} & 0.52 & 0.50 & 0.54 & 0.95 & 0.89 \\ 
& ViT       & \textbf{0.99} & 0.47 & 0.63 & 0.54 & 0.52 & 0.94 & \underline{0.90} \\ 
\midrule

\multirow{2}{*}{\textbf{AAD}}        
& ResNet-18 & \textbf{0.99} & 0.61 & 0.56 & 0.49 & 0.51 & \underline{0.97} & 0.55 \\ 
& ViT       & \textbf{0.98} & 0.88 & 0.55 & 0.53 & 0.53 & \underline{0.97} & 0.94 \\ 
\midrule

\multirow{2}{*}{\textbf{PGN}}          
& ResNet-18 & \textbf{0.99} & 0.79 & 0.59 & 0.48 & 0.50 & \underline{0.98} & 0.80 \\ 
& ViT       & \underline{0.97} & 0.54 & 0.61 & 0.55 & 0.51 & \textbf{0.98} & 0.81 \\ 
\midrule

\multirow{2}{*}{\textbf{BSR}}          
& ResNet-18 & \textbf{0.99} & 0.64 & 0.53 & 0.48 & 0.51 & \underline{0.95} & 0.87 \\ 
& ViT       & \textbf{0.99} & 0.59 & 0.58 & 0.55 & 0.54 & \underline{0.96} & 0.84 \\ 
\bottomrule
\end{tabular}
\caption{AUROC comparison on CIFAR-10 for various attacks generated using ResNet-18 and ViT. Best results are in bold; second best are underlined.}
\label{tab:auroc_appendix}
\end{table}

\subsection{Experimental evaluations with additional metrics}

\begin{table}[H]
\centering
\scriptsize
\begin{tabular}{l c c c c c c c c}
\hline
\textbf{Attacks} & \textbf{Model} & Ours & MD & FS & MAD & MN & EA & BY \\ \hline

\multirow{2}{*}{\textbf{PGD}}
& ResNet-18 
& \underline{0.97/0.98} 
& 0.89/0.95 
& 0.71/0.72 
& 0.17/0.54 
& 0.97/0.65
& \textbf{0.98/0.95} 
& 0.67/0.74 \\
& ViT
& \textbf{1.00/0.90} 
& 0.49/0.95 
& 0.73/0.78 
& 0.57/0.57 
& 0.25/0.01 
& \underline{0.98/0.93} 
& 0.72/0.90 \\ \hline

\multirow{2}{*}{\textbf{SQ}}
& ResNet-18
& \textbf{0.96/0.97} 
& 0.88/0.89 
& 0.83/0.88 
& 0.16/0.52 
& 0.22/0.01 
& \underline{0.89/0.92} 
& 0.60/0.54 \\
& ViT
& \textbf{0.95/0.94} 
& 0.50/0.98 
& \underline{0.93/0.80} 
& 0.56/0.57 
& 0.40/0.01 
& 0.88/0.90 
& 0.63/0.60 \\ \hline

\multirow{2}{*}{\textbf{PT}}
& ResNet-18
& \textbf{0.97/0.98} 
& 0.87/0.84 
& 0.62/0.49 
& 0.17/0.57 
& 0.55/0.01 
& \underline{0.91/0.88} 
& 0.72/0.91 \\
& ViT
& \textbf{0.99/0.93} 
& 0.50/0.98 
& 0.75/0.61 
& 0.53/0.56 
& 0.25/0.01
& \underline{0.89/0.87} 
& 0.72/0.92 \\ \hline

\multirow{2}{*}{\textbf{AAD}}
& ResNet-18
& \textbf{0.98/0.93} 
& 0.63/0.21 
& 0.55/0.41 
& 0.49/0.54 
& 0.87/0.07 
& \underline{0.89/0.93} 
& 0.61/0.57 \\
& ViT
& \textbf{0.93/0.92} 
& 0.51/1.00 
& 0.55/0.61 
& 0.82/0.53 
& 0.41/0.01
& \underline{0.89/0.95} 
& 0.73/0.95 \\ \hline

\multirow{2}{*}{\textbf{PGN}}
& ResNet-18
& \textbf{0.97/0.93} 
& 0.83/0.51 
& 0.67/0.59 
& 0.15/0.51 
& 0.34/0.01 
& \underline{0.90/0.91} 
& 0.70/0.85 \\
& ViT
& \textbf{0.98/0.95} 
& 0.51/0.99 
& 0.67/0.58 
& 0.82/0.53 
& 0.44/0.01 
& \underline{0.92/0.94} 
& 0.71/0.86 \\ \hline

\multirow{2}{*}{\textbf{BSR}}
& ResNet-18
& \textbf{0.99/0.91} 
& 0.84/0.55 
& 0.45/0.62 
& 0.15/0.51 
& 0.94/0.25 
& \underline{0.96/0.88} 
& 0.72/0.92 \\
& ViT
& \textbf{0.99/0.98} 
& 0.50/0.98 
& 0.51/0.64 
& 0.82/0.53 
& 0.42/0.01 
& \underline{0.93/0.91} 
& 0.71/0.87 \\ \hline

\end{tabular}
\caption{Precision / Recall comparison of adversarial detection on CIFAR-10 under various attacks generated using ResNet-18 and ViT, with best results in bold and second best underlined.}
\label{tab:precision_recall_appendix}
\end{table}

Table~\ref{tab:precision_recall_appendix} reports the \textit{precision} and \textit{recall} of different adversarial detection methods across six attack types using both ResNet-18 and ViT. Across nearly all attacks and architectures, \textbf{our method achieves the highest precision and recall values} (highlighted in bold), often reaching \textbf{near-perfect scores close to $1.00$}, demonstrating its strong ability to accurately detect adversarial inputs. In contrast, \underline{EA} generally appears as the second-best performer, but still lags behind our approach, particularly on challenging attacks such as AAD, PGN, and BSR. Overall, these results show that our detector is \textbf{highly robust and consistent}, maintaining \textbf{superior numerical performance across all attack families and both model architectures}.

\begin{figure}[H]
    \centering
    \includegraphics[width=0.6\linewidth]{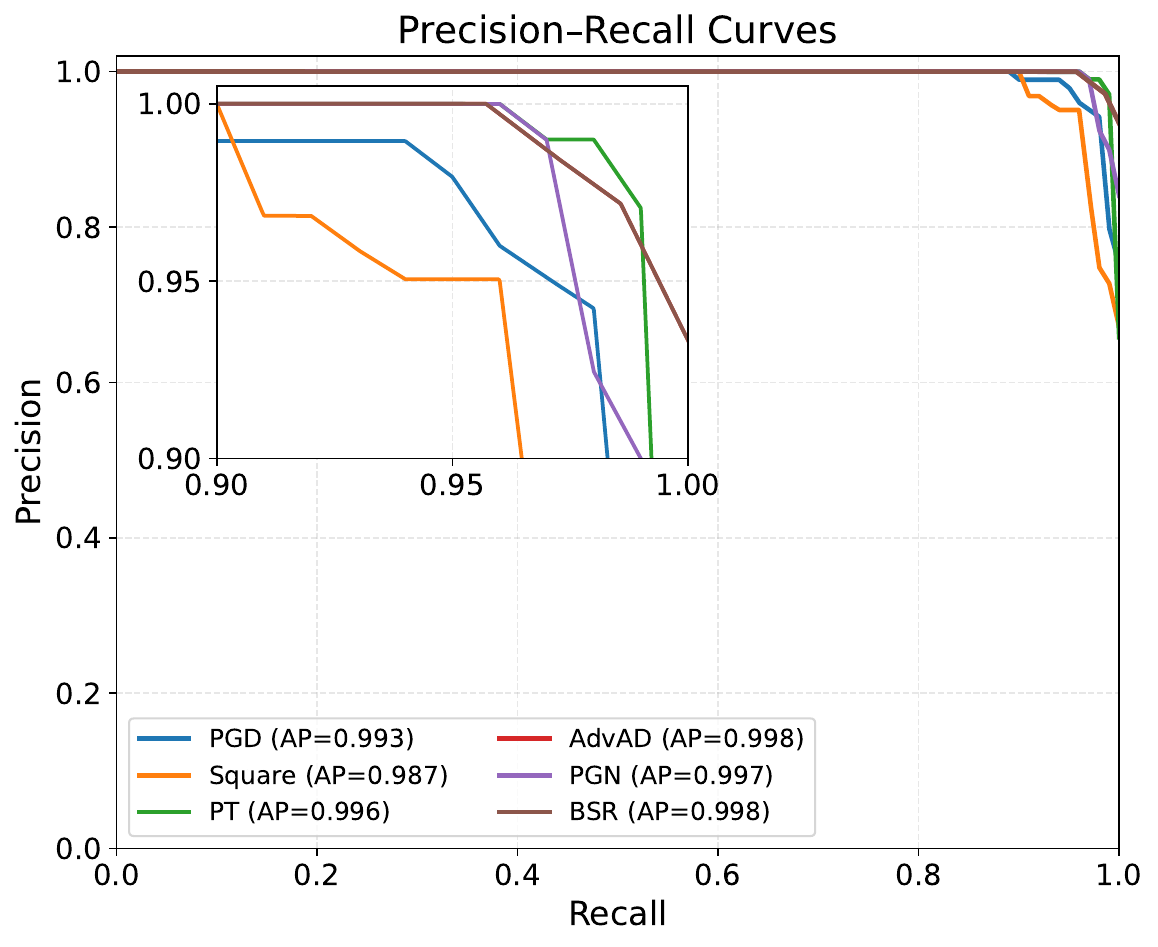}
    \caption{PR curves for our method on attacks generated via ResNet-18 model on CIFAR10 dataset. The small inset box displays a zoomed-in view of the upper-right corner of the PR curves.}
    \label{fig:pr_curves_all_attacks}
\end{figure}



In Figure \ref{fig:pr_curves_all_attacks}, the Precision--Recall (PR) curves show that our detector maintains consistently high precision and recall across all six attack types. All curves remain close to the top-right region, indicating strong overall detection performance. The zoomed inset further illustrates that even at high recall values (close to 1.0), the precision for each attack remains above 0.95 with only minimal degradation. The high Average Precision (AP) scores $\mathbf{(0.987--0.998)}$ confirm the robustness of our method, demonstrating reliable performance even against stronger modern attacks such as AdvAD, PGN, and BSR. Overall, the PR curves highlight that our approach is highly accurate, stable, and generalizes well across diverse adversarial attacks.

\begin{table}[htbp!]
\centering
\setlength{\tabcolsep}{3pt}
\renewcommand{\arraystretch}{0.55}
\begin{tabular}{@{}lccccc@{}}
\toprule
 & \textbf{Ours} & MD & FS & MAD & MN \\
\midrule
AA & \textbf{94.9} & 67.2 & \underline{82.0} & 48.3 & 71.6 \\
CW & \textbf{93.8} & 72.1 & \underline{85.3} & 48.9 & 54.2 \\
PT & \textbf{94.4} & \underline{84.9} & 67.1 & 49.5 & 54.8 \\
PGD & \textbf{94.8} & \underline{90.0} & 73.7 & 49.0 & 78.5 \\
SA & \textbf{93.4} & \underline{76.7} & 73.7 & 39.2 & 52.5 \\
SQ & \textbf{93.4} & 87.7 & \underline{87.8} & 48.5 & 41.9 \\
UP & \textbf{94.4} & \underline{65.0} & 52.9 & 49.0 & 45.4 \\
AP & \textbf{93.8} & 66.9 & \underline{80.7} & 48.6 & 71.1 \\
FG & \textbf{94.3} & \underline{72.3} & 60.2 & 49.1 & 42.2 \\
FA & \textbf{94.6} & 48.5 & \underline{49.3} & 48.7 & 47.3 \\
GB & \textbf{73.2} & 48.5 & \underline{50.9} & 48.1 & 46.1 \\
PF & \textbf{93.3} & 50.3 & \underline{50.7} & 48.5 & 46.9 \\
SR & \textbf{93.1} & 48.9 & \underline{52.0} & 48.9 & 46.8 \\
\bottomrule
\end{tabular}

\caption{End-to-end accuracy (\%). Best results are in bold and second best are underlined.}
\label{table:end-to-end}
\end{table}

Table~\ref{table:end-to-end} shows that our method consistently achieves the highest end-to-end detection accuracy across all evaluated attacks. For every attack type, our detector outperforms all competing defenses, often by margins of 10--40\%. FS and MD occasionally achieve the second-best performance, but they remain substantially weaker overall, while MAD and MN lag far behind on most attacks. These results demonstrate that our approach generalizes robustly across a broad spectrum of adversarial perturbations and maintains reliable detection performance even against diverse and challenging attack strategies.

\subsection{Evaluation on the ImageNet Dataset on default ResNet-18 model}

In this section, we additionally evaluate our method on the large-scale ImageNet~\cite{deng2009imagenet} dataset using the default ResNet-18 backbone. This allows us to verify that the detector remains effective when applied to high-resolution images and a significantly more challenging data distribution.

\emph{Note:} The BY~\cite{ICML24_defence} baseline does not provide a publicly available codebase, so we reproduce the method following the details reported in the original paper. Running BY requires a self-supervised model trained on the target dataset; while we trained such a model for CIFAR-10, training an equivalent backbone for ImageNet is computationally prohibitive. Consequently, we omit BY from the ImageNet experiments due to the intractable training cost and scale of the dataset.

Tables~\ref{tab:imagenet_acc_best}, \ref{tab:imagenet_auc_best}, and \ref{tab:precision_recall_imagenet} jointly show that our detector achieves the strongest overall performance on ImageNet across all six adversarial attacks. In terms of detection accuracy and AUC, our method consistently reaches top performance—typically in the $0.98$–$0.99$ range—while the next-best baseline, EA, trails by $1-5\%$ depending on the attack. The precision and recall results further reinforce this trend: our detector attains the best or second-best PR scores in nearly all settings, maintaining high recall even for challenging attacks such as AdvAD, PGN, and BSR. Competing defenses such as MD, FS, MAD, and MN perform substantially worse across all metrics. Together, these results confirm that our approach scales robustly to high-resolution ImageNet data and preserves strong discriminative ability under a wide range of adversarial perturbations.
\begin{table}[tbp]
\centering
\scriptsize
\begin{tabular}{lcccccccc}
\hline
\textbf{Attacks} & \textbf{Model} & \textbf{Ours} & \textbf{MD} & \textbf{FS} & \textbf{MAD} & \textbf{MN} & \textbf{EA} & \textbf{BY} \\ 
\hline
\textbf{PGD}   & ResNet-18 & \textbf{95.6} & 50.00 & 68.9 & 45.8 & 91.00 & \underline{94.9} & -- \\
\textbf{SQ}    & ResNet-18 & \textbf{96.7} & 50.00 & 84.2 & 45.5 & 49.00 & \underline{89.6} & -- \\
\textbf{PT}    & ResNet-18 & \textbf{96.2} & 50.00 & 73.7 & 45.0 & 50.50 & \underline{88.2} & -- \\
\textbf{AAD} & ResNet-18 & \textbf{97.6} & 50.00 & 55.9 & 49.6 & 59.38 & \underline{95.4} & -- \\
\textbf{PGN}   & ResNet-18 & \textbf{96.3} & 50.00 & 63.2 & 46.3 & 49.06 & \underline{93.2} & -- \\
\textbf{BSR}   & ResNet-18 & \textbf{97.8} & 50.00 & 59.4 & 46.35 & 50.31 & \underline{93.6} & -- \\
\hline
\end{tabular}
\caption{Detection accuracy (\%) comparison on ImageNet + ResNet-18 with best (bold) and second best (underlined).}
\label{tab:imagenet_acc_best}
\end{table}
\begin{table}[tbp]
\centering
\scriptsize
\begin{tabular}{lcccccccc}
\hline
\textbf{Attacks} & \textbf{Model} & \textbf{Ours} & \textbf{MD} & \textbf{FS} & \textbf{MAD} & \textbf{MN} & \textbf{EA} & \textbf{BY} \\ 
\hline

\textbf{PGD}   & ResNet-18 & \textbf{0.98} & 0.53 & \underline{0.73} & 0.503 & 0.68 & \textbf{0.98} & -- \\

\textbf{SQ}    & ResNet-18 & \textbf{0.97} & 0.55 & 0.79 & 0.489 & 0.49 & \underline{0.94} & -- \\

\textbf{PT}    & ResNet-18 & \textbf{0.99} & 0.52 & 0.71 & 0.45 & 0.50 & \underline{0.95} & -- \\

\textbf{AAD} & ResNet-18 & \textbf{0.99} & 0.49 & 0.55 & 0.495 & 0.45 & \underline{0.96} & -- \\

\textbf{PGN}   & ResNet-18 & \textbf{0.98} & 0.50 & 0.66 & 0.488 & 0.44 & \underline{0.95} & -- \\

\textbf{BSR}   & ResNet-18 & \textbf{0.99} & 0.50 & 0.62 & 0.488 & 0.43 & \underline{0.95} & -- \\

\hline
\end{tabular}
\caption{AUC comparison on ImageNet with best (bold) and second best (underlined).}
\label{tab:imagenet_auc_best}
\end{table}
\begin{table}[tbp]
\centering
\scriptsize
\begin{tabular}{l c c c c c c c c}
\hline
\textbf{Attacks} & \textbf{Model} & \textbf{Ours} & \textbf{MD} & \textbf{FS} & \textbf{MAD} & \textbf{MN} & \textbf{EA} & \textbf{BY} \\ 
\hline

\textbf{PGD} & ResNet-18
& \textbf{1/0.93}
& 0.50/1.00
& 0.65/0.82
& 0.05/0.55
& \underline{0.92/1.00}
& 0.98/0.97
& -- \\

\textbf{SQ} & ResNet-18
& \textbf{0.98/0.96}
& 0.50/1.00
& 0.84/0.76
& 0.05/0.52
& 0.42/0.06
& \underline{0.89/0.91}
& -- \\

\textbf{PT} & ResNet-18
& \textbf{0.96/0.95}
& 0.50/1.00
& 0.82/0.65
& 0.04/0.45
& 0.52/0.09
& \underline{0.87/0.89}
& -- \\

\textbf{AAD} & ResNet-18
& \textbf{0.99/0.96}
& 0.50/1.00
& 0.51/0.55
& 0.49/0.54
& 0.87/0.21
& \underline{0.93/0.96}
& -- \\

\textbf{PGN} & ResNet-18
& \underline{0.94/0.97}
& 0.50/1.00
& 0.61/0.68
& 0.16/0.52
& 0.28/0.01
& \textbf{0.95/0.92}
& -- \\

\textbf{BSR} & ResNet-18
& \textbf{0.97/0.96}
& 0.50/1.00
& 0.50/0.73
& 0.16/0.52
& 0.54/0.03
& \underline{0.96/0.91}
& -- \\
\hline
\end{tabular}
\caption{Precision / Recall comparison on ImageNet under various attacks, with best values in bold and second best underlined.}
\label{tab:precision_recall_imagenet}
\end{table}

\begin{table}[tbp]
\centering
\scriptsize
\begin{tabular}{l c c c c c c c c}
\hline
\textbf{Attacks} & \textbf{Model} & \textbf{Ours} & \textbf{MD} & \textbf{FS} & \textbf{MAD} & \textbf{MN} & \textbf{EA} & \textbf{BY}  \\ 
\hline

\textbf{PGD} & ResNet-18
& \underline{0.96}
& 0.66
& 0.72
& 0.10
& \underline{0.96}
& \textbf{0.97}
& - \\

\textbf{SQ} & ResNet-18
& \textbf{0.96}
& 0.66
& 0.79
& 0.09
& 0.10
& \underline{0.90}
& - \\

\textbf{PT} & ResNet-18
& \textbf{0.95}
& 0.66
& 0.72
& 0.08
& 0.15
& \underline{0.88}
& - \\

\textbf{AAD} & ResNet-18
& \textbf{0.97}
& 0.66
& 0.52
& 0.51
& 0.35
& \underline{0.95}
& - \\

\textbf{PGN} & ResNet-18
& \textbf{0.95}
& 0.66
& 0.64
& 0.24
& 0.02
& \underline{0.93}
& - \\

\textbf{BSR} & ResNet-18
& \textbf{0.96}
& 0.66
& 0.59
& 0.24
& 0.07
& \underline{0.93}
& - \\

\hline
\end{tabular}
\caption{F1 score comparison on ImageNet under various adversarial attacks. Best results are shown in bold, second best underlined.}
\label{tab:f1_only_imagenet}
\end{table}

\subsection{Sensitivity to the Prime-Resolution Parameter $k$}

\begin{figure}[h]
\centering
\begin{tikzpicture}
\begin{axis}[
    width=0.52\linewidth,
    height=0.42\linewidth,
    xlabel={Prime-resolution parameter $k$},
    ylabel={Detection accuracy},
    xmin=2.5, xmax=8.5,
    ymin=0.94, ymax=0.97,
    xtick={3,4,5,6,7,8},
    grid=both,
    grid style={dashed,gray!40},
    legend style={
        font=\footnotesize,
        at={(0.97,0.03)},
        anchor=south east
    },
    tick label style={/pgf/number format/fixed},
]

\addplot[
    thick,
    blue,
    mark=*,
] coordinates {
    (3, 0.95435)
    (4, 0.95428)
    (5, 0.95597)
    (6, 0.95613)
    (7, 0.95697)
    (8, 0.95713)
};

\addplot[
    name path=upper,
    draw=none
] coordinates {
    (3, 0.96446)
    (4, 0.96494)
    (5, 0.96610)
    (6, 0.96539)
    (7, 0.96610)
    (8, 0.96539)
};

\addplot[
    name path=lower,
    draw=none
] coordinates {
    (3, 0.94424)
    (4, 0.94362)
    (5, 0.94584)
    (6, 0.94487)
    (7, 0.94584)
    (8, 0.94487)
};

\addplot[
    blue!30,
] fill between[
    of=upper and lower,
];

\legend{Mean detection accuracy}
\end{axis}
\end{tikzpicture}
\caption{Sensitivity of the detector to the prime-resolution parameter $k$.}
\label{fig:k_sensitivity}
\end{figure}

Figure~\ref{fig:k_sensitivity} shows how detection accuracy varies as the prime-resolution parameter $k$ is changed. Across the entire tested range ($k = 3$ to $k = 8$), the mean detection accuracy remains extremely stable, fluctuating only within a narrow interval of approximately $0.954$–$0.957$. The shaded region captures variability across repeated runs and likewise remains tightly concentrated.

These results indicate that the detector is not sensitive to the specific choice of $k$. Even when $k$ is varied over a relatively broad range, the performance remains effectively unchanged. This robustness reflects an intrinsic property of our method, i.e., once the prime gaps are sufficiently fine to induce stable absorption and injectivity behavior, further refinement provides little additional benefit. In practice, this means that $k$ does not require fine-tuning to achieve strong performance, simplifying deployment across different datasets and architectures.

\subsection{Comparison with Alternative Quantization Schemes}

To assess whether the advantages of our quantization scheme can be replicated by other non-uniform discretization strategies, we compare it against three alternatives: \emph{Fibonacci quantization}, \emph{logarithmic quantization}, and standard \emph{uniform quantization}. All approaches are evaluated under the same experimental setup and across five random seeds (\(1, 3, 5, 21, 42\)). Table~\ref{tab:quant-comparison} reports detection accuracy for each seed, together with the mean and standard deviation.

\begin{table}[h]
\centering
\scriptsize
\setlength{\tabcolsep}{4pt}        
\renewcommand{\arraystretch}{0.95} 

\begin{tabular}{lccccc c}
\toprule
\multirow{2}{*}{\textbf{Quantization}} &
\multicolumn{5}{c}{\textbf{Seed}} &
\multirow{2}{*}{\textbf{Mean} $\pm$ \textbf{Std}} \\
\cmidrule(lr){2-6}
& 1 & 3 & 5 & 21 & 42 & \\
\midrule
\textbf{Ours}        & \textbf{0.94} & \textbf{0.92} & \textbf{0.95} & \textbf{0.94} & \textbf{0.96} & \textbf{0.94} $\pm$ \textbf{0.01} \\
Fibonacci            & 0.62 & 0.61 & 0.55 & \underline{0.60} & \underline{0.68} & 0.61 $\pm$ 0.04 \\
Logarithmic          & \underline{0.70} & 0.61 & \underline{0.61} & 0.56 & 0.58 & 0.62 $\pm$ 0.05 \\
Uniform              & 0.65 & \underline{0.65} & 0.60 & 0.56 & 0.65 & \underline{0.62} $\pm$ 0.03 \\
\bottomrule
\end{tabular}

\caption{Detection accuracy of different quantization schemes across five random seeds. Values reported using a ResNet backbone and CIFAR-10 under mixed attacks (PGD, FGSM, APGD).}
\label{tab:quant-comparison}
\end{table}

Across all seeds, our method achieves the highest mean accuracy and the lowest variance, demonstrating both improved performance and greater stability compared to all alternative discretization schemes. Notably, replacing our scheme with Fibonacci or logarithmic quantization does \emph{not} yield comparable results, despite also introducing non-uniform discretization. Uniform quantization performs similarly poorly. These results suggest that the structural properties of our secret prime quantization strategy are essential for producing a reliable Z--P discrepancy signal, and that generic discretization methods do not replicate this behavior in practice.

\subsection{Sensitivity Analysis of $k_{\text{local}}$ and $k_{\text{global}}$}

\begin{table}[h!]
\centering
\footnotesize
\setlength{\tabcolsep}{10pt}
\renewcommand{\arraystretch}{1.0}

\begin{tabular}{cc}
\begin{tabular}{cc}
\toprule
\textbf{$k_{\text{local}}$} & \textbf{Accuracy} \\ 
\midrule
8  & \textbf{97.90}  \\ 
10 & 86.15 \\ 
12 & 92.57 \\ 
15 & \underline{93.24} \\
\bottomrule
\end{tabular}
&
\begin{tabular}{cc}
\toprule
\textbf{$k_{\text{global}}$} & \textbf{Accuracy} \\ 
\midrule
3  & \textbf{97.90}  \\ 
5  & \underline{95.60}  \\ 
7  & 83.45 \\ 
10 & 85.35 \\
\bottomrule
\end{tabular}
\end{tabular}

\caption{Detection accuracy as a function of the local neighborhood size $k_{\text{local}}$ (left) and the global neighborhood size $k_{\text{global}}$ (right).}
\label{tab:local_global_sensitivity}
\end{table}

We evaluate the sensitivity of our detector to the neighborhood parameters $k_{\text{local}}$ and $k_{\text{global}}$, which govern the local consistency scale and the global support for GW coupling, respectively.

For the local neighborhood size, we test $k_{\text{local}} \in \{8, 10, 12, 15\}$. Accuracy remains consistently high across all settings, with the best performance (\textbf{97.9\%}) obtained at $k_{\text{local}} = 8$. While performance dips for $k_{\text{local}} = 10$, it recovers for larger values ($12$ and $15$), indicating that the method is broadly robust to the choice of local scale.

For the global neighborhood size, we test $k_{\text{global}} \in \{3, 5, 7, 10\}$. Smaller global neighborhoods yield the best performance, with $k_{\text{global}} = 3$ achieving \textbf{97.9\%}. Larger values gradually degrade performance, suggesting that excessively large global supports may introduce noise or dilute the structural alignment captured during cross-space GW coupling.

Overall, the detector displays stable performance across a wide range of neighborhood sizes, with optimal performance achieved at smaller values. Based on this analysis, we adopt $k_{\text{local}} = 8$ and $k_{\text{global}} = 3$ for all main experiments.

\subsection{Robustness Against Common Corruptions}

To assess whether the detector responds specifically to adversarial perturbations, rather than generic input noise, we evaluate its behaviour under benign corruptions using the CIFAR-C benchmark~\cite{cifar-c}. CIFAR-C includes a diverse set of naturally occurring degradation types, such as Gaussian, shot, and impulse noise; blur corruptions (defocus, frosted glass, motion, zoom); weather effects (snow, frost, fog); brightness and contrast shifts; elastic distortions; pixelation; and JPEG compression.

Table~\ref{tab:cifar-c-gaussian} reports results for the \emph{Gaussian noise} corruption. The detector maintains high accuracy and AUROC, indicating that benign perturbations do not trigger the characteristic Z--P discrepancies associated with adversarial attacks. This supports our claim that the method does not misclassify natural corruptions as adversarial.

\begin{table}[h]
\centering
\footnotesize
\setlength{\tabcolsep}{10pt}
\renewcommand{\arraystretch}{1.05}
\begin{tabular}{lccc}
\toprule
\textbf{Corruption Type} & \textbf{Detection Accuracy} & \textbf{AUROC} & \textbf{Precision / Recall / F1} \\
\midrule
Gaussian Noise & 96.43\% & 0.99 & 0.97 / 0.96 / 0.96 \\
\bottomrule
\end{tabular}
\caption{Performance of our detector under Gaussian noise (CIFAR-C).}
\label{tab:cifar-c-gaussian}
\end{table}

\subsection{Runtime and Memory Efficiency}
\label{app:runtime_memory}

Table~\ref{tab:runtime_memory} compares inference-time and memory footprint across several recent adversarial detection baselines. Our detector achieves a competitive runtime of $0.12$ seconds per sample while maintaining a moderate CPU and GPU memory footprint. Methods such as BY~\cite{ICML24_defence} require substantially larger memory because they process multiple transformed copies of each input, resulting in expanded intermediate activations. In contrast, our detector relies on a single forward pass through a ResNet backbone, leading to a modest GPU footprint.

Lightweight approaches such as FS~\cite{Xu2018FS} and MN~\cite{meng2017magnet} offer faster runtimes but exhibit either higher memory usage or weaker robustness. Overall, our method strikes an effective balance among runtime efficiency, memory consumption, and detection performance.

\begin{table}[tbp]
\centering
\footnotesize
\setlength{\tabcolsep}{8pt}
\renewcommand{\arraystretch}{1.05}

\begin{tabular}{lccc}
\toprule
\textbf{Method} & \textbf{Runtime (s)} & \textbf{CPU (GB)} & \textbf{GPU (GB)} \\
\midrule
BY   & 0.152 & 2.00 & 0.60 \\
EA   & 0.115 & \textbf{0.20} & 11.80 \\
MD   & 0.141 & 0.68 & \underline{0.10} \\
FS   & \textbf{0.005} & \underline{0.25} & 0.17 \\
MAD  & 0.105 & 3.30 & 0.00 \\
MN   & \underline{0.012} & 0.78 & \textbf{0.10} \\
\textbf{Ours} & 0.120 & 0.62 & 2.04 \\
\bottomrule
\end{tabular}

\caption{Runtime and memory usage of different adversarial detection methods on CIFAR-10. Best values in bold, second-best underlined.}
\label{tab:runtime_memory}
\end{table}
\subsection{Robustness and Generalization Capabilities}

\subsubsection{Reliability Analysis}
We study detector reliability across a range of perturbation levels using TPR/FPR heatmaps (Fig.~\ref{fig:tpr_fpr_heatmaps}). With dataset and backbone fixed, only adversarial conditions vary, revealing how consistently detectors identify adversarial inputs while avoiding false positives on clean samples.

\begin{figure}[h]
    \centering
    \begin{subfigure}[t]{0.48\textwidth}
        \centering
        \includegraphics[width=\linewidth]{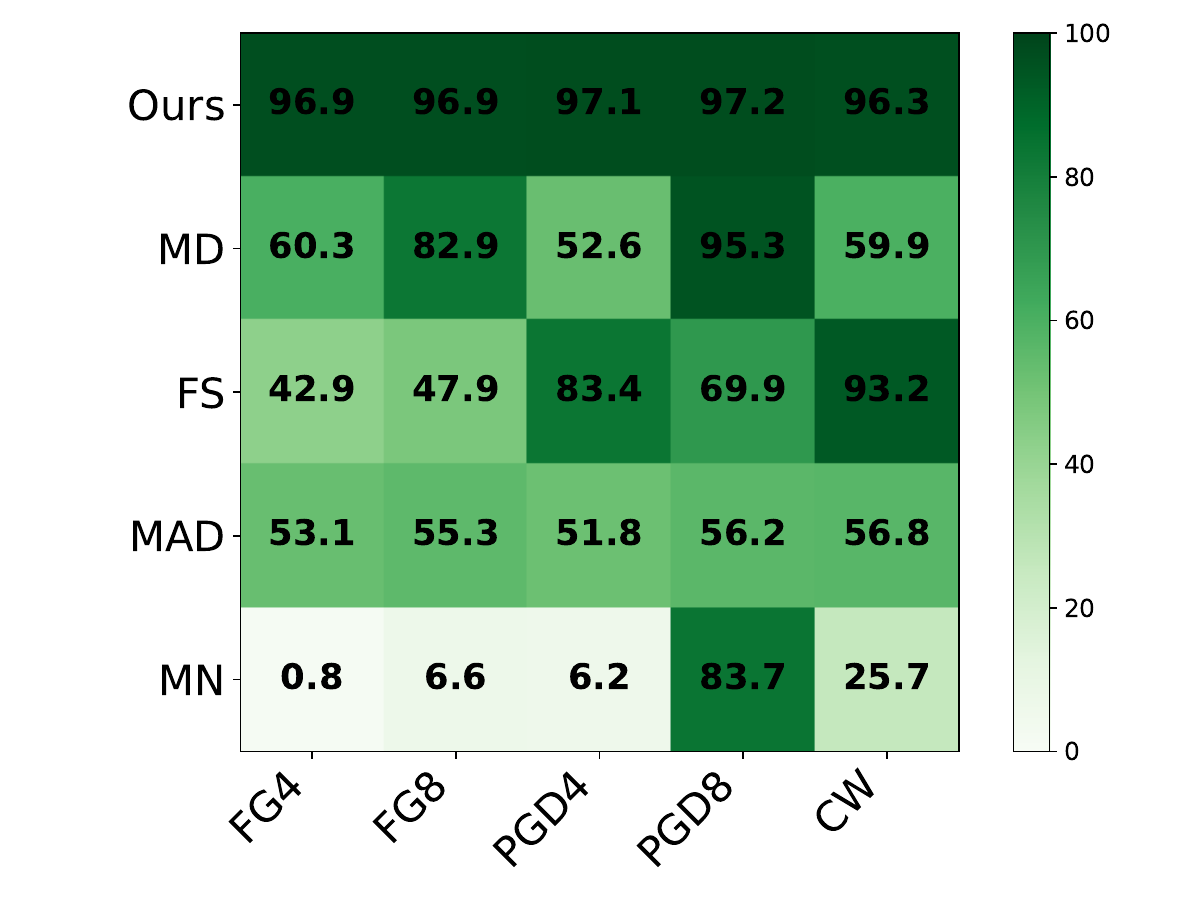}
        \caption{TPR Heatmap (higher is better).}
    \end{subfigure}%
    \hfill
    \begin{subfigure}[t]{0.48\textwidth}
        \centering
        \includegraphics[width=\linewidth]{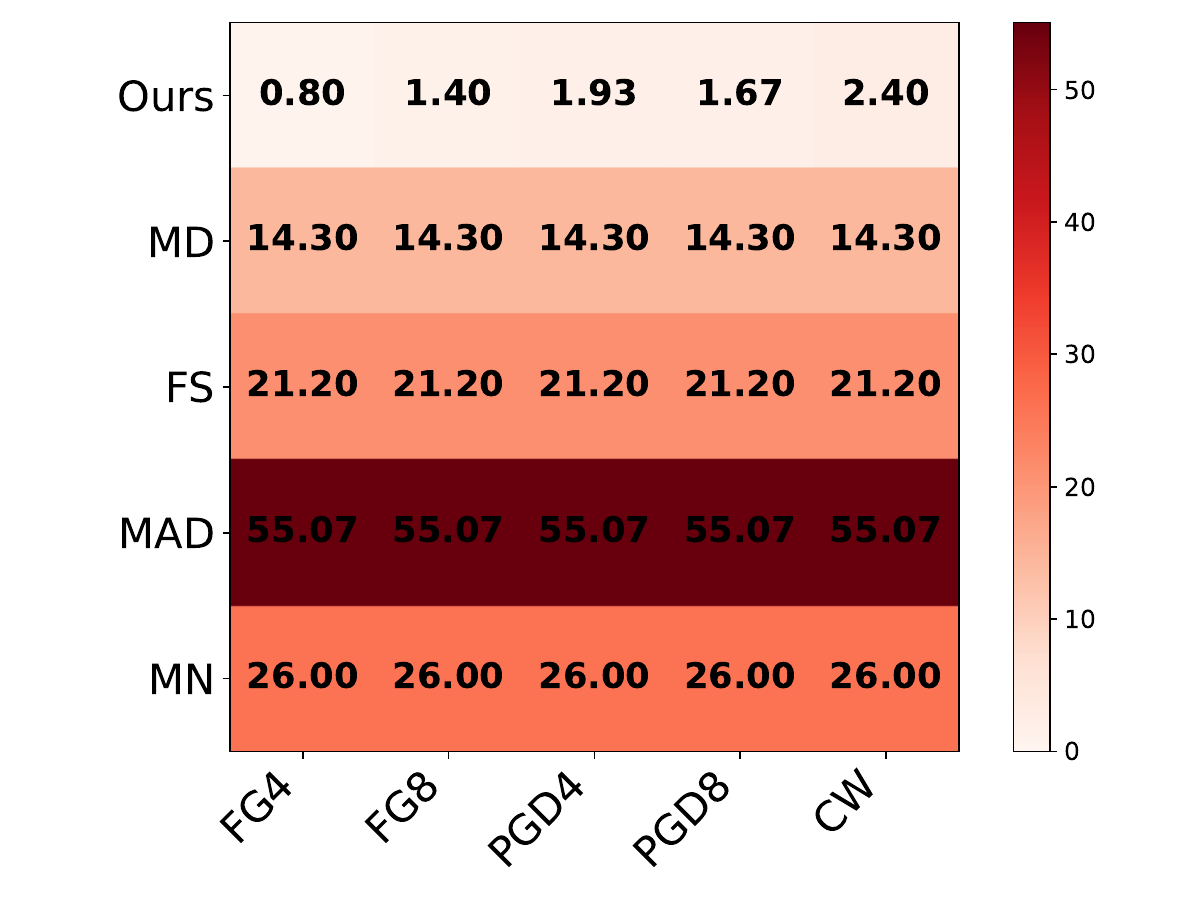}
        \caption{FPR Heatmap (lower is better).}
    \end{subfigure}
    \caption{\textbf{Reliability via TPR/FPR heatmaps.} X-axis: attack types with perturbation levels, Y-axis: defenses. Thresholds are calibrated only on clean samples, so baselines yield constant FPR across attacks.}
    \label{fig:tpr_fpr_heatmaps}
\end{figure}

\subsubsection{Cross-Attack Generalization}
To evaluate generalization to unseen attacks, we train on ResNet18 adversarial samples from FG and PGD, then test on CW, SQ, SA, and PT. Results (Table~\ref{table:cross-attack}) report accuracy, adversarial recall, and adversarial precision. Our method consistently outperforms baselines, demonstrating robustness to unseen attack families.

\begin{table}[h]
\centering
\footnotesize
\begin{tabular}{l|ccccc}
\toprule
\textbf{Attack} & \textbf{Ours} & \textbf{MD} & \textbf{FS} & \textbf{MAD} & \textbf{MN} \\
\midrule
CW  & $\mathbf{97.5 / 100 / 95}$ & $71 / 51 / 86$ & \underline{$90 / 90 / 90$} & $49.5 / 69 / 46.6$ & $60 / 45 / 63$ \\
SQ  & \underline{$93.5/98/89$} & $88 / 85 / 91$ & $\mathbf{94.5 / 99 / 90.8}$ & $48.5 / 63 / 47$ & $45 / 15 / 46$ \\
SP  & $\mathbf{95.5 / 99 / 92}$ & $86 / 80 / 90$ & \underline{$92.5 / 95 / 90.5$} & $49 / 67 / 49$ & $48 / 21 / 46$ \\
PT  & $\mathbf{95 / 100 / 90}$ & \underline{$91 / 90 / 91$} & $89.5 / 89 / 89.9$ & $49 / 67 / 49$ & $45 / 15 / 37$ \\
\bottomrule
\end{tabular}
\caption{\textbf{Cross-Attack Generalization.} Accuracy / Recall$_\text{adv}$ / Precision$_\text{adv}$ (\%).}
\label{table:cross-attack}
\end{table}

\subsubsection{Cross-Model Generalization}
We also test transfer robustness by training on ResNet18 attacks and evaluating on FourierNet (FNet) adversarial samples. Table~\ref{table:cross-model} reports detection accuracy, recall, and precision. Our method again outperforms baselines, showing resilience to model transfer attacks.

\begin{table}[h]
\centering
\footnotesize
\begin{tabular}{l|ccccc}
\toprule
\textbf{Attack} & \textbf{Ours} & \textbf{MD} & \textbf{FS} & \textbf{MAD} & \textbf{MN} \\
\midrule
CW   & $\mathbf{95 / 97 / 94}$ & $54 / 17 / 68$ & \underline{$73.5 / 91 / 67.4$} & $50 / 69 / 50$ & $50 / 25 / 52$ \\
SQ   & $\mathbf{87 / 93 / 83}$ & $51 / 10 / 55$ & \underline{$77.0 / 98 / 69.0$} & $49 / 67 / 49$ & $52 / 29 / 54$ \\
SP   & $\mathbf{94 / 93 / 95}$ & $64 / 37 / 82$ & \underline{$74 / 92 / 67.6$} & $49 / 67 / 49$ & $53 / 31 / 56$ \\
PT   & $\mathbf{96 / 95 / 97}$ & $57 / 22 / 73$ & \underline{$72.5 / 89 / 66.9$} & $47 / 64 / 48$ & $49 / 22 / 47$ \\
FG   & $\mathbf{93 / 99 / 89}$ & $55 / 18 / 69$ & \underline{$70.0 / 84 / 65.6$} & $46 / 61 / 47$ & $50 / 25 / 50$ \\
PGD  & $\mathbf{98 / 99 / 99}$ & $56 / 21 / 72$ & \underline{$74.5 / 93 / 67.9$} & $50 / 69 / 50$ & $50 / 26 / 52$ \\
\bottomrule
\end{tabular}
\caption{\textbf{Cross-Model Generalization.} Accuracy / Recall$_\text{adv}$ / Precision$_\text{adv}$ (\%).}
\label{table:cross-model}
\end{table}

\subsection{True Positive Rate (TPR) Analysis}
\label{app:sec:tpr}

Table~\ref{table:TPR} reports the \emph{true positive rate} (TPR) across adversarial attack types, i.e., the fraction of adversarial inputs correctly detected as adversarial. The formula is, $\text{TPR} = \frac{\text{detected adversarial}}{\text{all adversarial}}$.
Bold entries denote the best-performing method, and underlined entries denote the second best. 
Our method consistently achieves the highest TPR in 12 out of 13 attacks, showing large margins especially for transfer-based (UP) and perceptual/frequency attacks (FA, PF, SR). 
The only exception is Gaussian blur (GB), where all detectors struggle, but our method still provides a clear advantage over baselines. 
These results highlight that our cross-space framework is particularly effective in reliably flagging adversarial samples, even under challenging attack families.

\begin{table}[htbp]
\centering
\footnotesize
\begin{tabular}{l|ccccc}
\hline
\textbf{Attack / Method} & \textbf{Ours} & \textbf{MD} & \textbf{FS} & \textbf{MAD} & \textbf{MN} \\
\hline
\textbf{AA}   & \(\mathbf{97.40 \pm 1.07}\) & $50.60 \pm 1.99$ & $\underline{86.50 \pm 1.24}$ & $57.96 \pm 3.10$ & $74.09 \pm 1.26$ \\
\textbf{CW}    &  \(\mathbf{96.33 \pm 0.57}\) & $59.90 \pm 0.65$ & $\underline{93.20 \pm 0.14}$ & $56.82 \pm 1.52$ & $39.30 \pm 1.60$ \\
\textbf{PT}   &  \(\mathbf{97.73 \pm 0.34}\) & \underline{$85.50 \pm 0.85$} & $56.70 \pm 2.49$ & $55.33 \pm 3.92$ & $40.50 \pm 1.66$ \\
\textbf{PGD}  &  \(\mathbf{97.20 \pm 1.07}\) & \underline{$95.30 \pm 0.62$} & $69.90 \pm 2.68$ & $56.22 \pm 1.18$ & $87.90 \pm 0.29$ \\
\textbf{SA}  & \(\mathbf{96.67 \pm 0.23}\) & $69.00 \pm 1.91$ & \underline{$69.90 \pm 0.17$} & $36.20 \pm 0.62$ & $35.80 \pm 2.86$ \\
\textbf{SQ}   &  \underline{$96.67 \pm 0.90$} & $91.20 \pm 0.77$ & \(\mathbf{98.20 \pm 1.12}\) & $55.90 \pm 3.34$ & $14.80 \pm 1.12$ \\
\textbf{UP}   &  \(\mathbf{97.00 \pm 1.14}\) & $45.60 \pm 3.06$ & $28.50 \pm 2.52$ & \underline{$55.80 \pm 2.12$} & $21.70 \pm 1.17$ \\
\textbf{AP}   &  \(\mathbf{96.22 \pm 1.11}\) & $49.30 \pm 1.65$ & $\underline{83.90 \pm 1.44}$ & $54.28 \pm 1.59$ & $73.10 \pm 1.51$ \\
\textbf{FG}   &  \(\mathbf{96.87 \pm 1.57}\) & \underline{$60.30 \pm 2.64$} & $42.90 \pm 3.26$ & $53.09 \pm 0.72$ & $15.30 \pm 1.57$ \\
\textbf{FA}    & \(\mathbf{94.33 \pm 1.23}\) & $12.50 \pm 0.88$ & $21.10 \pm 1.23$ & \underline{$53.34 \pm 3.08$} & $25.50 \pm 1.78$ \\
\textbf{GB}    & \(\mathbf{ 73.00\pm 10.27}\) & $12.40 \pm 1.92$ & $24.50 \pm 0.53$ & \underline{$50.42 \pm 2.38$} & $23.10 \pm 2.06$ \\
\textbf{PF}    & \(\mathbf{96.33 \pm 0.57}\) & $16.10 \pm 1.20$ & $24.00 \pm 0.46$ & \underline{$53.01 \pm 3.65$} & $24.70 \pm 1.21$ \\
\textbf{SR}    & \(\mathbf{95.33 \pm 1.07}\)  & $13.30 \pm 0.19$ & $26.60 \pm 1.24$ & \underline{$54.20 \pm 2.37$} & $24.40 \pm 2.29$ \\
\hline
\end{tabular}
\caption{\textbf{True positive rate (\%) on adversarial samples:} This table shows the results of TPR measured on adversarial samples.}
\label{table:TPR}

\end{table}

\subsection{Experiments on FMNIST and KMNIST}
\begin{table}[tbp]
\centering
\scriptsize
\begin{tabular}{l|ccc}
\hline
\textbf{Attack} & \textbf{Ours} & \textbf{MD} & \textbf{MN} \\
\hline
\textbf{AA$4$} & $\mathbf{97.60 \pm 0.22}$ & $\underline{50.00 \pm 0.56}$ & $49.90 \pm 1.59$ \\
\textbf{AA$8$} & $\mathbf{96.43 \pm 0.59}$ & $\underline{50.40 \pm 0.86}$ & $50.10 \pm 1.75$ \\
\textbf{CW} & $\mathbf{96.10 \pm 1.63}$ & $51.60 \pm 0.67$ & $\underline{54.05 \pm 1.43}$ \\
\textbf{PT$7$} & $\mathbf{98.63 \pm 0.17}$ & $89.45 \pm 0.45$ & $\underline{90.20 \pm 0.32}$ \\
\textbf{PT$14$} & $\mathbf{94.10 \pm 0.49}$ & $\underline{90.20 \pm 0.43}$ & $90.09 \pm 0.32$ \\
\textbf{PGD} & $\mathbf{97.63 \pm 0.53}$ & $\underline{49.80 \pm 0.67}$ & $49.90 \pm 1.80$ \\
\textbf{PGD$8$} & $\mathbf{95.33 \pm 0.97}$ & $\underline{51.45 \pm 0.69}$ & $50.30 \pm 1.82$ \\
\textbf{SA} & $\mathbf{97.21 \pm 1.36}$ & $\underline{80.80 \pm 2.01}$ & $53.10 \pm 0.15$ \\
\textbf{SQ} & $\mathbf{97.20 \pm 0.57}$ & $\underline{50.05 \pm 0.55}$ & $50.00 \pm 0.18$ \\
\textbf{SQ$8$} & $\mathbf{96.47 \pm 4.01}$ & $\underline{52.20 \pm 0.67}$ & $50.73 \pm 0.52$ \\
\textbf{UP} & $\mathbf{99.87 \pm 0.05}$ & $50.05 \pm 0.71$ & $\underline{50.23 \pm 1.85}$ \\
\textbf{UP$8$} & $\mathbf{98.30 \pm 1.27}$ & $\underline{51.15 \pm 0.55}$ & $50.65 \pm 1.50$ \\
\textbf{UP$12$} & $\mathbf{99.23 \pm 0.68}$ & $\underline{54.75 \pm 1.37}$ & $51.80 \pm 1.23$ \\
\textbf{AP} & $\mathbf{98.63 \pm 0.34}$ & $\underline{49.70 \pm 0.65}$ & $49.49 \pm 1.74$ \\
\textbf{AP$8$} & $\mathbf{99.50 \pm 0.36}$ & $\underline{50.05 \pm 0.99}$ & $50.20 \pm 1.83$ \\
\textbf{FG} & $\mathbf{97.27 \pm 3.44}$ & $\underline{59.95 \pm 0.67}$ & $49.95 \pm 1.66$ \\
\textbf{FG$8$} & $\mathbf{95.53 \pm 0.62}$ & $\underline{51.85 \pm 0.81}$ & $50.78 \pm 1.73$ \\
\textbf{FA} & $\mathbf{97.65 \pm 0.78}$ & $\underline{68.90 \pm 0.30}$ & $45.65 \pm 0.90$ \\
\textbf{GB} & $\mathbf{87.27 \pm 0.32}$ & $\underline{68.12 \pm 0.49}$ & $41.60 \pm 0.10$ \\
\textbf{PF} & $\mathbf{92.22 \pm 0.45}$ & $\underline{69.68 \pm 0.42}$ & $48.85 \pm 1.17$ \\
\textbf{SR} & $\mathbf{97.65 \pm 0.76}$ & $\underline{70.30 \pm 0.77}$ & $41.99 \pm 1.43$ \\
\hline
\end{tabular}
\caption{The results of binary accuracy on FMNIST using ResNet$18$.}
\label{table:binary_accuracy_with_FMNIST_and_Resnet}
\end{table}

Table~\ref{table:binary_accuracy_with_FMNIST_and_Resnet} and Table~\ref{table:binary_accuracy_with_KMNIST_and_Resnet} show the result of the detection accuracy on FMNIST and KMNIST respectively. Regarding the performance of MD on FMNIST and KMNIST, we observed that the distributions of Mahalanobis scores for clean and adversarial samples did not show significant difference. This is because clean and adversarial features for FMNIST and KMNIST are fairly similar in the feature space of ResNet$18$, making it difficult for MD to distinguish between clean and adversarial samples. The similar feature representations in these datasets limit the separability of the two distributions of Mahalanobis scores. A similar phenomenon regarding the distributions of clean and adversarial features used for detection was observed in MN.

\begin{table}[tbp]
\centering
\scriptsize
\begin{tabular}{l|ccc}
\hline
\textbf{Attack} & \textbf{Ours} & \textbf{MD} & \textbf{MN} \\
\hline
\textbf{AA} & $\mathbf{97.33 \pm 0.33}$ & $\underline{50.10 \pm 1.91}$ & $50.00 \pm 1.00$ \\
\textbf{AA$8$} & $\mathbf{98.08 \pm 0.26}$ & $\underline{50.30 \pm 1.84}$ & $50.00 \pm 0.98$ \\
\textbf{PT$7$} & $\mathbf{96.87 \pm 0.35}$ & $53.35 \pm 0.86$ & $\underline{75.8 \pm 0.72}$ \\
\textbf{PT$14$} & $\mathbf{97.63 \pm 0.46}$ & $52.20 \pm 0.31$ & $\underline{86.27 \pm 0.27}$ \\
\textbf{PGD} & $\mathbf{98.10 \pm 0.45}$ & $\underline{50.95 \pm 1.72}$ & $49.45 \pm 0.68$ \\
\textbf{PGD$8$} & $\mathbf{98.60 \pm 0.36}$ & $\underline{51.80 \pm 1.47}$ & $49.25 \pm 0.69$ \\
\textbf{SA} & $\mathbf{98.10 \pm 1.85}$ & $50.45 \pm 0.98$ & $\underline{56.50 \pm 0.39}$ \\
\textbf{SQ} & $\mathbf{98.77 \pm 0.05}$ & $\underline{51.15 \pm 1.49}$ & $48.85 \pm 0.96$ \\
\textbf{SQ$8$} & $\mathbf{95.32 \pm 3.51}$ & $\underline{52.25 \pm 1.48}$ & $48.85 \pm 0.85$ \\
\textbf{UP} & $\mathbf{99.87 \pm 0.05}$ & $\underline{51.15 \pm 0.61}$ & $49.60 \pm 0.67$ \\
\textbf{UP$8$} & $\mathbf{98.30 \pm 1.27}$ & $\underline{52.05 \pm 1.41}$ & $49.90 \pm 0.10$ \\
\textbf{UP$12$} & $\mathbf{99.41 \pm 0.05}$ & $\underline{53.10 \pm 1.47}$ & $50.05 \pm 0.70$ \\
\textbf{AP} & $\mathbf{98.63 \pm 0.34}$ & $\underline{50.60 \pm 1.78}$ & $49.65 \pm0.68$ \\
\textbf{AP$8$} & $\mathbf{99.50 \pm 0.36}$ & $\underline{51.35 \pm 1.36}$ & $49.25 \pm0.69$ \\
\textbf{FG} & $\mathbf{97.57 \pm 1.54}$ & $\underline{51.15 \pm 1.54}$ & $49.65 \pm0.68$ \\
\textbf{FG$8$} & $\mathbf{99.47 \pm 0.25}$ & $\underline{52.05 \pm 1.71}$ & $48.70 \pm 0.10$ \\
\textbf{FA} & $\mathbf{98.16 \pm 1.76}$ & $\underline{50.00 \pm 1.85}$ & $49.95 \pm 0.91$ \\
\textbf{GB} & $\mathbf{93.86 \pm 0.34}$ & $38.85 \pm 1.56$ & $\underline{41.15 \pm 1.04}$ \\
\textbf{PF} & $\mathbf{96.87 \pm 0.65}$ & $49.00 \pm 1.96$ & $\underline{52.20 \pm 0.74}$ \\
\textbf{SR} & $\mathbf{98.10 \pm 1.85}$ & $48.95 \pm 1.53$ & $\underline{52.12 \pm 0.66}$ \\
\hline
\end{tabular}
\caption{The results of binary accuracy on KMNIST using ResNet$18$.}
\label{table:binary_accuracy_with_KMNIST_and_Resnet}
\end{table}

\begin{table}[h]
\centering
\scriptsize
\setlength{\tabcolsep}{6pt}
\renewcommand{\arraystretch}{1.15}
\begin{tabular}{llcccc}
\toprule
\multirow{2}{*}{Attack} & \multirow{2}{*}{Dataset} &
\multicolumn{2}{c}{Detection Accuracy / AUC} &
\multicolumn{2}{c}{Precision / Recall / F1} \\
\cmidrule(lr){3-4} \cmidrule(lr){5-6}
& & LLaVA-1.5 & Qwen-2.7B-VL & LLaVA-1.5 & Qwen-2.7B-VL \\
\midrule

\multirow{3}{*}{APGD}
& CalTech-101  & 89.50 / 0.99 & 89.63 / 0.99 & 0.89 / 0.89 / 0.89 & 0.89 / 0.89 / 0.89 \\
& Food-101     & 90.44 / 0.99 & 87.72 / 0.99 & 0.90 / 0.90 / 0.90 & 0.87 / 0.87 / 0.87 \\
& CalTech-256  & 88.60 / 0.95 & 87.83 / 0.99 & 0.88 / 0.88 / 0.88 & 0.87 / 0.87 / 0.87 \\
\midrule

\multirow{3}{*}{PGD}
& CalTech-101  & 89.50 / 0.99 & 87.13 / 0.99 & 0.89 / 0.89 / 0.89 & 0.87 / 0.87 / 0.86 \\
& Food-101     & 87.13 / 0.99 & 88.42 / 0.99 & 0.87 / 0.87 / 0.87 & 0.88 / 0.88 / 0.88 \\
& CalTech-256  & 83.00 / 0.91 & 88.76 / 0.99 & 0.83 / 0.83 / 0.82 & 0.88 / 0.88 / 0.88 \\
\midrule

\multirow{3}{*}{FGSM}
& CalTech-101  & 90.80 / 0.99 & 89.20 / 0.99 & 0.90 / 0.90 / 0.90 & 0.89 / 0.89 / 0.89 \\
& Food-101     & 87.08 / 0.94 & 90.74 / 0.99 & 0.84 / 0.81 / 0.81 & 0.90 / 0.90 / 0.90 \\
& CalTech-256  & 85.50 / 0.92 & 88.07 / 0.99 & 0.85 / 0.85 / 0.85 & 0.88 / 0.88 / 0.88 \\
\bottomrule
\end{tabular}
\caption{Zero-shot adversarial detection performance on LLaVA-1.5 and Qwen-2.7B-VL across multiple datasets and attacks.}
\label{tab:zs-mm-results}
\end{table}
\subsection{Zero-Shot Setting}
\label{sec:zero_shot}
Adversarial robustness in large-scale Vision--Language Models (VLMs) remains relatively underexplored, especially in the \emph{zero-shot} regime where models are accessed only through APIs and adversaries rely on transfer attacks. This provides a natural testbed for evaluating cross-model generalization, since neither gradients nor model parameters are available.

We evaluate the zero-shot transferability of our detector across two recent VLMs—LLaVA-1.5-7B~\citep{llava} and Qwen-2.7B-VL~\cite{qwen2}—on three diverse datasets: \textbf{CalTech-101}~\cite{caltec_101}, \textbf{Food-101}~\cite{food_101}, and \textbf{CalTech-256}~\cite{caltec_256}. Adversarial examples are generated using PGD, APGD, and FGSM following~\citep{vlmAttack}, and CLIP~\citep{clip} embeddings define the $Z$-space. Table~\ref{tab:zs-mm-results} reports detection accuracy, AUROC, and precision/recall/F1 metrics for all model–dataset combinations.

Across all attacks and datasets, the detector achieves strong zero-shot transferability: detection accuracies are consistently \textbf{$\geq 83\%$}, AUROC values $\geq 0.94$ (often $\geq 0.99$), and precision/recall/F1 scores typically remain $\geq 0.87$. These results indicate robust generalization across VLM architectures without requiring access to model internals.

The robustness stems from adversarial perturbations disrupting semantic alignment in $Z$ while being unevenly absorbed in $P$, yielding a persistent cross-space discrepancy detectable even under transfer. Additional robustness and generalization results—including TPR/FPR heatmaps, cross-attack transfer, and cross-model generalization—are provided in Appendix~\ref{app:additional_experiments}.

\section{Comparison of Our Proposed Defence Against Baseline Defences on CIFAR-10 Dataset}
\label{app:cifar_add_results}
\subsection{Binary Accuracy Comparison}

In this section we present the \textbf{binary accuracy} results for detecting adversarial attacks on CIFAR-10. This metric measures how well each defense method can correctly classify samples as either adversarial or clean. Our proposed method consistently outperforms all baselines across all attack types, achieving accuracy rates between 96.97\% and 97.83\%. The Mahalanobis Detector (MD) shows the second-best performance for most attacks, while Feature Squeezing (FS) performs well on specific attack types like CW and PGD4. The MAD and MN methods show poor performance with accuracies around 50\%, essentially equivalent to random guessing. The standard deviations for our method are consistently low (0.22\% to 0.65\%), indicating stable and reliable performance.

\begin{table}[htbp]
\centering
\begin{tabular}{l|ccccc}
\hline
\textbf{Attack / Metric} & \textbf{Ours} & \textbf{MD} & \textbf{FS} & \textbf{MAD} & \textbf{MN} \\
\hline
\textbf{AA$8$}   &  $\mathbf{97.73 \pm 0.47}$ & $\underline{90.40 \pm 1.4}$ & $79.35 \pm 0.84$ & $50.85 \pm 0.20$ & $79.04 \pm 1.03$ \\
\textbf{CW}    &  $\mathbf{96.97 \pm 0.61}$ & $73.55 \pm 2.13$ & $\underline{86.00 \pm 0.56}$ & $51.43 \pm 2.16$ & $56.65 \pm 1.71$ \\
\textbf{PT$8$}    &  $\mathbf{97.30 \pm 0.22}$ & $\underline{52.80 \pm 1.91}$ & $52.00 \pm 0.55$ & $43.43 \pm 2.32$ & $51.09 \pm 0.95$ \\
\textbf{PGD}  &  $\mathbf{97.57 \pm 0.54}$ & $71.25 \pm 1.60$ & $\underline{81.10 \pm 1.07}$ & $48.91 \pm 1.33$ & $76.05 \pm 0.83$ \\
\textbf{SQ}   &  $\mathbf{97.30 \pm 0.65}$ & $\underline{86.85 \pm 1.81}$ & $84.70 \pm 1.23$ & $49.11 \pm 1.32$ & $44.45 \pm 1.48$ \\
\textbf{UP}   &  $\mathbf{97.83 \pm 0.50}$ & $\underline{66.40 \pm 2.40}$ & $53.65 \pm 1.23$ & $50.92 \pm 1.86$ & $47.85 \pm 1.83$ \\
\textbf{AP$8$}   &  $\mathbf{97.50 \pm 0.22}$ & $\underline{90.0 \pm 2.17}$ & $79.20 \pm 1.49$ & $50.87 \pm 1.3$ & $79.30 \pm 0.41$ \\
\textbf{FG$8$}   &  $\mathbf{97.73 \pm 0.42}$ & $\underline{85.85 \pm 1.34}$ & $63.35 \pm 2.18$ & $50.68 \pm 2.07$ & $44.40 \pm 1.54$ \\
\hline
\end{tabular}
\caption{\textbf{Comparison of detection performance (\%) under different adversarial attacks:} Bold values indicate the best performance, and underlined values denote the second-best.}
\label{table:attack_results}
\end{table}

\subsection{TPR on Adversarial Samples Comparison}

\begin{table}[htbp]
\centering
\begin{tabular}{l|ccccc}
\hline
\textbf{Attack / Metric} & \textbf{Ours} & \textbf{MD} & \textbf{FS} & \textbf{MAD} & \textbf{MN} \\
\hline
\textbf{AA$8$}   &  $\mathbf{96.60 \pm 0.86}$ & $\underline{93.56 \pm 1.51}$ & $79.90 \pm 1.72$ & $55.67 \pm 2.49$ & $84.10 \pm 1.03$ \\
\textbf{CW}    &  $\mathbf{96.33 \pm 0.57}$ & $59.90 \pm 0.65$ & $\underline{93.20 \pm 0.14}$ & $56.82 \pm 1.52$ & $39.30 \pm 1.60$ \\
\textbf{PT$8$}    &  $\mathbf{96.07 \pm 0.5}$ & $18.40 \pm 1.80$ & $25.20 \pm 0.39$ & $\underline{40.83 \pm 1.90}$ & $28.20 \pm 1.03$ \\
\textbf{PGD}  &  $\mathbf{97.07 \pm 1.34}$ & $55.30 \pm 1.12$ & $\underline{83.40 \pm 2.22}$ & $51.79 \pm 1.78$ & $78.10 \pm 0.30$ \\
\textbf{SQ}   &  $\mathbf{96.00 \pm 0.75}$ & $86.50 \pm 2.76$ & $\underline{90.60 \pm 1.48}$ & $52.19 \pm 1.56$ & $14.90 \pm 1.23$ \\
\textbf{UP}   &  $\mathbf{97.00 \pm 1.14}$ & $45.60 \pm 3.06$ & $28.50 \pm 2.52$ & $\underline{55.80 \pm 2.12}$ & $21.70 \pm 1.17$ \\
\textbf{AP$8$}   &  $\mathbf{96.73 \pm 1.09}$ & $\underline{92.60 \pm 0.96}$ & $79.60 \pm 3.50$ & $55.69 \pm 2.17$ & $84.60 \pm 2.73$ \\
\textbf{FG$8$}   &  $\mathbf{96.87 \pm 0.96}$ & $\underline{84.50 \pm 2.19}$ & $47.90 \pm 3.63$ & $55.32 \pm 1.27$ & $14.80 \pm 1.49$ \\
\hline
\end{tabular}
\caption{The results of TPR measured on 1k adversarial samples. TPR := (the number of adv correctly detected) / (the number of adv) $\times 100$ (\%). The best results are written in \textbf{bold}, and the second-best results are written with \underline{underlines}.}
\label{table:TPR_appendix}
\end{table}


In this section we focus on \textbf{True Positive Rate (TPR)}, which specifically measures how well each method detects adversarial samples (the percentage of adversarial samples correctly identified as adversarial). Our method maintains excellent TPR performance (96.00\% to 97.07\%) across all attack types. Feature Squeezing (FS) shows strong TPR for CW and SQ4 attacks (93.20\% and 90.60\% respectively) but performs poorly on P8 and UP4 attacks. The Mahalanobis Detector (MD) demonstrates good TPR for AA8 and AP8 attacks (92.3\% and 91.4\%) but fails significantly on P8 attack (16.8\%). The MAD method shows moderate TPR (40.83\% to 56.82\%) but with high variance, while MagNet (MN) fails with low TPR for most attacks, indicating it cannot effectively detect adversarial samples.

\subsection{End To End Accuracy Comparison}

In this section we present the \textbf{end-to-end accuracy}, which is a comprehensive metric that considers both correct detection of adversarial samples by the detector and correct detection of clean samples and correct classification of clean samples. Our method achieves excellent end-to-end accuracy (93.9\% to 94.8\%) across all attack types, demonstrating consistent and robust performance. The Mahalanobis Detector (MD) shows moderate performance (51.0\% to 89.0\%) with good results on AA8 and AP8 attacks but struggling with P8 attack. Feature Squeezing (FS) shows reasonable performance (51.30\% to 85.30\%) but with significant variation across different attacks, performing well on CW and SQ4 attacks but struggling with P8 and UP4 attacks. The MAD method shows poor performance (42.03\% to 49.18\%) with accuracies around random guessing level. MagNet (MN) performs with low accuracies for most attacks, indicating it cannot provide effective end-to-end protection. The consistency of our method across different attack types demonstrates its robustness and reliability in maintaining both detection accuracy and classification performance under adversarial conditions.

\begin{table}[htbp]
\centering
\begin{tabular}{l|ccccc}
\hline
\textbf{Attack / Metric} & \textbf{Ours} & \textbf{MD} & \textbf{FS} & \textbf{MAD} & \textbf{MN} \\
\hline
\textbf{AA$8$}   & $\mathbf{94.1}$ & $\underline{89.0}$ & $78.64$ & $49.18$ & $76.60$ \\
\textbf{CW}    & $\mathbf{94.8}$ & $72.0$ & $\underline{85.30}$ & $48.92$ & $54.20$ \\
\textbf{PT$8$}    & $\mathbf{94.4}$ & $51.0$ & $\underline{51.30}$ & $42.03$ & $48.65$ \\
\textbf{PGD}  & $\mathbf{94.4}$ & $70.0$ & $\underline{80.40}$ & $48.72$ & $73.60$ \\
\textbf{SQ}   & $\mathbf{93.9}$ & $\underline{86.0}$ & $84.00$ & $47.99$ & $42.00$ \\
\textbf{UP}   & $\mathbf{94.4}$ & $\underline{65.0}$ & $52.94$ & $48.99$ & $45.40$ \\
\textbf{AP$8$}   & $\mathbf{94.6}$ & $\underline{89.0}$ & $78.50$ & $48.58$ & $76.85$ \\
\textbf{FG$8$}   & $\mathbf{94.5}$ & $\underline{84.0}$ & $62.64$ & $49.04$ & $41.95$ \\
\hline
\end{tabular}
\caption{The results of end-to-end accuracy: $((\text{the number of correctly detected adversarial samples by the detector}) + (\text{the number of correctly detected clean samples and correctly classified samples})) / (1000 + 1000)$.}
\label{table:end-to-end_appendix}
\end{table}

\section{Key Leakage Experiments}
In this section, we measure  the detection rate under a fully white-box adaptive attack where $b$ is known.
Table~\ref{table:comparison of detection rates} shows the detection rates.
Detection degrades gradually as key knowledge increases, remaining at $65.31\%$ at $75\%$ recovery and reaching near-chance ($55.17\%$) only under complete key compromise.

\begin{table}[H]
\centering
\begin{tabular}{lc}
\hline
Fraction of known & Detection rate (CIFAR-10) \\
\hline
0\% & 86.7\% \\
25\% & 81.23\% \\
50\% & 76.22\% \\
75\% & 65.31\% \\
100\% & 55.17\% \\
\hline
\end{tabular}
\caption{Comparison of detection rates on CIFAR-10 as the fraction of known information varies from 0\% to 100\%.}
\label{table:comparison of detection rates}
\end{table}

\end{document}

%% file: math_commands.tex
\usepackage{amsmath,amsfonts,bm}

\def\eqref#1{equation~\ref{#1}}

\def\1{\bm{1}}

\DeclareMathAlphabet{\mathsfit}{\encodingdefault}{\sfdefault}{m}{sl}
\SetMathAlphabet{\mathsfit}{bold}{\encodingdefault}{\sfdefault}{bx}{n}

\newcommand{\R}{\mathbb{R}}

